\numberwithin{equation}{chapter}
\numberwithin{figure}{chapter}
\numberwithin{table}{chapter}
\newcommand{\vect}[1]{\mathbf{#1}}
\newcommand{\mat}[1]{\mathbf{#1}}
\newcommand{\br}[1]{\mathopen{}\left(#1\right)\mathclose{}}
\newcommand{\set}[1]{\left\{#1\right\}}
\newcommand{\pair}[2]{\br{#1,#2}}
\newcommand{\norm}[1]{\left\|#1\right\|}
\newcommand{\pderivnull}[1]{\frac{\partial}{\partial#1}}
\newcommand{\pderiv}[2]{\frac{\partial #1\hfill}{\partial #2\hfill}}
\newcommand{\spderiva}[2]{\frac{\partial^2 #1}{{\partial #2}^2}}
\newcommand{\spderivb}[3]{\frac{\partial^2 #1}{\partial #2\partial #3}}
\newcommand{\bigo}[1]{\mathcal{O}\br{#1}}
\newcommand{\prob}[1]{p\br{#1}}
\newcommand{\uprob}[1]{\bar{p}\br{#1}}
\newcommand{\Rop}[1]{\mathcal{R}\mathopen{}\left\{#1\right\}\mathclose{}}
\newcommand{\kl}[2]{D_{\mathrm{KL}}\br{#1\,\|\,#2}}
\newcommand{\avg}[2]{\left\langle#1\right\rangle_{#2}}
\newcommand{\avgnull}[1]{\left\langle#1\right\rangle}
\newcommand{\nth}[1]{#1^{\text{th}}}
\newcommand{\sigm}[1]{\sigma\br{#1}}
\newcommand{\gaussian}[3]{\mathcal{N}\br{#1\g #2,#3}}
\newcommand{\g}{\,|\,}
\newcommand{\deltaf}[1]{\delta\br{#1}}
\newcommand{\arrstretchvalue}{1.3}
\newcommand{\circled}[1]{\tikz[baseline=(char.base)]{\node[shape=circle,draw,inner sep=1.5pt] (char) {#1};}}
\newtheorem{proposition}{Proposition}[chapter]
\begin{document}

\begin{titlepage}

\newcommand{\HRule}{\rule{\linewidth}{0.5mm}}

\center
\textsc{\LARGE University of Edinburgh}\\[1.5cm] 
\textsc{\Large School of Informatics}\\[0.5cm] 
\textsc{\large MSc by Research in Data Science}\\[3.5cm]

\HRule \\[0.6cm]
{ \huge \bfseries Distilling Model Knowledge}\\[0.4cm]
\HRule \\[1.5cm]
 
{\Large by\\[0.4cm] \textsc{George Papamakarios}}\\[10cm] 

{\large August 2015}\\[3cm] 

\vfill 

\end{titlepage}

\newcounter{savepage}
\setcounter{savepage}{\thepage}
\begin{abstract}

Top-performing machine learning systems, such as deep neural networks, large ensembles and complex probabilistic graphical models, can be expensive to store, slow to evaluate and hard to integrate into larger systems. Ideally, we would like to replace such cumbersome models with simpler models that perform equally well.

In this thesis, we study knowledge distillation, the idea of extracting the knowledge contained in a complex model and injecting it into a more convenient model. We present a general framework for knowledge distillation, whereby a convenient model of our choosing learns how to mimic a complex model, by observing the latter's behaviour and being penalized whenever it fails to reproduce it.

We develop our framework within the context of three distinct machine learning applications: (a) model compression, where we compress large discriminative models, such as ensembles of neural networks, into models of much smaller size; (b) compact predictive distributions for Bayesian inference, where we distil large bags of MCMC samples into compact predictive distributions in closed form; (c) intractable generative models, where we distil unnormalizable models such as RBMs into tractable models such as NADEs.

We contribute to the state of the art with novel techniques and ideas. In model compression, we describe and implement derivative matching, which allows for better distillation when data is scarce. In compact predictive distributions, we introduce online distillation, which allows for significant savings in memory. Finally, in intractable generative models, we show how to use distilled models to robustly estimate intractable quantities of the original model, such as its intractable partition function.

\end{abstract}
\setcounter{page}{\thesavepage}
\stepcounter{page}

\renewcommand{\abstractname}{Acknowledgements}
\setcounter{savepage}{\thepage}
\begin{abstract}

This work was supervised by Dr Iain Murray. My long discussions with Iain have considerably expanded my knowledge of machine learning and his detailed feedback has greatly improved my writing. I would not have known how to recognize bad kerning had it not been for him. I consider myself fortunate to have been in the privileged group of individuals that have been supervised by Iain.

I am grateful to all fellow PhD students in the Institute for Adaptive and Neural Computation and the Centre for Doctoral Training in Data Science with whom I have had numerous discussions about machine learning (and beyond!). Special thanks go to Harri Edwards and Krzysztof Geras for regularly pointing me to good papers.

None of this would have existed had it not been for the directors of the Centre for Doctoral Training in Data Science, Prof.~Chris Williams and Dr Charles Sutton. Many thanks to them for our regular interactions and making everything possible.

This work was supported in part by the EPSRC Centre for Doctoral Training in Data Science, funded by the UK Engineering and Physical Sciences Research Council (grant EP/L016427/1) and the University of Edinburgh, and by Microsoft Research through its PhD Scholarship Programme.

\end{abstract}
\setcounter{page}{\thesavepage}
\stepcounter{page}

\tableofcontents

\chapter{Introduction}
\label{chapter:introduction}

Recent progress in machine learning\index{Machine learning} has been phenomenal. Record after record is being broken, as models are becoming increasingly more accurate and better performing \citep{Russakovsky:2015:imagenet, Bojar:2014:workshop_MT}. This has been made possible by a series of theoretic and algorithmic breakthroughs in the last decade, and further facilitated by significant technological advances in computer hardware and software, the explosion of available data, and a renewed commercial interest in the field.

The above improvements have been accompanied by a notable increase in model size and complexity. Large ensembles consisting of several individual models, deep neural architectures with complex multilayer structure, large-scale probabilistic graphical models with complicated interactions between variables, and sophisticated generative models of data are only some examples of this ongoing trend. Clearly, increased model size and complexity has played a major role in the success of machine learning models.

Nevertheless, a large and complex model can give rise to significant practical challenges. Deep neural networks with several hidden layers can be slow to evaluate and expensive to store. Inference in large, hierarchical probabilistic models can be computationally demanding, even when done approximately. Adding sophisticated structure to generative models may compromise their tractability, and as a consequence reduce their interpretability and make them harder to integrate into larger systems. Size and complexity may present models with power, but they can also make them cumbersome and inconvenient to use.

Let us suppose for a moment that we have access to an accurate and powerful model, which however is cumbersome and inconvenient to use. Imagine that we could replace it with a convenient model that does the same job and is just as good. From then on, for all practical purposes, we could just get rid of the cumbersome model and use the convenient model instead. The question then becomes: \emph{using the cumbersome model, can we actually construct a convenient model that does the same job and is just as good}?

In this thesis, we address the above question, and present a framework for constructing such convenient models from cumbersome ones. We call this procedure \emph{knowledge distillation}\index{Knowledge distillation}; as if we could extract the knowledge hidden inside the cumbersome model, distil it, and inject it into the convenient model. From a high-level perspective, our framework is based on training the convenient model to mimic the cumbersome model, by getting it to observe its behaviour, and punish it when it fails to replicate it successfully.

Our work brings together ideas often referred to in the literature as knowledge distillation \citep{Hinton:2015:distilling_knowledge}, model compression \citep{Bucila:2006:model_compression}, compact approximations \citep{Snelson:2005:compact_approximations}, mimicking models \citep{Ba:2014:deep_nets_need_to_be_deep}, and teacher-student models \citep{Romero:2014:fitnets}. As a technical term, ``knowledge distillation'' was introduced only recently by \citet{Hinton:2015:distilling_knowledge}, who used it to mean model compression. Here, we use it more broadly to refer to the act of transferring the knowledge of a cumbersome model into a convenient one. We believe that this usage of the term is more appropriate, since it permits a unifying description of tasks that are not necessarily limited to model compression.

The above description of knowledge distillation is very general; the words ``cumbersome'' and ``convenient'' can mean different things in different contexts. For instance, for a deep neural network, cumbersome might mean ``too large'', in which case a convenient model would be a small neural network. On the other hand, for a generative model, cumbersome might mean ``intractable'', in which case a convenient model would be a tractable generative model. 

In this work, we apply the aforementioned idea of knowledge distillation into three fairly general---and highly important---fields of machine learning: \emph{discriminative models}, \emph{Bayesian inference} and \emph{generative models}. For each one, we define the meaning of ``cumbersome'' and ``convenient'', exemplify our knowledge distillation framework, and apply it to solve an example problem within the field.

The rest of the thesis is organized into three chapters, one for each field in which we apply the knowledge distillation framework. Each chapter is standalone; that is, it can be read independently and does not require knowledge of the others. The following three sections provide an overview of each chapter and list our individual contributions in each one.

\section{Model compression}
\index{Model compression}

In chapter~\ref{chapter:model_compression}, we apply knowledge distillation on discriminative models. In this particular context, knowledge distillation becomes synonymous with model compression. That is, the cumbersome model is a large discriminative model, such as an ensemble or deep neural networks, which, due to its size, is expensive to store and slow to evaluate. The objective is to distil it into a smaller model that would be cheaper to store and evaluate, but would not seriously compromise accuracy and performance.
Our particular contributions in this chapter are listed below.
\begin{enumerate}[label=(\roman*)]
\item We present a generic and flexible framework for model compression, which is viewed as a special case of knowledge distillation. Our framework is general enough to be applied to a wide range of models, and it extends the work of \citet{Bucila:2006:model_compression}.
\item We successfully apply the above framework in compressing an ensemble of neural networks into a single small neural network.
\item We make a comprehensive review of the model compression literature and compare it to our approach.
\item We extend the model compression literature in two distinct ways. Firstly, we show how to efficiently use derivative information in the distillation procedure. Secondly, we make use of generative models, such as NADE \citep{Larochelle:2011:NADE}, for providing input data for distillation. We show that both approaches lead to better generalization performance when the input data is limited.
\end{enumerate}

\section{Compact predictive distributions}

In chapter~\ref{chapter:compact_predictive_distributions}, we apply knowledge distillation on approximate Bayesian inference with Markov Chain Monte Carlo \citep{Neal:1993:mcmc}. Here, the cumbersome model is a Markov chain that explores a posterior distribution and generates MCMC samples from it. Averaging over those samples approximates the true predictive distribution. If high accuracy is required, then a large number of MCMC samples needs to be generated and stored, which is an inefficient and wasteful way to represent the predictive distribution. The convenient model that is distilled from the Markov chain is a compact closed-form approximation to the true predictive distribution, which it is cheaper to store and evaluate compared to a bag of MCMC samples. Our contributions in this chapter are listed below.
\begin{enumerate}[label=(\roman*)]
\item We present a framework for learning compact predictive distributions from Markov chains, which extends that of \citet{Snelson:2005:compact_approximations}.
\item We apply our framework to the problems of Bayesian density estimation and Bayesian binary classification. We showcase it in the special cases of Bayesian Mixtures of Gaussians and Bayesian logistic regression respectively.
\item We show how distillation in our framework can be performed in an online fashion, that is, while the Markov chain is being simulated. This leads to a form of online  distillation that performs as well as batch distillation, but requires only a fraction of the memory.
\item We show how to efficiently include derivatives within the distillation procedure. This way, we manage to achieve better distillation, by conveying more information about the true predictive distribution to the compact one.
\end{enumerate}

\section{Intractable generative models}

In chapter~\ref{chapter:generative_models}, we focus on generative models. Here, the cumbersome model is a generative model that cannot be normalized, hence tasks such as calculating its partition function, evaluating its probabilities, marginalizing and conditioning are all intractable. The task is to distil it into a tractable model, for which doing all the above would be tractable. This way, the convenient tractable model can be used within a larger system in ways that the cumbersome intractable model could not. Our particular contributions in this chapter are listed below.
\begin{enumerate}[label=(\roman*)]
\item We present a framework for distilling intractable generative models into tractable ones. Our framework manages to extract the knowledge contained in the intractable model, bypassing the need to calculate its intractable partition function.
\item We apply our framework to successfully distil an intractable Restricted Boltzmann Machine \citep{Smolensky:1986:RBM} into a tractable NADE\@.
\item We show that the distilled NADE can be successfully used to produce robust estimates of the intractable partition function of the RBM, that are comparable to the state of the art estimates provided by \citet{Salakhutdinov:2008:quantitative_analysis_of_DBNs}.
\end{enumerate}

\chapter{Model Compression}
\label{chapter:model_compression}

Discriminative models\index{Discriminative model} play a dominant role in machine learning. From a high-level point of view, these are models that take as input some datapoint $\vect{x}$ and output some prediction $\vect{y}$ that is a function $t$ of $\vect{x}$, that is
\begin{equation}
\vect{y} = t\br{\vect{x}}.
\end{equation}
Variable $\vect{x}$ can represent any sort of data, such as speech, images or video. In the simplest case, variable $\vect{y}$ can be a numerical prediction (in which case the model is used for regression), or a categorical prediction (in which case the model is used for classification).

Increasing the capacity of a discriminative model typically makes it capable of learning more complex functions. Consider for example the case of \emph{neural networks}\index{Neural network}; in the last decade, neural networks have significantly increased in size and have become deeper. This has been made possible by having access to bigger datasets, more powerful computers (such as GPUs) and the invention of strong regularization methods, such as dropout\index{Dropout} \citep{Srivastava:2014:dropout}. As a result, deep neural networks have produced state of the art results in many applications, such as automatic speech recognition \citep{Graves:2013:ASR}, machine translation \citep{Sutskever:2014:MT, Bahdanau:2015:neural_machine_translation} and image classification \citep{Ciresan:2012:mnist, Szegedy:2015:googlenet}.

A different way of improving discriminative models is to combine several of them into a single model, which outputs in some sense the ``average'' or the ``consensus'' across models. This sort of model is known as an \emph{ensemble}\index{Ensemble}. The intuition why this works is that each separate model might make errors on different inputs, but by combining predictions from various models these errors tend to be overruled, hence the final predictions tend to be better. A prominent example of a model that owes its success to being an ensemble is the \emph{random forest}\index{Random forest} \citep{Breiman:2001:random_forests}. In a random forest, a large number of decision trees are combined together, each of which is a weak model but altogether make a strong model. In general, the individual models need not be of the same class, and forming ensembles can also be done by combining fundamentally different kinds of models \citep{Caruana:2004:ESL}.

Increasing the capacity of a single model or combining several models into an ensemble typically improves predictions; however it comes at a cost. The final model might be too large to store, or too slow to evaluate at test time. It is natural then to ask the question, is the extra size really necessary, or could the same function in principle be represented by a simpler, smaller model? If this is indeed the case, could we then distil the knowledge of the large model into a smaller model that does the same job?

Motivated by the above questions, the field of \emph{model compression}\index{Model compression} emerged, pioneered by \citet{Bucila:2006:model_compression}. Having a accurate but large model $t\br{\vect{x}}$, the objective of model compression is to train a smaller model $f\br{\vect{x}\g\bm{\theta}}$ to mimic $t\br{\vect{x}}$ as closely as possible. In this chapter, we study model compression as a special case of knowledge distillation. We describe a general framework for model compression, and we study the effect of various techniques within the framework. We contribute to the state of the art with two novel ideas: matching derivatives and guiding the distillation process with a generative model of the input data. Finally, we showcase our framework by compressing an ensemble of large neural networks into a single small neural network.

\section{A framework for model compression}
\label{sec:model_compression:framework}

Suppose we are given a discriminative model which computes the function $\vect{y} = t\br{\vect{x}}$. We shall refer to this model as the ``teacher''. We then specify a family of models $f\br{\vect{x}\g\bm{\theta}}$, parameterized by a set of parameters $\bm{\theta}$, which we shall refer to as the ``student''. Our goal is to train the student to be as similar as possible to the teacher; that is, we want to find the value of $\bm{\theta}$ for which the approximation
\begin{equation}
t\br{\vect{x}} \approx f\br{\vect{x}\g\bm{\theta}}
\end{equation}
becomes as close to an equality as possible.

In the context of model compression, $t\br{\vect{x}}$ is typically significantly more complex than $f\br{\vect{x}\g\bm{\theta}}$, therefore, in general, we cannot expect the student to match the teacher everywhere. The hope instead is that the particular function that is represented by $t\br{\vect{x}}$ is simple enough to be closely approximated by $f\br{\vect{x}\g\bm{\theta}}$ at least in the region of space we are interested in. In practical applications, the region of interest may be tiny or significantly lower-dimensional compared to the whole space. For instance, if $\vect{x}$ represents greyscale images, and $t\br{\vect{x}}$ is meant to be used to classify images of handwritten digits, then we are indeed interested only in a minuscule fraction of all greyscale images imaginable.

Our framework for model compression, besides the teacher $t\br{\vect{x}}$ and the student $f\br{\vect{x}\g\bm{\theta}}$, also involves a data generator $\prob{\vect{x}}$ and a loss $E\br{\vect{x},\bm{\theta}}$. The data generator is a probability distribution over the input space; in practice it can be any procedure which can generate datapoints on demand. The loss $E\br{\vect{x},\bm{\theta}}$ is a measure of discrepancy between the teacher and the student at input $\vect{x}$ and for parameters $\bm{\theta}$. The training objective is to minimize the average loss with respect to $\bm{\theta}$, which can be written as
\begin{equation}
\min_{\bm{\theta}}{\avg{E\br{\vect{x},\bm{\theta}}}{\prob{\vect{x}}}}.
\end{equation}

The choice of the data generator and of the loss is highly important. The data generator specifies the region of the input space we are interested in. Agreement between the teacher and the student on datapoints $\vect{x}$ for which $\prob{\vect{x}}$ is high is prioritized, since the loss is weighted by $\prob{\vect{x}}$ in the training objective. On the other hand, the loss specifies what it means for the student to agree with the teacher. In our case study in section~\ref{sec:model_compression:case_study}, we will study the effect of different choices of data generator and loss, when compressing an ensemble of neural networks. In the following two sections, we consider two broad families of losses, and describe a stochastic way of minimizing them.

\subsection{Matching function values}
\label{sec:model_compression:framework:matching_values}

The simplest choice for the loss is a function that directly measures the difference between the predictions of the teacher $t\br{\vect{x}}$ and the predictions of the student $f\br{\vect{x}\g\bm{\theta}}$ at input $\vect{x}$. For a regression task, such a loss can be the square error\index{Square error}, defined by
\begin{equation}
E_\mathrm{SE}\br{\vect{x},\bm{\theta}} = \frac{1}{2}\norm{f\br{\vect{x}\g\bm{\theta}} - t\br{\vect{x}}}^2.
\end{equation}
For a classification task, where the outputs of both models are discrete probability distributions with $I$ elements, a suitable loss is the cross entropy\index{Cross entropy}, defined by
\begin{equation}
E_\mathrm{CE}\br{\vect{x},\bm{\theta}} = -\sum_i{t_i\br{\vect{x}}\log{f_i\br{\vect{x}\g\bm{\theta}}}}.
\end{equation}
Such losses only involve function values $t\br{\vect{x}}$ and $f\br{\vect{x}\g\bm{\theta}}$, and they encourage those values to be as similar as possible.

We will now describe a stochastic gradient\index{Stochastic gradient} training procedure for minimizing the average loss $E\br{\bm{\theta}} = \avg{E\br{\vect{x},\bm{\theta}}}{\prob{\vect{x}}}$, when the loss is of the above type. The gradient of $E\br{\bm{\theta}}$ with respect to the parameters $\bm{\theta}$ is
\begin{equation}
\vect{g}\br{\bm{\theta}} = \avg{\pderivnull{\bm{\theta}} E\br{\vect{x},\bm{\theta}}}{\prob{\vect{x}}}
\end{equation}
The above gradient can be stochastically approximated by
\begin{equation}
\hat{\vect{g}}\br{\bm{\theta}} = \frac{1}{S}\sum_{s}{\pderivnull{\bm{\theta}} E\br{\vect{x}_s,\bm{\theta}}},
\end{equation}
where $\set{\vect{x}_s}$ are samples generated from $\prob{\vect{x}}$. It is easy to see that, in expectation, the stochastic gradient is equal to the original gradient. Using the above stochastic gradient, we can minimize $E\br{\bm{\theta}}$ using the following procedure, which is  shown pictorially in Figure~\ref{fig:model_compression:overview}.
\begin{framed}
\begin{enumerate}[label=(\roman*)]
\item Generate a minibatch $\set{\vect{x}_s}$ of size $S$ from $\prob{\vect{x}}$.
\item Calculate $t\br{\vect{x}_s}$ and $f\br{\vect{x}_s\g\bm{\theta}}$ for all datapoints in the minibatch.
\item\label{model_compression:matching_values:step3} Calculate the stochastic gradient $\hat{\vect{g}}\br{\bm{\theta}} = \frac{1}{S}\sum_{s}{\pderivnull{\bm{\theta}} E\br{\vect{x}_s,\bm{\theta}}}$.
\item Make an update on $\bm{\theta}$ using $\hat{\vect{g}}\br{\bm{\theta}}$.
\item Repeat until convergence.
\end{enumerate}
\end{framed}
\noindent The above algorithm falls into the category of stochastic optimization methods. Provided the samples generated from $\prob{\vect{x}}$ are exact and independent, and under certain regularity conditions, it is guaranteed to converge almost surely to a stationary point of $E\br{\bm{\theta}}$ \citep{Bottou:1999:online_learning}.

The above procedure is fairly general and assumes very little about $\prob{\vect{x}}$ and $f\br{\vect{x}\g\bm{\theta}}$. Notice that we do not need to be able to evaluate $\prob{\vect{x}}$, but only to be able to sample from it. Hence $\prob{\vect{x}}$ can be any data generating procedure, such as resampling a dataset. On the other hand, the only thing we assume for $f\br{\vect{x}\g\bm{\theta}}$ is that it must be differentiable with respect to $\bm{\theta}$, so as to be able to calculate the gradient in step~\ref{model_compression:matching_values:step3}. If $f\br{\vect{x}\g\bm{\theta}}$ is a neural network, this can be done by backward propagation\index{Backward propagation}, as described in more detail in appendix~\ref{chapter:derivatives_in_neural_networks}.

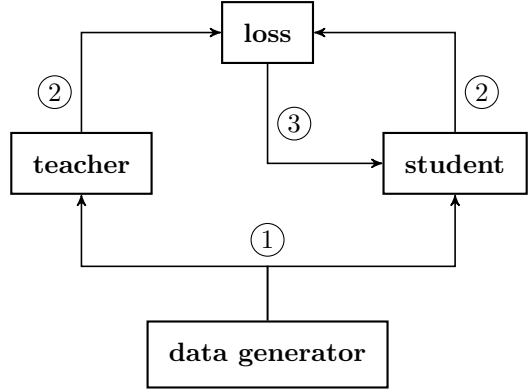
\begin{figure}[tbp]
\centering
\begin{tikzpicture}[>=stealth']

\tikzstyle{block} = [draw, rectangle, thick, inner sep=8pt, color=black]

\node[block] (loss) {\textbf{loss}};
\node[block, below left=0.9cm and 0.9cm of loss] (teacher) {\textbf{teacher}}; 
\node[block, below right=0.9cm and 0.9cm of loss] (student) {\textbf{student}}; 
\node[block, below=3.4cm of loss] (generator) {\textbf{data generator}};

\draw [->, semithick] (teacher) |- node [left,  pos=0.2] {\circled{2}} (loss);
\draw [->, semithick] (student) |- node [right,  pos=0.2] {\circled{2}} (loss);
\draw [->, semithick] (loss) |- node [right,  pos=0.3] {\circled{3}} (student);
\draw [->, semithick] (generator) [above,  pos=0] -- (0,-3.1cm) -| node {\circled{1}} (student);
\draw [->, semithick] (generator) -- (0,-3.1cm) -| node {} (teacher);

\end{tikzpicture}
\caption{Overview of stochastic gradient training for model compression. The generator produces datapoints \protect\circled{1} on which both the teacher and the student are evaluated \protect\circled{2}. The loss measures the discrepancy between the predictions of the teacher and the student, and its gradient with respect to the student's parameters provides training signal for the student \protect\circled{3}.}
\label{fig:model_compression:overview}
\end{figure}

\subsection{Matching function derivatives}
\label{sec:model_compression:framework:matching_derivatives}

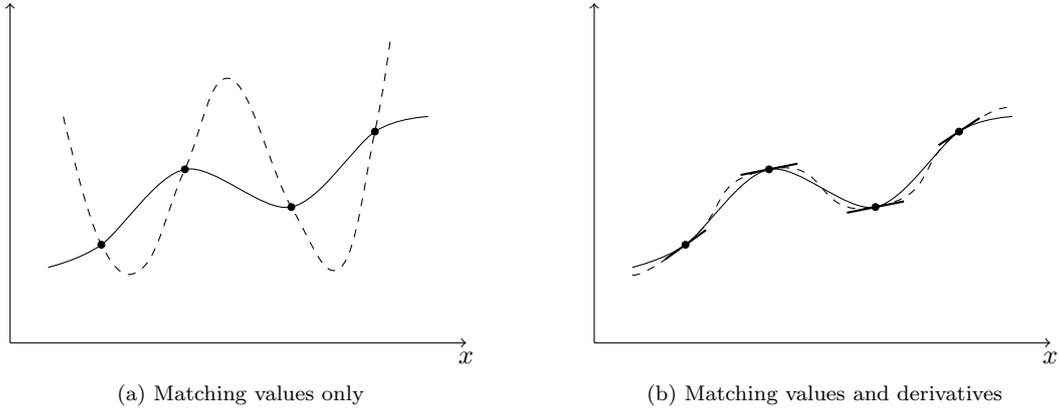
\begin{figure}[tbp]
\centering

\subfloat[Matching values only]{
\begin{tikzpicture}

\draw[->] (0,0) -- (6,0) node[anchor=north] {$x$};
\draw[->] (0,0) -- (0,4.5) node {};

\pgfmathsetmacro{\xa}{1.2}; \pgfmathsetmacro{\ya}{1.3};
\pgfmathsetmacro{\xb}{2.3}; \pgfmathsetmacro{\yb}{2.3};
\pgfmathsetmacro{\xc}{3.7}; \pgfmathsetmacro{\yc}{1.8};
\pgfmathsetmacro{\xd}{4.8}; \pgfmathsetmacro{\yd}{2.8};
\pgfmathsetmacro{\radius}{0.045};

\draw[fill] (\xa,\ya) circle (\radius);
\draw[fill] (\xb,\yb) circle (\radius);
\draw[fill] (\xc,\yc) circle (\radius);
\draw[fill] (\xd,\yd) circle (\radius);

\pgfmathsetmacro{\xbeg}{0.5}; \pgfmathsetmacro{\ybeg}{1};
\pgfmathsetmacro{\xend}{5.5}; \pgfmathsetmacro{\yend}{3};

\draw plot [smooth, tension=0.55] coordinates {(\xbeg,\ybeg) (\xa,\ya) (\xb,\yb) (\xc,\yc) (\xd,\yd) (\xend,\yend)};

\draw[dashed] plot [smooth, tension=0.7] coordinates {(0.7,3) (\xa,\ya) (1.75,1) (\xb,\yb) (2.9,3.5) (\xc,\yc) (4.35,1) (\xd,\yd) (5,4)};

\end{tikzpicture}
}
\hspace{1cm}
\subfloat[Matching values and derivatives]{
\begin{tikzpicture}

\draw[->] (0,0) -- (6,0) node[anchor=north] {$x$};
\draw[->] (0,0) -- (0,4.5) node {};

\pgfmathsetmacro{\xa}{1.2}; \pgfmathsetmacro{\ya}{1.3};
\pgfmathsetmacro{\xb}{2.3}; \pgfmathsetmacro{\yb}{2.3};
\pgfmathsetmacro{\xc}{3.7}; \pgfmathsetmacro{\yc}{1.8};
\pgfmathsetmacro{\xd}{4.8}; \pgfmathsetmacro{\yd}{2.8};
\pgfmathsetmacro{\radius}{0.045};

\draw[fill] (\xa,\ya) circle (\radius);
\draw[fill] (\xb,\yb) circle (\radius);
\draw[fill] (\xc,\yc) circle (\radius);
\draw[fill] (\xd,\yd) circle (\radius);

\pgfmathsetmacro{\xbeg}{0.5}; \pgfmathsetmacro{\ybeg}{1};
\pgfmathsetmacro{\xend}{5.5}; \pgfmathsetmacro{\yend}{3};

\draw plot [smooth, tension=0.55] coordinates {(\xbeg,\ybeg) (\xa,\ya) (\xb,\yb) (\xc,\yc) (\xd,\yd) (\xend,\yend)};

\pgfmathsetmacro{\dxa}{\xb-\xbeg}; \pgfmathsetmacro{\dya}{\yb-\ybeg};
\pgfmathsetmacro{\dxb}{\xc-\xa};   \pgfmathsetmacro{\dyb}{\yc-\ya};
\pgfmathsetmacro{\dxc}{\xd-\xb};   \pgfmathsetmacro{\dyc}{\yd-\yb};
\pgfmathsetmacro{\dxd}{\xend-\xc}; \pgfmathsetmacro{\dyd}{\yend-\yc};
\pgfmathsetmacro{\len}{0.15};

\draw[thick] (\xa-\len*\dxa,\ya-\len*\dya) -- (\xa+\len*\dxa,\ya+\len*\dya);
\draw[thick] (\xb-\len*\dxb,\yb-\len*\dyb) -- (\xb+\len*\dxb,\yb+\len*\dyb);
\draw[thick] (\xc-\len*\dxc,\yc-\len*\dyc) -- (\xc+\len*\dxc,\yc+\len*\dyc);
\draw[thick] (\xd-\len*\dxd,\yd-\len*\dyd) -- (\xd+\len*\dxd,\yd+\len*\dyd);

\draw[dashed] plot [smooth, tension=1.2] coordinates {(\xbeg,\ybeg-0.1) (\xa,\ya) (\xb,\yb) (\xc,\yc) (\xd,\yd) (\xend,\yend+0.1)};

\end{tikzpicture}
}

\caption{Motivational sketch for matching derivatives. Using the same number of input locations, matching derivatives further constrains the student (dashed line) to better match the teacher (solid line). Based on Figure~2 by \citet{Simard:1992:tangentprop}.}
\label{fig:model_compression:matching_derivatives}
\end{figure}

In the previous section, we considered losses that only involve function values $t\br{\vect{x}}$ and $f\br{\vect{x}\g\bm{\theta}}$. We then described a training procedure, whereby the function represented by the teacher is probed at various input locations, and the student is then adjusted to match the teacher's function values at those locations. In this sense, we reduced model compression to a regression problem, where train data were generated by $\prob{\vect{x}}$ and labelled by $t\br{\vect{x}}$.

However, since we have access to the full function represented by the teacher, every time we probe it at some location to calculate its value, we can go one step further and also calculate its derivative with respect to $\vect{x}$. In principle, calculating the derivative of any function is possible at a cost which is at most a constant factor greater than the cost of calculating a single function value \citep{Griewank:2012:differentiation}.\footnote{This result is commonly known as the \emph{cheap gradient principle}.} We can then encourage the student to not only match the teacher's function values at some input location, but also the whole tangent hyperplane specified by its derivative with respect to the input at the same location. Thus, we can further constrain the student using the same number of input locations, as illustrated in Figure~\ref{fig:model_compression:matching_derivatives}.

In order to train the student by matching derivatives, we need to include $\pderiv{t}{\vect{x}}$ and $\pderiv{f}{\vect{x}}$  in the loss, so as to minimize the discrepancy between them. For instance, assuming that both the teacher and the student have $I$ outputs, a loss that measures the discrepancy between derivatives is the derivative square error\index{Square error}, defined as follows
\begin{equation}
E_\mathrm{DSE}\br{\vect{x},\bm{\theta}} = \frac{1}{2I}\sum_i{\norm{\pderiv{f_i}{\vect{x}} - \pderiv{t_i}{\vect{x}}}^2}.
\end{equation}
In order to apply the same stochastic gradient training procedure with losses that involve derivatives with respect to $\vect{x}$ such as the above, we need to be able to calculate the derivatives of the loss with respect to $\bm{\theta}$. As we will show, this is always possible at a cost which is at most a constant factor greater than the cost of using losses without derivatives. This is surprising, since one might expect that the cost of including derivatives would scale badly with the input dimensionality. However, it turns out that this is not the case.

To see why it is not the case, let both $t\br{\vect{x}}$ and $f\br{\vect{x}\g\bm{\theta}}$ be functions of $I$ outputs, and $E\br{\vect{x},\bm{\theta}}$ be a function of only $\pderiv{t_i}{\vect{x}}$ and $\pderiv{f_i}{\vect{x}}$ for all $i$. Then, the derivative of the loss with respect to $\bm{\theta}$ becomes
\begin{equation}
\pderivnull{\bm{\theta}}E\br{\vect{x},\bm{\theta}} = \sum_i{\spderivb{f_i}{\bm{\theta}}{\vect{x}}\,\pderiv{E}{\vect{d}_i}},
\end{equation}
where $\vect{d}_i = \pderiv{f_i}{\vect{x}}$. In the above expression, $\spderivb{f_i}{\bm{\theta}}{\vect{x}}$ is part of the Hessian of $f_i$ and $\pderiv{E}{\vect{d}_i}$ is some vector that depends on $\pderiv{t}{\vect{x}}$ and $\pderiv{f}{\vect{x}}$. 
The key thing to notice here is that the above expression only involves the Hessian in the form of Hessian-vector products. Despite the full Hessian needing quadratic time to be calculated, Hessian-vector products can be directly calculated in only linear time, using a method known as the \emph{R technique}\index{R technique} \citep{Pearlmutter:1994:fast_mult_hess}. The R technique, as described in detail in appendix~\ref{chapter:R_technique}, avoids calculating the full Hessian by introducing the operator $\Rop{\cdot}$, which is defined as 
\begin{equation}
\Rop{\vect{g}} = \br{\pderiv{\vect{g}}{\vect{x}}}^T\vect{v}.
\end{equation}
Making use of the R operator and identifying $\vect{g} = \pderiv{f_i}{\bm{\theta}}$ and $\vect{v} = \pderiv{E}{\vect{d}_i}$, we can write the derivative of $E\br{\vect{x},\bm{\theta}}$ with respect to $\bm{\theta}$ as
\begin{equation}
\pderivnull{\bm{\theta}}E\br{\vect{x},\bm{\theta}} = \sum_i{\Rop{\pderiv{f_i}{\bm{\theta}}}}.
\end{equation}
For example, when the loss is the derivative square error, we have
\begin{equation}
\pderivnull{\bm{\theta}}E_\mathrm{DSE}\br{\vect{x},\bm{\theta}} = \frac{1}{I}\sum_i{\spderivb{f_i}{\bm{\theta}}{\vect{x}}\br{\pderiv{f_i}{\vect{x}} - \pderiv{t_i}{\vect{x}}}} = \frac{1}{I}\sum_i{\Rop{\pderiv{f_i}{\bm{\theta}}}}.
\end{equation}
In the above, the vector each Hessian $\spderivb{f_i}{\bm{\theta}}{\vect{x}}$ is multiplied by is
\begin{equation}
\vect{v}_i = \pderiv{f_i}{\vect{x}} - \pderiv{t_i}{\vect{x}}.
\end{equation}

Putting all the above together, the stochastic gradient training procedure for losses that involve derivatives with respect to the input is described below.
\begin{framed}
\begin{enumerate}[label=(\roman*)]
\item Generate a minibatch $\set{\vect{x}_s}$ of size $S$ from $\prob{\vect{x}}$.
\item Calculate $\pderiv{t}{\vect{x}}$ and $\pderiv{f}{\vect{x}}$ for all datapoints in the minibatch.
\item For each output $i$ and for each $\vect{x}_s$, calculate $\vect{v}_i = \pderiv{E}{\vect{d}_i}$ and make use of the R technique (appendix~\ref{chapter:R_technique}) to calculate the Hessian-vector product $\Rop{\pderiv{f_i}{\bm{\theta}}} = \spderivb{f_i}{\bm{\theta}}{\vect{x}}\,\vect{v}_i$.
\item Calculate the stochastic gradient $\hat{\vect{g}}\br{\bm{\theta}} = \frac{1}{S}\sum_{s}{\pderivnull{\bm{\theta}} E\br{\vect{x}_s,\bm{\theta}}}$.
\item Make an update on $\bm{\theta}$ using $\hat{\vect{g}}\br{\bm{\theta}}$.
\item Repeat until convergence.
\end{enumerate}
\end{framed}
\noindent Under certain regularity conditions, and provided the samples from $\prob{\vect{x}}$ are exact and independent, the above algorithm is guaranteed to converge almost surely to a stationary point of the average loss \citep{Bottou:1999:online_learning}.

Notice that the above algorithm is fairly general, since the only requirement it has on the student is that $f\br{\vect{x}\g\bm{\theta}}$ must be differentiable. If $f\br{\vect{x}\g\bm{\theta}}$ is a neural network, then each $\pderiv{f_i}{\bm{\theta}}$ can be calculated using backward propagation and each  $\Rop{\pderiv{f_i}{\bm{\theta}}}$ can be calculated using R\{backprop\}, as described in detail in appendix~\ref{chapter:derivatives_in_neural_networks}. 

Since backprop and R\{backprop\} have roughly the same cost, calculating $\pderiv{f_i}{\bm{\theta}}$ and $\Rop{\pderiv{f_i}{\bm{\theta}}}$ for all $i$ can be done at the cost of $2I$ backprops. In contrast, loss functions that only match function values require a singe backprop per datapoint. Hence, including derivatives of functions with $I$ outputs in the loss increases the computational cost roughly $2I$ times compared to losses that do not involve derivatives. As a consequence, even though matching derivatives does not scale badly with the input dimensionality, it does scale linearly with the output dimensionality, therefore it is better suited to models with relatively few outputs.

\section{Case study: compressing a neural ensemble}
\label{sec:model_compression:case_study}

In this section, we put our model compression framework into practice, in order to compress an ensemble of neural networks into a single small neural network. The framework asks for the implementation of (a) a loss for measuring the discrepancy between the teacher and the student, and (b) an appropriate data generator for the input data. In the following, we describe the neural ensemble to be compressed, the losses and data generators we employed, and we present and discuss experimental results.

\subsection{The neural ensemble}
\index{Ensemble}

We trained an ensemble of $30$ neural networks\index{Neural network} for handwritten digit classification on the MNIST dataset\index{MNIST dataset}. The MNIST dataset \citep{LeCun:MNIST_web} is a dataset of greyscale $28\times 28$ images of handwritten digits ($0$ to $9$), and it is commonly used as a benchmark for classification algorithms. The dataset is partitioned into a train set of $60{,}000$ images, which we used to train the ensemble, and a test set of $10{,}000$ images. The objective is to classify each image as being one of the $10$ digits. Figure~\ref{fig:model_compression:mnist:original} shows some random samples from the MNIST train set.

The ensemble is composed of $30$ neural networks of the same architecture. Each neural network has two hidden layers with $500$ and $300$ hidden units respectively, and an output layer of $10$ units. The hidden layers use ReLU\index{Layer!ReLU} activation functions (as described in section~\ref{sec:derivs_neural_nets:relu}) and the output layer uses softmax\index{Layer!Softmax} activation functions (as described in section~\ref{sec:derivs_neural_nets:softmax}). Each network is therefore of the following type
\begin{equation}
784\xrightarrow{\mathit{ReLU}} 500\xrightarrow{\mathit{ReLU}}300\xrightarrow{\mathit{softmax}}10.
\end{equation}

Each network in the ensemble was trained separately on a different dataset of $60{,}000$ images, that was created by resampling the MNIST train set with replacement. We trained each network by minimizing cross entropy\index{Cross entropy} with stochastic gradient descent\index{Stochastic gradient descent}, using minibatches of $20$ datapoints, and performing $20$ passes over the dataset in total. We used ADADELTA\index{ADADELTA} \citep{Zeiler:2012:adadelta} as a learning rate strategy, which uses a different learning rate for each network parameter and adapts them during training. ADADELTA is controlled by two hyperparameters, for which we used the values suggested in the original paper.

Given an image $\vect{x}$, each individual network $n$ outputs a discrete probability distribution $t^{\br{n}}\br{\vect{x}}$ with $10$ elements, corresponding to how likely the network thinks it is that $\vect{x}$ belongs to each class. The predictions from all networks are then averaged together, in order to give a final prediction as follows
\begin{equation}
t\br{\vect{x}} = \frac{1}{N}\sum_n{t^{\br{n}}\br{\vect{x}}}.
\end{equation}
On the test set, the above ensemble achieves an accuracy of $98.67\% \pm 0.23\%$ and an average log probability of $-0.053 \pm 0.011$ nats. The error bars correspond to two standard deviations. These results are comparable to those reported in the literature for models of similar architecture \citep{LeCun:MNIST_web}.

\begin{figure}[tbp]
\def\imwidth{0.48\textwidth}
\centering
\subfloat[Original\label{fig:model_compression:mnist:original}]{
\includegraphics[width=\imwidth]{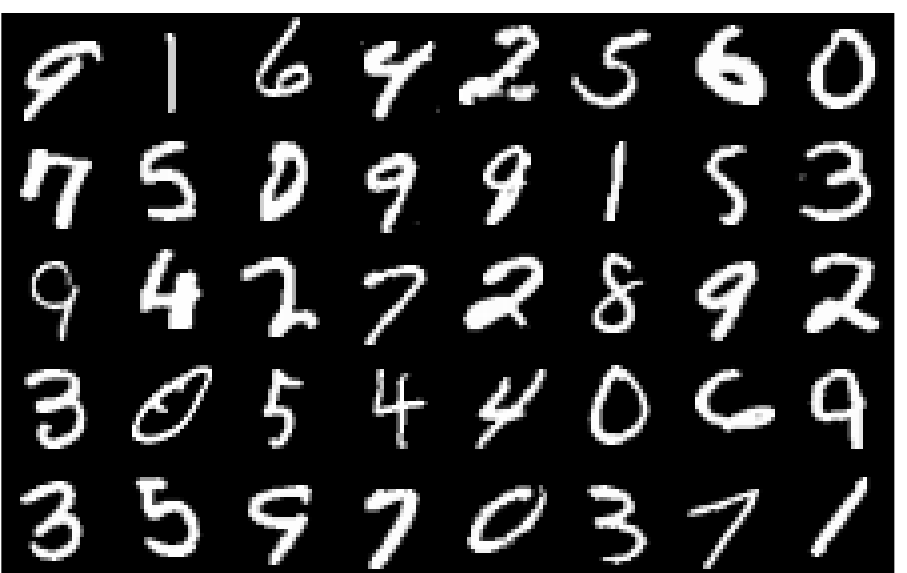}}
\hfill
\subfloat[Binarized\label{fig:model_compression:mnist:binarized}]{
\includegraphics[width=\imwidth]{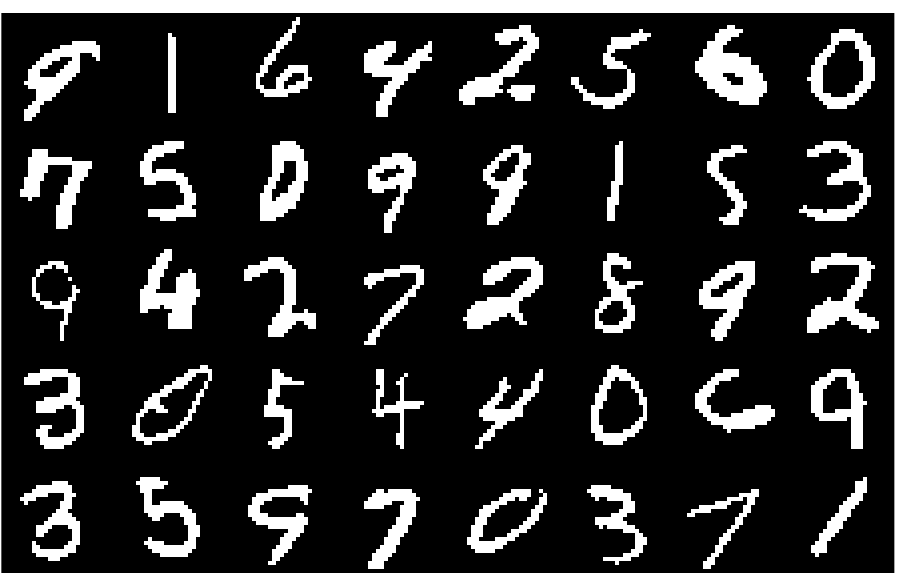}}
\caption{Samples from the MNIST dataset of handwritten digits. White corresponds to $1$ and black to $0$. The binary version was only used to train NADE\@.}
\label{fig:model_compression:mnist}
\end{figure}

\subsection{Loss functions}

Our model compression framework requires the specification of a suitable loss $E\br{\bm{\theta}} = \avg{E\br{\vect{x},\bm{\theta}}}{\prob{\vect{x}}}$ in order to measure the discrepancy between the neural ensemble $t\br{\vect{x}}$ and the student network $f\br{\vect{x}\g \bm{\theta}}$. As presented in section~\ref{sec:model_compression:framework}, our framework supports two kinds of losses; losses that (a) try to match function values, and losses that (b) try to match function derivatives. In our case study, we used two losses, namely \emph{cross entropy}, which matches function values, and \emph{derivative square error}, which matches function derivatives. The following two sections discuss these two losses in detail and establish their validity.

\subsubsection{Cross entropy}
\index{Cross entropy}

In our case study, both the teacher $t\br{\vect{x}}$ and the student $f\br{\vect{x}\g \bm{\theta}}$, for each input location $\vect{x}$, output a vector of probabilities, which corresponds to how likely the models think it is that $\vect{x}$ belongs to each class. A natural measure of discrepancy between $t\br{\vect{x}}$ and $f\br{\vect{x}\g \bm{\theta}}$ would then be the KL divergence\index{KL divergence} from the teacher to the student. For some input $\vect{x}$, this KL divergence can be written as
\begin{equation}
\kl{t\br{\vect{x}}}{f\br{\vect{x}\g \bm{\theta}}} = 
\sum_i {t_i\br{\vect{x}}\log{t_i\br{\vect{x}}}} - 
\sum_i {t_i\br{\vect{x}}\log{f_i\br{\vect{x}\g \bm{\theta}}}}.
\end{equation}
In the above expression, the term $\sum_i {t_i\br{\vect{x}}\log{t_i\br{\vect{x}}}}$ is the negative entropy of $t\br{\vect{x}}$ and is a constant with respect to $\bm{\theta}$. Therefore, minimizing the above KL divergence is equivalent to minimizing the quantity
\begin{equation}
E_\mathrm{CE}\br{\vect{x},\bm{\theta}} = -\sum_i{t_i\br{\vect{x}}\log{f_i\br{\vect{x}\g\bm{\theta}}}},
\end{equation}
which is known as \emph{cross entropy}.

Cross entropy is a loss commonly used in training for classification problems. It only involves function values, so it can be optimized as described in section~\ref{sec:model_compression:framework:matching_values}. It is well known that the KL divergence is always non-negative, and that it becomes zero if and only if $t\br{\vect{x}} = f\br{\vect{x}\g \bm{\theta}}$ \citep[section~2.6]{MacKay:2002:IT}. Therefore, assuming $f\br{\vect{x}\g \bm{\theta}}$ has enough capacity, $\avg{E_\mathrm{CE}\br{\vect{x},\bm{\theta}}}{\prob{\vect{x}}}$ will achieve its global minimum if and only if $t\br{\vect{x}} = f\br{\vect{x}\g \bm{\theta}}$ everywhere in the support of $\prob{\vect{x}}$.

\subsubsection{Derivative square error}
\index{Derivative square error}

A different approach for making $t\br{\vect{x}}$ and $f\br{\vect{x}\g \bm{\theta}}$ agree everywhere is by making sure that their derivatives with respect to $\vect{x}$ match. To measure the discrepancy between these derivatives, we herein propose the \emph{derivative square error} loss, which we define as follows
\begin{equation}
E_\mathrm{DSE}\br{\vect{x},\bm{\theta}} =
\frac{1}{2I}\sum_i{\norm{\pderivnull{\vect{x}}\log{f_i\br{\vect{x}\g \bm{\theta}}} - \pderivnull{\vect{x}}\log{t_i\br{\vect{x}}}}^2}.
\end{equation}
Note that since both $t\br{\vect{x}}$ and $f\br{\vect{x}\g \bm{\theta}}$ output probabilities, we choose to convert them into the log domain before taking their derivatives. This loss contains only derivatives, and therefore it can be optimized using the framework described in section~\ref{sec:model_compression:framework:matching_derivatives}.

In the general case, matching the derivatives of two arbitrary---but differentiable---functions makes them equal only up to an additive constant. Therefore, in general, only matching derivatives is not sufficient in itself to make the functions equal. However, in our case, where the functions are probability distributions, we can show that, under very mild conditions, matching derivatives is actually sufficient for making the functions equal. The following proposition proves this claim, and provides the conditions that need to apply.
\begin{proposition}
Assuming that $f\br{\vect{x}\g \bm{\theta}}$ has sufficient capacity, and that there exist points $\vect{x}_1, \vect{x}_2, \ldots, \vect{x}_I$ in the support of $\prob{\vect{x}}$ for which $t\br{\vect{x}_1}, t\br{\vect{x}_2}, \ldots, t\br{\vect{x}_I}$ are linearly independent, then the global minimum of $\avg{E_\mathrm{DSE}\br{\vect{x},\bm{\theta}}}{\prob{\vect{x}}}$ is achieved if and only if $f\br{\vect{x}\g \bm{\theta}} = t\br{\vect{x}}$ everywhere in the support of $\prob{\vect{x}}$.
\end{proposition}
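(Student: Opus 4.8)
The plan is to prove both directions using the fact that $E_\mathrm{DSE}\br{\vect{x},\bm{\theta}}$ is an average of squared norms and is therefore bounded below by zero. The sufficient-capacity assumption guarantees a parameter setting with $f\br{\vect{x}\g\bm{\theta}} = t\br{\vect{x}}$ everywhere on the support; this makes the two sets of log-derivatives identical and drives the loss to zero, so the global minimum of $\avg{E_\mathrm{DSE}\br{\vect{x},\bm{\theta}}}{\prob{\vect{x}}}$ equals zero and the ``if'' direction is immediate. The substance of the proposition is the ``only if'' direction.

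First I would argue that at a global minimum the loss is zero, and since it is an average over $\prob{\vect{x}}$ of the non-negative integrand $E_\mathrm{DSE}\br{\vect{x},\bm{\theta}}$, the integrand must vanish throughout the support of $\prob{\vect{x}}$. By the definition of the loss, this forces $\pderivnull{\vect{x}}\log f_i\br{\vect{x}\g\bm{\theta}} = \pderivnull{\vect{x}}\log t_i\br{\vect{x}}$ for every output $i$ and every $\vect{x}$ in the support. I would then integrate this condition: equal gradients mean that $\log f_i\br{\vect{x}\g\bm{\theta}} - \log t_i\br{\vect{x}}$ has zero gradient and is therefore constant across the (connected) support; writing the constant as $\log k_i$ gives $f_i\br{\vect{x}\g\bm{\theta}} = k_i\, t_i\br{\vect{x}}$ with $k_i > 0$ for each $i$.

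Next I would invoke the fact that both $t$ and $f$ are probability vectors, so $\sum_i f_i = \sum_i t_i = 1$, which yields $\sum_i \br{k_i - 1} t_i\br{\vect{x}} = 0$ for all $\vect{x}$ in the support. Finally I would use the linear-independence hypothesis: evaluating this identity at the $I$ points $\vect{x}_1, \ldots, \vect{x}_I$ shows that the vector $\br{k_1 - 1, \ldots, k_I - 1}$ is orthogonal to each of $t\br{\vect{x}_1}, \ldots, t\br{\vect{x}_I}$. Since these $I$ vectors are linearly independent in $\R^I$ they span the whole space, so the only vector orthogonal to all of them is zero; hence $k_i = 1$ for every $i$ and $f_i\br{\vect{x}\g\bm{\theta}} = t_i\br{\vect{x}}$ everywhere on the support.

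The step I expect to be the main obstacle is the integration of the gradient condition, because concluding that $\log f_i - \log t_i$ is a single constant requires the support of $\prob{\vect{x}}$ to be connected; if the support splits into several components the constant $k_i$ could differ between them, and the normalization-plus-independence argument would then have to be carried out component by component, with the $I$ independent points lying in a single component. I would therefore either assume connectedness of the support or state it explicitly as one of the ``mild conditions'' that the proposition alludes to.
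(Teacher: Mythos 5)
Your proof is correct and follows essentially the same route as the paper's: non-negativity forces the integrand to vanish on the support, matching log-gradients give $f_i = c_i t_i$, normalization gives $\sum_i (c_i - 1) t_i(\vect{x}) = 0$, and linear independence of the $t(\vect{x}_j)$ forces all $c_i = 1$ (you argue via spanning, the paper via a contradiction with $I+1$ independent vectors in $\R^I$ — the same fact). Your closing caveat about connectedness of the support is a genuine gap that the paper's own proof silently glosses over, so flagging it as an additional mild condition is an improvement rather than a deviation.
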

\begin{proof}
Let $S_p$ be the support of $\prob{\vect{x}}$, that is, the set of points $\vect{x}$ for which $\prob{\vect{x}}>0$. Notice that $\avg{E_\mathrm{DSE}\br{\vect{x},\bm{\theta}}}{\prob{\vect{x}}} \ge 0$, as the expectation of a non-negative quantity. If $f\br{\vect{x}\g \bm{\theta}} = t\br{\vect{x}}$ in $S_p$, then obviously $\avg{E_\mathrm{DSE}\br{\vect{x},\bm{\theta}}}{\prob{\vect{x}}} = 0$, hence it is globally minimized. Conversely, if $\avg{E_\mathrm{DSE}\br{\vect{x},\bm{\theta}}}{\prob{\vect{x}}} = 0$, then for all $i$ and all $\vect{x}$ in $S_p$ we have
\begin{equation}
\pderivnull{\vect{x}}\log{f_i\br{\vect{x}\g \bm{\theta}}} = \pderivnull{\vect{x}}\log{t_i\br{\vect{x}}}
\quad\Rightarrow\quad
f_i\br{\vect{x}\g \bm{\theta}} = c_i\,t_i\br{\vect{x}},
\end{equation}
for some positive constants $c_i$. Since both $t\br{\vect{x}}$ and $f\br{\vect{x}\g \bm{\theta}}$ are probability distributions, we have that
\begin{equation}
\sum_i {c_i\,t_i\br{\vect{x}}} = \sum_i {f_i\br{\vect{x}\g \bm{\theta}}} = 1=\sum_i {t_i\br{\vect{x}}}
\quad\Rightarrow\quad
\sum_i{\br{c_i - 1}t_i\br{\vect{x}}} = 0.
\end{equation}
The above can be written as $\bm{\alpha}^Tt\br{\vect{x}} = 0$, where $\bm{\alpha}$ is defined to be a vector of length $I$ that has $\br{c_i - 1}$ as its elements. Assume that $\bm{\alpha} \neq \vect{0}$. Since $\vect{x}_1, \vect{x}_2, \ldots, \vect{x}_I$ are all in $S_p$, we have that $\bm{\alpha}$ is orthogonal to all $t\br{\vect{x}_i}$, hence the collection of all $t\br{\vect{x}_i}$ together with  $\bm{\alpha}$ forms a set of linearly independent vectors. This is a contradiction, since it implies the existence of $\br{I+1}$ linearly independent vectors in an $I$-dimensional space. We have thus shown that $\bm{\alpha} = \vect{0}$, hence $c_i = 1$ for all $i$, therefore $f\br{\vect{x}\g \bm{\theta}} = t\br{\vect{x}}$ for all $\vect{x}$ in $S_p$.
\end{proof}

The above proposition says that, as long as we can find $I$ input locations for which $t\br{\vect{x}}$ are linearly independent, then derivative square error is a valid loss. For most practical problems of interest, this is quite a reasonable assumption to make. For example, in our case study, the neural ensemble was trained on MNIST with data from all $10$ classes. Therefore, there must be $10$ regions in the input space for which a different one of the outputs is close to $1$, while all others are close to $0$. If this is not the case, then the model would be a poor classifier and we would probably not want to compress it in the first place.

\subsection{Data generators}

In our model compression framework, we need to specify a data generator $\prob{\vect{x}}$, whose job is to generate the input data used during training. The choice of data generator is highly important, as it determines the region of input space for which agreement between the teacher and the student is sought. In the following sections, we describe the three data generators we employed in our case study, namely \emph{resampling the dataset}, \emph{NADE} and \emph{random noise}.

\subsubsection{Resampling the dataset}

If we have access to a dataset of input locations, then an obvious thing to do is to use it as is. The way this works is that during training, in each iteration, we sample a minibatch of points from the dataset.

Sampling from the dataset can be done either with or without replacement. With replacement, each point in the minibatch is independently and uniformly at random selected from the whole dataset. In this case, the data generator takes the following form
\begin{equation}
\prob{\vect{x}} = \frac{1}{N} \sum_n\deltaf{\vect{x} - \vect{x}_n},
\end{equation}
where $\deltaf{\cdot}$ is the delta function and $\vect{x}_1, \vect{x}_2, \ldots, \vect{x}_N$ are the points in the dataset. Without replacement, each point is sampled uniformly at random, but then removed from the dataset. The sampling process resumes from the beginning when all the dataset has been sampled. This ensures that we see every other point in the dataset before seeing any point twice. In our experiments, we sampled the dataset without replacement.

In many real-life applications, labelled datasets are an expensive and potentially scarce resource, since it typically takes time and manual effort to do the labelling. However, unlabelled datasets may be abundant, especially in domains like vision and speech. Model compression is well-suited for such domains, since it does not require the dataset to be labelled; instead, all the labelling is done by the teacher model.
In our experiments, we used the MNIST train set (or part of it), on which the neural ensemble was trained in the first place.

\subsubsection{NADE}

Having access to a dataset is good, however it might not always be ideal. The dataset might be too small to cover the input space sufficiently, or if it is large, it might be expensive to store. And, no matter how large it is, it will always be finite. On the other hand, if we have access to a generative model\index{Generative model} from which we can generate data, we have essentially access to an infinite dataset, for limited storage cost. 

We can imagine having a dataset to begin with, which is not sufficiently large to fully cover the input space, but is large enough to train a generative model on it. Then, we can use this generative model for model compression. Essentially, this way we achieve massive dataset expansion, while at the same time never having to store more than a minibatch of data, since minibatches are generated on the fly and thrown away after being used.

Motivated by the above, in our experiments we used the Neural Autoregressive Distribution Estimator\index{NADE} (NADE for short), which was introduced by \citet{Larochelle:2011:NADE}. NADE is a generative model whose likelihood is fully tractable, therefore it can be trained on a dataset using maximum likelihood. Also, generating from NADE produces samples that are exact and independent, therefore it is well-suited for our framework. Furthermore, NADE is fairly flexible, so it can fit well datasets like MNIST\@. NADE is extensively used in chapter~\ref{chapter:generative_models}, so we refer the reader to section~\ref{sec:generative_models:NADE} for more details on it.

\begin{figure}[tbp]
\def\imwidth{0.48\textwidth}
\centering
\subfloat[Binary samples\label{fig:model_compression:samples_nade_60k:samples}]{
\includegraphics[width=\imwidth]{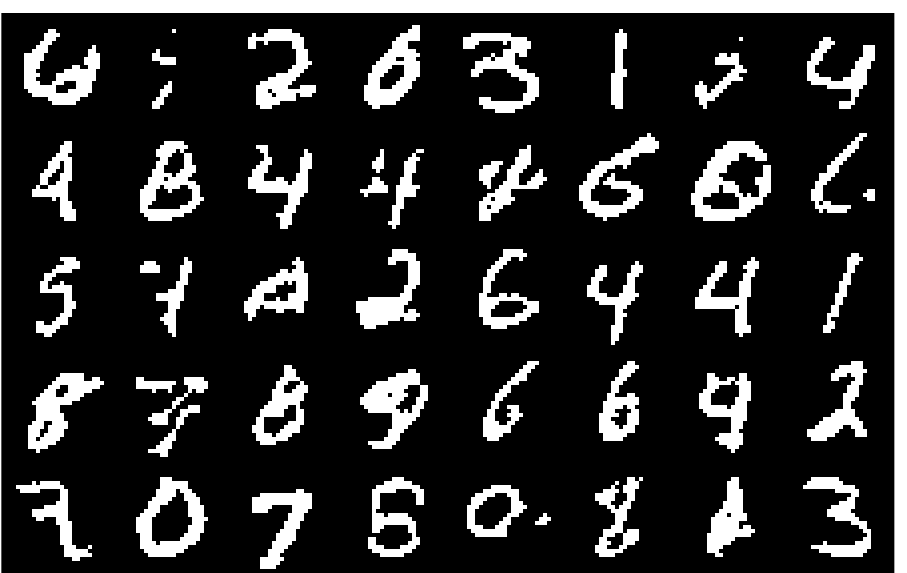}}
\hfill
\subfloat[Conditional probabilities\label{fig:model_compression:samples_nade_60k:prob}]{
\includegraphics[width=\imwidth]{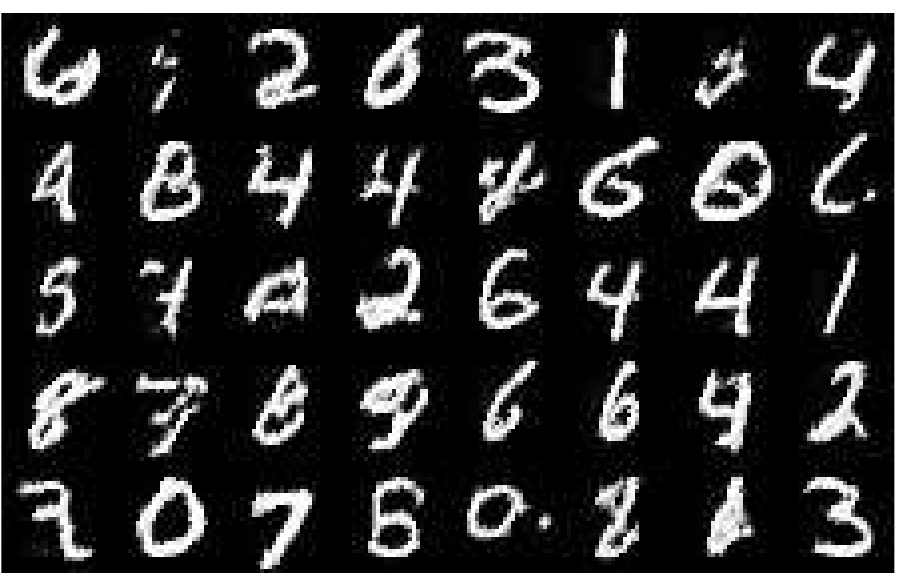}}
\caption{Samples from NADE trained on the full MNIST train set and the corresponding conditional probabilities. For model compression, the conditional probabilities were used instead of the actual samples.}
\label{fig:model_compression:samples_nade_60k}
\end{figure}

\begin{figure}[tbp]
\def\imwidth{0.48\textwidth}
\centering
\subfloat[Binary samples\label{fig:model_compression:samples_nade_6k:samples}]{
\includegraphics[width=\imwidth]{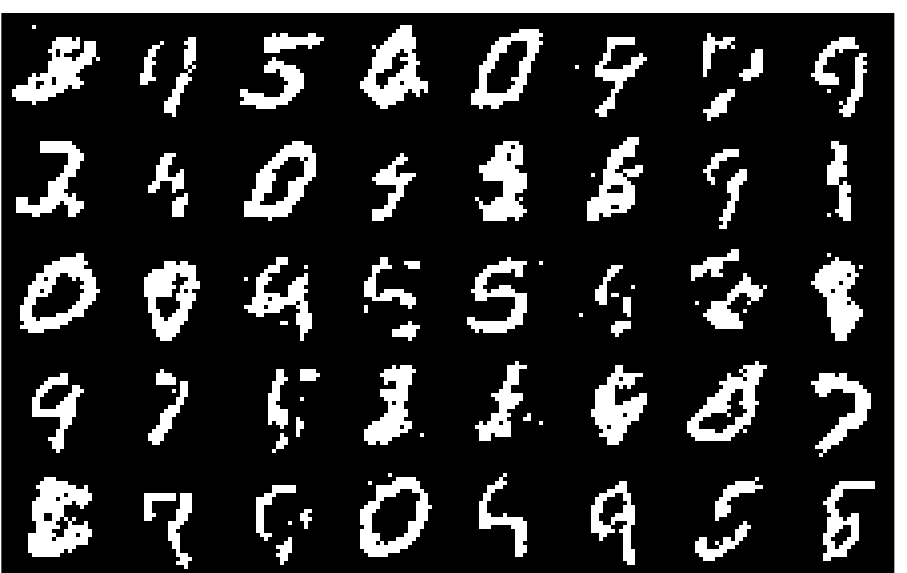}}
\hfill
\subfloat[Conditional probabilities\label{fig:model_compression:samples_nade_6k:prob}]{
\includegraphics[width=\imwidth]{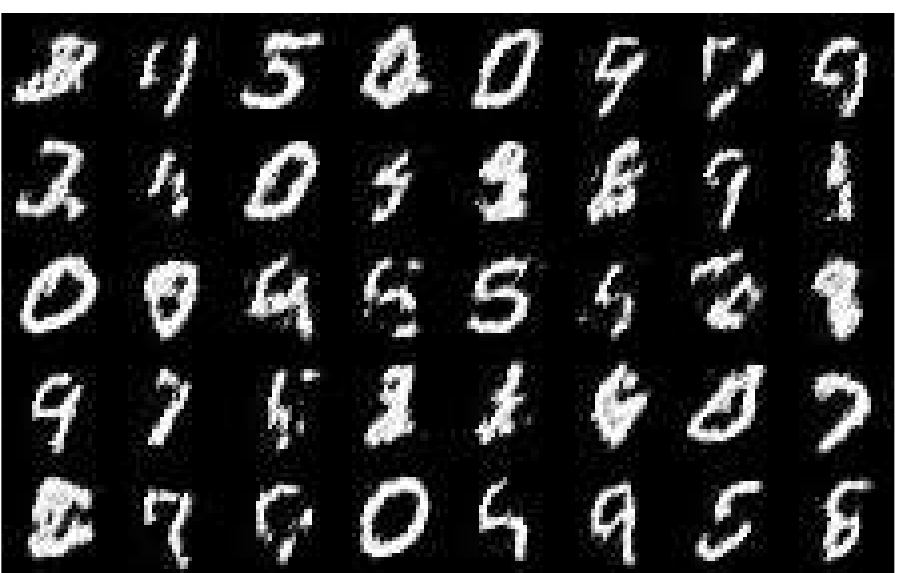}}
\caption{Samples from NADE trained on $10\%$ of the MNIST train set and the corresponding conditional probabilities. For model compression, the conditional probabilities were used instead of the actual samples.}
\label{fig:model_compression:samples_nade_6k}
\end{figure}

In our experiments, we trained a NADE with $500$ hidden units on the MNIST train set (or part of it). We used stochastic gradient descent\index{Stochastic gradient descent}, minibatches of size $20$, and performed $10$ passes over the train set. To set the learning rate strategy, we used ADADELTA\index{ADADELTA} \citep{Zeiler:2012:adadelta}, with its two hyperparameters set to the values suggested in the original paper. NADE is sensitive to the ordering of the input variables (which in this case are the pixel intensities). We used a raster columnwise ordering instead of a random ordering, which we found in practice to give a better generative model.

We used the binary version of NADE, which works only with binary inputs. To train it, we first binarized the MNIST train set by rounding its pixel intensities (which normally vary from $0$ to $1$). Figure~\ref{fig:model_compression:mnist:binarized} shows some binarized images from the MNIST train set. A problem with binary NADE is that the samples it generates are also binary. If these samples are used during model compression, this would limit severely the input space covered to the corners of a hypercube. However, together with samples, NADE also outputs the conditional probabilities of each pixel being turned on, given all generated pixels before it. These probabilities, varying from $0$ to $1$ typically resemble a ``smoothed'' version of the actual binary sample (see Figures~\ref{fig:model_compression:samples_nade_60k} and \ref{fig:model_compression:samples_nade_6k}). In all our experiments, we used the images formed by the conditional probabilities instead of the actual binary samples, since this gives a better coverage of the input space.\footnote{Another hacky option would have been to postprocess the samples with a low-pass filter to smoothen them.}

\subsubsection{Random noise}

In case we have neither a dataset nor a generative model of the input, we can always resort to using random noise as input. Nevertheless, generating inputs randomly is rarely a good idea. Typically, most randomly generated inputs will lie outside the regions of input space we are interested in, especially in high-dimensional spaces. In practice, it is rarely---if ever---the case that we are unable to build a data generator that produces inputs at least somewhat similar to those we are interested in, so we typically would not use random noise as input.

In our experiments, we used random Gaussian noise. That is, each image pixel was generated by a zero-mean, unit-variance Gaussian distribution. The main purpose of using random noise in our experiments is to demonstrate the importance of having a proper data generator when doing model compression.

\subsection{Results and discussion}
\label{sec:model_compression:case_study:results}

\begin{figure}[p]
\def\imwidth{0.48\textwidth}
\centering
\subfloat[Cross entropy]{
\includegraphics[width=\imwidth]{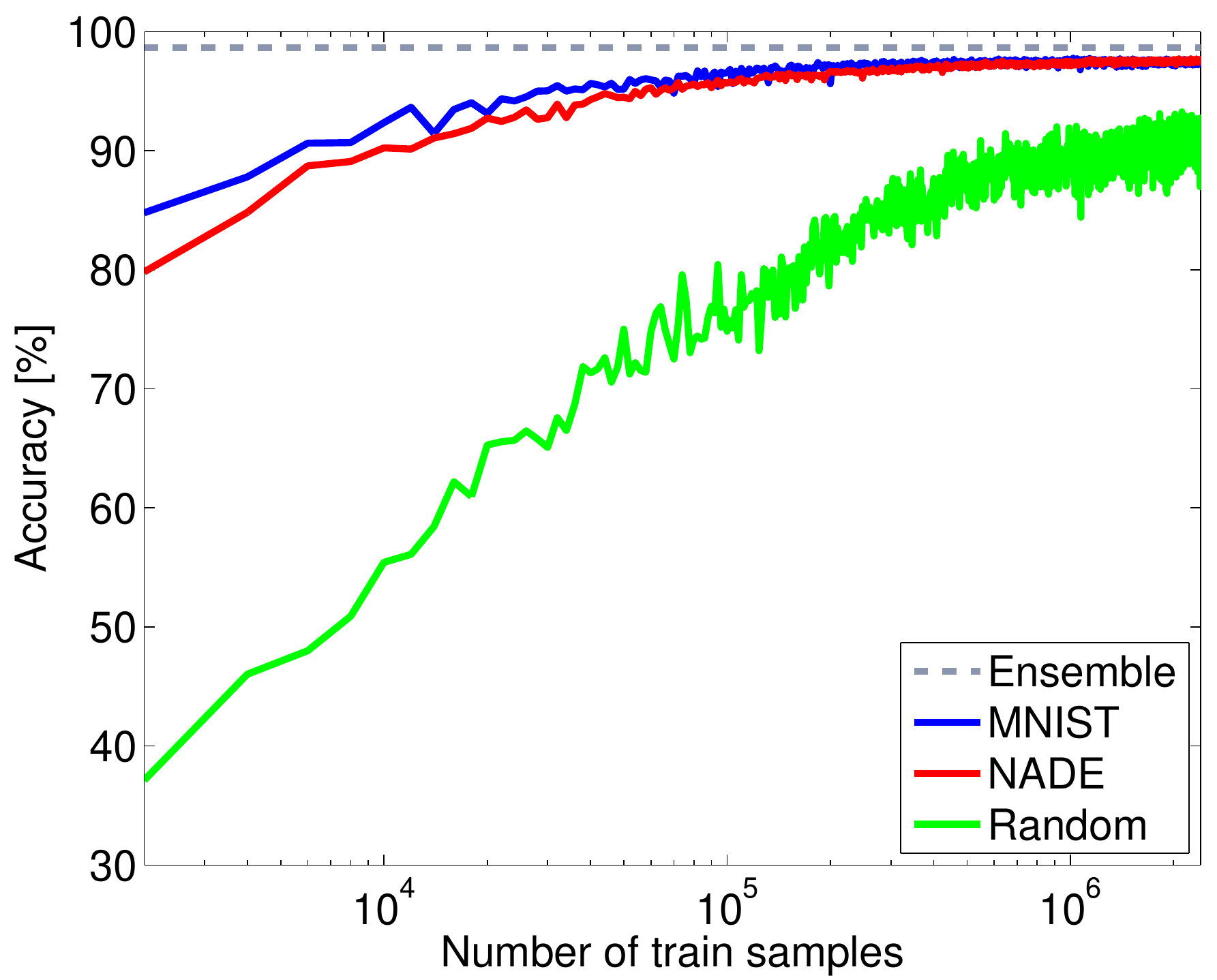}}
\hfill
\subfloat[Derivative square error]{
\includegraphics[width=\imwidth]{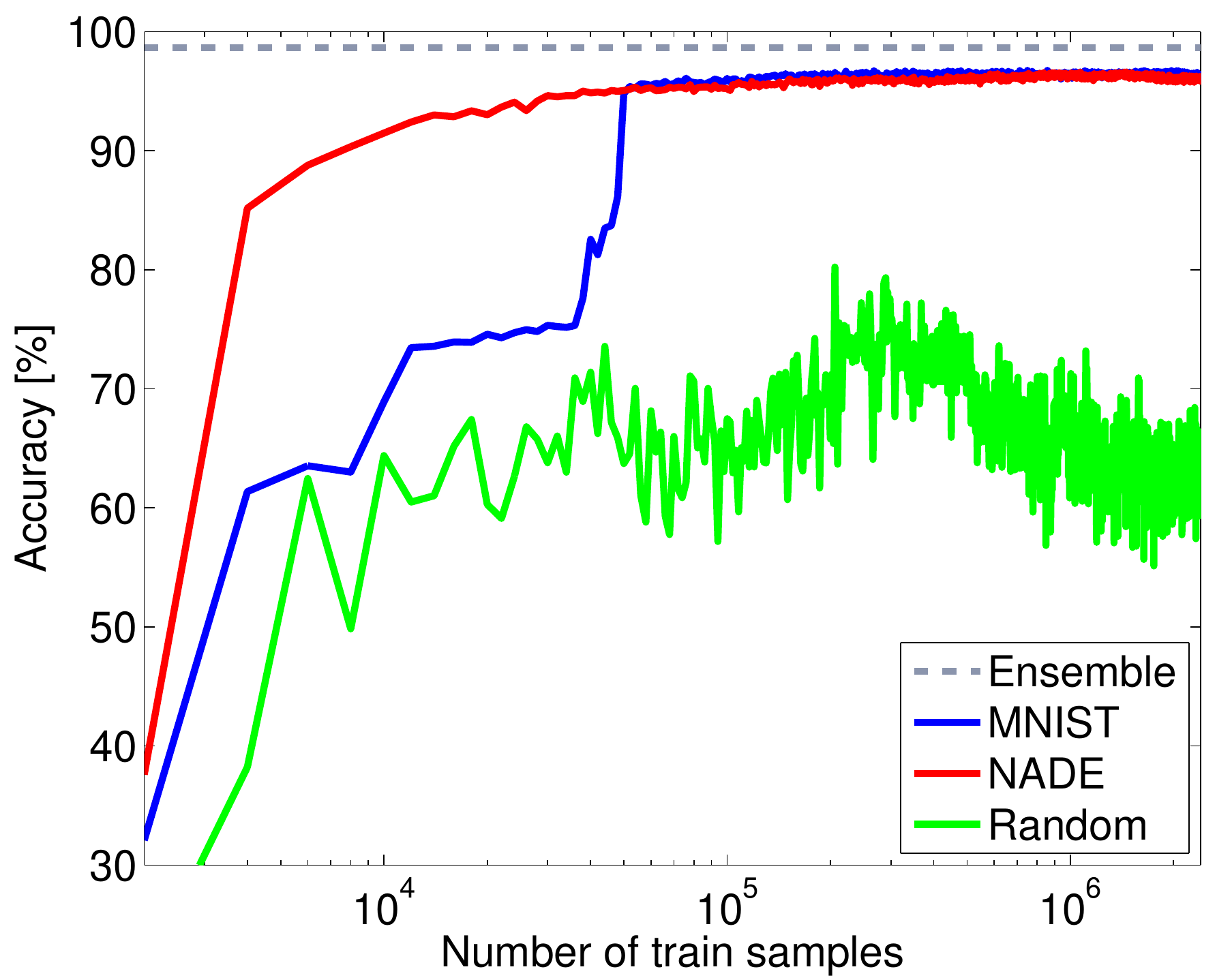}}
\caption{Training progress during compression of the neural ensemble, using the full MNIST train set. The plots show the accuracy of the student network on the MNIST test set for each loss and each data generator, as it progresses with the number of train samples. Note that the horizontal axis is in log scale.}
\label{fig:model_compression:results_accuracy_60k}
\end{figure}

\begin{figure}[p]
\def\imwidth{0.48\textwidth}
\centering
\subfloat[Cross entropy]{
\includegraphics[width=\imwidth]{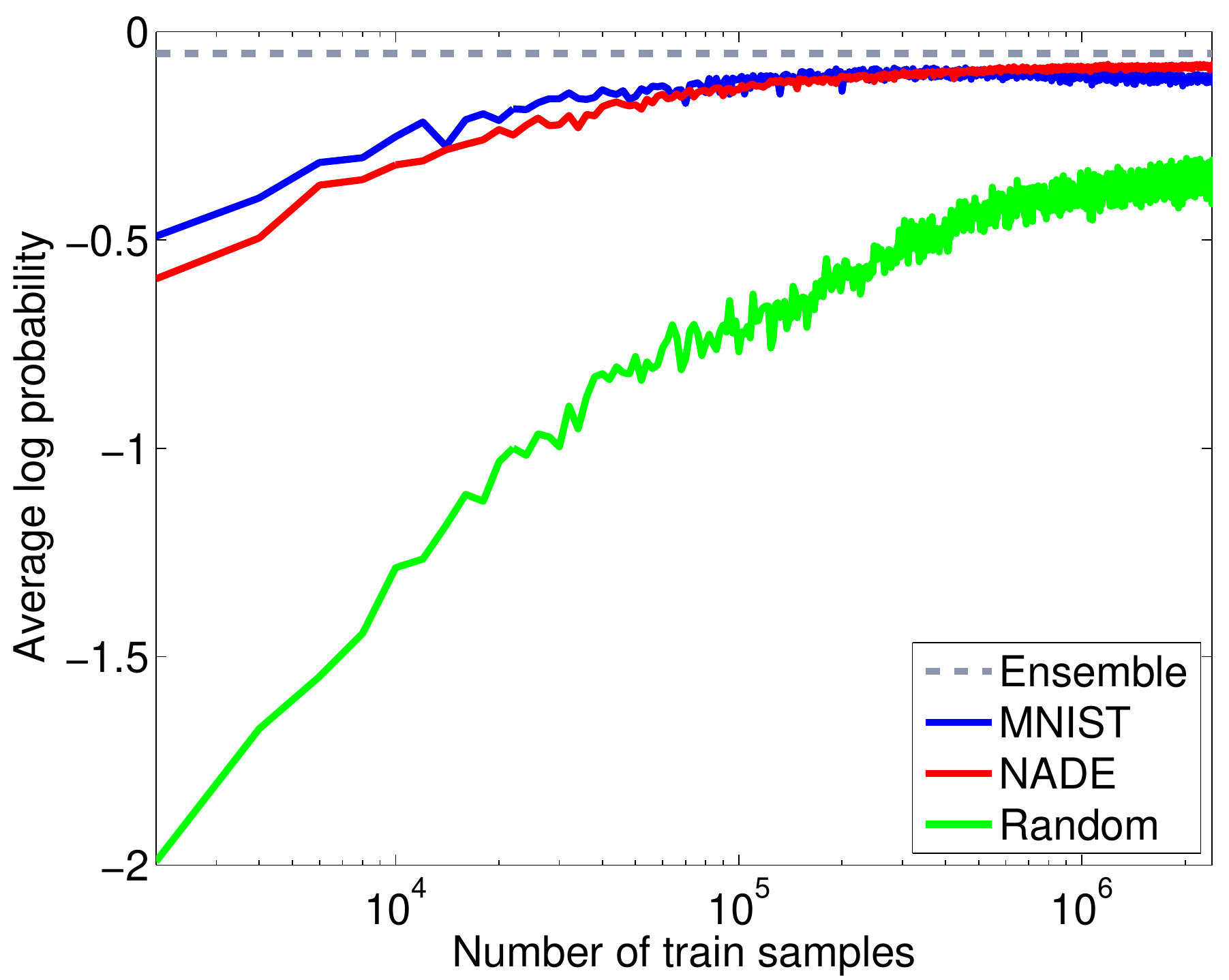}}
\hfill
\subfloat[Derivative square error]{
\includegraphics[width=\imwidth]{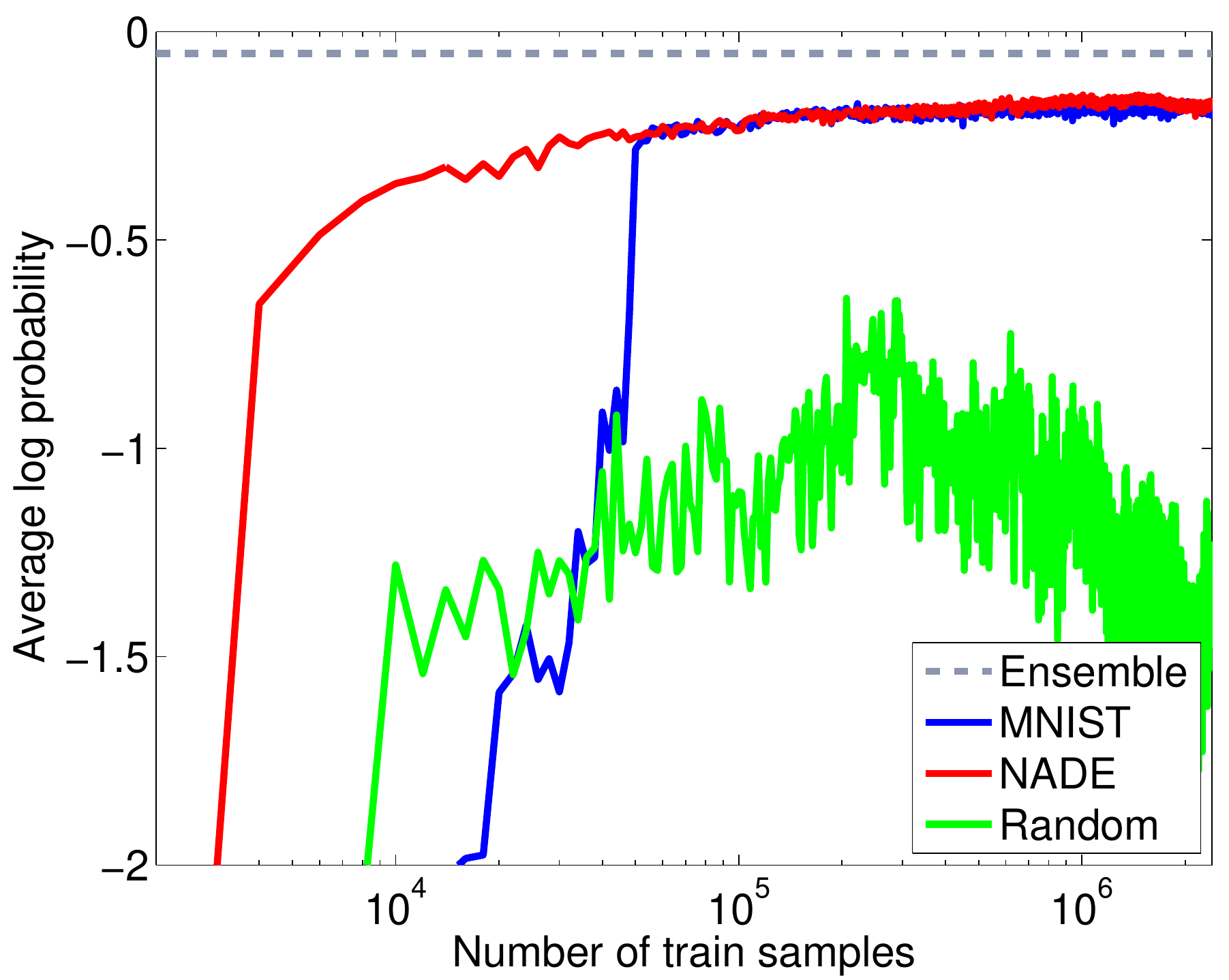}}
\caption{Training progress during compression of the neural ensemble, using the full MNIST train set. The plots show the average log probability of the student network on the MNIST test set for each loss and each data generator, as it progresses with the number of train samples. Note that the horizontal axis is in log scale.}
\label{fig:model_compression:results_logprob_60k}
\end{figure}

\begin{figure}[p]
\def\imwidth{0.48\textwidth}
\centering
\subfloat[Cross entropy]{
\includegraphics[width=\imwidth]{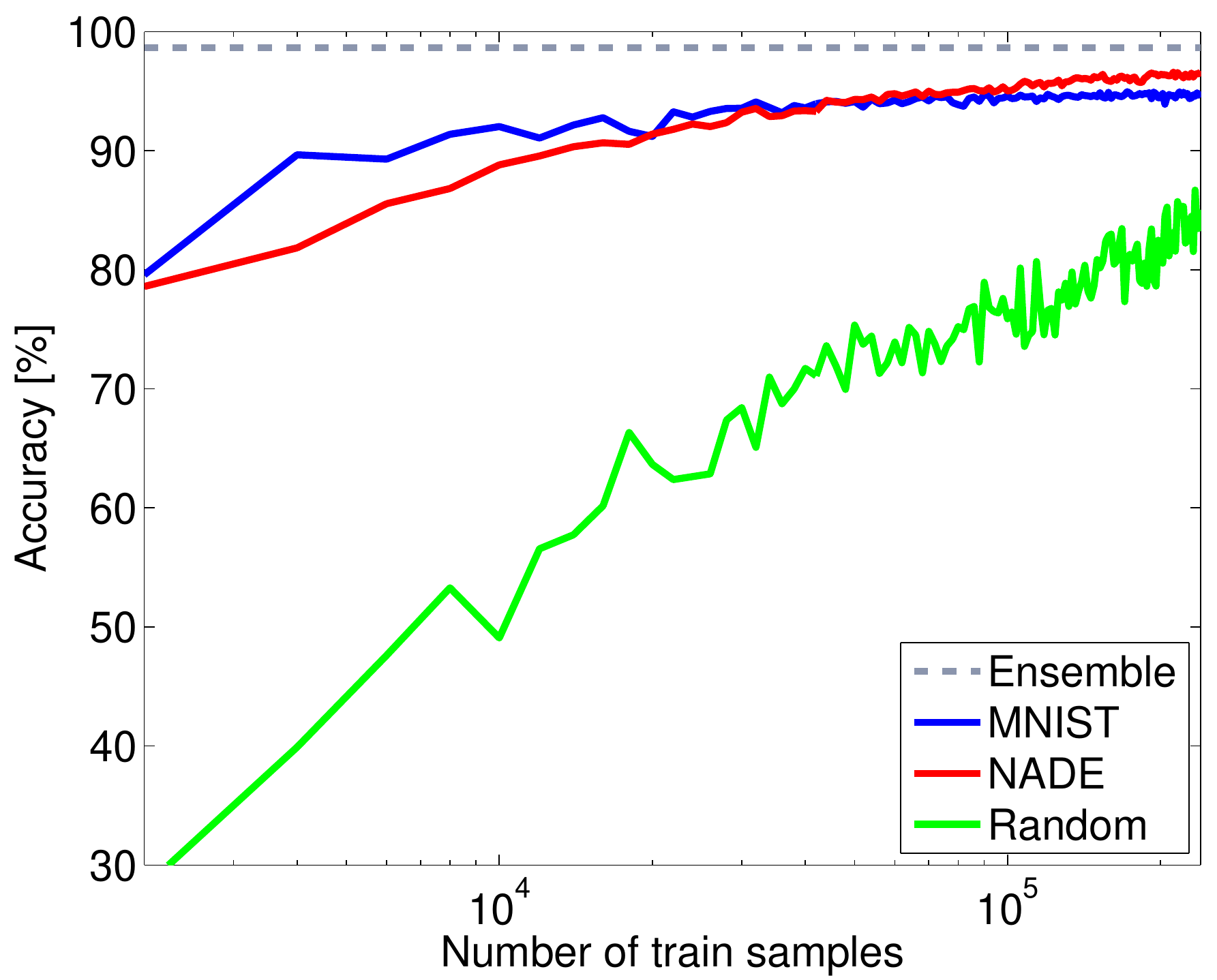}}
\hfill
\subfloat[Derivative square error]{
\includegraphics[width=\imwidth]{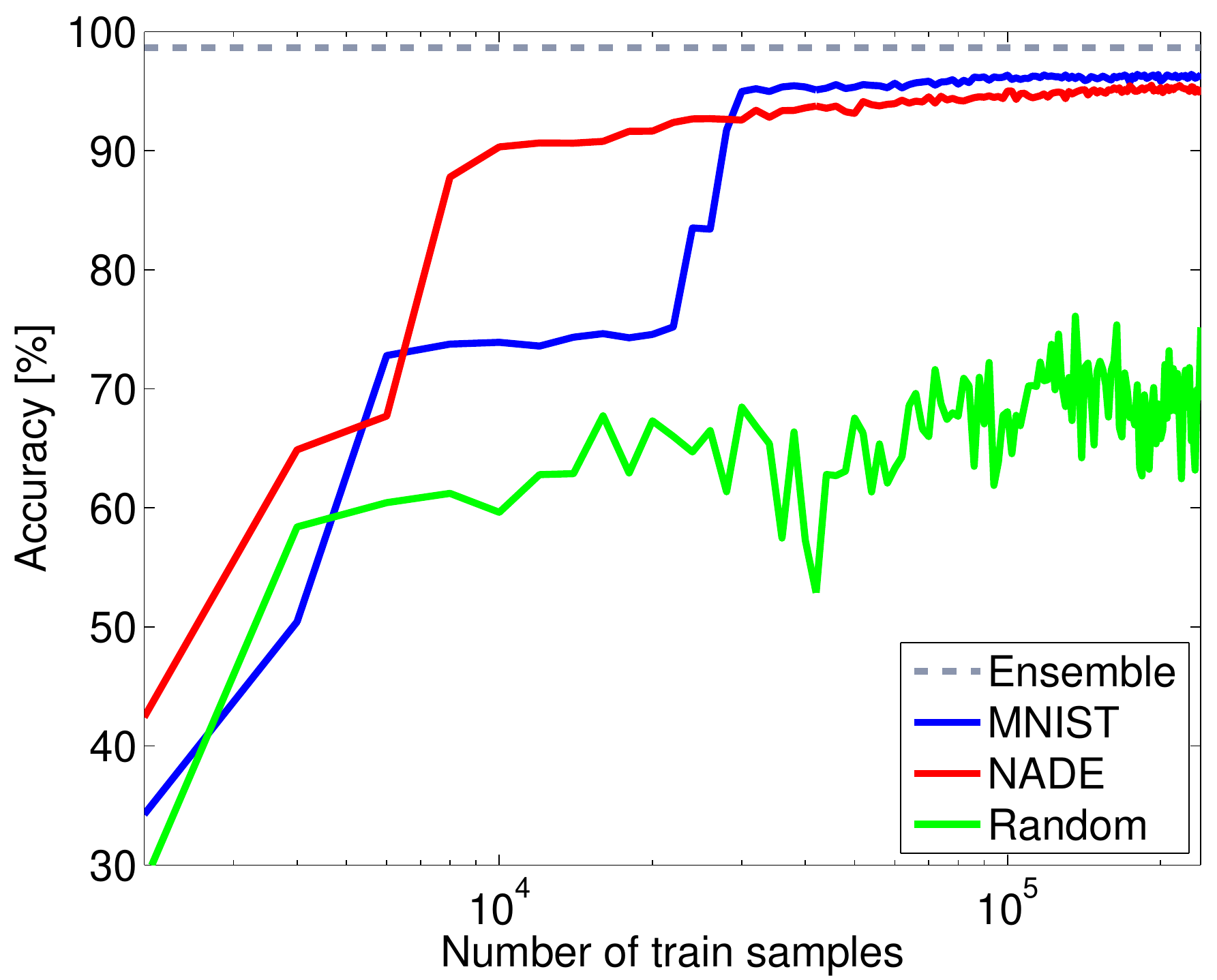}}
\caption{Training progress during compression of the neural ensemble, using $10\%$ of the MNIST train set. The plots show the accuracy of the student network on the MNIST test set for each loss and each data generator, as it progresses with the number of train samples. Note that the horizontal axis is in log scale.}
\label{fig:model_compression:results_accuracy_6k}
\end{figure}

\begin{figure}[p]
\def\imwidth{0.48\textwidth}
\centering
\subfloat[Cross entropy]{
\includegraphics[width=\imwidth]{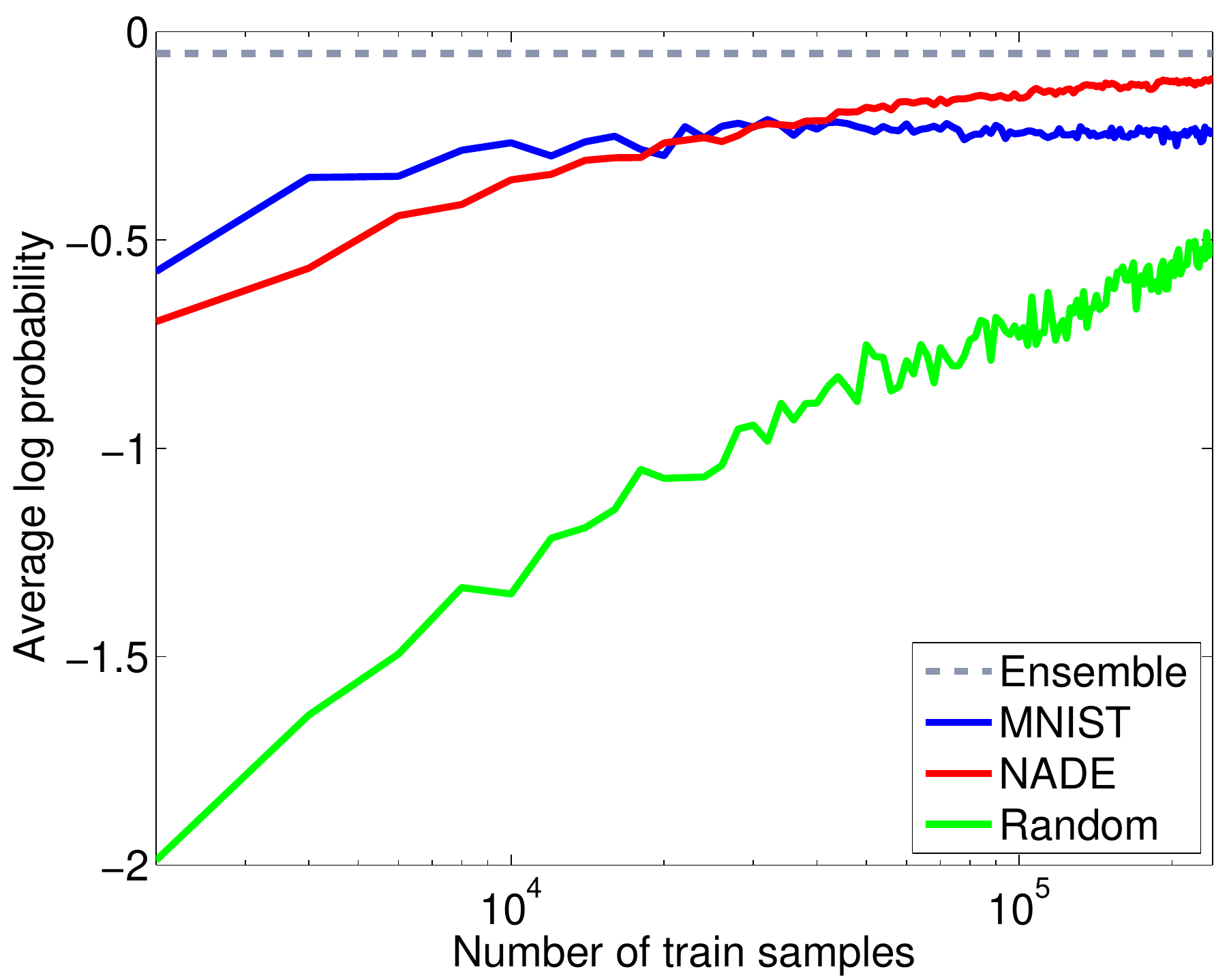}}
\hfill
\subfloat[Derivative square error]{
\includegraphics[width=\imwidth]{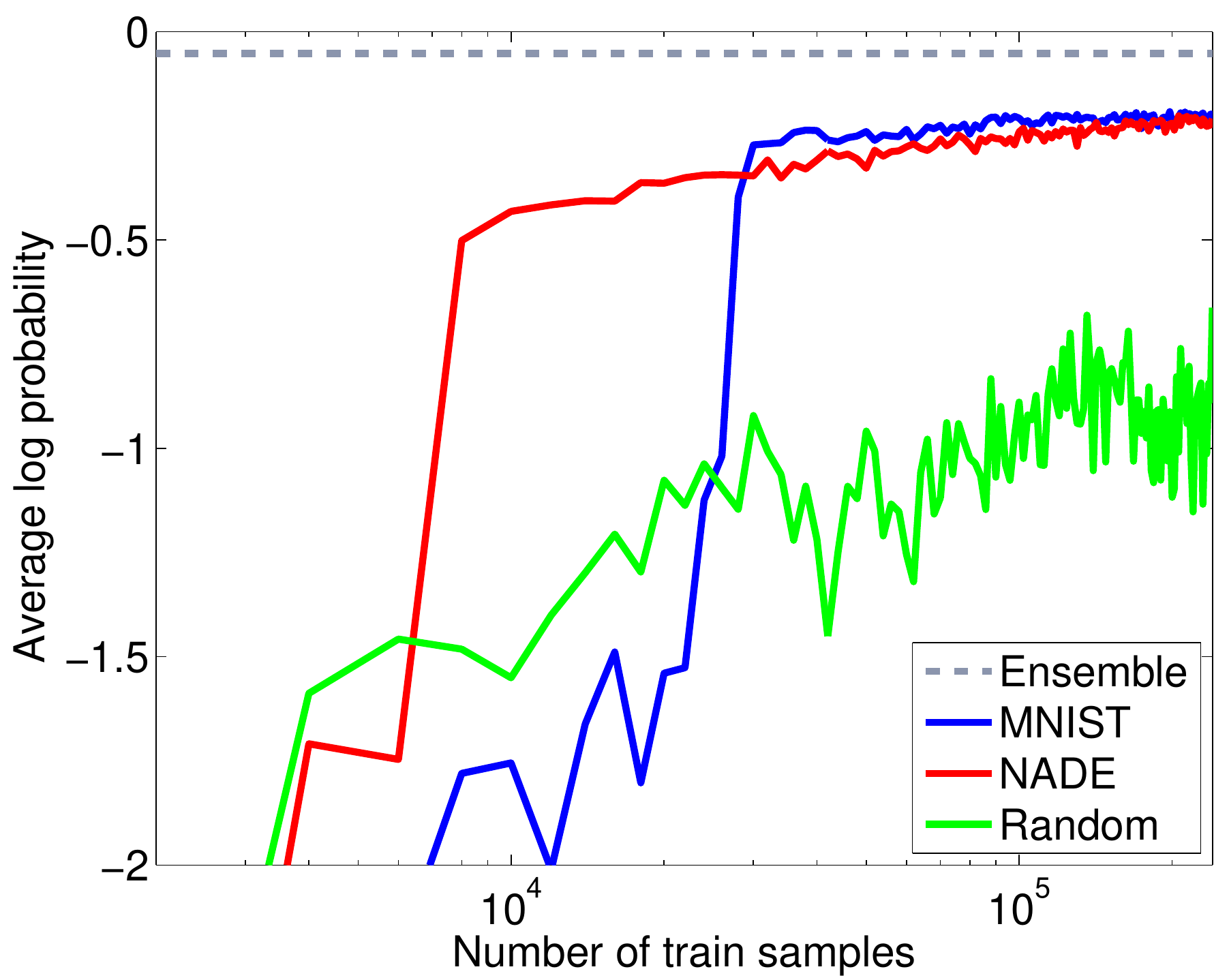}}
\caption{Training progress during compression of the neural ensemble, using $10\%$ of  the MNIST train set. The plots show the average log probability of the student network on the MNIST test set for each loss and each data generator, as it progresses with the number of train samples. Note that the horizontal axis is in log scale.}
\label{fig:model_compression:results_logprob_6k}
\end{figure}

We used our model compression framework to compress the neural ensemble into a single small neural network of the same type as the networks forming the ensemble, but with only $10\%$ the number of hidden units. In other words, the student model is a neural network of type
\begin{equation}
784\xrightarrow{\mathit{ReLU}} 50\xrightarrow{\mathit{ReLU}}30\xrightarrow{\mathit{softmax}}10.
\end{equation}
We used the $2$ losses (cross entropy and derivative square error) and the $3$ data generators (resampling the dataset, NADE and random noise) as described in the previous sections, resulting into $6$ different models. In all cases, we used minibatches of $20$ samples, and the equivalent of $40$ passes over the train data. We used ADADELTA\index{ADADELTA} \citep{Zeiler:2012:adadelta} to adaptively set the learning rate, with its two hyperparameters set to the values suggested in the original paper. Note that ADADELTA provides no convergence guarantees, however we found that it works well in practice, while it requires minimal tuning.

As already discussed, the MNIST train set was used both as a data generator and in order to train NADE\@. To measure the effect the size of the train set has on model compression, we performed the set of experiments twice. In the first set of experiments, we used the full MNIST train set, whereas in the second set, we only used $10\%$ of it. We measure how effectively the neural ensemble was compressed by evaluating the accuracy and the average log probability of each compressed neural network on the MNIST test set. Figures~\ref{fig:model_compression:results_accuracy_60k}, \ref{fig:model_compression:results_logprob_60k}, \ref{fig:model_compression:results_accuracy_6k} and \ref{fig:model_compression:results_logprob_6k} show the progress during training, and Tables~\ref{table:model_compression:results_acc} and \ref{table:model_compression:results_logprob} show the final results.

\begin{table}[t]
\renewcommand{\tabcolsep}{0.6cm}
\renewcommand{\arraystretch}{\arrstretchvalue}
\centering
\begin{tabular}{cccc}
\toprule
\textbf{Data generator} & \textbf{Train data} & \textbf{Cross entropy} & \textbf{Deriv.~sq.~error}\\
\midrule
\multirow{2}{*}{\textbf{MNIST}} & \textbf{All} & $97.46 \pm 0.31$& $96.50 \pm 0.37$ \\
& \textbf{$\mathbf{10}$\%} & $94.63 \pm 0.45$ & $96.36 \pm 0.37$ \\
\cmidrule{1-4}
\multirow{2}{*}{\textbf{NADE}}& \textbf{All} & $97.60 \pm 0.31$& $96.27 \pm 0.38$ \\
& \textbf{$\mathbf{10}$\%} & $96.64 \pm 0.36$& $95.33 \pm 0.42$  \\
\cmidrule{1-4}
\multirow{2}{*}{\textbf{Gaussian noise}} &  \textbf{All} & $91.98 \pm 0.54$ & $63.98 \pm 0.96$ \\
&  \textbf{$\mathbf{10}$\%} & $85.04 \pm 0.71$  & $75.17 \pm 0.86$  \\
\bottomrule
\end{tabular}
\caption{Performance of each compressed network, as measured by accuracy [\%] on the MNIST test set. The accuracy of the original ensemble is $98.67\% \pm 0.23\%$. Error bars correspond to $2$ standard deviations.}
\label{table:model_compression:results_acc}
\end{table}

\begin{table}[t]
\renewcommand{\tabcolsep}{0.6cm}
\renewcommand{\arraystretch}{\arrstretchvalue}
\centering
\begin{tabular}{cccc}
\toprule
\textbf{Data generator} & \textbf{Train data} & \textbf{Cross entropy} & \textbf{Deriv.~sq.~error} \\
\midrule
\multirow{2}{*}{\textbf{MNIST}} & \textbf{All} & $-0.110 \pm 0.017$ &  $-0.184 \pm 0.025$  \\
& \textbf{$\mathbf{10}$\%} & $-0.232 \pm 0.024$  &  $-0.198 \pm 0.026$   \\
\cmidrule{1-4}
\multirow{2}{*}{\textbf{NADE}}& \textbf{All} &$-0.083 \pm 0.011$  & $-0.177 \pm 0.022$ \\
& \textbf{$\mathbf{10}$\%}  &$-0.113 \pm 0.012$   & $-0.209 \pm 0.024$ \\
\cmidrule{1-4}
\multirow{2}{*}{\textbf{Gaussian noise}} &  \textbf{All}  &$-0.323 \pm 0.011$  & $-1.328 \pm 0.044$ \\
&  \textbf{$\mathbf{10}$\%} &$-0.507 \pm 0.014$  & $-0.662 \pm 0.025$ \\
\bottomrule
\end{tabular}
\caption{Performance of each compressed network, as measured by average log probability on the MNIST test set. The average log probability of the original ensemble is $-0.053 \pm 0.011$ nats. Error bars correspond to $2$ standard deviations.}
\label{table:model_compression:results_logprob}
\end{table}

\begin{table}[t]
\renewcommand{\tabcolsep}{0.6cm}
\renewcommand{\arraystretch}{\arrstretchvalue}
\centering
\begin{tabular}{cccc}
\toprule
\textbf{Data generator} & \textbf{Train data} & \textbf{Cross entropy} & \textbf{Deriv.~sq.~error}\\
\midrule
\multirow{2}{*}{\textbf{MNIST}} & \textbf{All} & $\hphantom{-}0.32 \pm 0.33$& $\hphantom{0}{-0.64} \pm 0.35$ \\
& \textbf{$\mathbf{10}$\%} & $\hphantom{-}0.20 \pm 0.39$  & $\hphantom{-0}1.93 \pm 0.40$ \\
\cmidrule{1-4}
\multirow{2}{*}{\textbf{NADE}}& \textbf{All} & $\hphantom{-}0.46 \pm 0.30$& $\hphantom{0}{-0.87} \pm 0.38$ \\
& \textbf{$\mathbf{10}$\%} & $\hphantom{-}2.21 \pm 0.40$& $\hphantom{-0}0.90 \pm 0.44$  \\
\cmidrule{1-4}
\multirow{2}{*}{\textbf{Gaussian noise}} &  \textbf{All} & $-5.16 \pm 0.53$ & $-33.16 \pm 0.98$ \\
&  \textbf{$\mathbf{10}$\%} & $-9.39 \pm 0.71$ & $-19.26 \pm 0.88$  \\
\bottomrule
\end{tabular}
\caption{Average difference in accuracy [\%] across examples in the MNIST test set, between compressed networks and networks trained directly on data. A positive number indicates that the compressed network has higher accuracy. Error bars correspond to $2$ standard deviations.}
\label{table:model_compression:results_diff_acc}
\end{table}

\begin{table}[t]
\renewcommand{\tabcolsep}{0.6cm}
\renewcommand{\arraystretch}{\arrstretchvalue}
\centering
\begin{tabular}{cccc}
\toprule
\textbf{Data generator} & \textbf{Train data} & \textbf{Cross entropy} & \textbf{Deriv.~sq.~error} \\
\midrule
\multirow{2}{*}{\textbf{MNIST}} & \textbf{All} & $\hphantom{-}0.072 \pm 0.023$ &  $-0.002 \pm 0.024$  \\
& \textbf{$\mathbf{10}$\%} & $\hphantom{-}0.089 \pm 0.024$   &  $\hphantom{-}0.124 \pm 0.027$   \\
\cmidrule{1-4}
\multirow{2}{*}{\textbf{NADE}}& \textbf{All} &$\hphantom{-}0.099 \pm 0.022$  & $\hphantom{-}0.005 \pm 0.024$ \\
& \textbf{$\mathbf{10}$\%}  &$\hphantom{-}0.209 \pm 0.029$  & $\hphantom{-}0.112 \pm 0.029$ \\
\cmidrule{1-4}
\multirow{2}{*}{\textbf{Gaussian noise}} &  \textbf{All}  &$-0.141 \pm 0.026$  & $-1.146 \pm 0.048$ \\
&  \textbf{$\mathbf{10}$\%} & $-0.185 \pm 0.031$  & $-0.341 \pm 0.034$\\
\bottomrule
\end{tabular}
\caption{Average difference in log probability across examples in the MNIST test set, between compressed networks and networks trained directly on data. A positive number indicates that the compressed network has higher log probability. Error bars correspond to $2$ standard deviations.}
\label{table:model_compression:results_diff_logprob}
\end{table}

\begin{table}[t]
\renewcommand{\tabcolsep}{0.7cm}
\renewcommand{\arraystretch}{\arrstretchvalue}
\centering
\begin{tabular}{ccc}
\toprule
\textbf{Train data} & \textbf{Accuracy [\%]} & \textbf{Av.~log probability}\\
\midrule
\textbf{All} & $97.14 \pm 0.33$   & $-0.182 \pm 0.028$ \\
\textbf{$\mathbf{10}$\%} & $94.43 \pm 0.46$  &  $-0.321 \pm 0.035$ \\
\bottomrule
\end{tabular}
\caption{Performance on the MNIST test set of two networks of the same type as the student network that were trained with cross entropy directly on data. Error bars correspond to $2$ standard deviations.}
\label{table:model_compression:results_nets_directly_on_data}
\end{table}

From the results we can see that, in all cases, using random Gaussian noise as data generator performs significantly worse than using MNIST or using NADE\@. This highlights the importance of the data generator for the model compression process. For the student model to be trained properly, it is vital that during training it sees input samples similar to those that it is going to be evaluated on. Still though, when cross entropy is used as a loss, even with random noise the student network classifies correctly about $9$ in $10$ images, and its performance appears to continue increasing with more training samples.

When using the MNIST train set or a NADE trained on it as data generators, the performance appears to be good. To confirm that this is indeed the case, we also trained two neural networks of the same type as the student network, but this time directly on the MNIST train set, using the true train labels. These networks were trained with the same hyperparameters we used for training the ensemble components. One was trained on the full MNIST train set and the other on $10\%$ of it. Their performance is shown in Table~\ref{table:model_compression:results_nets_directly_on_data}, and a paired comparison with the compressed networks is shown in Tables~\ref{table:model_compression:results_diff_acc} and \ref{table:model_compression:results_diff_logprob}. We can see that in most cases the compressed networks have similar or better performance than the networks trained directly on data, especially when only $10\%$ of the dataset was used. It is important to note that even when the accuracy appears to be similar, the log probability of the compressed networks is usually higher, suggesting that the networks trained directly on data are more prone to overfitting. Indeed, using the teacher model to perform the labelling provides soft instead of hard labels, effectively reducing the risk of overfitting.

Comparing cross entropy to derivative square error, we can see that when the full train set was used, cross entropy tends to perform better. However, when the amount of data was limited to $10\%$, the performance of cross entropy dropped significantly, whereas the performance of derivative square error dropped only slightly (as seen from Tables~\ref{table:model_compression:results_acc} and \ref{table:model_compression:results_logprob}, most prominently when MNIST was used as a data generator). This suggest that using derivatives improves the generalization potential of the compressed network. Indeed, by matching derivatives, the student network tries to match a whole tangent hyperplane of $784$ dimensions instead of a single scalar value. This means that every time the teacher model is probed, a lot more information is transferred to the student network about the shape of the function surface it represents. Furthermore, by passing the information about the derivatives, the teacher model effectively informs the student about how the output should change if the input is slightly perturbed in every possible direction. Due to this, the student network can generalize better without having to see more data. On the other hand, when the data is sufficient, it appears to be preferable to use function values instead.

It is interesting to compare NADE to MNIST as a data generators. When using MNIST, the student network sees the same input data over and over again. If the full dataset is used, then this is less of a problem, as shown by the fact that in this case the networks compressed with NADE perform similarly to the networks compressed with MNIST\@. However, when the size of the dataset is reduced, NADE appears to have a clear advantage. When using NADE, the student sees novel input data each time, and it does not recycle the same dataset over and over again. Note that NADE is also affected by the reduction in the train set, since it relies on it to be trained. This can be seen in Figures~\ref{fig:model_compression:samples_nade_60k} and \ref{fig:model_compression:samples_nade_6k}, which show random samples from the NADE trained on the full train set and the NADE trained on $10\%$ of it. Clearly, the quality of samples deteriorates with the reduction in the train set. However, it appears that this affects NADE less that it affects the compression process.

To summarize our conclusions, (a) the choice of data generator is highly important, as shown by the poor performance of random noise, (b) having the teacher provide the labels reduces the risk of overfitting, (c) cross entropy works better than derivative square error if a lot of data is available, (d) derivative square error generalizes better than cross entropy when data is limited, and (e) model compression using NADE has a clear advantage over using the dataset when the latter is small.

\section{Related work}

There is a fairly rich literature related to model compression. In this section, we present the most important approaches to model compression that have been proposed so far, and we compare them to the approach adopted in the present thesis. We also review past work that provided inspiration for the methods discussed in this chapter.

\subsection{Optimal brain damage}
\index{Optimal brain damage}

Optimal brain damage \citep{LeCun:1990:optimal_brain_damage} is one of the first methods proposed for reducing the size of a neural network. The main idea is to take a trained neural network and start deleting parameters from it, by setting them to zero. This is equivalent to removing connections and/or neurons from the network, hence the term ``brain damage''. Brain damage is done ``optimally'' in the sense that those parameters that matter less are deleted first. After deletion of a few parameters, the network is retrained and the process can be iterated as many times as desired.

In the context of optimal brain damage, how much a parameter matters is measured by the change the deletion of the parameter causes to the network's loss function. In general, this change would be costly to measure exactly, however, it can be efficiently approximated by using the second derivatives of the loss function with respect to the parameters. The second derivatives can be easily calculated by a procedure similar to backward propagation \citep{Becker:1989:second_derivs}.

Optimal brain damage is significantly different to knowledge distillation, and it is constrained by the fact that it can only compress neural networks to neural networks. In contrast, knowledge distillation can be used to compress any type of model to a model not necessarily of the same type. The only requirement of knowledge distillation is that the student model must be differentiable, which is a fairly weak one.

\subsection{Model compression}
\index{Model compression}

The term ``model compression'' in the sense used in this chapter was first introduced in a seminal paper by \citet{Bucila:2006:model_compression}. Their work focuses on compressing large ensembles of a multitude of different models into a single neural network that has the capacity to approximate well the function represented by the ensemble. The main idea is to collect or create a large amount of unlabelled data, label it using the ensemble, and then treat it as train data to train the neural network on. As the authors mention, with this setup the neural network hardly runs any risk of overfitting, a fact we also observed in our experiments in section~\ref{sec:model_compression:case_study:results}.

One of the main focuses of \citet{Bucila:2006:model_compression} is how to obtain the large unlabelled data set needed. They discuss both the importance and the difficulty of ensuring that this dataset comes from the correct manifold of the input space, as we also discussed in this chapter. To achieve this, \citet{Bucila:2006:model_compression} propose a method they call MUNGE, which, given a dataset, creates new input datapoints by combining and perturbing the original ones. They also experiment with generative models, but they use a weak Naive Bayes density estimator that is found to underperform.

Our model compression framework contains their method as a special case and it is largely inspired from it. Instead of creating synthetic data by modifying existing datasets however, in our framework we take a more principled approach and we propose to use flexible density estimators such as NADE trained on the original input data. Also, our implementation has low storage requirements, since labelling the input data by the teacher is done in minibatches, thanks to the stochastic way the student is trained.

\subsection{Mimicking neural networks}

In the field of automatic speech recognition, hidden Markov models that use large deep neural networks in calculating their emission probabilities have become considerably successful in recent years \citep{Hinton:2012:DNNs_in_ASR}. Such networks typically have a softmax output layer that produces a vector of probabilities, which correspond to the states of the Markov model. However, deploying large networks on devices like smart phones is impractical, due to limitations on both memory and processing power. 

Motivated by this problem, \citet{Li:2014:small_size_dnn} propose replacing the large deep neural networks found in existing automatic speech recognition systems with smaller ones that have been trained specifically to mimic them. Their method is based on training the small network by minimizing the KL divergence from the output of the large network to the output of the small one, averaged over an unlabelled dataset. Essentially, their method reduces to training the small network using cross entropy, with labels provided by the large network. Note that this method is not necessarily limited to neural networks, but can still only be applied to models that output discrete probability distributions. From this perspective, this work can been seen as a special case both of the model compression framework of \citet{Bucila:2006:model_compression} and of our model compression framework described in this chapter, specifically applied to the domain of automatic speech recognition.

Though not strictly on model compression, another work related to training models to mimic other models is that of \citet{Ba:2014:deep_nets_need_to_be_deep}. The focus there is to provide insights as to whether it is possible for shallow neural networks with a similar number of parameters to represent the functions learnt by deep, possibly convolutional, neural networks. In attempting to answer this question, they train shallow neural networks to mimic deep neural networks, by minimizing the discrepancy between them averaged on a large unlabelled dataset. They conclude that it is indeed possible to represent the same functions with shallow networks, and that the reason shallow networks are not typically used in practice is their tendency to overfit when trained directly on data.

In their work, they found that for networks with softmax outputs, matching the logits\index{Logit} by minimizing square error was more effective than matching the outputs by minimizing cross entropy. This is a particularly interesting insight, as it indicates new potential ways of performing knowledge distillation. However, this method is only limited to a specific type of networks, namely those with softmax outputs.

\subsection{Knowledge distillation}
\index{Knowledge distillation}

The term ``knowledge distillation'' originated from the recent work of \citet{Hinton:2015:distilling_knowledge}. Whereas in this thesis we are using it more broadly to mean ``transferring knowledge from a cumbersome model to a convenient model'', in their work they use it only in the context of model compression.
They focus on neural networks, or ensembles thereof, that have a softmax output. Their goal is to compress them to smaller neural networks with a softmax output. Their method is in principle similar to the standard one of \citet{Bucila:2006:model_compression}, however they come up with a novel insight that leads to a slightly different approach. 

Their main observation is that, in the output of the softmax layer, the relative magnitudes of the different components, even of those that are very small, contain a significant amount of knowledge. However, when training with cross entropy, the training signal is typically dominated by the largest component, which is often very close to $1$. In order to extract the knowledge hidden in the small differences between the tiny components, they propose scaling the logits\index{Logit} by a factor of $\nicefrac{1}{T}$. Parameter $T$ can be interpreted as a ``temperature'', and the higher it is set the smoother the softmax output becomes, thus making the small differences in the probability components become more pronounced. The drawback of their approach is that $T$, as a newly introduced free hyperparameter, needs to be tuned.

Increasing the temperature is not the only way of enhancing the differences between tiny probability components. The same effect can be achieved by transforming the probabilities to a different domain. For example, \citet{Ba:2014:deep_nets_need_to_be_deep} use the logit domain, and in our derivative square error loss we use the log domain, both of which succeed in enhancing the small differences. Indeed, \citet{Hinton:2015:distilling_knowledge} argue that in the limit of setting the temperature infinitely high, their method approximates a transformation to the logit domain.

\subsection{FitNets}
\index{FitNets}

FitNets is an enhancement to the techniques used for model compression, proposed in a recent work by \citet{Romero:2014:fitnets}. The authors argue that depth is highly important for the representation power of neural networks and therefore it should not be sacrificed when compressing them. Even more so, they argue that the compressed networks should become even deeper, so that they can compensate with depth for the loss in parameters. This leads to deep and thin compressed networks (playfully termed FitNets), on which, according to the authors, knowledge distillation approaches such as that of \citet{Hinton:2015:distilling_knowledge} tend to underperform.

In order to make the training of FitNets more successful, they propose having the teacher network provide supervision not only to the final layer of the student network, but also to some of its intermediate layers. That is, in addition to matching the outputs, they also attempt to match the intermediate representations of the teacher and the student. Unlike the output layers which are the same, the intermediate layers of the teacher and the student are of different size and typically learn different representations. Therefore, for making their matching possible, they propose having a regressor that tries to predict the teacher's intermediate representations from the student's intermediate representations. Training the student to make the regressor predict the teacher's intermediate representations well becomes then part of the training objective during compression.

This approach departs from using only the output of models, which has so far been the dominant approach in model compression, including our framework presented in this thesis. Therefore, it is interesting to see what new research directions it may give rise to in the near future. Its drawback however is that it is specific to neural networks and it is not obvious how it can be generalized to other models.

\subsection{Matching derivatives}

In this chapter, we argued that knowing the derivatives of the function represented by the teacher with respect to the input provides a significant amount of information for the student. However, in machine learning, training models to learn input-output mappings has traditionally been done on labelled datasets, from which it is not obvious how to extract derivative information. Model compression is a rare example of a setup where the whole function to be learnt is readily available, and therefore its derivatives can be computed. To the best of our knowledge, our work is the first in the literature that proposed using derivatives in the context of model compression. In the following, we review three works that have used derivatives in related contexts.

\subsubsection{Tangent prop}

The importance of knowing the derivatives of the function to be learnt was also highlighted by \citet{Simard:1992:tangentprop}. In their work, they made a keen observation that for certain transformations of input points, the network's predictions should not change. For example, a slightly rotated image of the digit ``$3$'' should still be classified as a ``$3$''. This implies that the directional derivative to the direction of the transformation should be zero. 

To introduce this invariance to the training process, they propose adding an extra term to the training objective that encourages these directional derivatives to be zero. This extra term is similar to our derivative square error loss, except that it does not use the log domain and it contains only certain directional derivatives. This extra term can  be differentiated with respect to the model parameters using a modified version of backprop, termed \emph{tangent prop}\index{Tangent prop}, which is similar to R\{backprop\} as used by us (see section~\ref{sec:derivatives_in_neural_nets:Rbackprop}). The main difference with our work is that we can evaluate the full derivative with respect to the input, and by matching this, we can essentially match the full tangent hyperplane and not just a few selected directional derivatives.

\subsubsection{Contractive auto-encoders}

Encouraging derivatives to be zero for introducing invariance was also proposed by \citet{Rifai:2011:contractive_autoencoders} in the context of auto-encoders\index{Auto-encoder}. Auto-encoders \citep[section~5.2]{Bengio:2009:deep_AI} are neural networks consisting of two cascaded phases: the encoding phase, which transforms the input to an intermediate representation, and the decoding phase, which reconstructs the input from the intermediate representation. Auto-encoders are typically trained by minimizing the discrepancy between the input and its reconstruction.

\citet{Rifai:2011:contractive_autoencoders} argue that the intermediate representation of an auto-encoder should be invariant to small changes in the input. In other words, small perturbations of the input should yield similar intermediate representations. This is equivalent to saying that the derivatives of the intermediate representation with respect to the input should be small. Motivated by this, they introduce \emph{contractive auto-encoders}\index{Auto-encoder!Contractive auto-encoder}, which are trained by adding an extra regularization term to their loss function, consisting of the sum of the squares of all partial derivatives of the intermediate representation with respect to the input. Viewed from our framework's perspective, this is equivalent to matching all derivatives to zero by derivative square error.

\subsubsection{Score matching}

The idea of matching derivatives was also used by \citet{Hyvarinen:2005:score_matching}, where it was termed \emph{score matching}\index{Score matching}. There, the focus is on learning unnormalizable generative models. Evaluating the exact probability distribution of an unnormalizable generative model is intractable, making maximum likelihood learning hard. However, the derivative of the log probability is tractable, because it does not involve the intractable normalization constant. The main idea of score matching is to train the intractable model by matching its tractable derivatives with the derivatives of the distribution that we want it to learn.

Score matching is similar to our derivative square error loss, the difference being that in score matching the derivatives are with respect to the distribution's random variable, whereas in derivative square error they are with respect to the input. Also, score matching is used in a different context to that of model compression. Nevertheless, score matching is used in our framework for distilling intractable models in chapter~\ref{chapter:generative_models}, and more details about it can be found in section~\ref{sec:generative_models:loss_functions}.

\section{Summary and conclusions}

In this chapter we presented a framework for model compression. We view model compression as a special case of knowledge distillation, where the knowledge of a large model is distilled into a small model. Our framework is fairly general, and it can in principle be applied to any sort of model, as long as the student model is differentiable.

The two key elements of our framework are the data generator and the loss. We commented extensively on the vital importance of the data generator, an importance which we also demonstrated in our experiments. We showed that a flexible generative model like NADE can be successfully used as a data generator, one which is capable of providing as much train data as desired. 

To the best of our knowledge, we are the first to have used derivative matching in the model compression literature. We argued that knowing the derivatives of the teacher model with respect to the input provides significant information for the student, and we showed that in some cases knowing only the derivatives is sufficient for training. In our experiments, we demonstrated that matching derivatives has a greater generalization potential than matching function values when the input data is limited.

The literature on model compression is quite rich, and we made a detailed review of it. The main ideas found in the literature are rather similar, and have been renamed over the years as ``model compression'', ``mimicking'' and ``knowledge distillation''. We hope our work has added new ideas to this literature, and a practical implementation of them.

\chapter{Compact Predictive Distributions}
\label{chapter:compact_predictive_distributions}

Bayesian inference\index{Bayesian inference} is a particular way of doing machine learning, based on using nothing but the rules of probability \citep{Barber:2012:BRML, MacKay:2002:IT}. In Bayesian inference, every quantity of interest, regardless of what it corresponds to, admits a probability distribution, which represents our uncertainty about its value. Then, both learning and making predictions are done by simply using the rules of probability throughout the process.

Bayesian inference is elegant, principled and conceptually simple, and it is based on the well-founded and well-understood theory of probability. Nevertheless, the rules of probability, despite their seemingly innocuous simplicity, lead to intractable computations in all but the most trivial cases. This computational difficulty of Bayesian inference is one of the main factors that have hindered its widespread adoption in real-life applications.

One way of tackling the computational difficulty of Bayesian inference is via Monte Carlo\index{Monte Carlo}\footnote{Named after the eponymous casino in the tiny principality of Monaco.} \citep[chapter 11]{Bishop:2006:PRML}. The idea behind Monte Carlo is to use a bag of samples generated from a probability distribution as a proxy for it. Having access to such a bag of samples, any expectation over the original probability distribution can be approximated by the arithmetic average over the samples. Moreover, this approximation can be made arbitrarily good by sufficiently increasing the number of samples. Hence, expectations that might be intractable to calculate analytically, can be effectively approximated via Monte Carlo.

However, the quality of the Monte Carlo approximation improves very slowly with the number of samples. Assuming samples are independent, the standard deviation of a Monte Carlo estimate decreases as $\bigo{\nicefrac{1}{\sqrt{S}}}$, where $S$ is the number of samples. If the samples are not independent, which is more often the case, the standard deviation of the estimate decreases even more slowly. For ballpark estimates this low precision might not be a problem, but for high accuracy computations a very large number of samples may be needed. Furthermore, if the distribution we wish to sample from is defined over e.g.~the half-million parameter vector of a large deep neural network, then the Monte Carlo approximation can become prohibitively expensive in both computation time and storage space.

It is natural then to ask the question, having been given a large bag of samples, is it possible to distil the knowledge about the original distribution contained in the samples to a more compact form? That is, to a form that takes little time to evaluate and limited space to store? If the answer is yes, then we can throw away the original bulky bag of samples and use the compact model instead.

In this chapter we describe a set of techniques for distilling bags of samples to a compact model. Our framework is based on that of \cite{Snelson:2005:compact_approximations}, but viewed from the perspective of knowledge distillation. In addition, we propose novel extensions to the framework, allowing us to avoid storing the full set of samples and to take advantage of derivative information about the target prediction surface. We validate our framework in two Bayesian inference problems: Bayesian density estimation and Bayesian binary classification.

\section{Distilling bags of MCMC samples}

A typical setup of a problem in Bayesian inference\index{Bayesian inference} is as follows. Suppose we have observed some data $D = \set{\vect{x}_1, \vect{x}_2, \ldots, \vect{x}_N}$, which we know were independently generated by some model $\prob{\vect{x}\g\vect{w}}$, parameterized by a set of unknown parameters $\vect{w}$. That is, we do not know what the value of $\vect{w}$ is for the model that actually generated $D$. The question we want to answer is, given that we observed $D$, how likely do we believe $\vect{x}$ to be? That is, what is our prediction for the next point to be generated by the model?

Essentially, what the question asks is to calculate the probability $\prob{\vect{x}\g D}$, which is known as the \emph{predictive distribution}\index{Predictive distribution}. In order to do this, we need to have some idea of what the parameters $\vect{w}$ were before we observed $D$. That is, we need to know---or assume---$\prob{\vect{w}}$, which is known as the \emph{prior distribution}\index{Prior distribution}. Finally, for simplicity, let us assume that if we actually knew the true value of $\vect{w}$, then our observations $D$ would not influence our predictions about $\vect{x}$. Based on the above assumptions, and using nothing but the rules of probability, it is easy to show that the predictive distribution is given by
\begin{equation}
\prob{\vect{x}\g D} = \avg{\prob{\vect{x}\g\vect{w}}}{\prob{\vect{w}\g D}},
\end{equation}
where
\begin{equation}
\prob{\vect{w}\g D} \propto \prob{\vect{w}}\prod_n{\prob{\vect{x}_n\g \vect{w}}}.
\end{equation}
In the above equations, $\prob{\vect{w}\g D}$ is known as the \emph{posterior distribution}\index{Posterior distribution} and $\prob{D\g\vect{w}} = \prod_n{\prob{\vect{x}_n\g\vect{w}}}$ as the \emph{likelihood}\index{Likelihood}. The notation $\avg{\prob{\vect{x}\g\vect{w}}}{\prob{\vect{w}\g D}}$ denotes expectation of $\prob{\vect{x}\g\vect{w}}$ with respect to $\prob{\vect{w}\g D}$.

In most practical cases, the computation described by the above equations is intractable, even for tractable prior distribution and likelihood. The posterior $\prob{\vect{w}\g D}$ is typically unnormalizable in practice, that is, it can only be evaluated up to a multiplicative constant. Moreover, the expectation over the posterior that defines the predictive distribution $\prob{\vect{x}\g D}$ is typically high-dimensional and intractable to compute, either analytically or by numerical integration.

One way of approximating the above computation is via Markov Chain Monte Carlo\index{MCMC} (or MCMC for short). MCMC works by setting up a Markov chain whose equilibrium distribution is the posterior $\prob{\vect{w}\g D}$. Then, by simulating this chain, a set of samples $\set{\vect{w}_s}$ from $\prob{\vect{w}\g D}$ can be obtained. The advantage of using MCMC over other sampling methods is that (a) MCMC only needs to know $\prob{\vect{w}\g D}$ up to a multiplicative constant, which as we already discussed is usually the case, and (b) MCMC scales better than other sampling methods to high-dimensional spaces, which the $\vect{w}$ space often is. There are numerous different methods in the MCMC literature, which is extensively reviewed by \citet{Neal:1993:mcmc}
and \citet{Murray:2007:mcmc}.

Having sampled the posterior using MCMC and having obtained a set of parameter samples $\set{\vect{w}_s}$, the predictive distribution can be approximated by the following Monte Carlo estimate
\begin{equation}
\prob{\vect{x}\g D} \approx p_\mathrm{MC}\br{\vect{x}} = \frac{1}{S}\sum_s{\prob{\vect{x}\g \vect{w}_s}}.
\end{equation}
The above however is an expensive and inefficient representation of the predictive distribution, especially if highly accurate estimates are needed and therefore a large number of samples has to be collected. Indeed, every time a prediction needs to be made for some particular value of $\vect{x}$, the full collection of samples needs to be scanned. Storing the samples can also be expensive, especially if $\vect{w}$ is high-dimensional.

To alleviate this problem and make MCMC inference more efficient, we can distil the knowledge contained in $\set{\vect{w}_s}$ into a compact representation of the predictive distribution. Let $\prob{\vect{x}\g \bm{\theta}}$ be a family of distributions, parameterized by a set of parameters $\bm{\theta}$, that we have chosen to be compact but at the same time flexible enough to approximate the true predictive distribution. The goal of distillation is to find the particular value of $\bm{\theta}$ for which $\prob{\vect{x}\g \bm{\theta}}$ is as good an approximation to $p_\mathrm{MC}\br{\vect{x}}$ as possible. That is, distillation trains $\prob{\vect{x}\g \bm{\theta}}$ in order to make
\begin{equation}
p_\mathrm{MC}\br{\vect{x}} \approx \prob{\vect{x}\g \bm{\theta}}
\end{equation}
become as close to an equality as possible.

In the rest of this chapter, we will describe a set of techniques for performing knowledge distillation of bags of MCMC samples in the context Bayesian inference.
Following \citet{Snelson:2005:compact_approximations}, we consider two different problems, Bayesian density estimation and Bayesian binary classification. 

\section{Bayesian density estimation}

In this section, we study the problem of Bayesian density estimation, where $\prob{\vect{x}\g \vect{w}}$ is a density model of $\vect{x}$, such as a mixture of Gaussians. We wish to learn a compact model $\prob{\vect{x}\g \bm{\theta}}$ to approximate the predictive distribution $\prob{\vect{x}\g D}$, given a Markov chain that explores the posterior $\prob{\vect{w}\g D}$.

We will start by defining a loss function that measures the discrepancy between $\prob{\vect{x}\g D}$ and $\prob{\vect{x}\g \bm{\theta}}$. Learning the compact model amounts to minimizing this loss function with respect to $\bm{\theta}$. A natural, information-theoretic measure of discrepancy between distributions is the KL divergence\index{KL divergence} from $\prob{\vect{x}\g D}$ to $\prob{\vect{x}\g \bm{\theta}}$, defined as follows
\begin{equation}
\kl{\prob{\vect{x}\g D}}{\prob{\vect{x}\g \bm{\theta}}} = \avg{\log{\prob{\vect{x}\g D}}}{\prob{\vect{x}\g D}}
- \avg{\log{\prob{\vect{x}\g  \bm{\theta}}}}{\prob{\vect{x}\g D}}.
\end{equation}
The term $\avg{\log{\prob{\vect{x}\g D}}}{\prob{\vect{x}\g D}}$ is the negative entropy of $\prob{\vect{x}\g D}$, and it is a constant with respect to $\bm{\theta}$. Hence, minimizing the above KL divergence is equivalent to maximizing $\avg{\log{\prob{\vect{x}\g  \bm{\theta}}}}{\prob{\vect{x}\g D}}$. This can be interpreted as fitting $\prob{\vect{x}\g  \bm{\theta}}$ using maximum likelihood to an infinite amount of data generated by $\prob{\vect{x}\g  D}$.

Our basic assumption is that all our knowledge about $\prob{\vect{x}\g  D}$ comes from a Markov chain that generates MCMC samples from it. In the following two sections, we describe two ways of maximizing $\avg{\log{\prob{\vect{x}\g  \bm{\theta}}}}{\prob{\vect{x}\g D}}$ using this Markov chain: a ``batch'' way, first proposed by \citet{Snelson:2005:compact_approximations}, and a new ``online'' way proposed herein.

\subsection{Batch distillation}

By simulating the Markov chain, we can generate a set of MCMC samples $\set{\vect{w}_s}$ from the posterior $\prob{\vect{w}\g D}$. The Monte Carlo estimate
\begin{equation}
p_\mathrm{MC}\br{\vect{x}} = \frac{1}{S}\sum_s{\prob{\vect{x}\g \vect{w}_s}}
\end{equation}
can then be used as an approximation to the true predictive distribution $\prob{\vect{x}\g D}$. Hence, maximizing $\avg{\log{\prob{\vect{x}\g  \bm{\theta}}}}{\prob{\vect{x}\g D}}$ can be approximately done by (a) generating a set of samples $\set{\vect{x}_m}$ from $p_\mathrm{MC}\br{\vect{x}}$ and (b) fitting $\prob{\vect{x}\g  \bm{\theta}}$ to $\set{\vect{x}_m}$ using maximum likelihood. This procedure is outlined in the algorithm below.
\begin{framed}
\begin{enumerate}[label=(\roman*)]
\item Generate a bag of samples $\set{\vect{w}_s}$ of size $S$ from $\prob{\vect{w}\g D}$ by simulating the Markov chain.
\item Generate a bag of samples $\set{\vect{x}_m}$ of size $M$ from $p_\mathrm{MC}\br{\vect{x}}$.
\item Train $\prob{\vect{x}\g  \bm{\theta}}$ by maximizing its average log likelihood on $\set{\vect{x}_m}$.
\end{enumerate}
\end{framed}

\subsection{Online distillation}

The batch algorithm works well in practice (as we shall see later in the experiments), but requires the full sets of samples $\set{\vect{w}_s}$ and $\set{\vect{x}_m}$ to be generated and stored before $\bm{\theta}$ is optimized. This is problematic when a lot of samples are required and $\vect{w}$ or $\vect{x}$ (or both) are high-dimensional.

We will now describe an alternative online method for learning $\prob{\vect{x}\g  \bm{\theta}}$ that iteratively updates $\bm{\theta}$ on the fly as the Markov chain is simulated. Firstly, notice that
\begin{equation}
\avg{\log{\prob{\vect{x}\g  \bm{\theta}}}}{\prob{\vect{x}\g D}}
 = \avg{\log{\prob{\vect{x}\g  \bm{\theta}}}}{\prob{\vect{x}, \vect{w}\g D}},
\end{equation}
where $\prob{\vect{x}, \vect{w}\g D} = \prob{\vect{x}\g \vect{w}}\prob{\vect{w}\g D}$. 
Hence, a pair of samples $\pair{\vect{x}_s}{\vect{w}_s}$ can be generated by first simulating the 
Markov chain for a single step to obtain $\vect{w}_s$ and then sampling $\vect{x}_s$ from $\prob{\vect{x}\g \vect{w}_s}$. Having generated a minibatch  $\set{\pair{\vect{x}_s}{\vect{w}_s}}$ of size $S$, a stochastic update to $\bm{\theta}$ can be made. Hence, optimizing $\bm{\theta}$ can be done by iterating the above procedure, as outlined in the algorithm below.
\begin{framed}
\begin{enumerate}[label=(\roman*)]
\item Generate a minibatch of pairs of samples $\set{\pair{\vect{x}_s}{\vect{w}_s}}$ of size $S$ from $\prob{\vect{w}\g D}$ by simulating the Markov chain.
\item Make a stochastic update to $\bm{\theta}$ using only the minibatch.
\item Repeat until $\bm{\theta}$ converges.
\end{enumerate}
\end{framed}

In the above algorithm, the size of the minibatch $S$ can be made as small as desired. The big advantage of this algorithm is that its storage requirements are $\bigo{S}$, since every minibatch can be thrown away after having been used. Compare this to the batch algorithm, where all the samples have to be stored in advance.

The downside of the online algorithm is that it requires care in arranging the stochastic updates, so that the algorithm remains stable. This is especially important when the minibatch is chosen to be small, since in this case the updates to $\bm{\theta}$ have higher variance. A stochastic update can be, for instance, a single gradient update, such as
\begin{equation}
\bm{\theta} \leftarrow \bm{\theta} - \frac{\alpha}{S}\,\sum_s{\pderivnull{\bm{\theta}}\log{\prob{\vect{x}_s\g  \bm{\theta}}}},
\end{equation}
where $\alpha > 0$ is an appropriately chosen, possibly adaptive, learning rate.
Alternatively, $\prob{\vect{x}\g \bm{\theta}}$ can be fitted on the minibatch, and the $\bm{\theta}$ can be interpolated between its new and its old value. That is
\begin{equation}
\bm{\theta} \leftarrow \br{1-\alpha}\,\bm{\theta} + \alpha\,\bm{\theta}',
\end{equation}
where
\begin{equation}
\bm{\theta}' = \arg\max_{\bm{\theta}}{\frac{1}{S}\sum_s{\log\prob{\vect{x}_s\g  \bm{\theta}}}}
\end{equation}
and $0<\alpha<1$ an appropriately chosen, possibly adaptive, learning rate. In our case study in section~\ref{sec:compact_predictive:mogs} we will provide more details on our particular choice of update strategy.

\subsection{Case study: Bayesian Mixture of Gaussians}
\label{sec:compact_predictive:mogs}
\index{MoG}

We will now showcase our framework for Bayesian density estimation on a toy inference task involving Mixtures of Gaussians (or MoG for short). The goal of the task is, given a dataset of observations that have been generated by a MoG with unknown means, to infer the predictive density.

\subsubsection{The setup}

The true model that generated the observations is a one-dimensional MoG with $3$ components, given by
\begin{equation}
\prob{x\g \vect{w}} = \frac{1}{3}\,\gaussian{x}{m_1}{s_1^2}
+\frac{1}{3}\,\gaussian{x}{m_2}{s_2^2}+\frac{1}{3}\,\gaussian{x}{m_3}{s_3^2},
\end{equation}
where
\begin{equation}
\renewcommand{\arraystretch}{\arrstretchvalue}
\begin{array}{ccc}
m_1 = -3 & \qquad m_2 =  0 &\qquad m_3 =  2 \\
s_1^2 = 2 &\qquad s_2^2 = 5  &\qquad s_3^2 = 1 
\end{array}.
\end{equation}
We assume that the variances $s_1^2, s_3^2, s_3^2$ are known, but the means are not. We therefore identify the unknown model parameters to be $\vect{w} = \set{m_1, m_2, m_3}$, over which we place the following broad Gaussian prior
\begin{equation}
\prob{\vect{w}} = \gaussian{\vect{w}}{\vect{0}}{100\mat{I}}.
\end{equation}

In our experiments, we generated a dataset of observations $D = \set{x_1, x_2, \ldots, x_N}$ from the true model. We varied $N$ across experiments. Pretending not to know the true means of the MoG, the task is to calculate $\prob{x\g D}$. Note that even in this toy setting, the true posterior $\prob{\vect{w}\g D}$ is a sum of $3^N$ Gaussians. Therefore, exact inference becomes hard even for a small set of observations, and an approximation like MCMC is typically needed.

\subsubsection{Distillation details}

To generate samples from the posterior $\prob{\vect{w}\g D}$, we used slice sampling\index{Sampling!Slice sampling} \citep{Neal:2000:slice_sampling}. Slice sampling is a general MCMC method, whose main advantage is that it requires minimal tuning. We used linear stepping out, and we performed univariate updates to each parameter in turn. The chain was initialized at $\vect{0}$, and burned in for $1000$ iterations. No thinning was used.

The compact model into which we chose to distil the Markov chain is a MoG with $3$ components, given by
\begin{equation}
\prob{x\g \bm{\theta}} = \pi_1\,\gaussian{x}{\mu_1}{\sigma_1^2}
+\pi_2\,\gaussian{x}{\mu_2}{\sigma_2^2}+\pi_3\,\gaussian{x}{\mu_3}{\sigma_3^2},
\end{equation}
whose free parameters are $\bm{\theta} = \set{\pi_1, \pi_2, \pi_3, \mu_1, \mu_2, \mu_3, \sigma_1^2, \sigma_2^2, \sigma_3^2}$. For batch distillation, we first generated 
$10{,}000$ MCMC samples $\set{\vect{w}_s}$ in order to form the approximate posterior 
$p_{\mathrm{MC}}\br{x}$. Note that in this case $p_{\mathrm{MC}}\br{x}$ 
is a MoG with $30{,}000$ components, so we can exactly sample from it. From $p_{\mathrm{MC}}\br{x}$, we generated $1000$ samples $\set{x_m}$. We then used the EM algorithm \citep{Dempster:1977:em} to fit $\prob{x\g \bm{\theta}}$ on $\set{x_m}$ using maximum likelihood. We ran EM until convergence of the likelihood value. More details on the EM algorithm are given in the appendix, section~\ref{sec:em:batch}.

For online distillation, we used minibatches of size $100$. That is, in each iteration, $100$ MCMC samples $\set{\vect{w}_s}$ and $100$ corresponding data samples $\set{x_s}$ were generated. In each iteration, $\bm{\theta}$ was updated using an online version of the EM algorithm \citep{Cappe:2009:online_em}. This version of EM makes a soft update on $\bm{\theta}$ in each iteration, by interpolating between the sufficient statistics calculated from the minibatch $\set{x_s}$ and the sufficient statistics calculated so far. More details on how this particular version of online EM works are given in the appendix, section~\ref{sec:em:online}. 
Note that online EM needs a learning rate to be specified for each iteration. In our experiments, we used a learning rate that was initialized to $1$, and then was decayed linearly until it reached $0$ in the last iteration. We performed $100$ iterations in total.

\subsubsection{Results and discussion}

\def\imwidth{0.45\textwidth}

\begin{figure}[p]
\centering
\subfloat[Histograms of parameter posterior samples]{
\includegraphics[width=\imwidth]{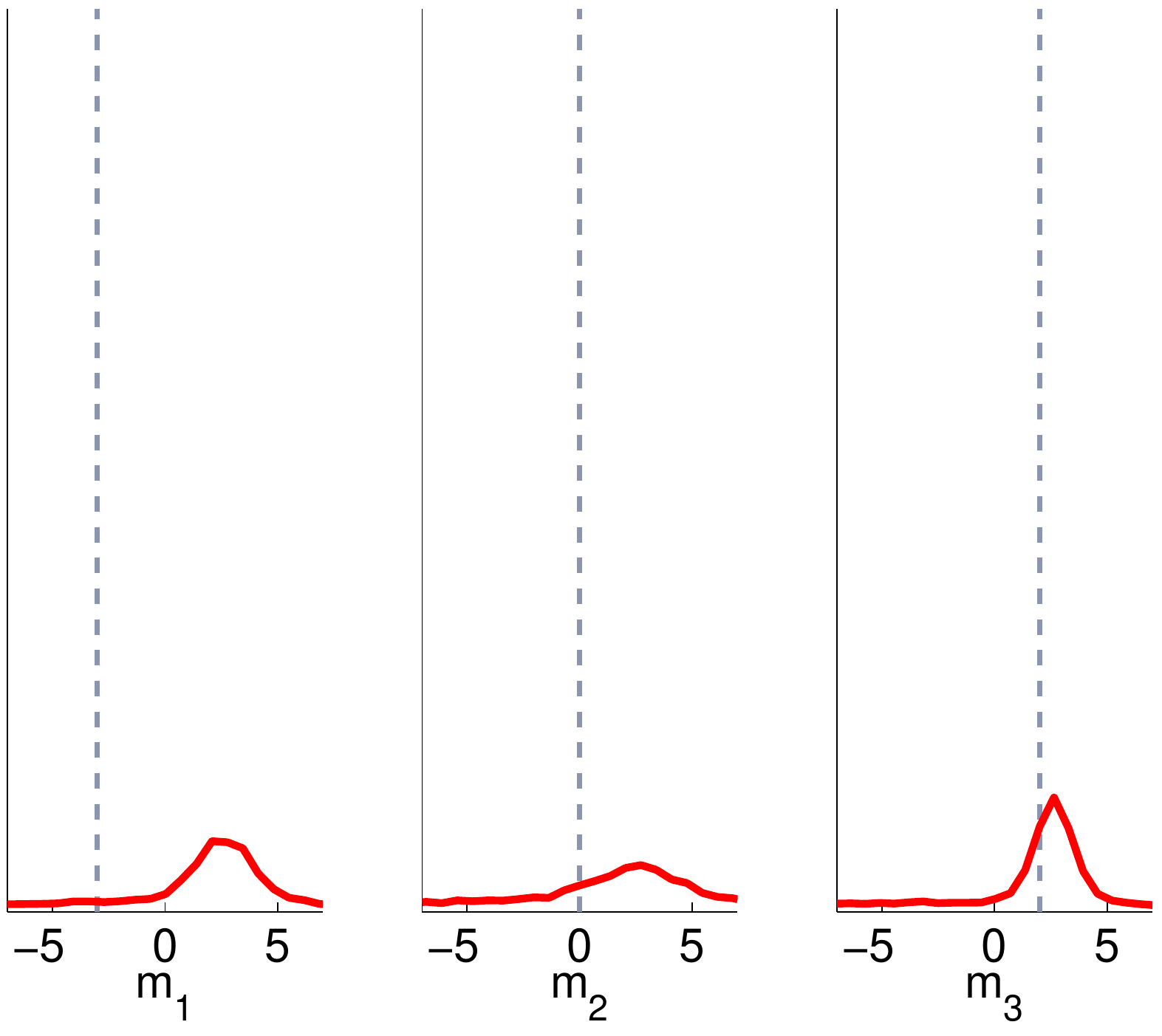}}
\hfill
\subfloat[Predictive densities]{
\includegraphics[width=\imwidth]{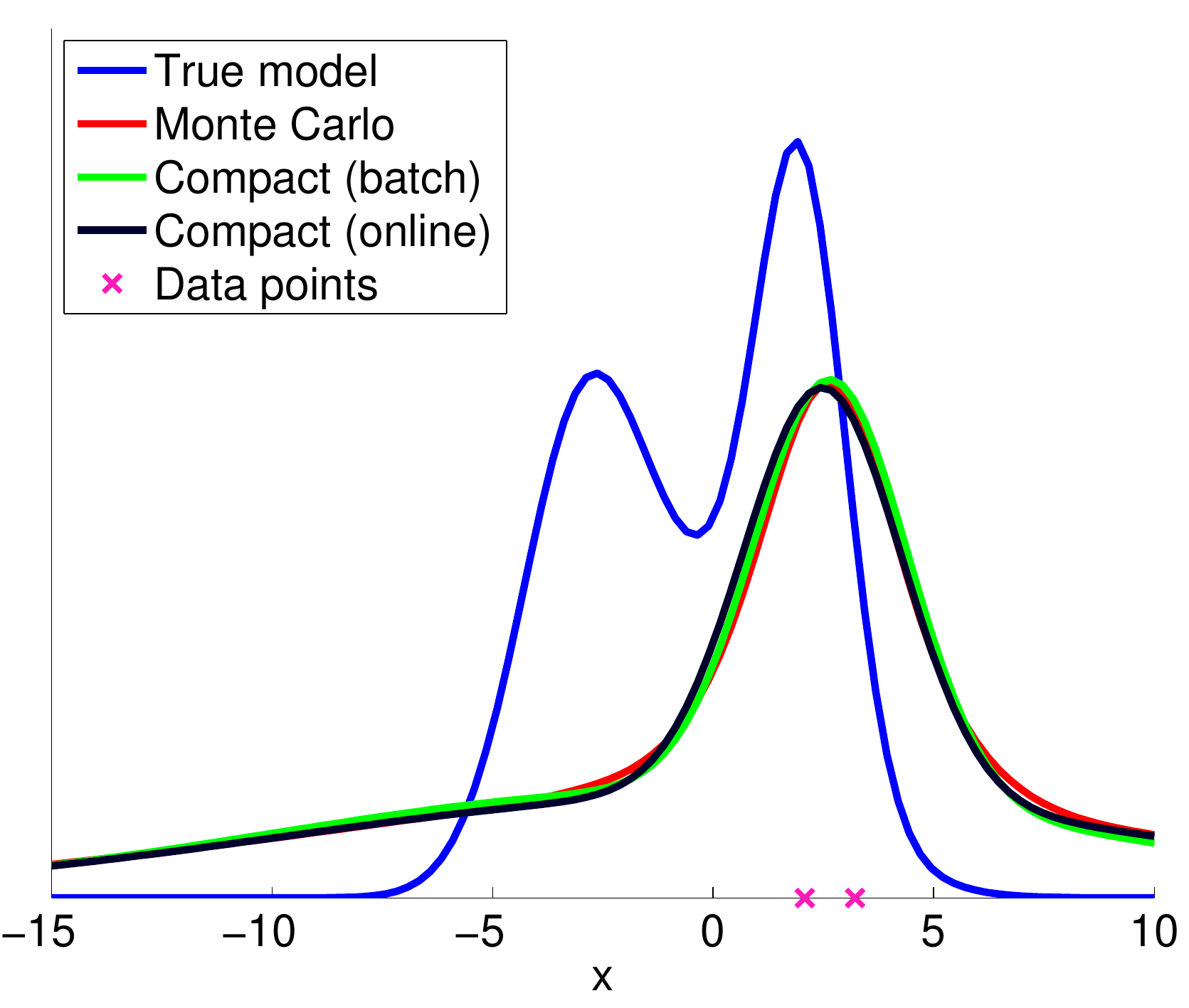}}
\caption{Bayesian MoG with $2$ datapoints.}
\label{fig:compact_predictive:density_results_2}
\end{figure}

\begin{figure}[p]
\centering
\subfloat[Histograms of parameter posterior samples]{
\includegraphics[width=\imwidth]{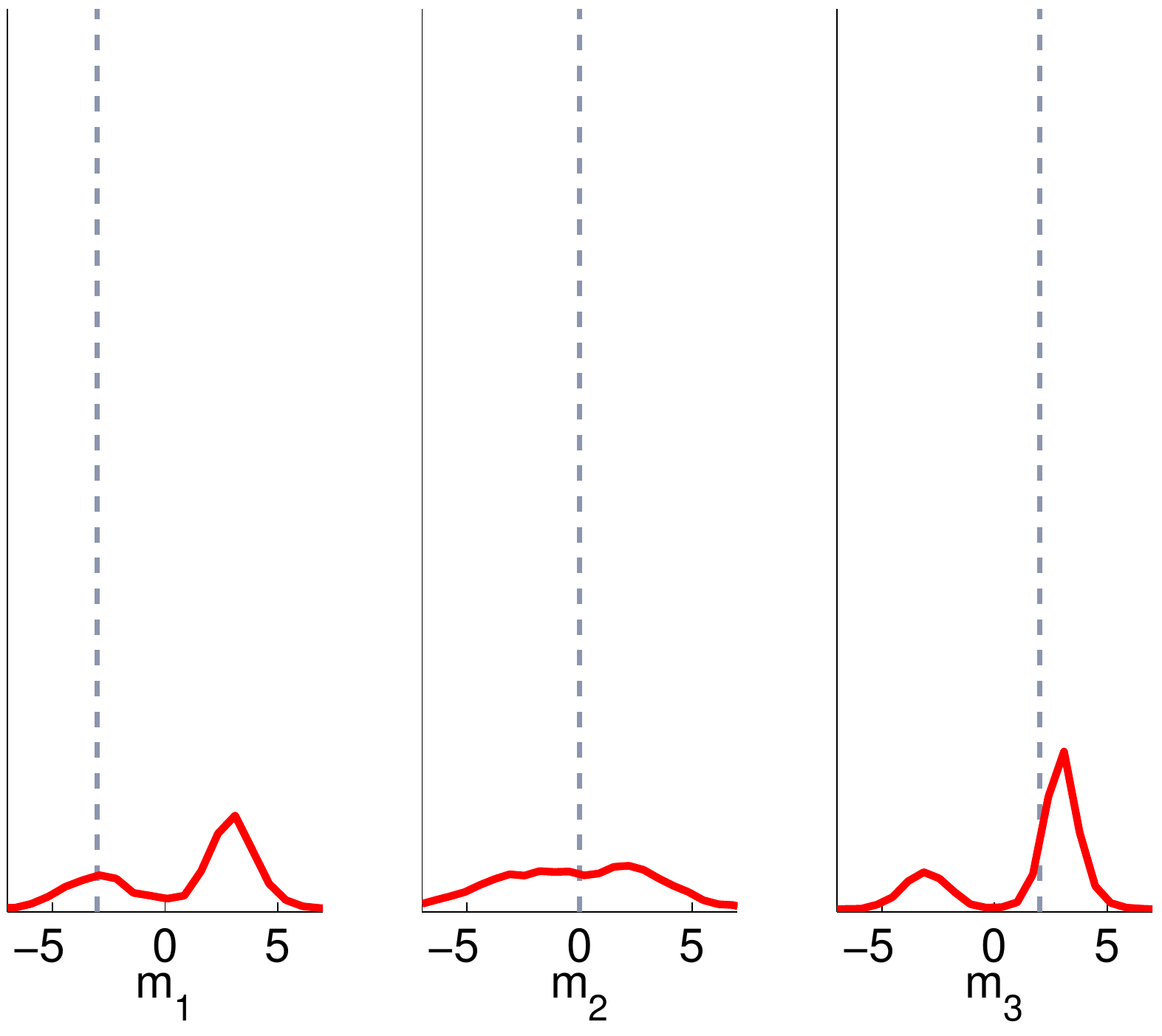}}
\hfill
\subfloat[Predictive densities]{
\includegraphics[width=\imwidth]{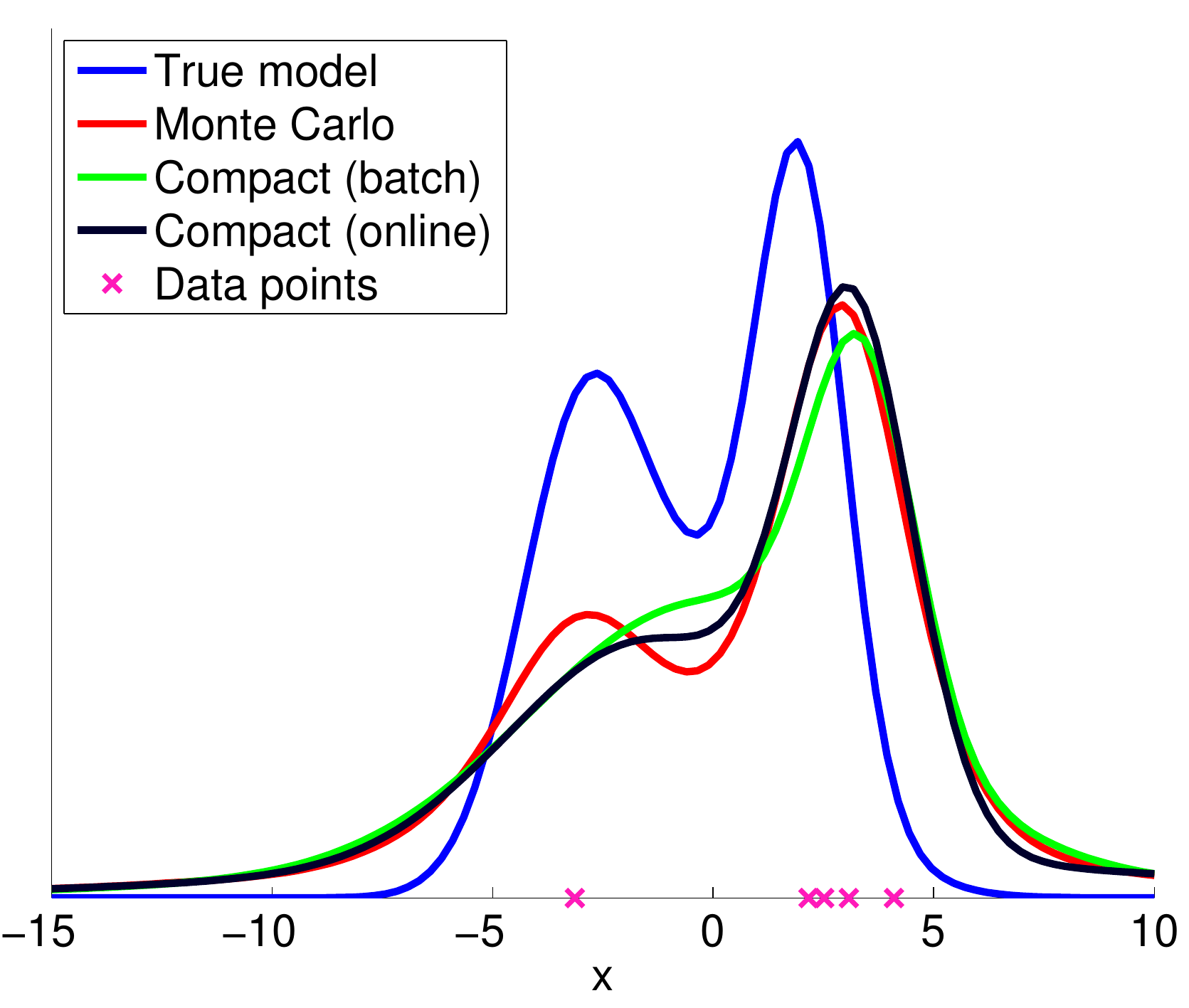}}
\caption{Bayesian MoG with $5$ datapoints.}
\label{fig:compact_predictive:density_results_5}
\end{figure}

\begin{figure}[p]
\centering
\subfloat[Histograms of parameter posterior samples]{
\includegraphics[width=\imwidth]{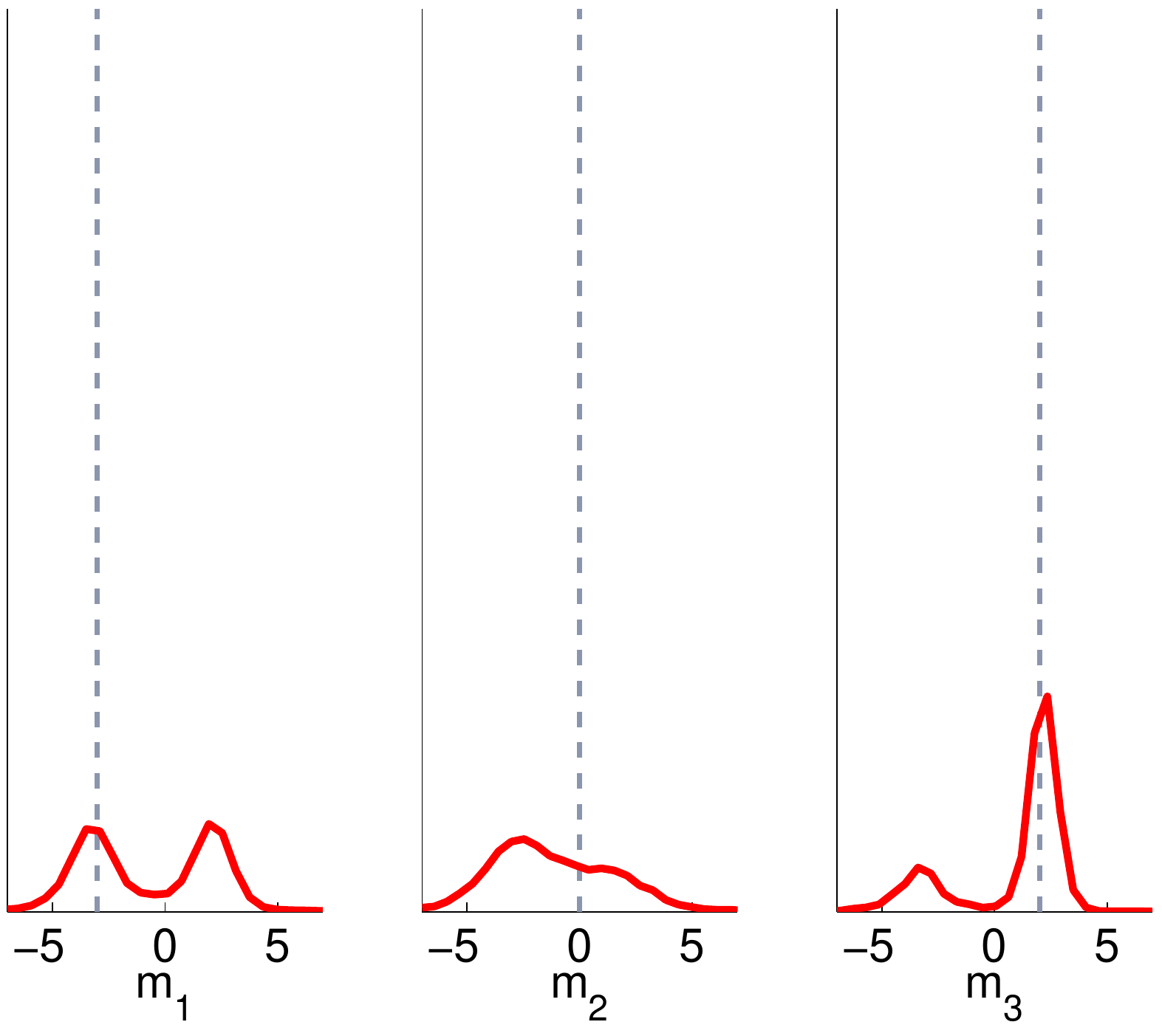}}
\hfill
\subfloat[Predictive densities]{
\includegraphics[width=\imwidth]{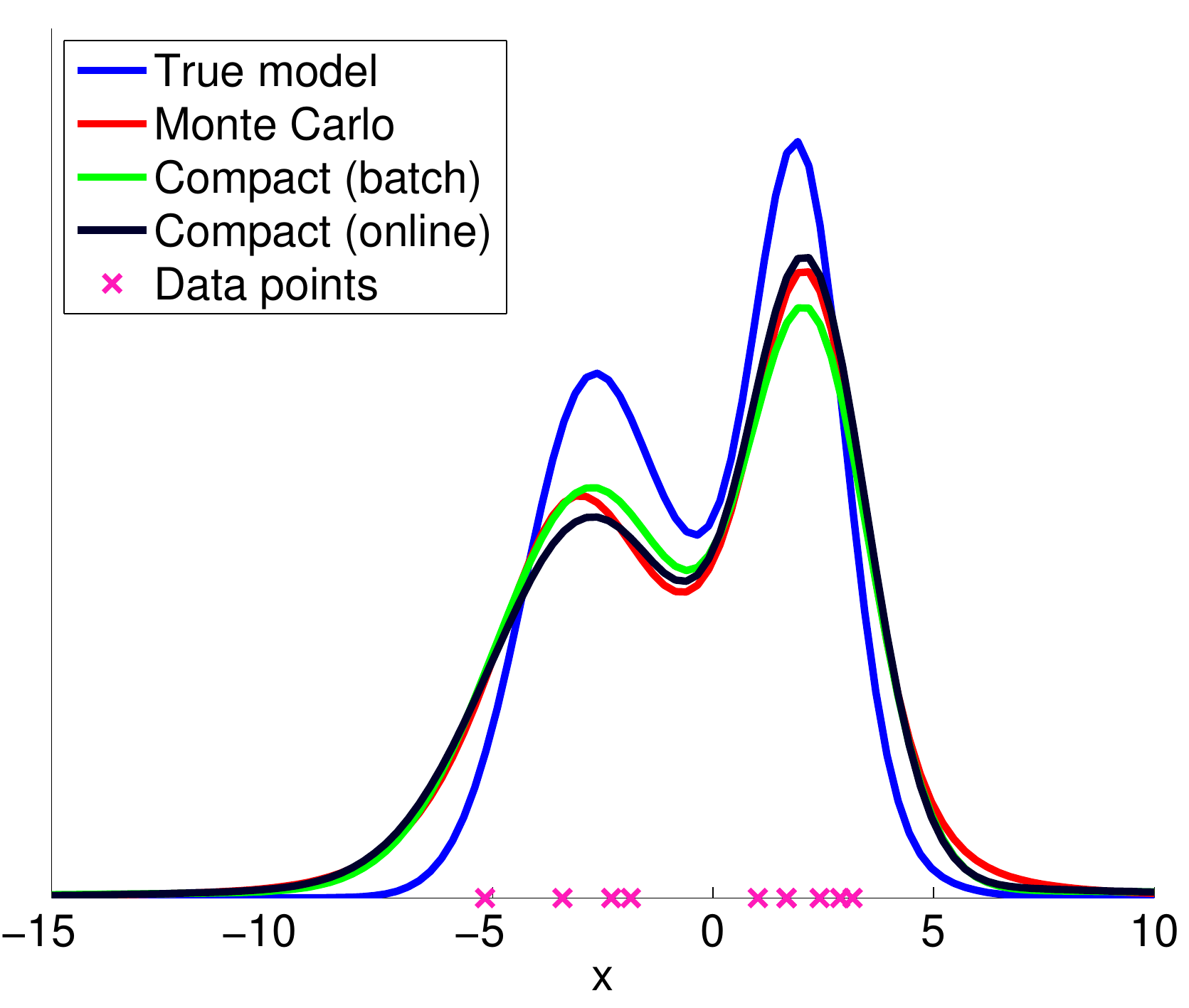}}
\caption{Bayesian MoG with $10$ datapoints.}
\label{fig:compact_predictive:density_results_10}
\end{figure}

\begin{figure}[p]
\centering
\subfloat[Histograms of parameter posterior samples]{
\includegraphics[width=\imwidth]{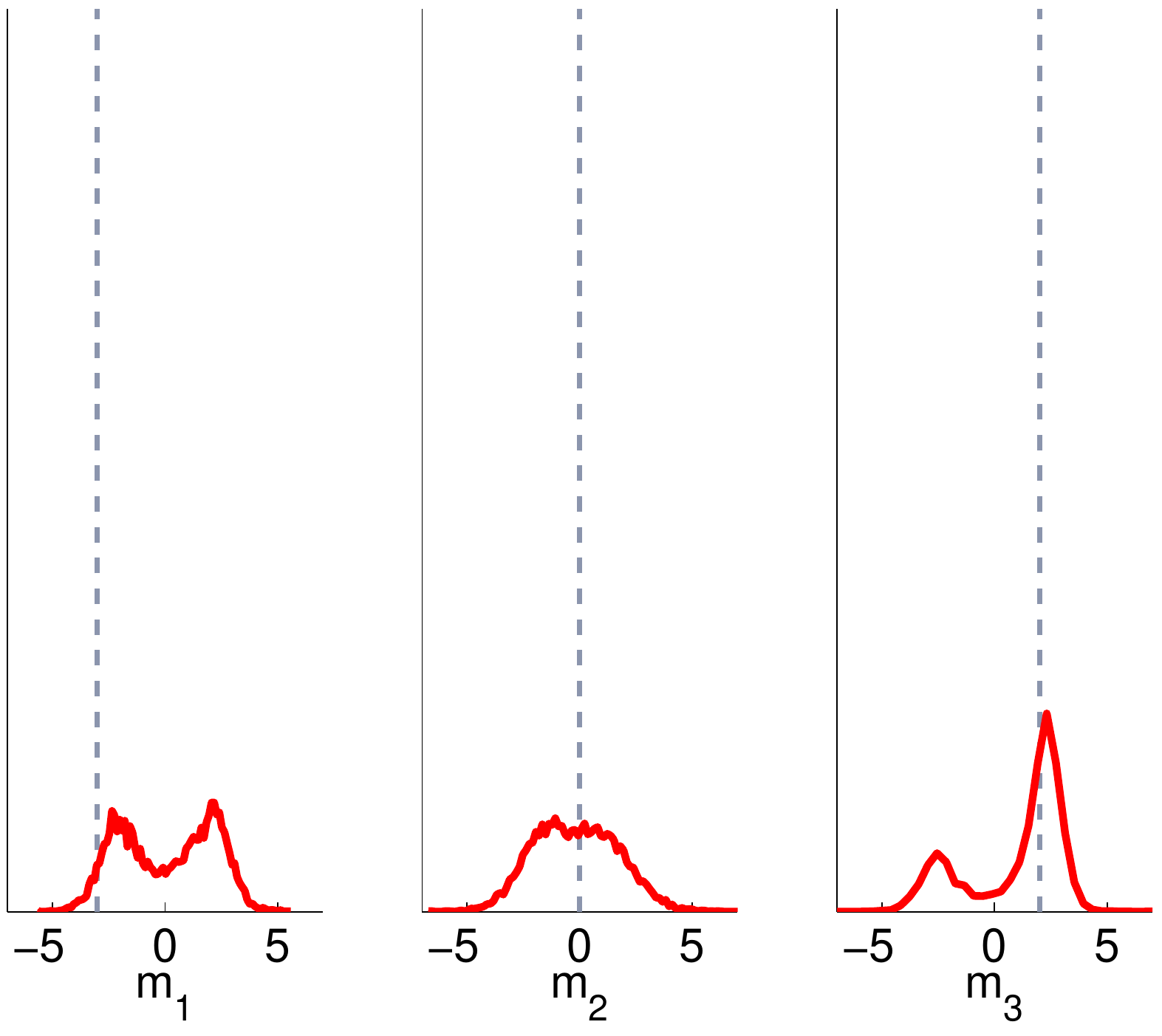}}
\hfill
\subfloat[Predictive densities]{
\includegraphics[width=\imwidth]{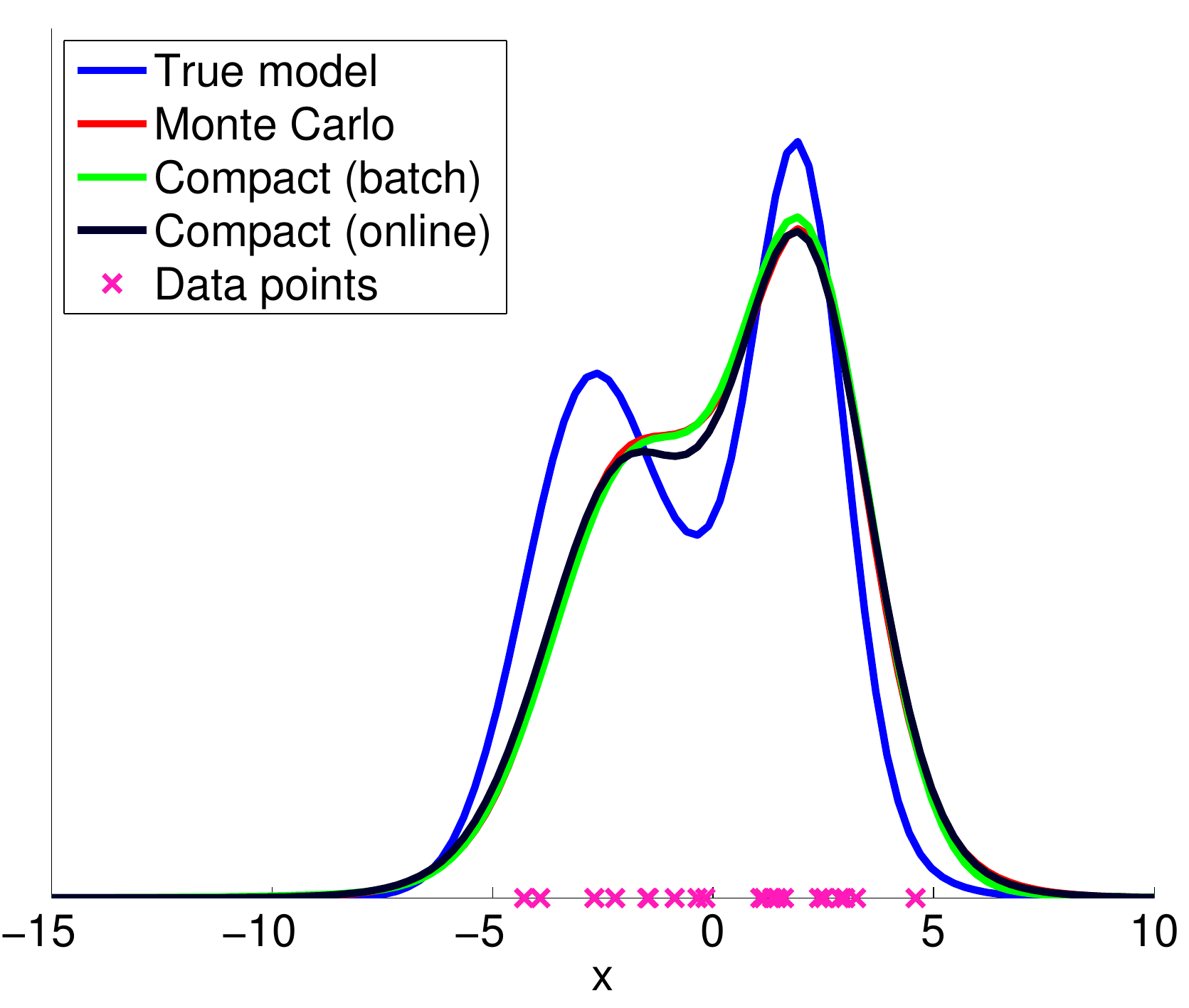}}
\caption{Bayesian MoG with $20$ datapoints.}
\label{fig:compact_predictive:density_results_20}
\end{figure}

\begin{figure}[p]
\centering
\subfloat[Histograms of parameter posterior samples]{
\includegraphics[width=\imwidth]{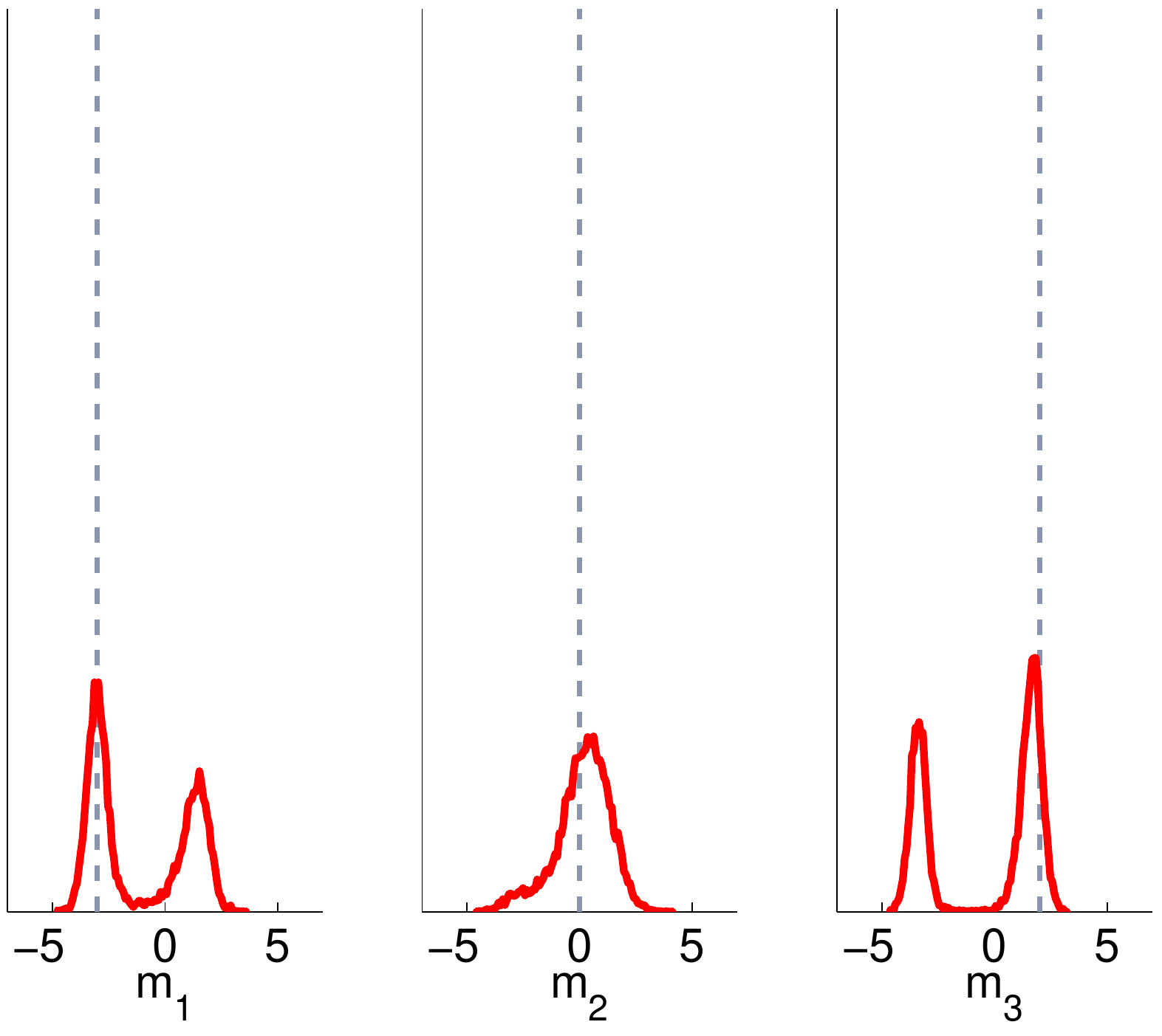}}
\hfill
\subfloat[Predictive densities]{
\includegraphics[width=\imwidth]{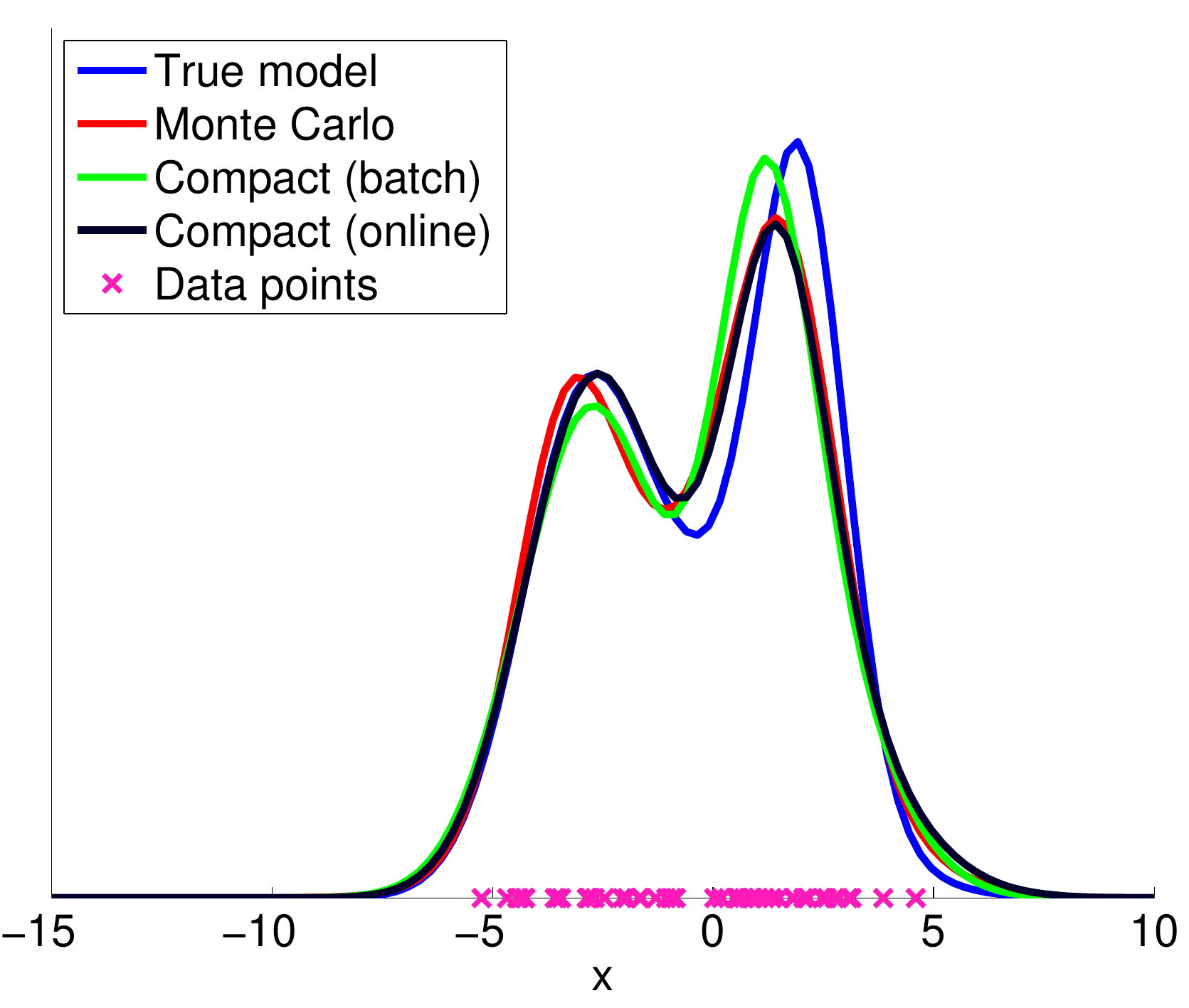}}
\caption{Bayesian MoG with $50$ datapoints.}
\label{fig:compact_predictive:density_results_50}
\end{figure}

\begin{figure}[p]
\centering
\subfloat[Histograms of parameter posterior samples]{
\includegraphics[width=\imwidth]{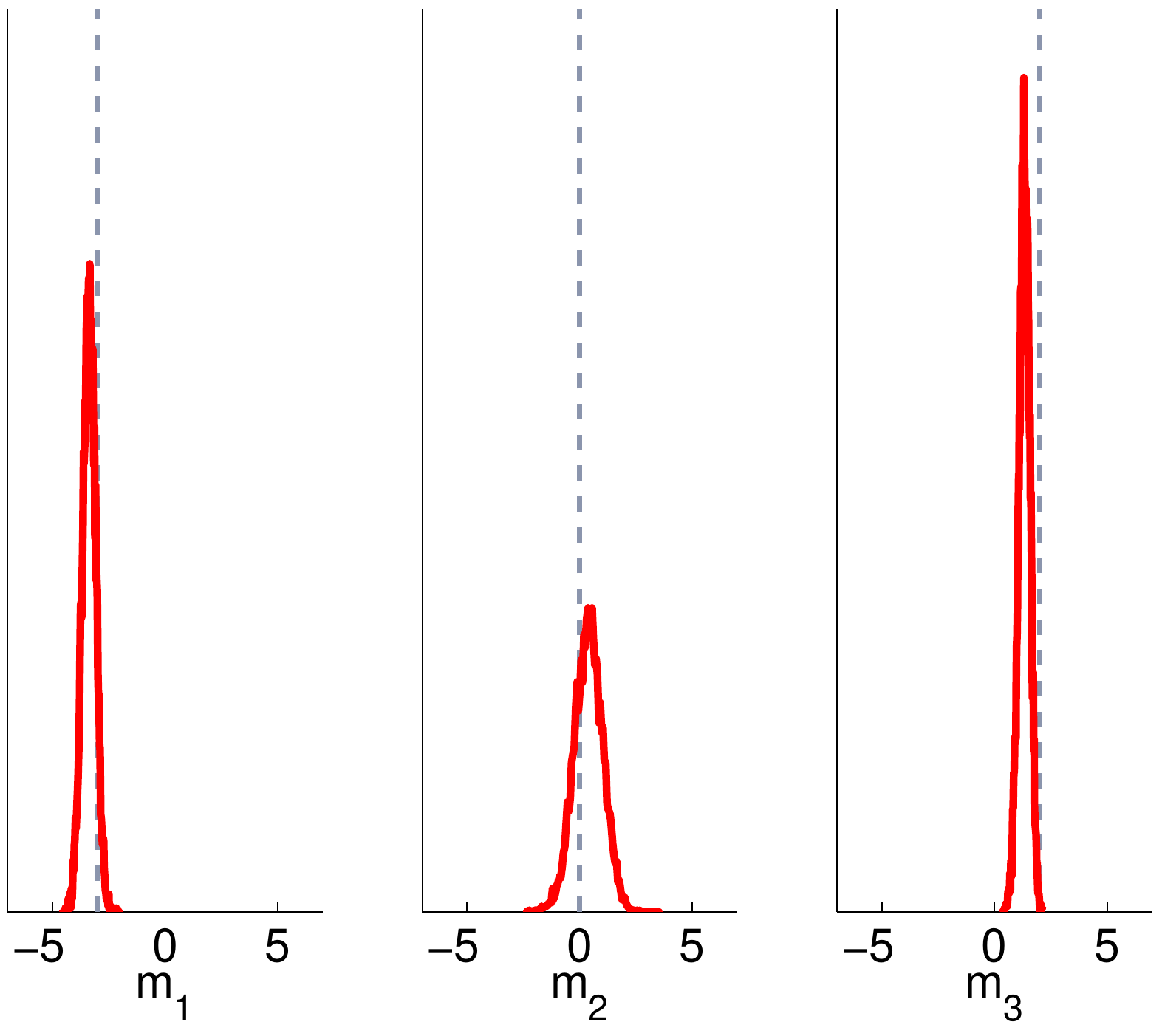}}
\hfill
\subfloat[Predictive densities]{
\includegraphics[width=\imwidth]{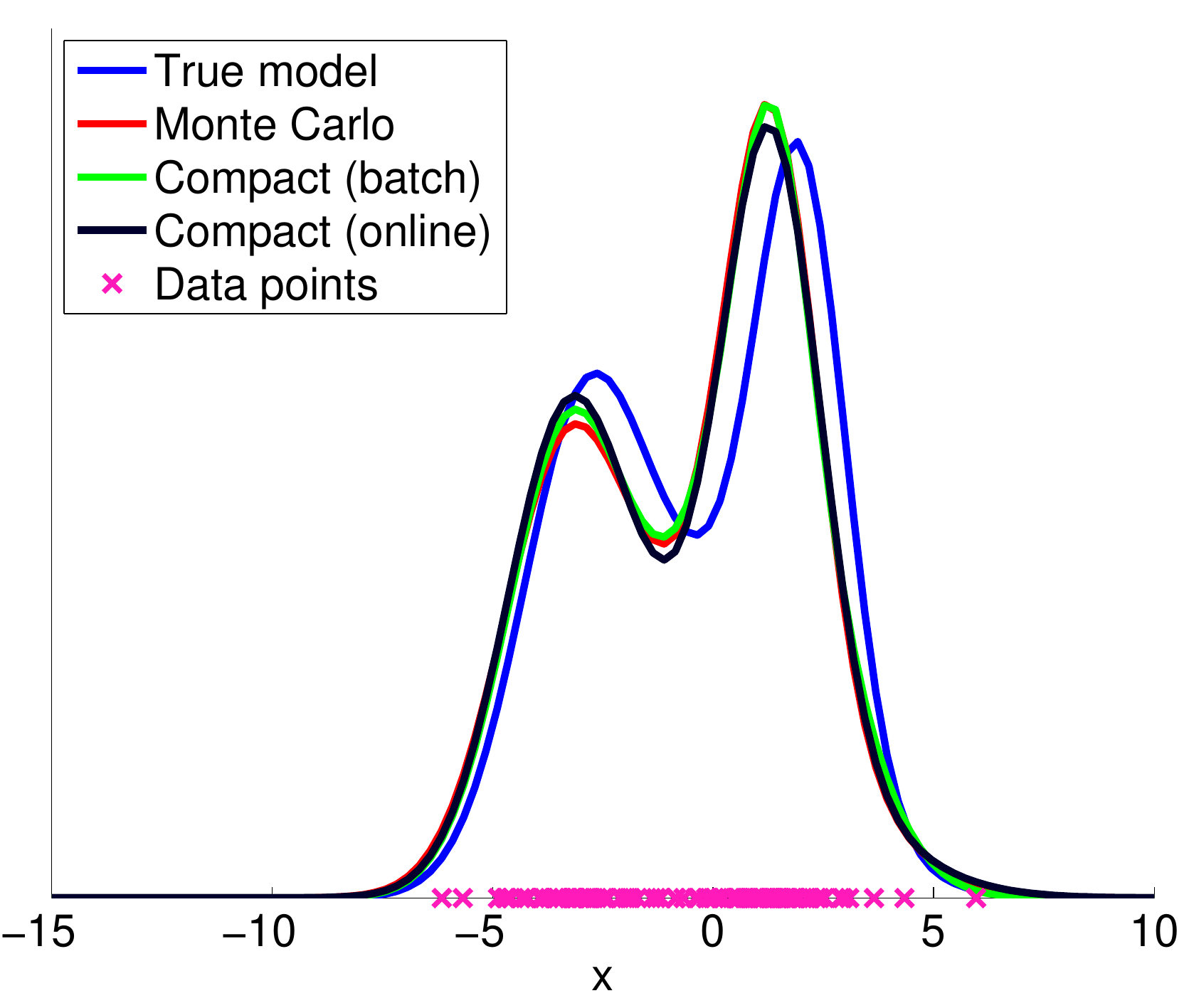}}
\caption{Bayesian MoG with $100$ datapoints.}
\label{fig:compact_predictive:density_results_100}
\end{figure}

\begin{figure}[p]
\centering
\subfloat[Histograms of parameter posterior samples]{
\includegraphics[width=\imwidth]{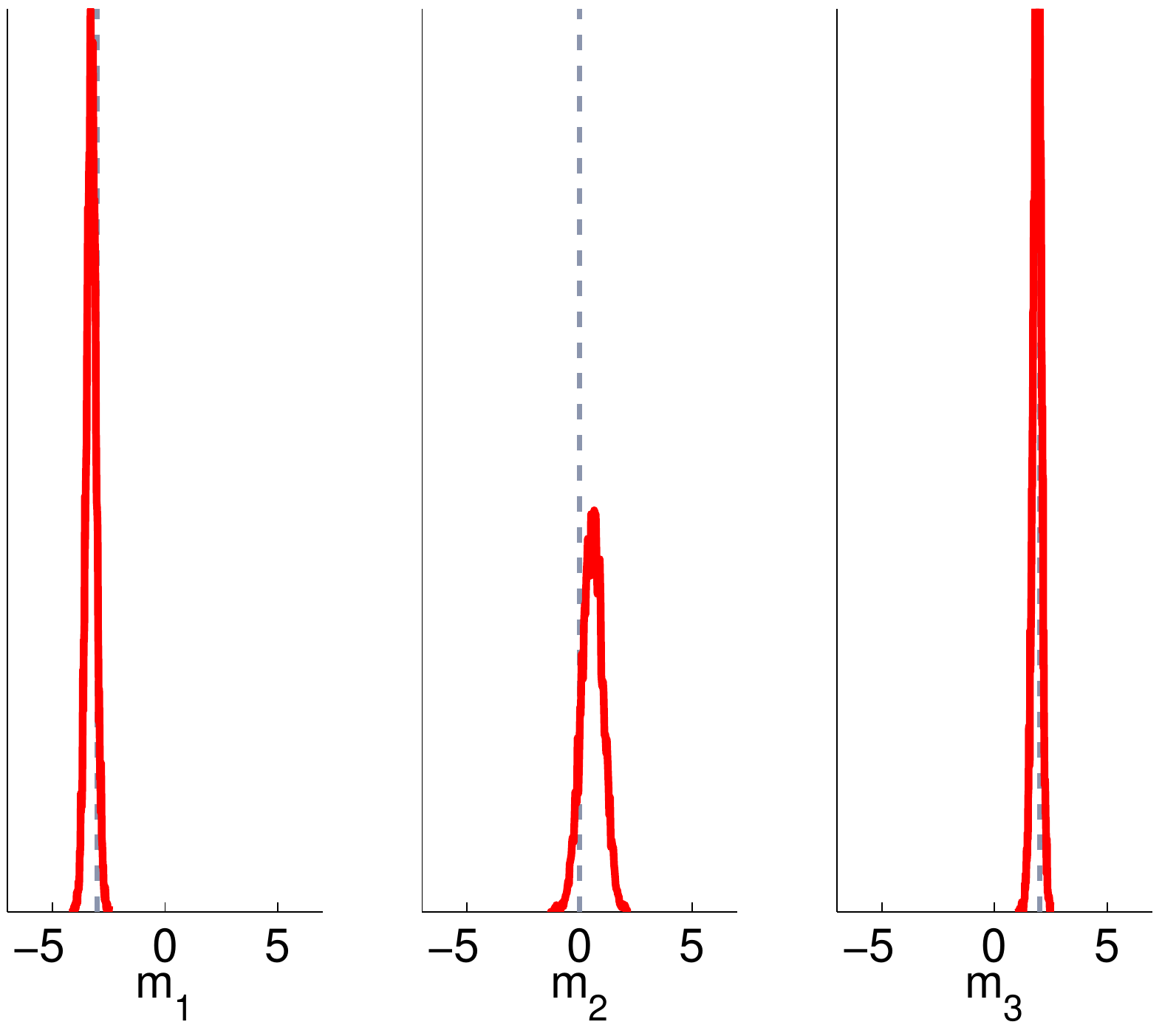}}
\hfill
\subfloat[Predictive densities]{
\includegraphics[width=\imwidth]{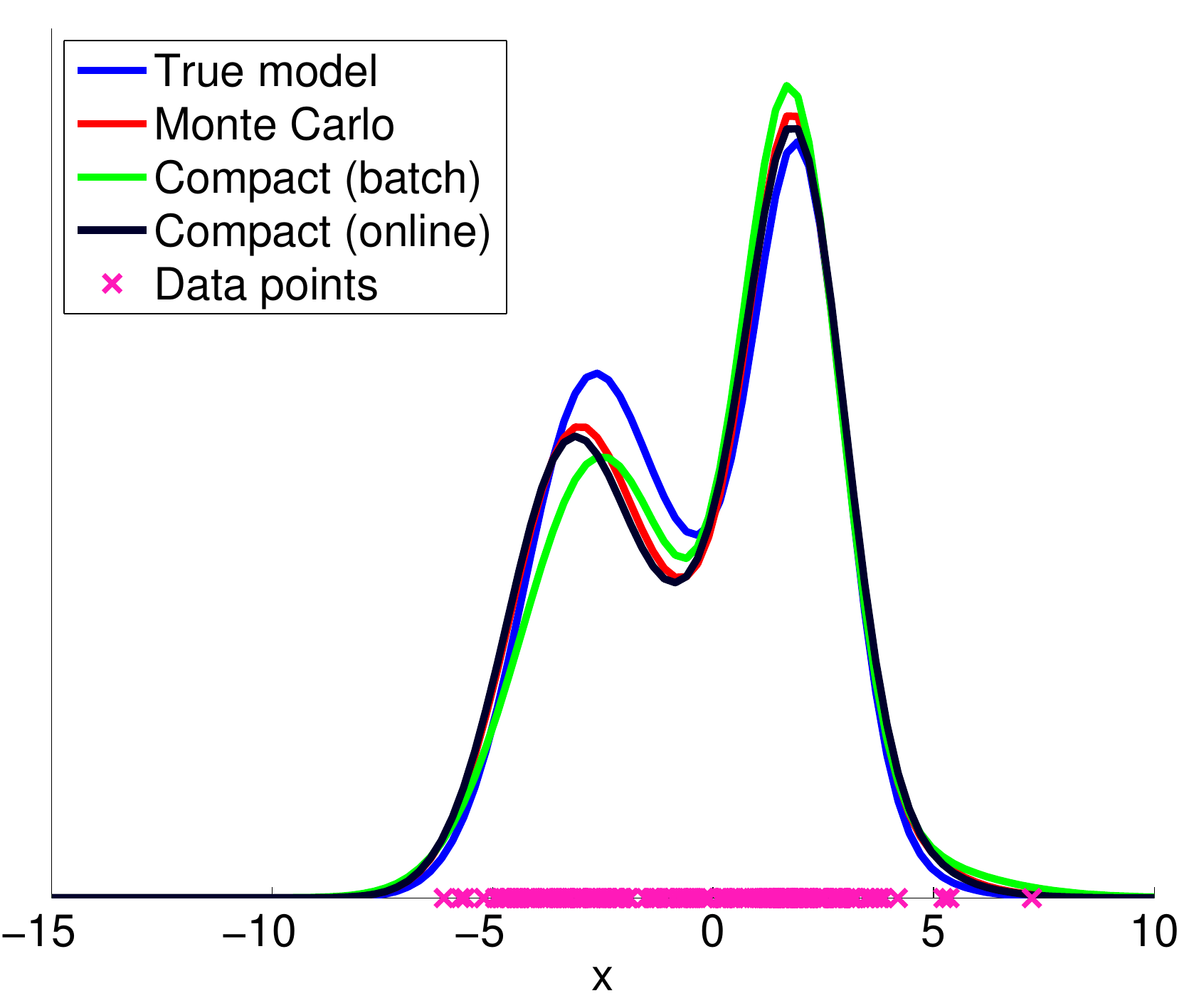}}
\caption{Bayesian MoG with $200$ datapoints.}
\label{fig:compact_predictive:density_results_200}
\end{figure}

\begin{figure}[p]
\centering
\subfloat[Histograms of parameter posterior samples]{
\includegraphics[width=\imwidth]{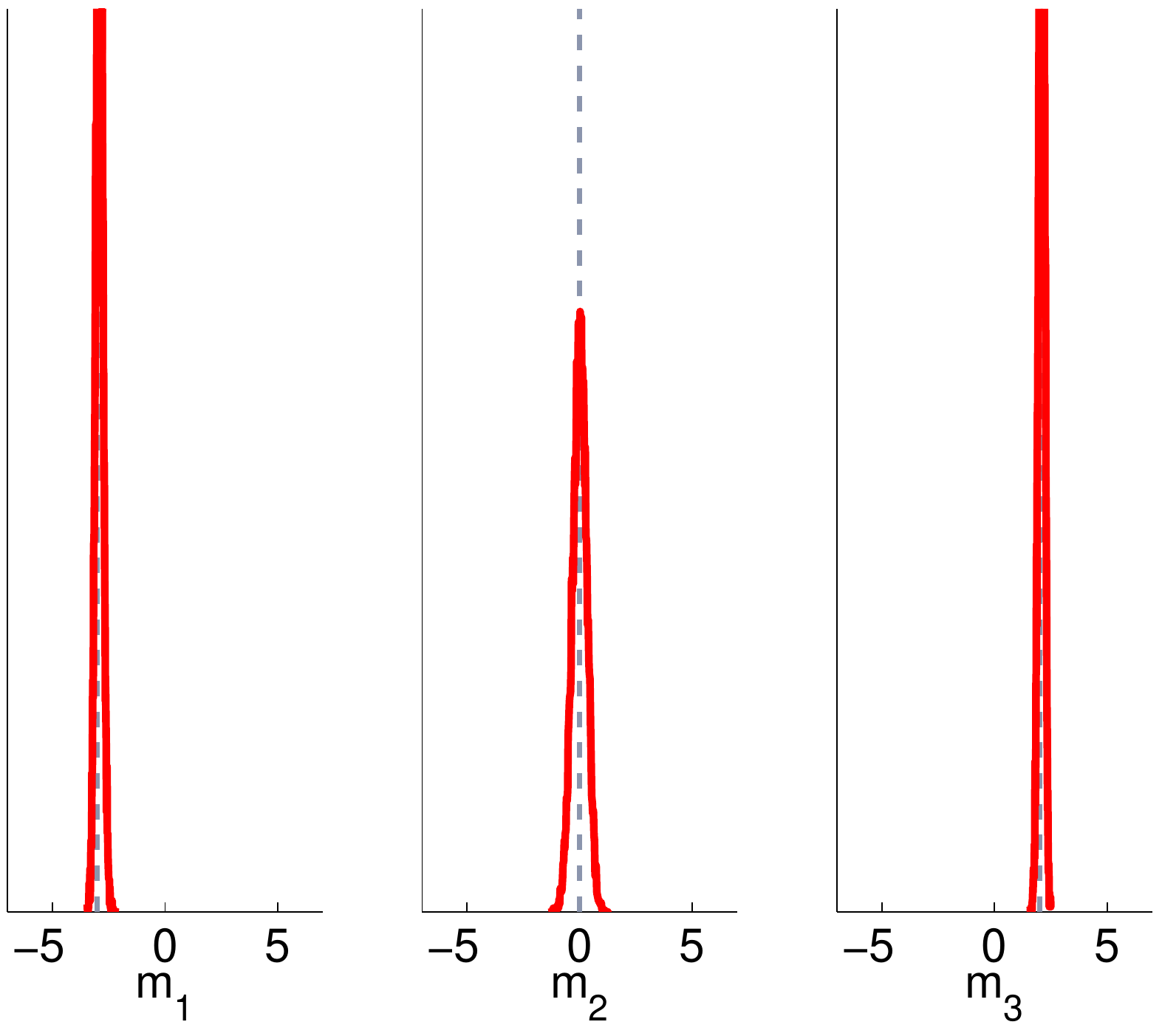}}
\hfill
\subfloat[Predictive densities]{
\includegraphics[width=\imwidth]{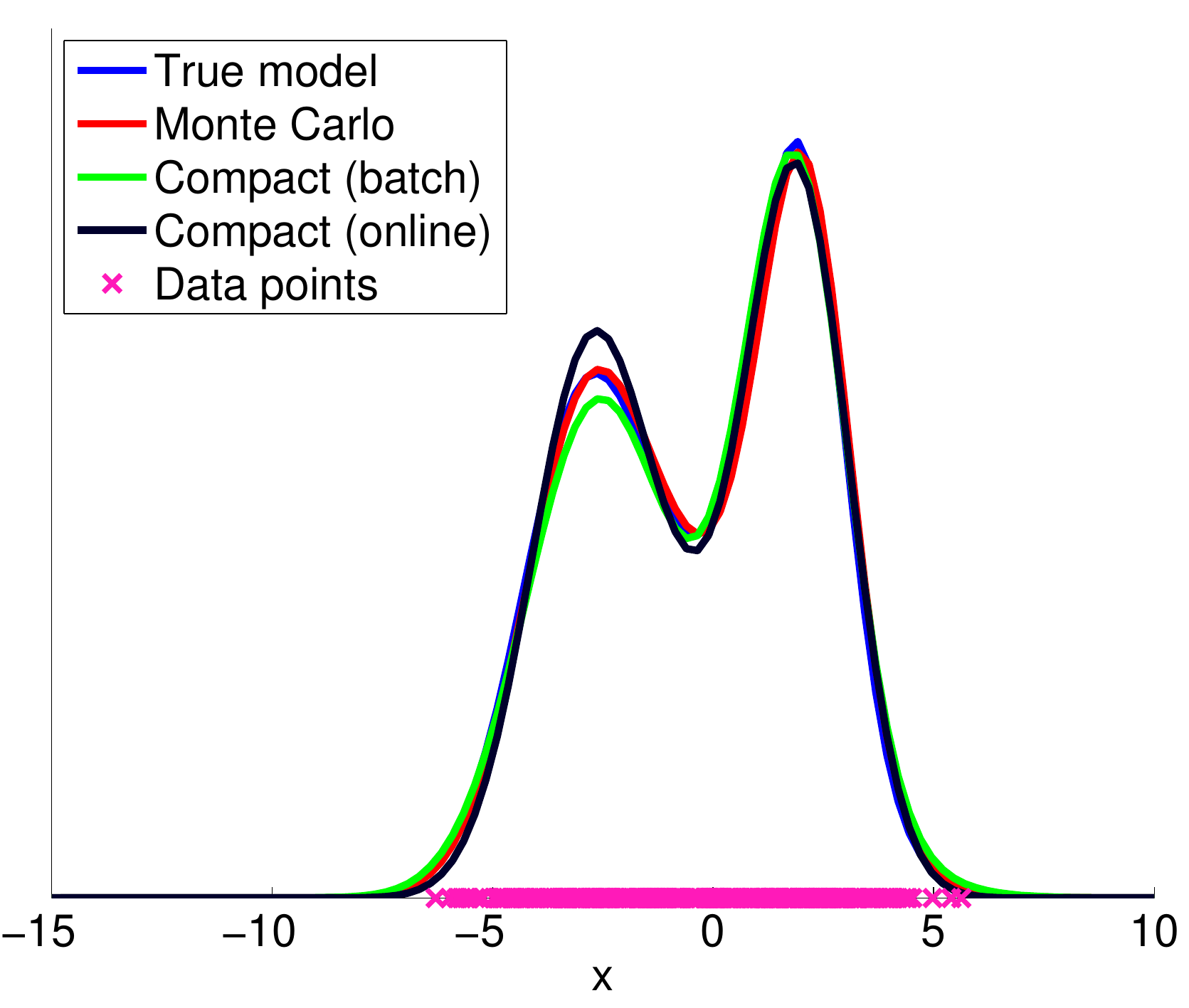}}
\caption{Bayesian MoG with $500$ datapoints.}
\label{fig:compact_predictive:density_results_500}
\end{figure}

\begin{figure}[p]
\centering
\subfloat[Histograms of parameter posterior samples]{
\includegraphics[width=\imwidth]{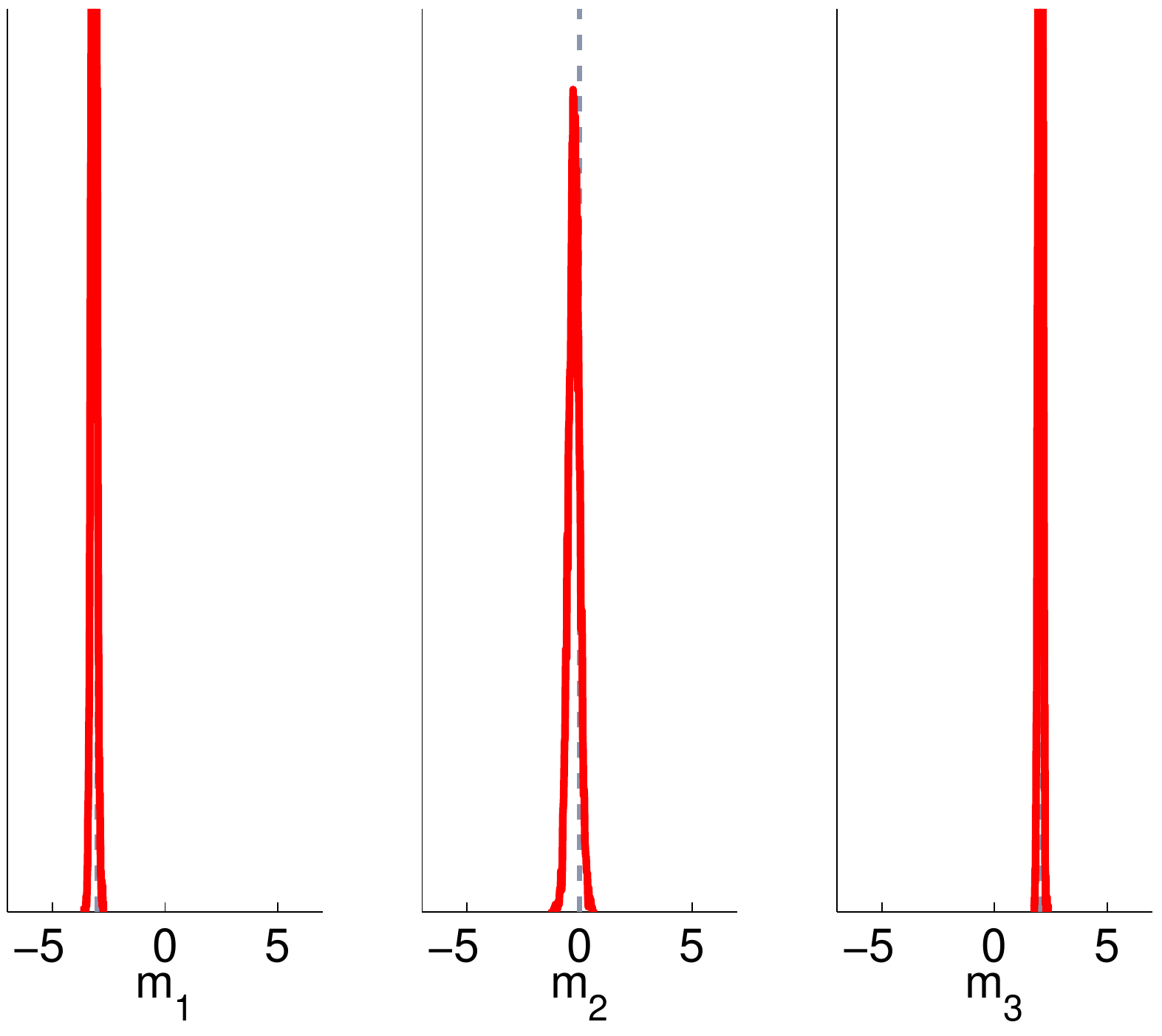}}
\hfill
\subfloat[Predictive densities]{
\includegraphics[width=\imwidth]{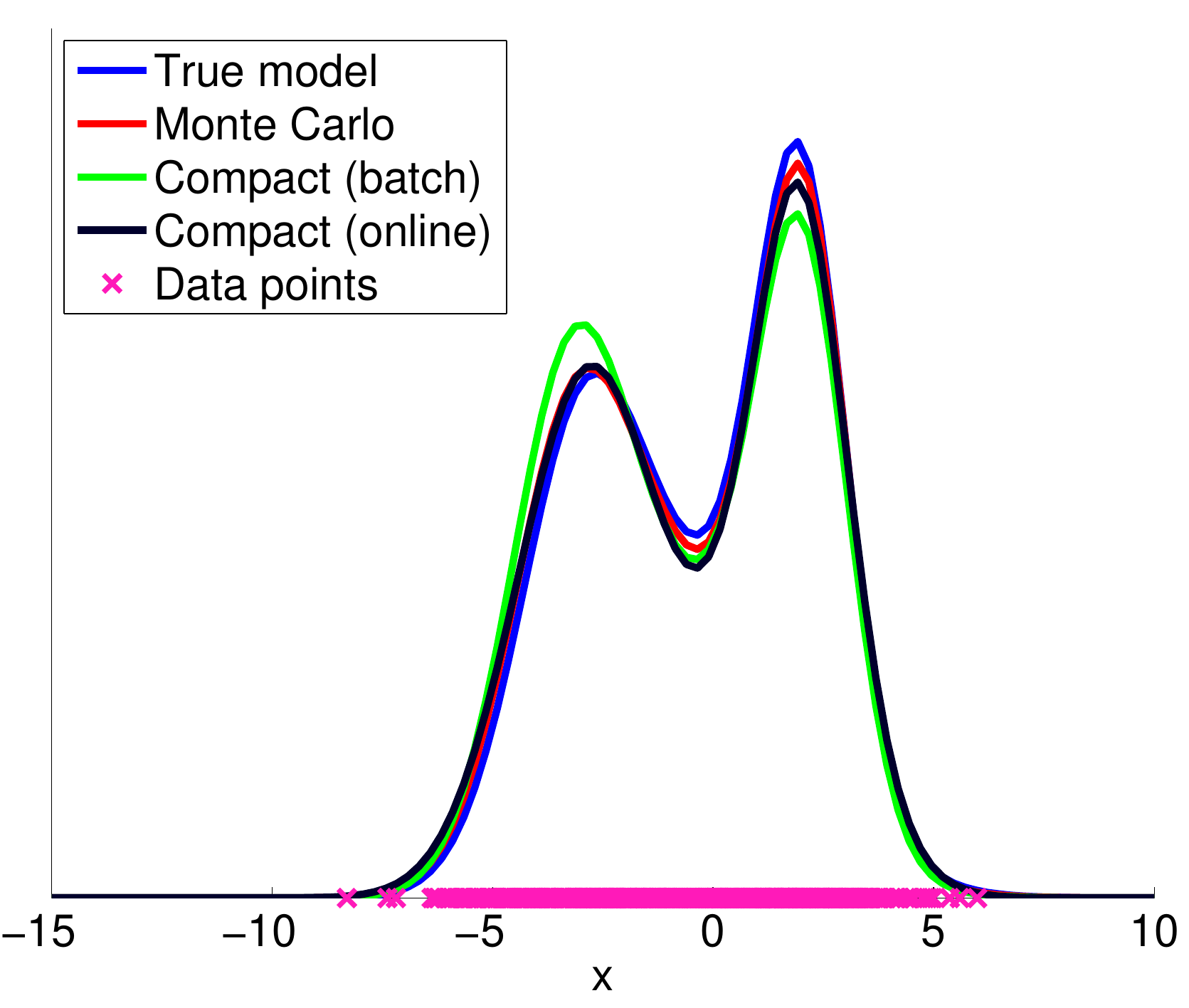}}
\caption{Bayesian MoG with $1000$ datapoints.}
\label{fig:compact_predictive:density_results_1000}
\end{figure}

We ran the experiment described above for various dataset sizes $N$. Figures~\ref{fig:compact_predictive:density_results_2}--\ref{fig:compact_predictive:density_results_1000}
show the results for $N \in\set{2, 5, 10, 20, 50, 100, 200, 500, 1000}$. On the left, the histograms of $m_1$, $m_2$ and $m_3$ are shown, calculated from all $10{,}000$ MCMC samples obtained during batch distillation, with vertical lines indicating the true parameter values. These histograms are a good approximation of the true marginal posteriors $\prob{m_1\g D}$, $\prob{m_2\g D}$ and $\prob{m_3\g D}$ respectively. On the right, we show the pdf of the true model $\prob{x\g \vect{w}}$, from which the dataset was generated and whose means we pretend not to know, superpositioned onto the three approximations of the predictive distribution $\prob{x\g D}$: the Monte Carlo approximation $p_\mathrm{MC}\br{x}$ given by averaging over all $10{,}000$ MCMC samples collected during batch distillation, the compact model $\prob{x\g \bm{\theta}}$ trained using batch distillation, and the compact model $\prob{x\g \bm{\theta}}$ trained using online distillation.

We can see that the larger a dataset we use, the closer the predictive distribution $\prob{x\g D}$ is to the true model $\prob{x\g \vect{w}}$. At the same time, the posterior $\prob{\vect{w}\g D}$ becomes more and more concentrated on the true parameter values. With only few datapoints, the posterior is broad, and as a consequence the predictive distribution is broader than the true model. This is because with only a few datapoints, the predictive distribution does not have enough evidence to rule out the possibility of future datapoints being far away from those that it has already seen. This is the big advantage of Bayesian modelling: the model is aware of its own uncertainty.

The plots show that both compact predictive distributions are close to the Monte Carlo estimate $p_\mathrm{MC}\br{x}$. Note that the latter was given by averaging over $10{,}000$ MCMC samples, therefore we assume that it is a good approximation to the true predictive distribution $\prob{x\g D}$. However, even though $p_\mathrm{MC}\br{x}$ contains $30{,}000$ parameters in total ($3$ means per sample), its functional form is fairly simple. This is why it can be faithfully represented by the compact model $\prob{x\g \bm{\theta}}$, which has only $9$ parameters.

We can see that online distillation has produced compact predictive distributions that are close to those produced by batch distillation. This shows that the distillation process can be done successfully with little storage requirements. In fact, batch distillation needs to store $10{,}000$ MCMC samples and $1000$ data samples, whereas online distillation only needs to store $100$ MCMC samples and $100$ data samples at any time. This can be particularly useful when the dimensionality of either the weights or the datapoints is large, in which case it would be prohibitively expensive to store a lot of samples. In terms of computation cost, we found that batch and online distillation performed comparably.

For this experiment we used minibatches of size $100$, but online distillation can work with any minibatch size, in order to adapt to any storage limitations. Empirically, we found that relatively large minibatches work better, since small minibatches can suffer from high variance. When using small minibatches, we found that thinning the Markov chain is beneficial, since this reduces the correlation of weight samples within the same minibatch. Another approach for reducing correlation within minibatches would be to have independent Markov chains running in parallel, each contributing with a single weight sample in the minibatch. In any case, the learning rate strategy is highly important for good performance. The need to specify and tune a learning rate strategy is the main downside of online distillation compared to batch distillation.

\section{Bayesian binary classification}

In this section, we will extend the framework described for density estimation to handle conditional distributions as well. This will allow us to apply the distillation framework in tasks such as regression and classification. 

We assume that the data takes the form of input-output pairs $\pair{\vect{x}}{\vect{y}}$, where $\vect{x}$ is the input and $\vect{y}$ is the output. We assume that the input is generated by some known distribution $\prob{\vect{x}}$, and that the corresponding output is generated by a conditional distribution $\prob{\vect{y}\g\vect{x}, \vect{w}}$, parameterized by an unknown set of parameters $\vect{w}$. We further assume that we have a prior idea $\prob{\vect{w}}$ of what the parameters are and that we observe a dataset $D = \set{\pair{\vect{x}_1}{\vect{y}_1}, \pair{\vect{x}_2}{\vect{y}_2}, \ldots, \pair{\vect{x}_N}{\vect{y}_N}}$ of independently generated pairs. The task is to calculate the conditional predictive distribution $\prob{\vect{y}\g\vect{x}, D}$.

Similarly to the density estimation case, by applying the rules of probability, we obtain the predictive distribution as follows
\begin{equation}
\prob{\vect{y}\g\vect{x}, D} = \avg{\prob{\vect{y}\g\vect{x}, \vect{w}}}{\prob{\vect{w}\g D}},
\end{equation}
where
\begin{equation}
\prob{\vect{w}\g D} \propto \prob{\vect{w}}\prod_n{\prob{\vect{y}_n\g\vect{x}_n, \vect{w}}}.
\end{equation}
By sampling weights $\set{\vect{w}_s}$ from the posterior $\prob{\vect{w}\g D}$ using MCMC, we can construct a Monte Carlo estimate of the predictive distribution as follows
\begin{equation}
\prob{\vect{y}\g\vect{x}, D} \approx p_\mathrm{MC}\br{\vect{y}\g\vect{x}} = \frac{1}{S}\sum_s{\prob{\vect{y}\g\vect{x}, \vect{w}_s}}.
\end{equation}
Our goal is to distil a compact model $\prob{\vect{y}\g\vect{x}, \bm{\theta}}$ from the MCMC samples, that is as close to the true predictive distribution as possible.

For simplicity, in the rest of this section we will focus on binary classification. In other words, we will assume that output $\vect{y}$ is a binary variable that can be either $0$ or $1$ (and we will denote it as $y$). This will allow expectations over $y$ to be calculated analytically. However, in principle there is no need to restrict ourselves to binary classification, and the framework can be easily extended to handle more general types of output $\vect{y}$.

\subsection{Loss functions}

The first thing we need to do is define an appropriate loss function to measure the discrepancy between the true predictive distribution $\prob{y\g\vect{x}, D}$ and the compact predictive distribution $\prob{y\g\vect{x}, \bm{\theta}}$. Minimizing this loss function with respect to $\bm{\theta}$ will train $\prob{y\g\vect{x}, \bm{\theta}}$ to be as close an approximation to $\prob{y\g\vect{x}, D}$ as possible. In the following, we describe two loss functions that achieve this goal: \emph{cross entropy} and \emph{derivative square error}.

\subsubsection{Cross entropy}
\index{Cross entropy}

In the density estimation case, we used KL divergence as a loss function. To use KL divergence as a loss function for binary classification too, we need to take care of the input variable $\vect{x}$. Following \citet{Snelson:2005:compact_approximations}, we  average the KL divergence over all possible values of $\vect{x}$, which gives the following loss function
\begin{equation}
E_\mathrm{KL}\br{\bm{\theta}} = \avg{\kl{\prob{y\g\vect{x}, D}}{\prob{y\g\vect{x}, \bm{\theta}}}}{\prob{\vect{x}}}.
\end{equation}
We can rewrite the above as follows
\begin{equation}
E_\mathrm{KL}\br{\bm{\theta}} = 
\avg{\log{\prob{y\g\vect{x}, D}}}{\prob{y\g\vect{x}, D}\prob{\vect{x}}}
 - \avg{\log{\prob{y\g\vect{x}, \bm{\theta}}}}{\prob{y\g\vect{x}, D}\prob{\vect{x}}}.
\end{equation}
The first term in the above is a constant with respect to $\bm{\theta}$ so we can simply ignore it. The second term is the cross entropy between $\prob{y\g\vect{x}, D}$ and $\prob{y\g\vect{x}, \bm{\theta}}$ averaged over $\prob{\vect{x}}$, which we shall denote as $E_\mathrm{CE}\br{\bm{\theta}}$. Minimizing $E_\mathrm{CE}\br{\bm{\theta}}$ is then equivalent to minimizing $E_\mathrm{KL}\br{\bm{\theta}}$.

Since $y$ is binary, we can do the expectations with respect to it analytically. For convenience, we define the following two functions 
\begin{align}
t\br{\vect{x}} &= \prob{y = 1\g\vect{x}, D} \\
f\br{\vect{x}\g \bm{\theta}} &= \prob{y = 1\g\vect{x}, \bm{\theta}}.
\end{align}
Using these, we can rewrite cross entropy as
\begin{equation}
E_\mathrm{CE}\br{\bm{\theta}} = - 
\avg{\vphantom{\Big|}t\br{\vect{x}}\log{f\br{\vect{x}\g\bm{\theta}}} + \br{\vphantom{\big|}1-t\br{\vect{x}}}\log\br{\vphantom{\big|}1-f\br{\vect{x}\g\bm{\theta}}}}{\prob{\vect{x}}}.
\end{equation}

It is easy to see that cross entropy is a valid loss function for knowledge distillation. It is well known that the KL divergence is always non-negative, and that it becomes zero if and only if $\prob{y\g\vect{x}, D} = \prob{y\g\vect{x}, \bm{\theta}}$ \citep[section~2.6]{MacKay:2002:IT}. Hence, $E_\mathrm{KL}\br{\bm{\theta}}$---and equivalently $E_\mathrm{CE}\br{\bm{\theta}}$---is minimized if and only if $t\br{\vect{x}} = f\br{\vect{x}\g \bm{\theta}}$ everywhere in the support of $\prob{\vect{x}}$.

\subsubsection{Derivative square error}
\index{Derivative square error}

Functions $t\br{\vect{x}}$ and $f\br{\vect{x}\g \bm{\theta}}$, as defined above, correspond to the probability of $y$ being turned on at input location $\vect{x}$, as given by the true predictive distribution and the compact predictive distribution respectively. Geometrically, $t\br{\vect{x}}$ and $f\br{\vect{x}\g \bm{\theta}}$ are surfaces over the input space. The process of distillation can be viewed as trying to make the surface represented by $f\br{\vect{x}\g \bm{\theta}}$ match as closely as possible the surface represented by $t\br{\vect{x}}$, especially at input locations $\vect{x}$ for which $\prob{\vect{x}}$ is high.

Minimizing the cross entropy $E_\mathrm{CE}\br{\bm{\theta}}$ is one way of making the surfaces match, by matching function values $t\br{\vect{x}}$ and $f\br{\vect{x}\g \bm{\theta}}$ directly. However, assuming the surfaces are smooth, we can take this one step further and try to match function derivatives with respect to the input $\vect{x}$ as well. By matching derivatives as well as function values, we encourage the two surfaces to not only have the same height, but also have the same slope. Since matching derivatives at input location $\vect{x}$ is equivalent to matching a whole tangent hyperplane to the surface at $\vect{x}$, a loss function based on derivatives can constrain the surfaces with fewer evaluations. This can become a great advantage especially in high-dimensional input spaces.

In fact, for the task of binary classification, we can match the two surfaces only by using derivatives, without the need to use function values at all. We propose herein a loss function that involves derivatives with respect to $\vect{x}$ and achieves exactly that. This loss function, which we call \emph{derivative square error}, is defined as follows
\begin{equation}
E_\mathrm{DSE}\br{\bm{\theta}} = 
\avg{\frac{1}{2}\norm{\frac{\partial}{\partial\vect{x}}\log{\prob{y\g\vect{x}, \bm{\theta}}} - \frac{\partial}{\partial\vect{x}}\log{\prob{y\g\vect{x}, D}}}^2
}{\prob{y\g\vect{x}, D}\prob{\vect{x}}}.
\end{equation}
Notice that we first transform the probabilities into the log domain, and then take their derivatives. Expanding the expectation over $y$, and writing the above in terms of $t\br{\vect{x}}$ and $f\br{\vect{x}\g \bm{\theta}}$, we get the following
\begin{equation}
E_\mathrm{DSE}\br{\bm{\theta}} = 
\avg{
t\br{\vect{x}}E_{1}\br{\vect{x}, \bm{\theta}} +
\br{1-t\br{\vect{x}}}E_{0}\br{\vect{x}, \bm{\theta}}
}{\prob{\vect{x}}},
\end{equation}
where
\begin{align}
E_{1}\br{\vect{x}, \bm{\theta}} &= \frac{1}{2}\norm{\frac{\partial}{\partial\vect{x}}\log{f\br{\vect{x}\g \bm{\theta}}} - \frac{\partial}{\partial\vect{x}}\log{t\br{\vect{x}}}}^2 \\
E_{0}\br{\vect{x}, \bm{\theta}} &= \frac{1}{2}\norm{\frac{\partial}{\partial\vect{x}}\log{\br{1-f\br{\vect{x}\g \bm{\theta}}}} - \frac{\partial}{\partial\vect{x}}\log{\br{1-t\br{\vect{x}}}}}^2.
\end{align}
The validity of derivative square error as a loss function is established by the following proposition, which states that, unless $t\br{\vect{x}}$ is the constant function, minimizing $E_\mathrm{DSE}\br{\bm{\theta}}$ is sufficient for making the surfaces match.
\begin{proposition}
Assuming that $f\br{\vect{x}\g\bm{\theta}}$ has enough capacity and that $t\br{\vect{x}}$ is not constant in the support of $\prob{\vect{x}}$, $E_\mathrm{DSE}\br{\bm{\theta}}$ is minimized if and only if $f\br{\vect{x}\g\bm{\theta}} = t\br{\vect{x}}$ for all $\vect{x}$ in the support of $\prob{\vect{x}}$.
\end{proposition}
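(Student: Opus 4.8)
The plan is to mirror the structure of the proof of the previous proposition. First I would establish that $E_\mathrm{DSE}\br{\bm{\theta}}$ is bounded below by zero, being the expectation of a pointwise convex combination (the weights $t\br{\vect{x}}$ and $1-t\br{\vect{x}}$ are non-negative and sum to one) of the manifestly non-negative quantities $E_1\br{\vect{x},\bm{\theta}}$ and $E_0\br{\vect{x},\bm{\theta}}$. The ``if'' direction is then immediate: if $f\br{\vect{x}\g\bm{\theta}} = t\br{\vect{x}}$ throughout the support then every integrand vanishes, so $E_\mathrm{DSE}\br{\bm{\theta}} = 0$ and the loss attains its global minimum. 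Because $f$ is assumed to have enough capacity, this minimum value of zero is actually achievable, which is what lets me characterise the minimiser by the condition $E_\mathrm{DSE}\br{\bm{\theta}} = 0$ in the converse direction.

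For the ``only if'' direction I would suppose $E_\mathrm{DSE}\br{\bm{\theta}} = 0$. Since the integrand is a sum of two non-negative terms weighted by $t\br{\vect{x}}$ and $1-t\br{\vect{x}}$, each contribution must vanish throughout the support: wherever $t\br{\vect{x}}>0$ we obtain $E_1\br{\vect{x},\bm{\theta}} = 0$, and wherever $t\br{\vect{x}}<1$ we obtain $E_0\br{\vect{x},\bm{\theta}} = 0$. Working on the region where $0 < t\br{\vect{x}} < 1$, which is where the argument has bite, $E_1 = 0$ gives $\pderivnull{\vect{x}}\log{f\br{\vect{x}\g\bm{\theta}}} = \pderivnull{\vect{x}}\log{t\br{\vect{x}}}$, so $\log f - \log t$ has zero gradient and is therefore constant, yielding $f\br{\vect{x}\g\bm{\theta}} = a\,t\br{\vect{x}}$ for some constant $a>0$. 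Symmetrically, $E_0 = 0$ gives $1 - f\br{\vect{x}\g\bm{\theta}} = b\br{1 - t\br{\vect{x}}}$ for some constant $b>0$.

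The decisive step is to combine these two proportionality relations. Eliminating $f$ gives $\br{a-b}\,t\br{\vect{x}} = 1 - b$ for all $\vect{x}$ in the relevant region. This is exactly where the hypothesis that $t$ is not constant does the work: since $t\br{\vect{x}}$ takes at least two distinct values, the only way a single affine relation can hold at both is to have $a - b = 0$ together with $1 - b = 0$, forcing $a = b = 1$ and hence $f\br{\vect{x}\g\bm{\theta}} = t\br{\vect{x}}$. I expect the main obstacle to be bookkeeping rather than ideas: I must be careful that the constant obtained from ``zero gradient implies constant'' is genuinely a single constant across the support (appealing to connectedness, or else treating each connected component and the boundary sets $t=0$ and $t=1$ separately), and that the non-constancy assumption is invoked on a region where both proportionality relations simultaneously hold. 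The analogy with the previous proposition is instructive: there, linear independence of the $t\br{\vect{x}_i}$ ruled out a non-trivial $\bm{\alpha}$; here, non-constancy of the scalar $t\br{\vect{x}}$ plays precisely the same role in ruling out $a \neq 1$.
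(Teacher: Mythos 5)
Your proposal is correct and follows essentially the same route as the paper's proof: non-negativity of the loss, derivative equality implying $f = c_1 t$ and $1-f = c_2(1-t)$, elimination of $f$ to get an affine relation in $t\br{\vect{x}}$, and non-constancy of $t$ forcing $c_1 = c_2 = 1$. Your extra care about connectedness and about which sub-region each proportionality holds on is a refinement the paper glosses over, but it does not change the argument.
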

\begin{proof}
Let $S_p$ be the support of $\prob{\vect{x}}$, i.e.~the set of points $\vect{x}$ for which $\prob{\vect{x}}>0$.
Notice that $E_\mathrm{DSE}\br{\bm{\theta}}\ge 0$, as the expectation of a norm. Obviously, if $f\br{\vect{x}\g\bm{\theta}} = t\br{\vect{x}}$ for all $\vect{x}\in S_p$, we have that $E_\mathrm{DSE}\br{\bm{\theta}}=0$. Conversely, if $E_\mathrm{DSE}\br{\bm{\theta}}=0$, it has to be the case that for all $\vect{x}\in S_p$ the following hold
\begin{align}
\frac{\partial}{\partial\vect{x}}\log{f\br{\vect{x}\g\bm{\theta}}} &= \frac{\partial}{\partial\vect{x}}\log{t\br{\vect{x}}}\\
\frac{\partial}{\partial\vect{x}}\log{\br{1-f\br{\vect{x}\g\bm{\theta}}}} &= \frac{\partial}{\partial\vect{x}}\log{\br{1-t\br{\vect{x}}}}.
\end{align}
The above two constraints imply the existence of positive constants $c_1$ and $c_2$ such that
\begin{align}
f\br{\vect{x}\g\bm{\theta}} &= c_1 \,t\br{\vect{x}}\\
1-f\br{\vect{x}\g\bm{\theta}} &= c_2\,\br {1-t\br{\vect{x}}},
\end{align}
which combine to give
\begin{equation}
\br{c_2-c_1}\, t\br{\vect{x}}= c_2 - 1.
\end{equation}
The above implies that either $t\br{\vect{x}}$ is constant in $S_p$ or $c_1=c_2=1$. Since we have assumed that $t\br{\vect{x}}$ is not constant in $S_p$, it follows that $f\br{\vect{x}\g\bm{\theta}}=t\br{\vect{x}}$ everywhere in $S_p$.
\end{proof}

\subsection{Batch distillation}

In order to be in a position to minimize either $E_\mathrm{CE}\br{\bm{\theta}}$ or $E_\mathrm{DSE}\br{\bm{\theta}}$, we need to have access to the true predictive distribution $t\br{\vect{x}}$. In our framework, we have approximate access to $t\br{\vect{x}}$ via MCMC sampling. By collecting a large number of MCMC samples $\set{\vect{w}_s}$, we can approximate $t\br{\vect{x}}$ using the following Monte Carlo estimate
\begin{equation}
t\br{\vect{x}} \approx p_\mathrm{MC}\br{y=1\g \vect{x}} = \frac{1}{S}\sum_s{\prob{y=1\g\vect{x}, \vect{w}_s}}.
\end{equation}
Provided $S$ is large enough, the above estimate can be used as a proxy for $t\br{\vect{x}}$.

We will now describe a stochastic gradient optimization method for minimizing $E_\mathrm{CE}\br{\bm{\theta}}$ and $E_\mathrm{DSE}\br{\bm{\theta}}$. First, note that both loss functions can be written in the form
\begin{equation}
E\br{\bm{\theta}} = \avg{E\br{\vect{x}, \bm{\theta}}}{\prob{\vect{x}}}
\end{equation}
for an appropriate instantiation of $E\br{\vect{x}, \bm{\theta}}$. The derivative of the above with respect to $\bm{\theta}$ can be written as
\begin{equation}
\vect{g}\br{\bm{\theta}} = \avg{\pderivnull{\bm{\theta}}E\br{\vect{x}, \bm{\theta}}}{\prob{\vect{x}}}.
\end{equation}
If we have a minibatch $\set{\vect{x}_m}$ of samples from $\prob{\vect{x}}$, then we can stochastically approximate the above derivative by
\begin{equation}
\hat{\vect{g}}\br{\bm{\theta}} = \frac{1}{M}\sum_m\pderivnull{\bm{\theta}}E\br{\vect{x}_m, \bm{\theta}}.
\end{equation}
It is easy to see that  $\hat{\vect{g}}\br{\bm{\theta}}$ is an unbiased estimate of $\vect{g}\br{\bm{\theta}}$. By repeatedly sampling minibatches from $\prob{\vect{x}}$, calculating $\hat{\vect{g}}\br{\bm{\theta}}$ and then using it to update $\bm{\theta}$, we can effectively minimize $E\br{\bm{\theta}}$. This is outlined in the procedure below.
\begin{framed}
\begin{enumerate}[label=(\roman*)]
\item Generate a bag of samples $\set{\vect{w}_s}$ of size $S$ from $\prob{\vect{w}\g D}$ by simulating the Markov chain. Use $p_\mathrm{MC}\br{y=1\g \vect{x}}$ as an approximation to $t\br{\vect{x}}$.
\item\label{alg:compact_predictive:binary_class:batch:step2} Generate a minibatch $\set{\vect{x}_m}$ of size $M$ from $\prob{\vect{x}}$.
\item Calculate the stochastic gradient $\hat{\vect{g}}\br{\bm{\theta}} = \frac{1}{M}\sum_{m}{\pderivnull{\bm{\theta}} E\br{\vect{x}_m,\bm{\theta}}}$.
\item Make an update on $\bm{\theta}$ using $\hat{\vect{g}}\br{\bm{\theta}}$.
\item If $\bm{\theta}$ has not converged yet, go back to step~\ref{alg:compact_predictive:binary_class:batch:step2}.
\end{enumerate}
\end{framed}

The above algorithm relies on three conditions. Firstly, the number of MCMC samples $S$ has to be large enough so that $p_\mathrm{MC}\br{y=1\g \vect{x}}$ can  be effectively used in place of $t\br{\vect{x}}$. Secondly, we need to be able to calculate the derivative $\pderivnull{\bm{\theta}}E\br{\vect{x}, \bm{\theta}}$. In our case study in section~\ref{sec:compact_predictive:bayesian_logreg}, we discuss how to calculate $\pderivnull{\bm{\theta}}E\br{\vect{x}, \bm{\theta}}$ in the particular setting of logistic regression. Thirdly, the update strategy of $\bm{\theta}$ using $\hat{\vect{g}}\br{\bm{\theta}}$ has to be suitably chosen, so that the algorithm converges. \citet{Bottou:1999:online_learning} provides sufficient conditions for stochastic gradient algorithms of the above type that guarantee almost sure convergence to a stationary point of $E\br{\bm{\theta}}$ (provided samples from $\prob{\vect{x}}$ are drawn independently).

\subsection{Online distillation}

The downside of batch distillation as previously considered by \citet{Snelson:2005:compact_approximations} is that a large number of MCMC samples $\set{\vect{w}_s}$ has to be generated and stored in advance. This can be problematic, especially if $\vect{w}$ is high-dimensional. To alleviate this problem, in this section we present an online version of the distillation procedure. In online distillation, the updates to $\bm{\theta}$ take place on the fly as the Markov chain is simulated, and hence there is no need to store the previously generated MCMC samples.

Online distillation relies on using a single MCMC sample as a stochastic estimate of $t\br{\vect{x}}$. To see how this is done, define the following
\begin{equation}
t\br{\vect{x}, \vect{w}} = \prob{y = 1\g \vect{x},\vect{w}}.
\end{equation}
It is easy to see that the above is an unbiased estimate of $t\br{\vect{x}}$ since
\begin{equation}
t\br{\vect{x}} = \avg{t\br{\vect{x}, \vect{w}}}{\prob{\vect{w}\g D}}.
\end{equation}
The idea of online distillation is to use $t\br{\vect{x}, \vect{w}}$ instead of $t\br{\vect{x}}$ in the loss function and then average the whole loss function with respect to  $\prob{\vect{w}\g D}$. This way, the loss function takes the following form
\begin{equation}
E\br{\bm{\theta}} = \avg{E\br{\vect{x},\vect{w},\bm{\theta}}}{\prob{\vect{w}\g D}\prob{\vect{x}}}.
\end{equation}

Having the loss function in the above form, we can now minimize it using stochastic gradients not only with respect to $\vect{x}$, but also with respect to $\vect{w}$. That is, by simulating the Markov chain $M$ times to get a minibatch of samples $\set{\vect{w}_m}$, and at the same time drawing $M$ samples $\set{\vect{x}_m}$ from $\prob{\vect{x}}$, we obtain an unbiased stochastic gradient as follows
\begin{equation}
\hat{\vect{g}}\br{\bm{\theta}} = \frac{1}{M}\sum_m{\pderivnull{\bm{\theta}}E\br{\vect{x}_m,\vect{w}_m,\bm{\theta}}}.
\end{equation}
Hence, by repeatedly sampling minibatches from $\prob{\vect{w}\g D}$ and $\prob{\vect{x}}$, calculating $\hat{\vect{g}}\br{\bm{\theta}}$ and using it to update $\bm{\theta}$, we can  minimize $E\br{\bm{\theta}}$. The above procedure is outlined in the algorithm below.
\begin{framed}
\begin{enumerate}[label=(\roman*)]
\item\label{alg:compact_predictive:binary_class:online:step1} Generate a minibatch $\set{\vect{w}_m}$ of size $M$ from $\prob{\vect{w}\g D}$ by simulating the Markov chain.
\item Generate a minibatch $\set{\vect{x}_m}$ of size $M$ from $\prob{\vect{x}}$.
\item Calculate the stochastic gradient $\hat{\vect{g}}\br{\bm{\theta}} = \frac{1}{M}\sum_{m}{\pderivnull{\bm{\theta}} E\br{\vect{x}_m,\vect{w}_m,\bm{\theta}}}$.
\item Make an update on $\bm{\theta}$ using $\hat{\vect{g}}\br{\bm{\theta}}$.
\item If $\bm{\theta}$ has not converged yet, go back to step~\ref{alg:compact_predictive:binary_class:online:step1}.
\end{enumerate}
\end{framed}
\noindent Note that, in the above algorithm, MCMC samples are not independent. Convergence of stochastic approximation algorithms of the above type, where samples are not independent, has been studied by \citet{Younes:1999:convergence_markov}. If desired, dependence between samples can be reduced by thinning or using parallel Markov chains.

To summarize, in order to make the loss function suitable for online distillation, $t\br{\vect{x}}$ was replaced by $t\br{\vect{x},\vect{w}}$ and then the whole loss function was averaged with respect to $\prob{\vect{w}\g D}$. It is natural to ask what effect this change to the loss function has. For cross entropy, it turns out that the two versions of the loss function are actually equivalent. To see why, rewrite cross entropy as follows
\begin{align}
\avg{E_\mathrm{CE}\br{\vect{x},\bm{\theta}}}{\prob{\vect{x}}} &= - 
\avg{\vphantom{\Big|}t\br{\vect{x}}\log{f\br{\vect{x}\g\bm{\theta}}} + \br{\vphantom{\big|}1-t\br{\vect{x}}}\log\br{\vphantom{\big|}1-f\br{\vect{x}\g\bm{\theta}}}}{\prob{\vect{x}}} \notag\\
&= - \avg{\vphantom{\Big|}\avg{t\br{\vect{x},\vect{w}}}{\prob{\vect{w}\g D}}\log{f\br{\vect{x}\g\bm{\theta}}} + \br{\vphantom{\big|}1-\avg{t\br{\vect{x},\vect{w}}}{\prob{\vect{w}\g D}}}\log\br{\vphantom{\big|}1-f\br{\vect{x}\g\bm{\theta}}}}{\prob{\vect{x}}}\notag\\
&= - \avg{\vphantom{\Big|}t\br{\vect{x},\vect{w}}\log{f\br{\vect{x}\g\bm{\theta}}} + \br{\vphantom{\big|}1-t\br{\vect{x},\vect{w}}}\log\br{\vphantom{\big|}1-f\br{\vect{x}\g\bm{\theta}}}}{\prob{\vect{w}\g D}\prob{\vect{x}}}\notag\\
&= \avg{E_\mathrm{CE}\br{\vect{x},\vect{w},\bm{\theta}}}{\prob{\vect{w}\g D}\prob{\vect{x}}}.
\end{align}
Hence, when using cross entropy, both batch and online distillation minimize the same loss function. The same is not true when using derivative square error though. In derivative square error, the term $t\br{\vect{x}}$ appears inside a log and a square, both of which are non-linear functions. Therefore, expectations with respect to $\prob{\vect{w}\g D}$ cannot be moved outside the loss function without changing it. In our case study in section~\ref{sec:compact_predictive:bayesian_logreg}, we will assess experimentally whether this change in derivative square error affects distillation performance.

\subsection{Case study: Bayesian logistic regression}
\label{sec:compact_predictive:bayesian_logreg}

In this section, we apply the distillation framework to the problem of Bayesian logistic regression. Given a dataset with known binary labels, the task in Bayesian logistic regression is to infer the class probability for every possible input location.

\subsubsection{The setup}

Logistic regression is a probabilistic linear model for binary classification. Under a logistic regression model with weights $\vect{w}$, the probability of a datapoint $\vect{x}$ having $y=1$ is given by
\begin{equation}
\prob{y = 1\g\vect{x}, \vect{w}} = \sigm{\vect{w}^T\vect{x}},
\end{equation}
where $\sigm{\cdot}$ is the logistic sigmoid function. In our experiments, we used a broad Gaussian prior for the weights (shown in Figure~\ref{fig:compact_predictive:logreg_setup:prior}) that is given by
\begin{equation}
\prob{\vect{w}} = \gaussian{\vect{w}}{\vect{0}}{100\mat{I}}.
\end{equation}

We created a toy dataset $D$ of $24$ two-dimensional input points $\set{\vect{x}_n}$, and we labelled $12$ of them with $y=1$ and the other $12$ with $y=0$. The datapoints are shown in Figure~\ref{fig:compact_predictive:logreg_setup:dataset}, colour-coded according to their labels. Given the dataset, the task is to calculate the predictive probability $\prob{y=1\g\vect{x},D}$ for every input location $\vect{x}$. Despite the simplicity of the logistic regression model, calculating the predictive distribution is analytically intractable due to the sigmoid likelihood factor, and typically an approximation like MCMC is needed.

\def\imwidth{0.49\textwidth}

\begin{figure}[tbp]
\centering
\subfloat[Prior (unnormalized)\label{fig:compact_predictive:logreg_setup:prior}]{
\includegraphics[width=\imwidth]{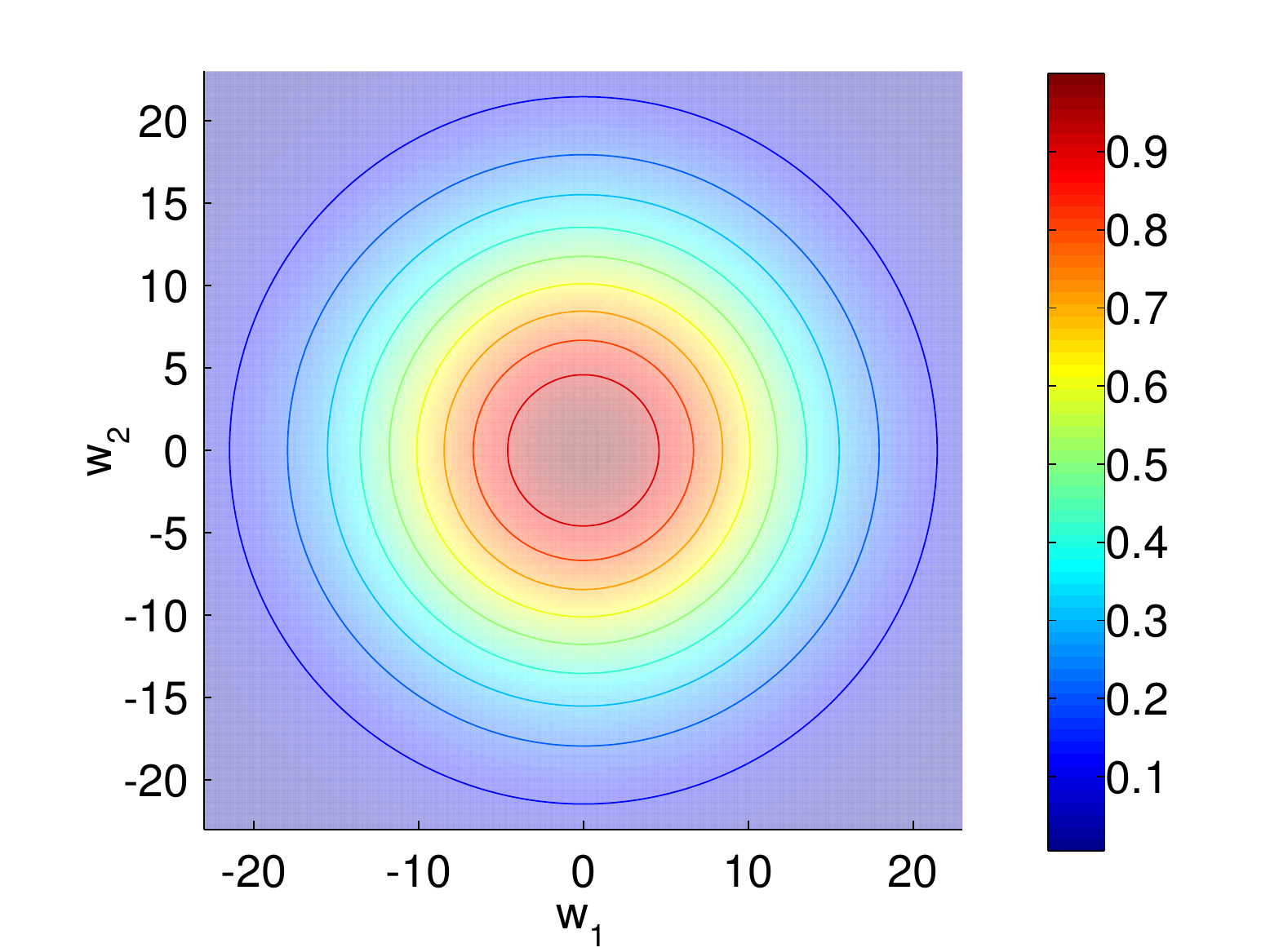}}
\hfill
\subfloat[Dataset\label{fig:compact_predictive:logreg_setup:dataset}]{
\includegraphics[width=\imwidth]{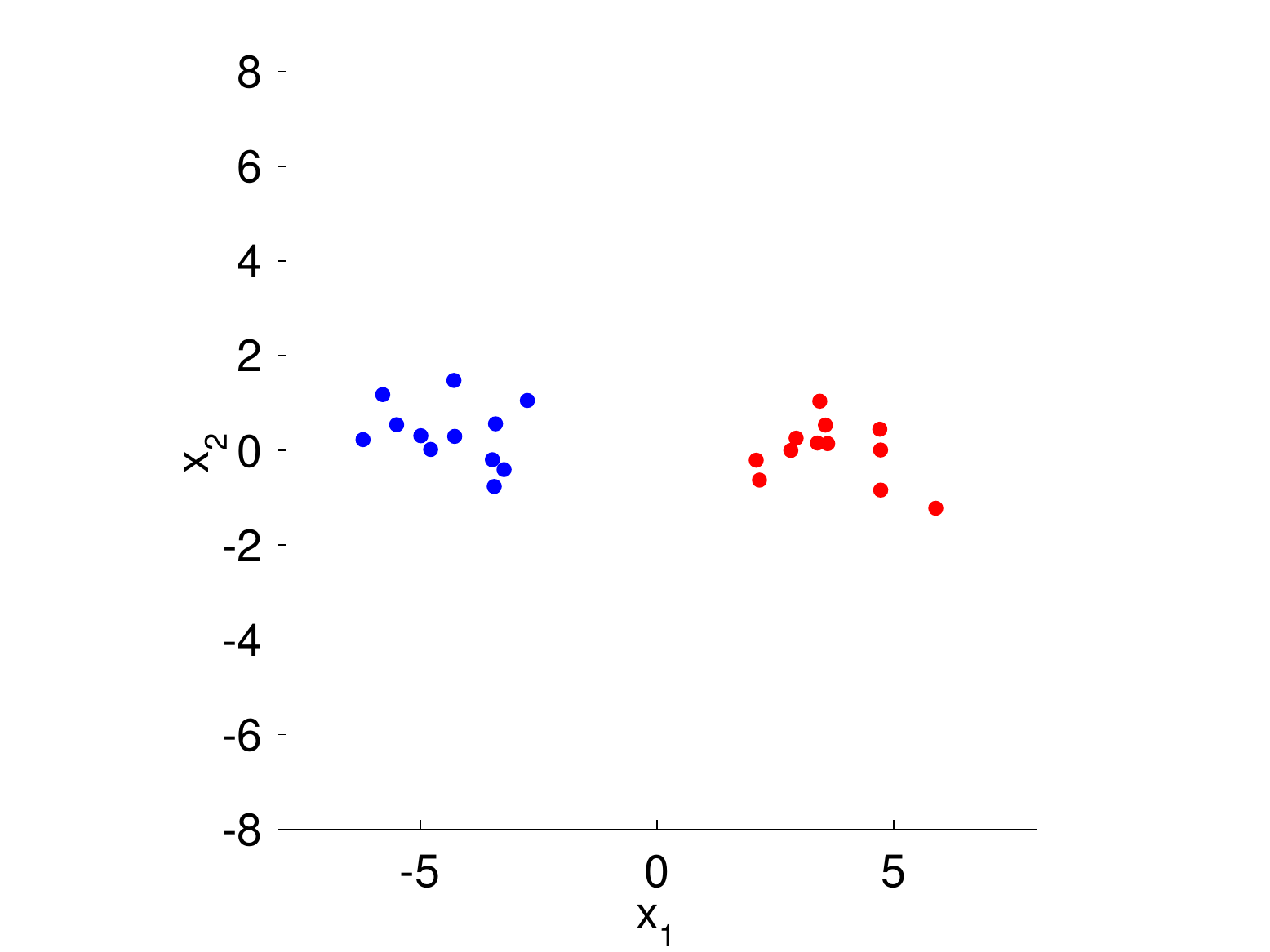}}
\caption{Setup for Bayesian logistic regression. The Gaussian prior over the weights is shown in (a). The dataset is shown in (b), with red corresponding to $y=1$ and blue corresponding to $y = 0$.}
\label{fig:compact_predictive:logreg_setup}
\end{figure}

\subsubsection{Distillation details}

In our experiments, we used slice sampling \citep{Neal:2000:slice_sampling} to generate MCMC samples from the posterior $\prob{\vect{w}\g D}$. Slice sampling has the advantage over other MCMC methods that it is broadly applicable and requires minimal tuning. We used linear stepping out, and we performed univariate updates to each parameter in turn. In all experiments, we initialized the Markov chain at $\vect{0}$, burned it in for $1000$ iterations and used no thinning.

Having collected a set of MCMC samples $\set{\vect{w}_s}$, the predictive distribution is given by the following Monte Carlo estimate
\begin{equation}
t\br{\vect{x}} = \prob{y = 1\g\vect{x},D}  \approx \frac{1}{S}\sum_s{\sigm{\vect{w}_s^T\vect{x}}}.
\end{equation}
For batch distillation, we used $S=10{,}000$ MCMC samples to form the above estimate.
Following \citet{Snelson:2005:compact_approximations}, the compact model we used in our experiments is of the same form as the above Monte Carlo estimate, but with only few components. That is, the compact model is defined to be
\begin{equation}
f\br{\vect{x}\g\bm{\theta}} = \prob{y = 1\g\vect{x},\bm{\theta}} =  \frac{1}{S'}\sum_{s'}{\sigm{\vect{w}_{s'}^T\vect{x}}},
\label{eq:compact_predictive:binclass:compact_model}
\end{equation}
with $S'\ll S$ and $\bm{\theta} = \set{\vect{w}_{s'}}$. In all our experiments, use used $S'=10$. Note that the parameters $\set{\vect{w}_{s'}}$ of the compact model can be viewed as a small, compact set of $10$ weight samples. Hence, distillation in this case can be interpreted as trying to compress the knowledge of the whole Markov chain into only $10$ samples. The locations of those $10$ samples are optimized during training, in order to yield a predictive distribution that is as close as possible to the predictive distribution obtained by the whole chain.

As discussed in the previous sections, in order to train the compact model $f\br{\vect{x}\g\bm{\theta}}$, we need to be in a position to calculate the gradient of the loss function with respect to $\bm{\theta}$. Both the MCMC predictor $t\br{\vect{x}}$ and the compact predictor $f\br{\vect{x}\g\bm{\theta}}$ are given by a linear combination of sigmoid terms. Hence, they can be viewed as feedforward neural networks\index{Neural network} with a single hidden layer of logistic sigmoid units, and a linear output layer. Therefore, the gradients of the loss functions can be easily calculated using standard backprop and R\{backprop\}, which are described in detail in appendix~\ref{chapter:derivatives_in_neural_networks}.

For optimizing the loss functions, in all our experiments we performed $5000$ iterations, using minibatches of size $10$. We used learning rates that were initially set at $1$ and linearly decayed to reach $0$ in the last iteration. Finally, minibatches of input locations were independently drawn from the following input distribution
\begin{equation}
\prob{\vect{x}} = \gaussian{\vect{x}}{\vect{0}}{100\mat{I}}.
\end{equation}

\subsubsection{Results and discussion}

\begin{figure}[p]
\centering
\subfloat[Posterior (unnormalized)]{
\includegraphics[width=\imwidth]{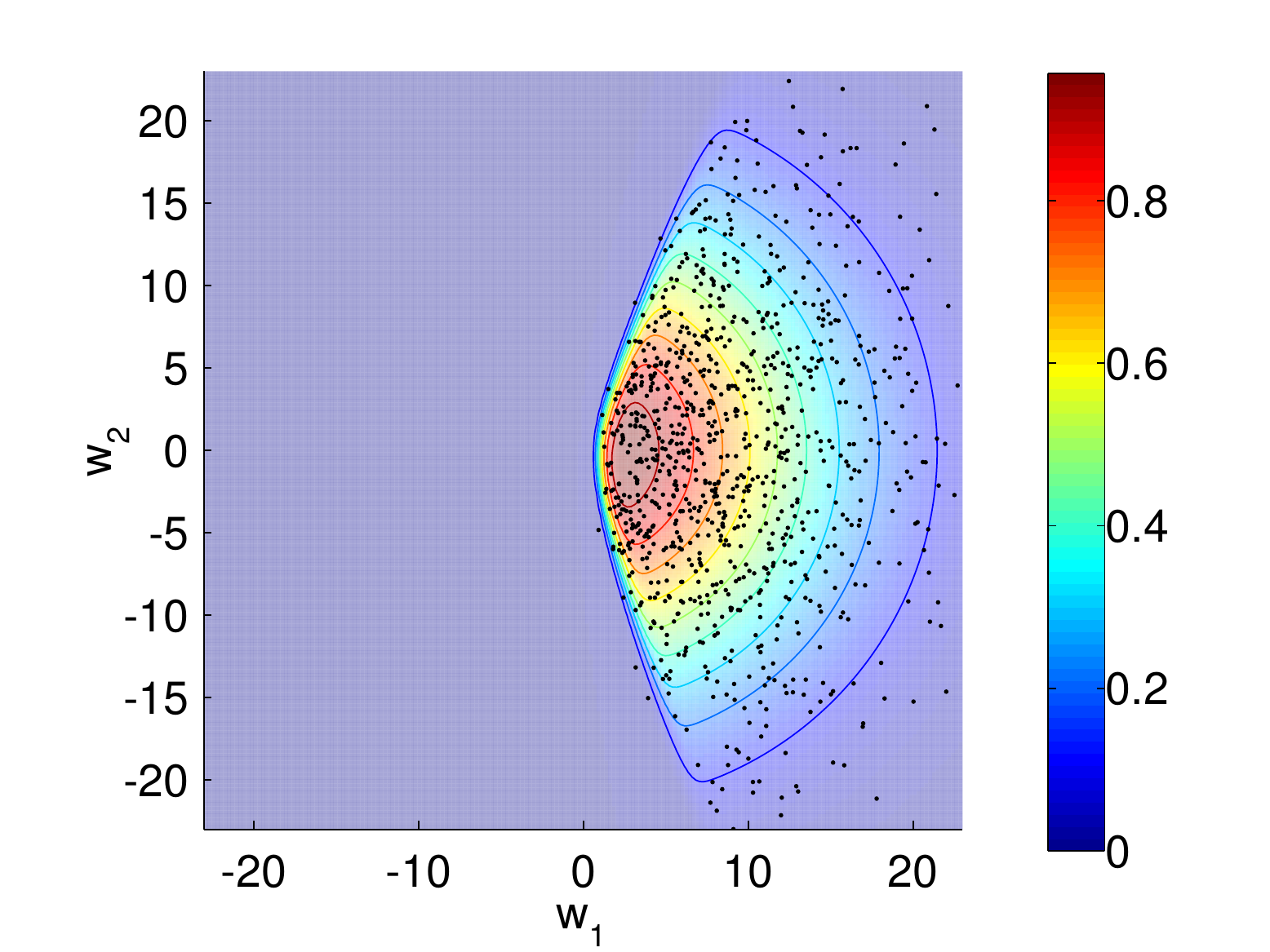}}
\hfill
\subfloat[Predictive distribution]{
\includegraphics[width=\imwidth]{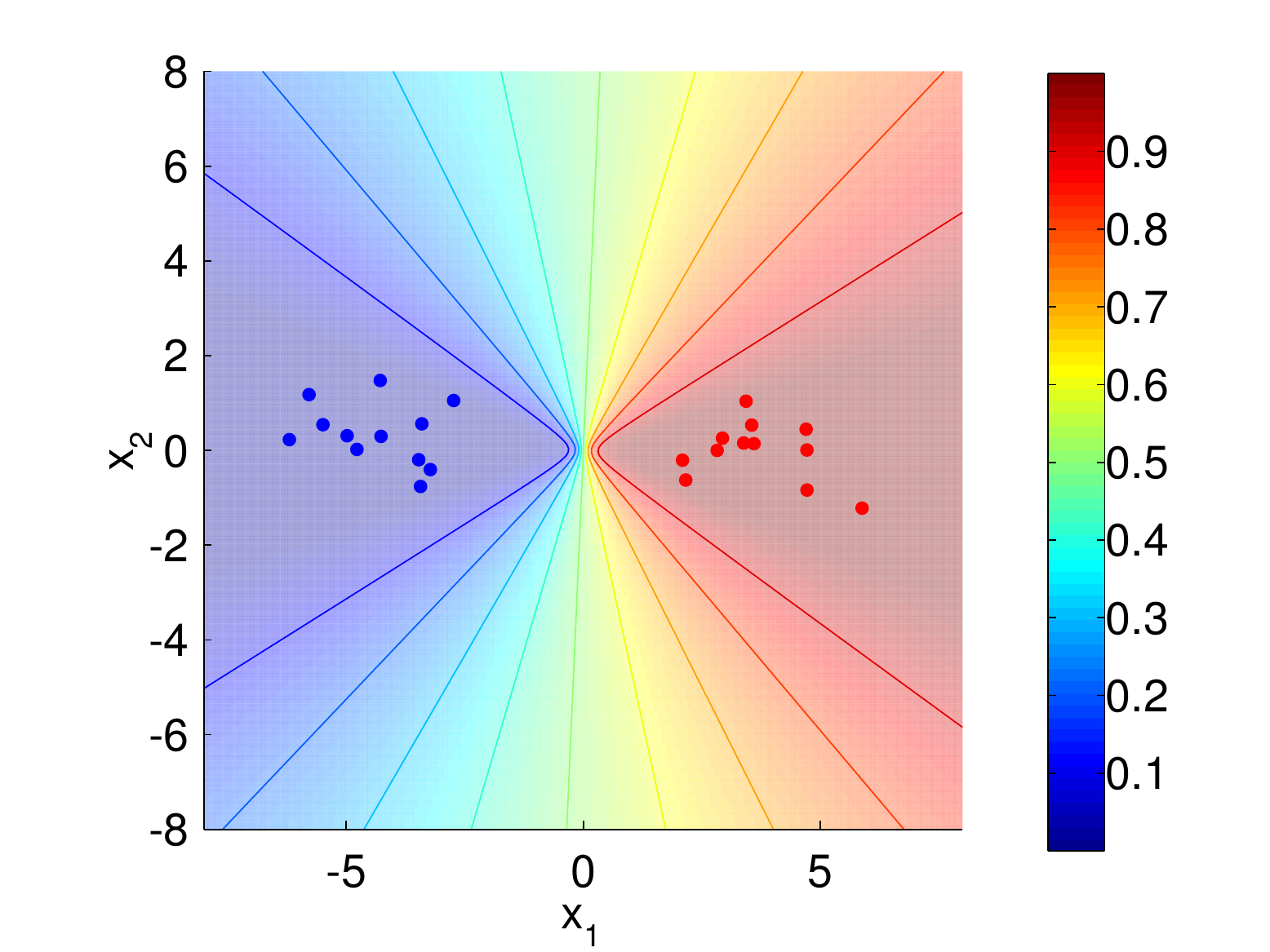}}
\caption{Bayesian logistic regression using MCMC\@. Black dots in (a) correspond to $10\%$ of all MCMC samples.}
\label{fig:compact_predictive:logreg_results_mcmc}
\end{figure}

\begin{figure}[p]
\centering
\subfloat[Posterior (unnormalized)]{
\includegraphics[width=\imwidth]{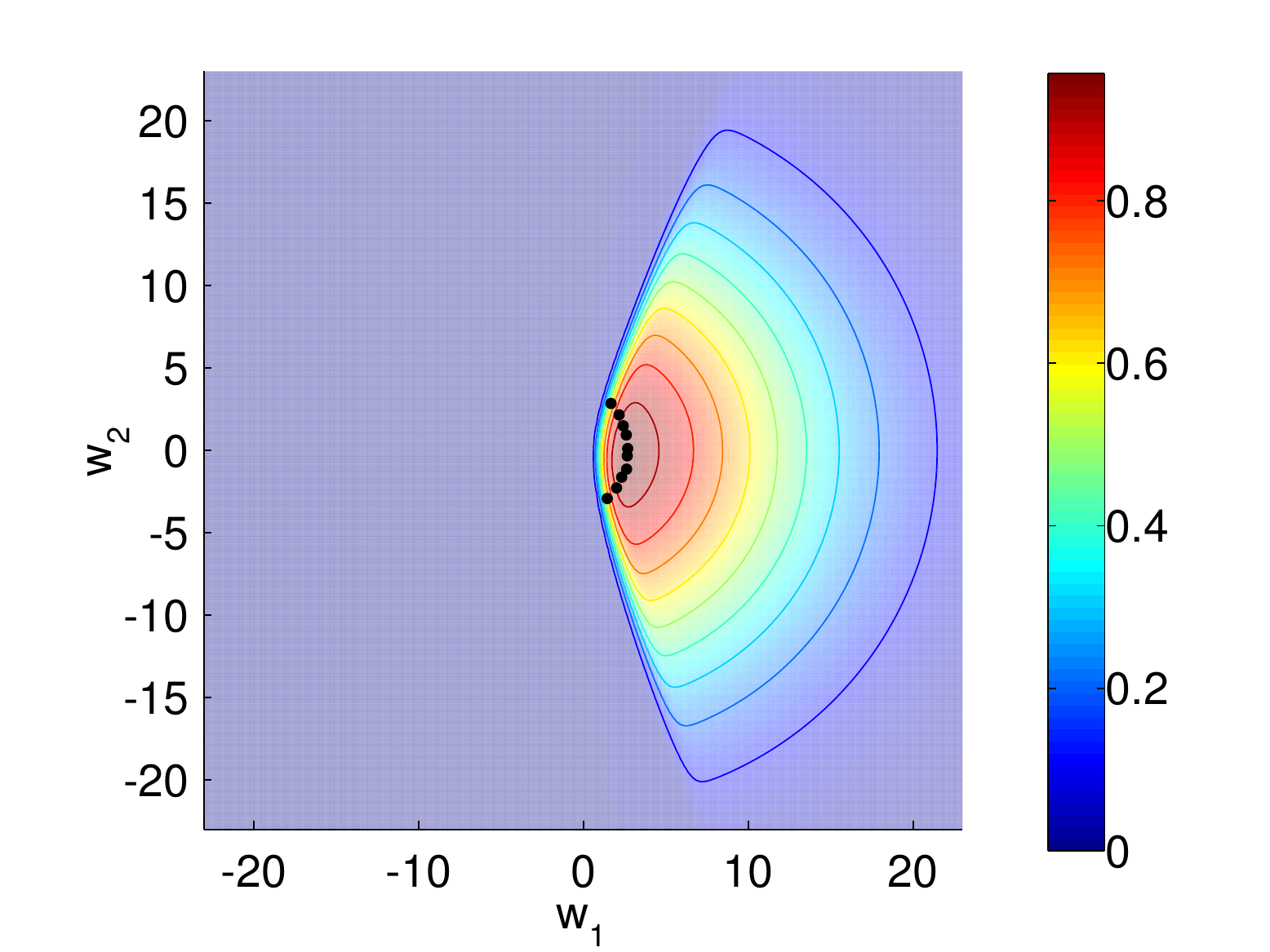}}
\hfill
\subfloat[Predictive distribution]{
\includegraphics[width=\imwidth]{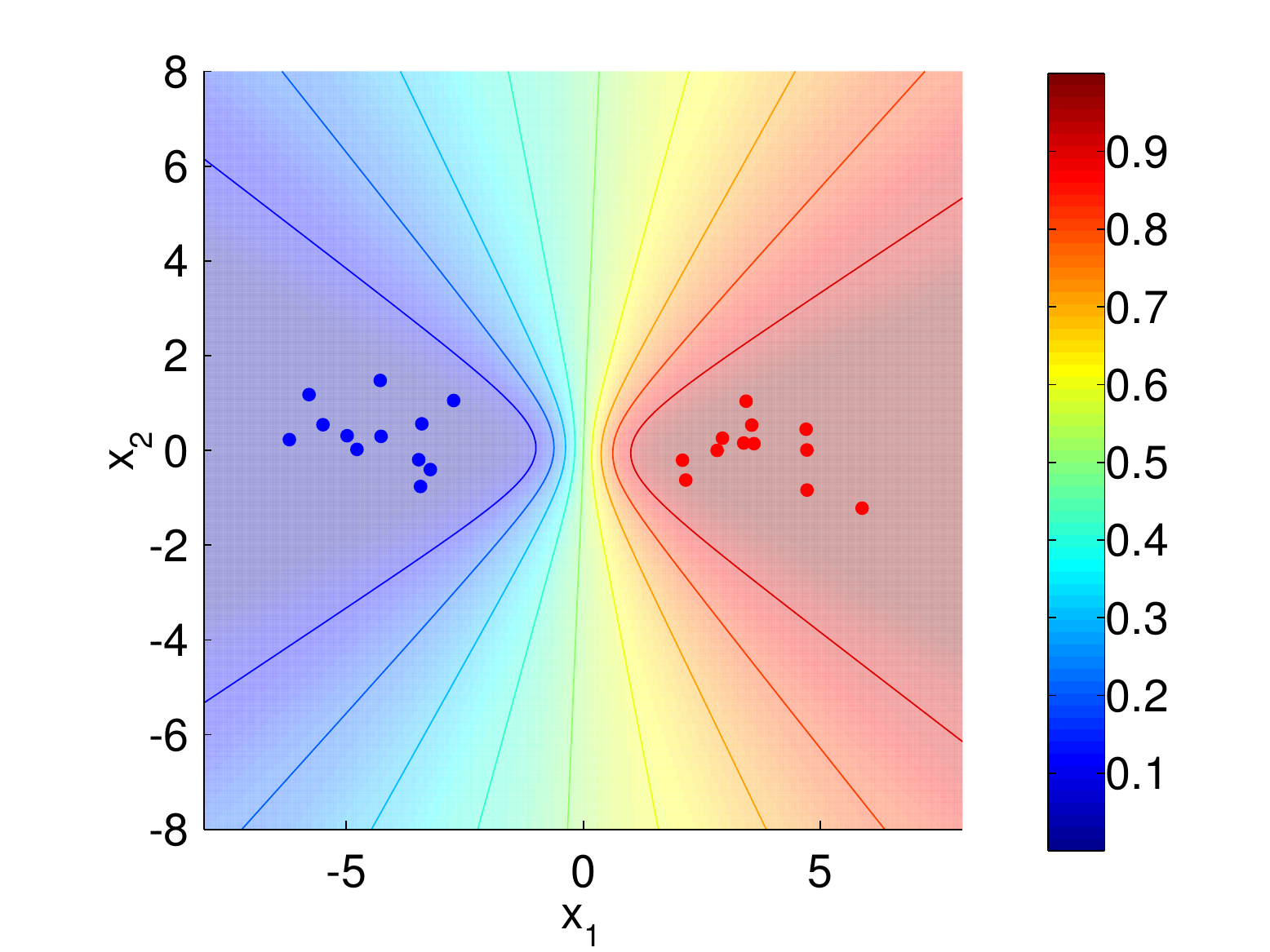}}
\caption{Batch distillation using cross entropy. Black dots in (a) correspond to the compact model's weights.}
\label{fig:compact_predictive:logreg_results_ce_batch}
\end{figure}

\begin{figure}[p]
\centering
\subfloat[Posterior (unnormalized)]{
\includegraphics[width=\imwidth]{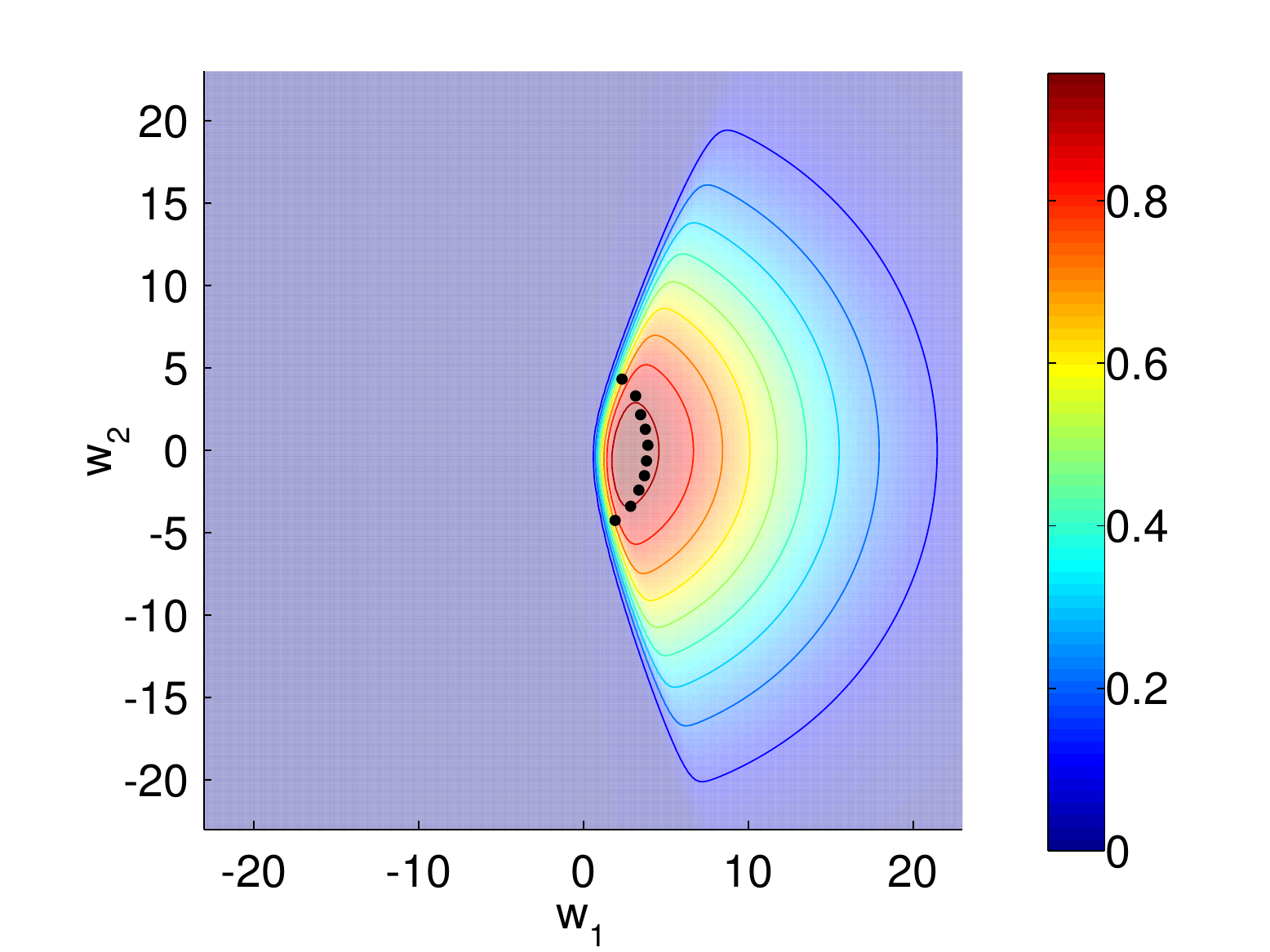}}
\hfill
\subfloat[Predictive distribution]{
\includegraphics[width=\imwidth]{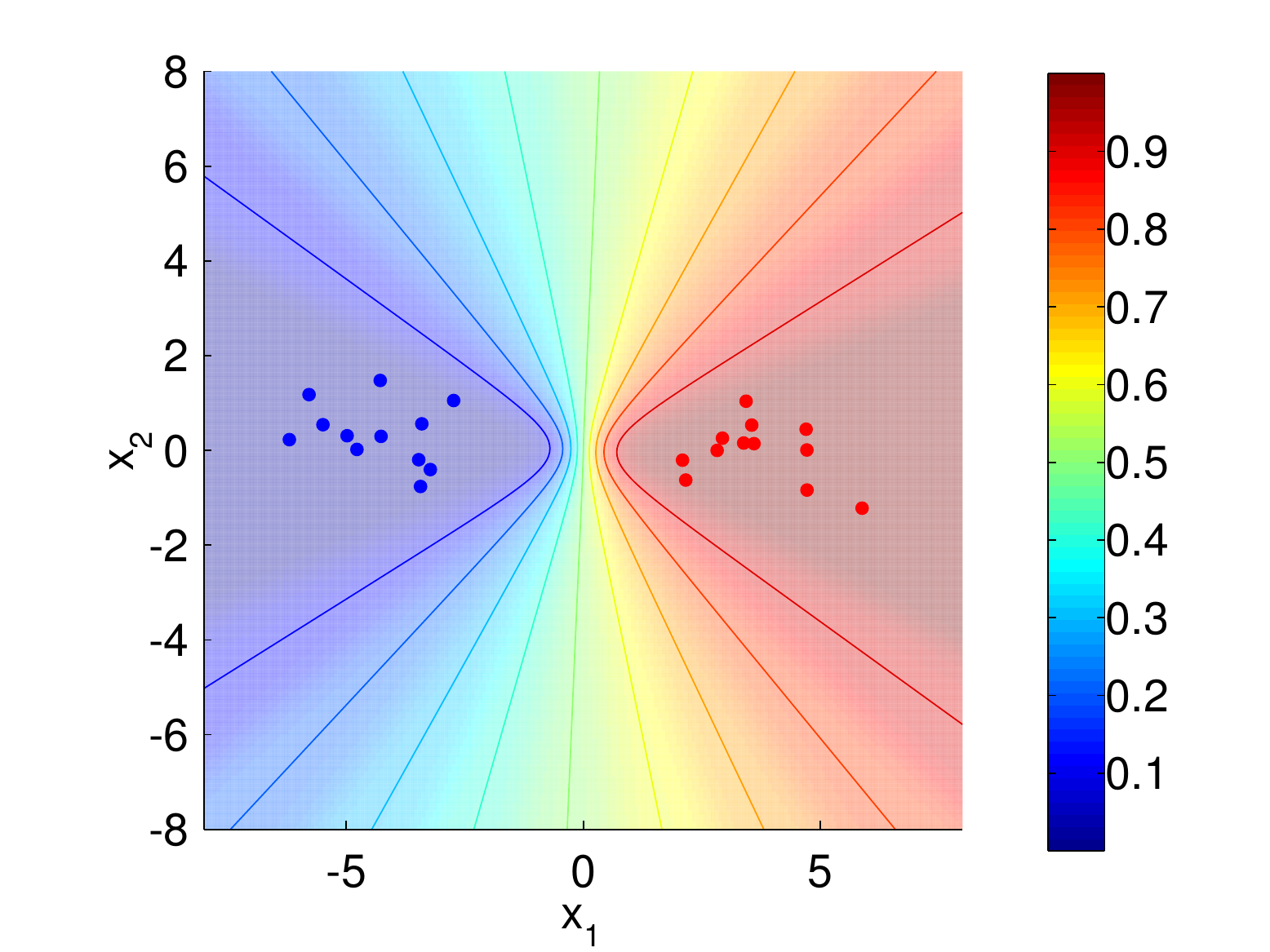}}
\caption{Batch distillation using derivative square error. Black dots in (a) correspond to the compact model's weights.}
\label{fig:compact_predictive:logreg_results_dse_batch}
\end{figure}

\begin{figure}[p]
\centering
\subfloat[Posterior (unnormalized)]{
\includegraphics[width=\imwidth]{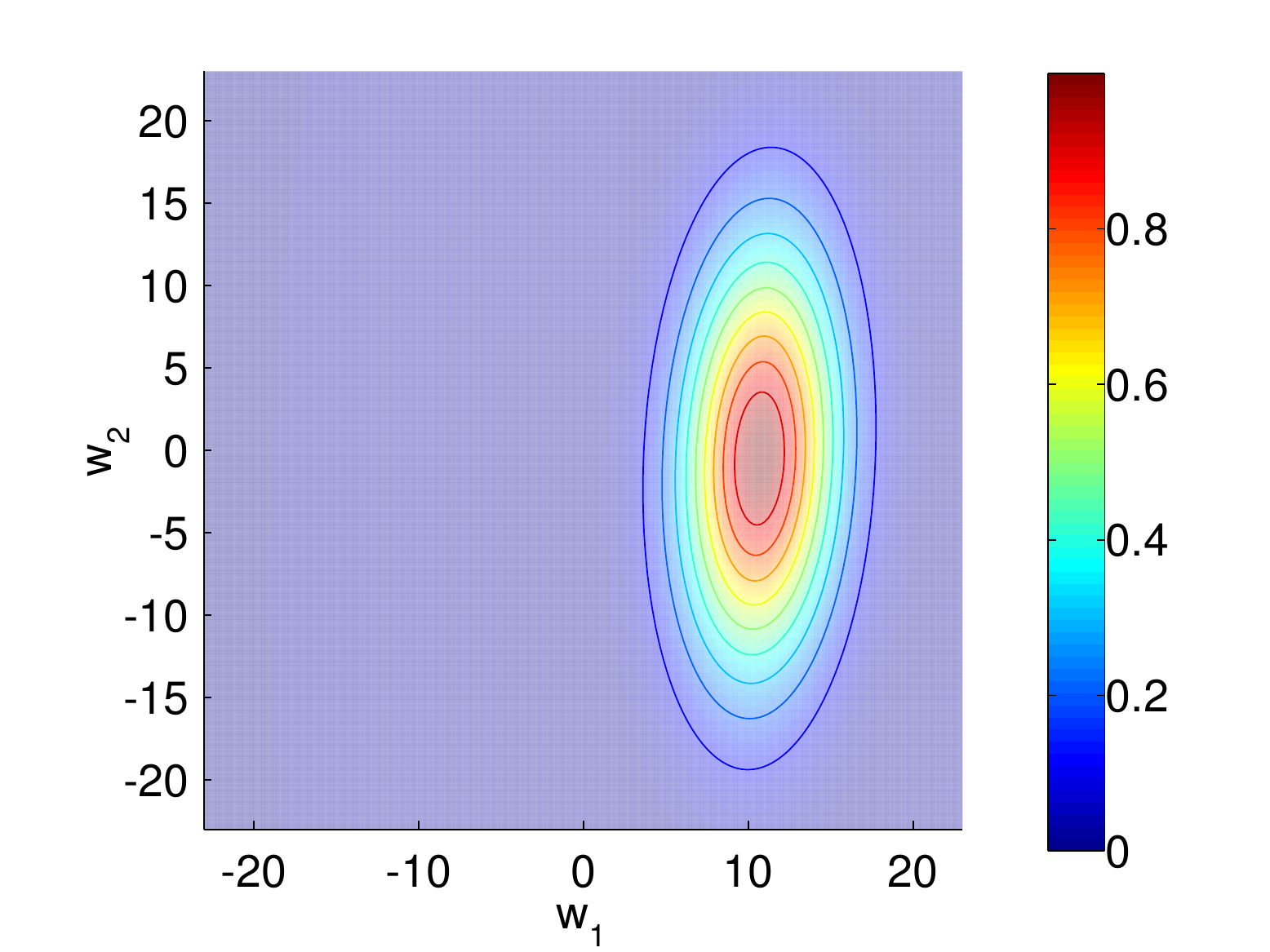}}
\hfill
\subfloat[Predictive distribution]{
\includegraphics[width=\imwidth]{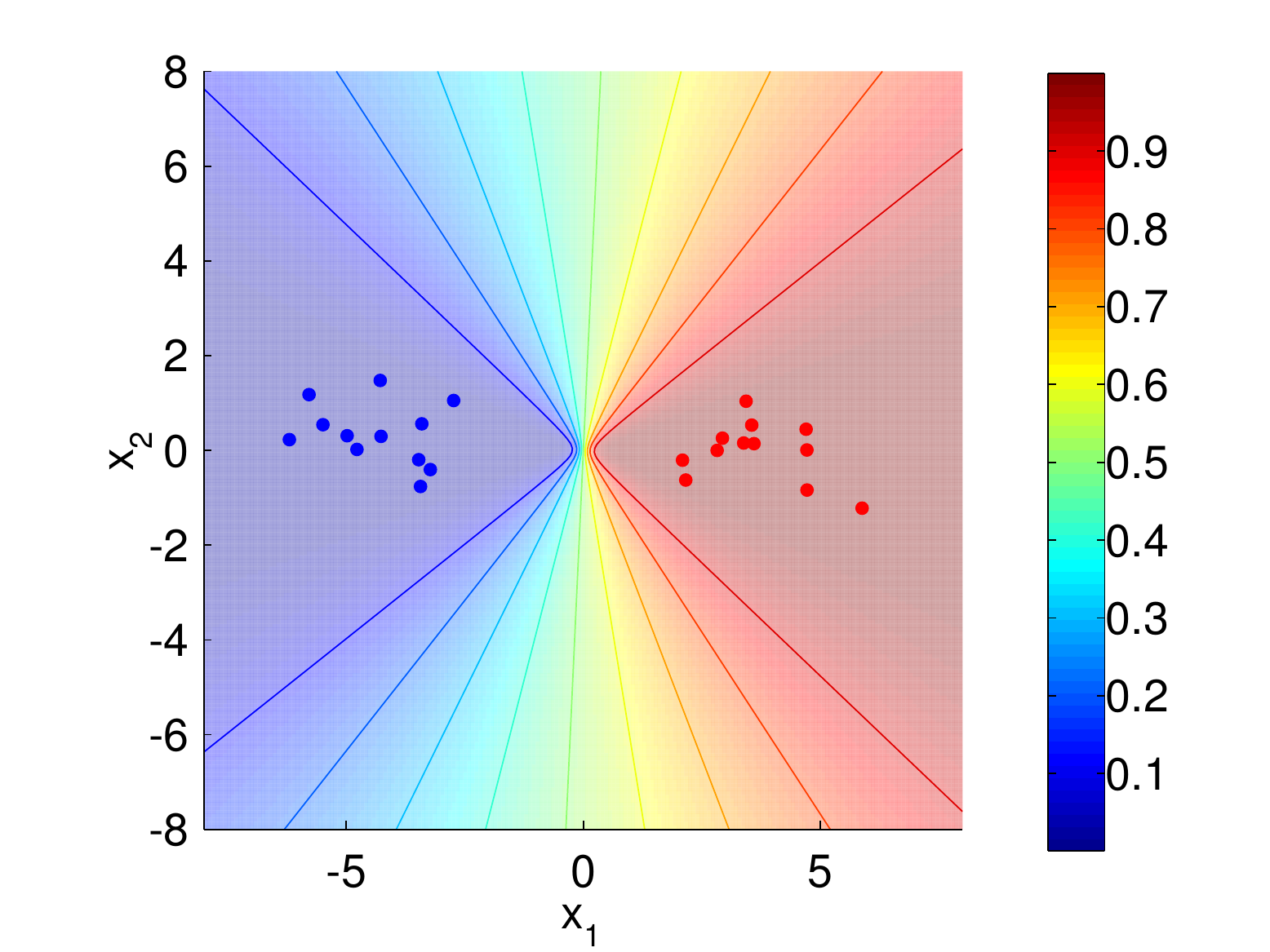}}
\caption{Bayesian logistic regression using EP\@. The posterior in (a) is the Gaussian approximation to the true posterior.}
\label{fig:compact_predictive:logreg_results_ep}
\end{figure}

\begin{figure}[p]
\centering
\subfloat[Posterior (unnormalized)]{
\includegraphics[width=\imwidth]{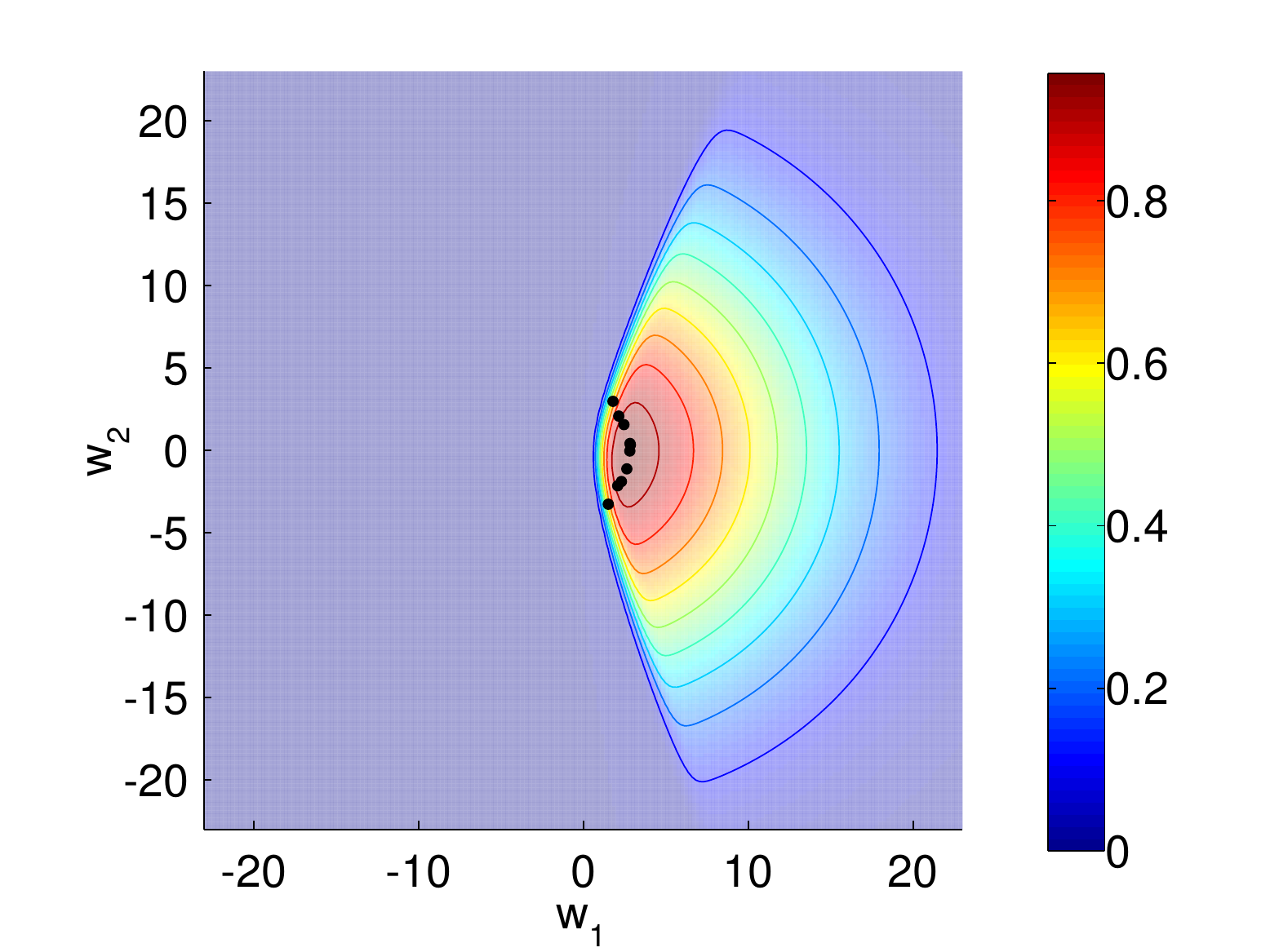}}
\hfill
\subfloat[Predictive distribution]{
\includegraphics[width=\imwidth]{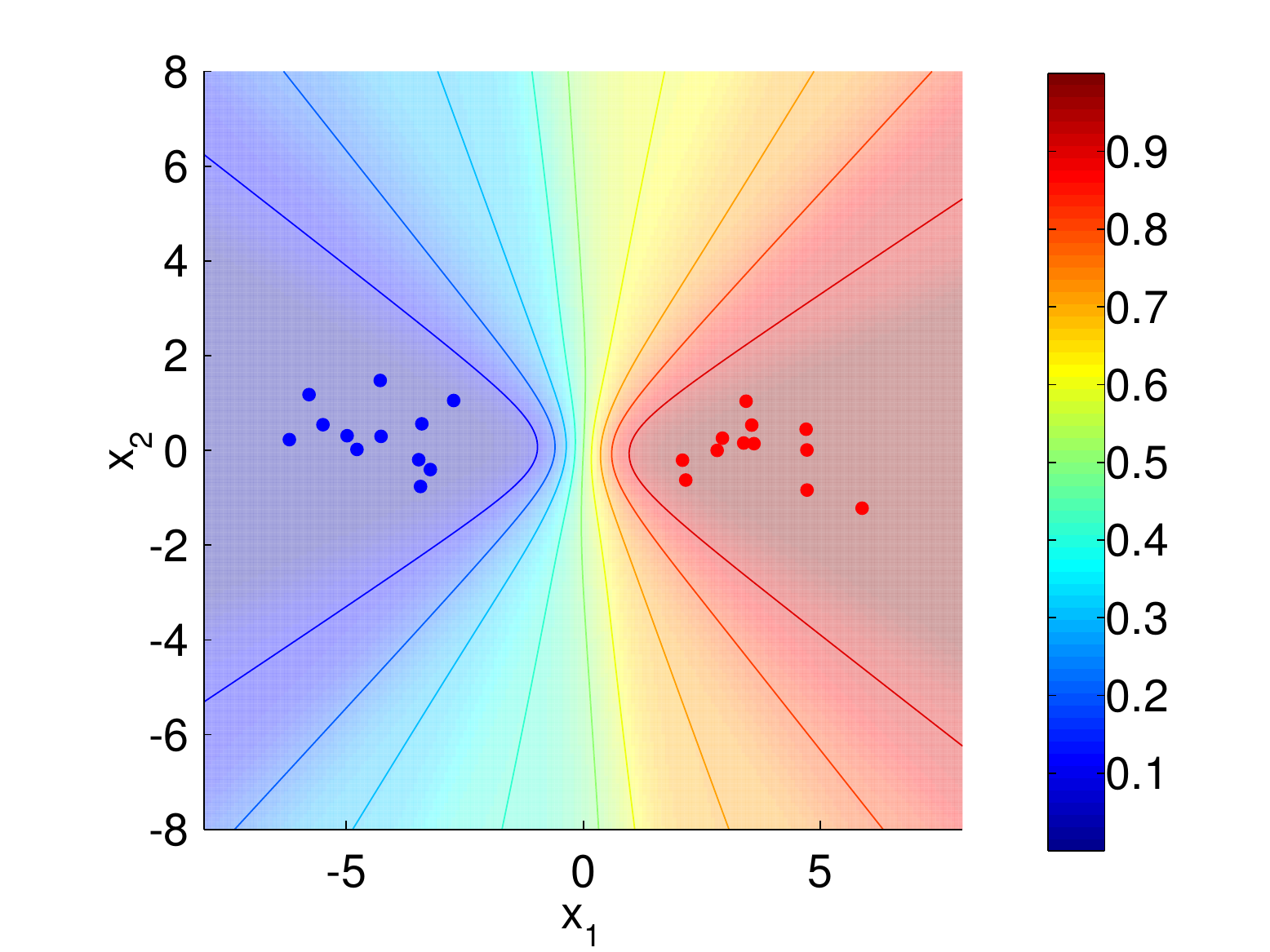}}
\caption{Online distillation using cross entropy. Black dots in (a) correspond to the compact model's weights.}
\label{fig:compact_predictive:logreg_results_ce_online}
\end{figure}

\begin{figure}[p]
\centering
\subfloat[Posterior (unnormalized)]{
\includegraphics[width=\imwidth]{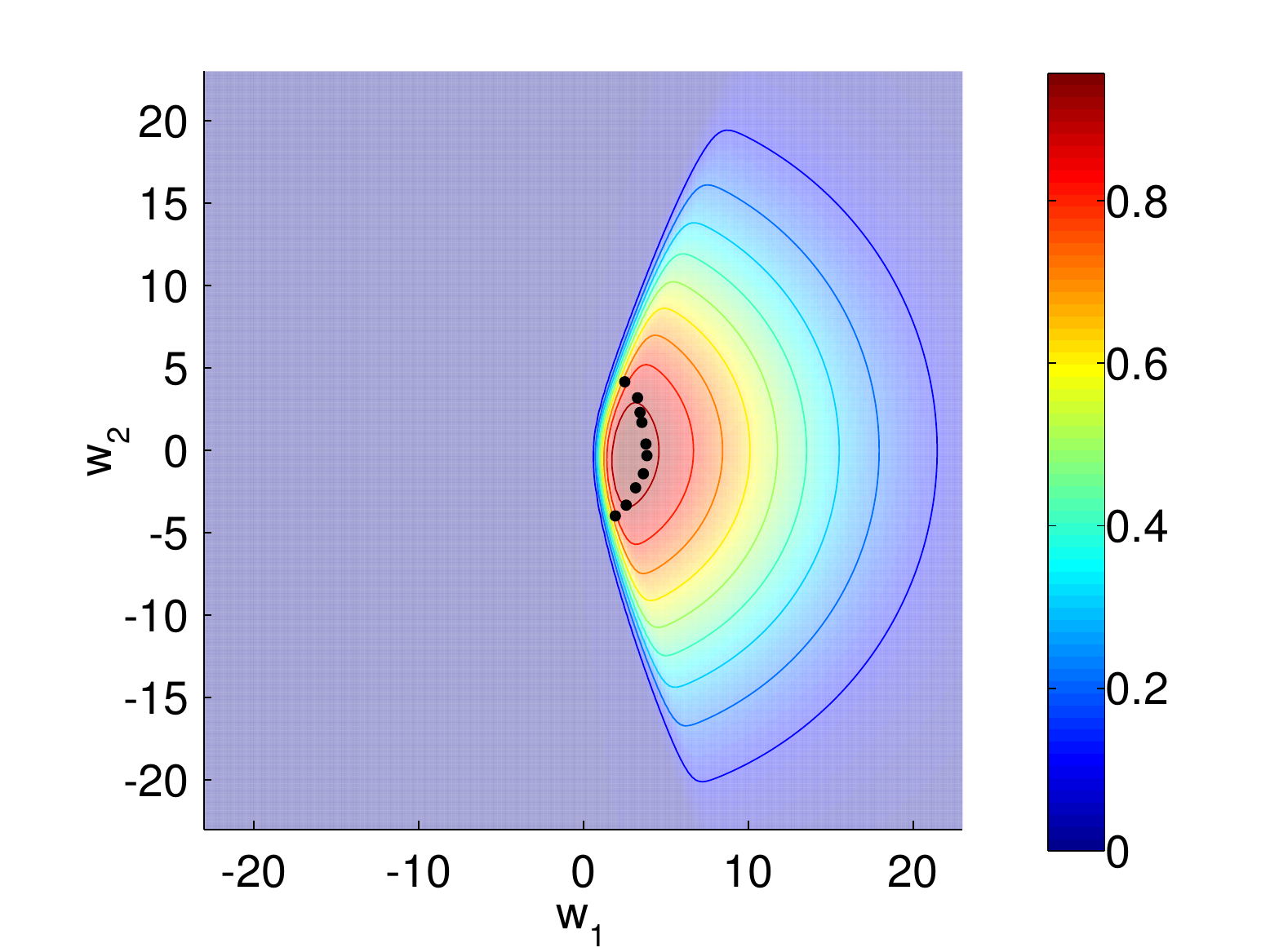}}
\hfill
\subfloat[Predictive distribution]{
\includegraphics[width=\imwidth]{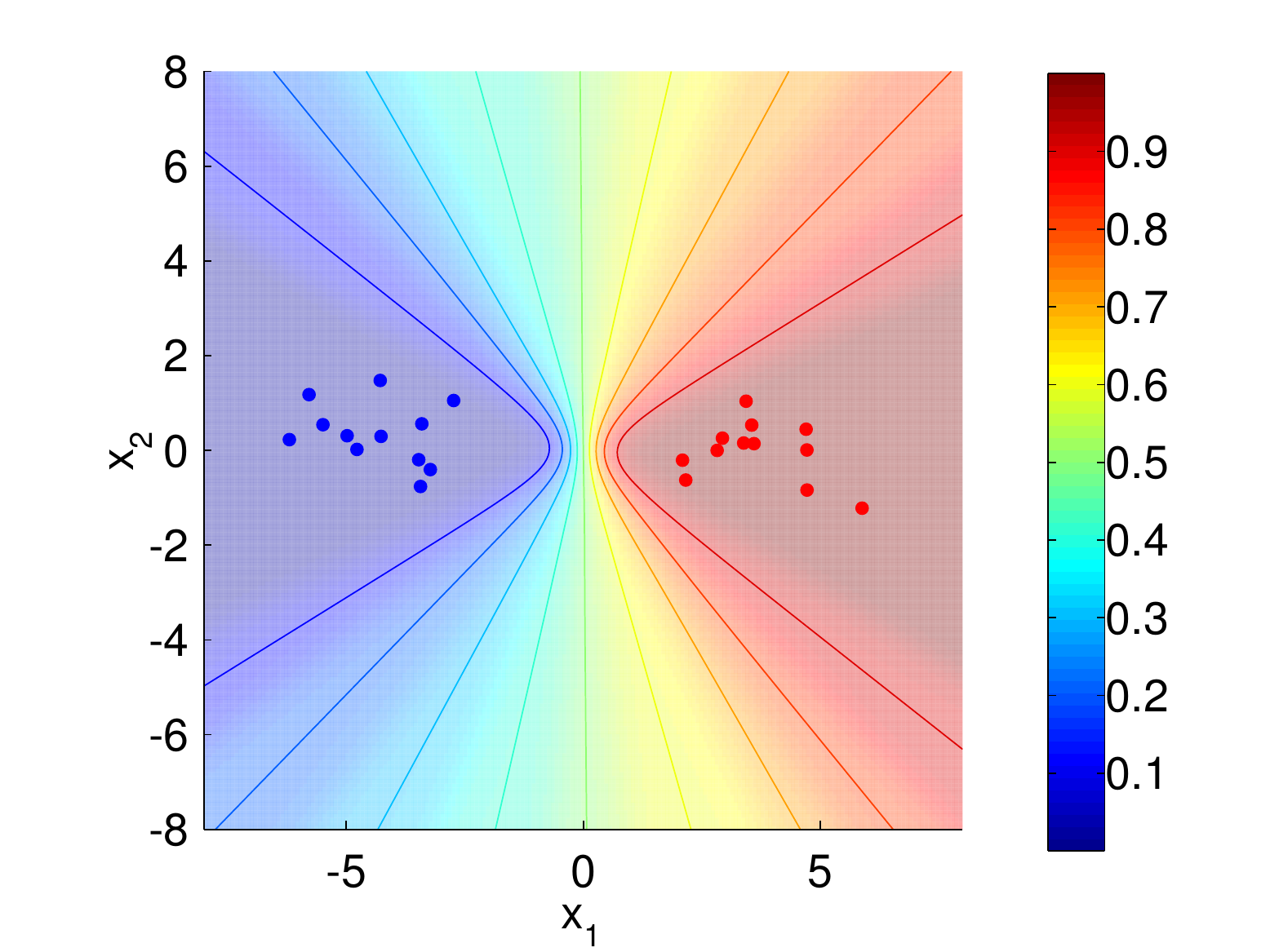}}
\caption{Online distillation using derivative square error. Black dots in (a) correspond to the compact model's weights.}
\label{fig:compact_predictive:logreg_results_dse_online}
\end{figure}

Figure~\ref{fig:compact_predictive:logreg_results_mcmc} shows graphically the MCMC predictor that was used in batch distillation, which was formed by $10{,}000$ MCMC samples drawn from $\prob{\vect{w}\g D}$. The left hand side plot shows the posterior $\prob{\vect{w}\g D}$ and $10\%$ of all MCMC samples drawn from it. The right hand side plot shows the predictive distribution $p_\mathrm{MC}\br{y=1\g\vect{x}}$, formed by averaging over all MCMC samples. Since the number of MCMC samples is large, we assume that this predictive distribution is a close approximation to the true one.

The shape of the MCMC predictive distribution shows how Bayesian logistic regression works. In the regions of input space that are densely populated by datapoints, Bayesian logistic regression is fairly confident about the value of $y$, which agrees with the true data labels. However, as we move away from the datapoints, the model becomes more and more uncertain about the value of $y$. This uncertainty is due to the small size of the dataset and the fact that it can be linearly separated by a wide range of linear separators. This is also reflected by the shape of the posterior, which shows that the model considers quite a large region of the weight space to be plausible. This region corresponds to the linear separators which Bayesian logistic regression has not yet received enough evidence to rule out.

Despite the fact that the MCMC predictive distribution is given by a sum of $10{,}000$ logistic sigmoid terms, it corresponds to a rather simple and smooth surface. Hence, it is reasonable to expect that this surface can be accurately captured by a more compact model. Using the compact model described in Equation~\eqref{eq:compact_predictive:binclass:compact_model}, we performed both batch and online distillation, using both cross entropy and derivative square error, resulting in $4$ compact models in total.
Figures~\ref{fig:compact_predictive:logreg_results_ce_batch}, \ref{fig:compact_predictive:logreg_results_dse_batch}, \ref{fig:compact_predictive:logreg_results_ce_online} and \ref{fig:compact_predictive:logreg_results_dse_online} show graphically the $4$ compact models. The left hand side plot shows the $10$ weights $\set{\vect{w}_{s'}}$ of each compact model, overlaid on top of the posterior $\prob{\vect{w}\g D}$. The right hand side plot shows the predictive distribution $\prob{y=1\g\vect{x},\bm{\theta}}$.

It can be seen that all $4$ compact models yield a predictive distribution that is similar to that of the MCMC model. The region where the compact models and the MCMC model appear to be most dissimilar is around the origin, between the red and blue datapoints. In this region, the predictive surface of the compact models appears to be smoother, whereas the MCMC model has a steeper slope. Still, all compact models correctly capture the uncertainty Bayesian logistic regression should have in regions away from the datapoints. As can be seen by the plots on the left hand side, the compact model chose an elegant way to approximate the MCMC model by placing its weights in a semicircle, spanning the region of plausible weights.

Compared to cross entropy, derivative square error appears to yield a better approximation of the MCMC model. From the left hand side plots, it can be seen that the weights found by derivative square error have larger magnitude than those found by cross entropy. As a result, the predictive distribution of derivative square error better captures the steep slope of the MCMC model around the origin. This suggests that using derivatives in the loss function can convey more information about the shape of the predictive surface, and hence lead to more accurate compact models.

We can see that online distillation produced similar results to batch distillation. Note that batch distillation needs to store the full set of $10{,}000$ MCMC samples, whereas online distillation only needs to store $10$ MCMC samples at any time (i.e.~as many as the minibatch size). This shows that compact models of the same quality can be learnt using only a fraction of the memory. This can be particularly useful in situations where the weights $\vect{w}$ are high-dimensional, and storing more than few of them becomes prohibitively expensive.

For comparison, we also used Expectation Propagation\index{EP} to calculate an approximation to the predictive distribution. Expectation Propagation \citep{Minka:2001:EP} works by first finding a Gaussian approximation to the posterior
\begin{equation}
\prob{\vect{w}\g D} \approx \gaussian{\vect{w}}{\vect{m}}{\mat{S}},
\end{equation}
and then using this Gaussian approximation instead of the true posterior in calculating the predictive distribution. For Bayesian logistic regression, it is easy to show that the EP predictive distribution is given by
\begin{equation}
\prob{y=1\g\vect{x},D} \approx \int{\sigm{z}\gaussian{z}{\vect{m}^T\vect{x}}{\vect{x}^T\mat{S}\vect{x}}\,\mathrm{d}z}.
\end{equation}
The above integral is always one-dimensional, regardless of the dimensionality of $\vect{x}$, but cannot be solved analytically. There are two ways to approximate it: (a) using quadrature, and (b) using the following approximation, due to \citet{MacKay:1992:evidence_framework}
\begin{equation}
\int{\sigm{z}\gaussian{z}{\vect{m}^T\vect{x}}{\vect{x}^T\mat{S}\vect{x}}\,\mathrm{d}z}
\approx \sigm{z^*}
\quad\text{where}\quad
z^* = \frac{\vect{m}^T\vect{x}}{\sqrt{1 + \frac{\pi}{8}\,\vect{x}^T\mat{S}\vect{x}}}.
\end{equation}
In our implementation of EP, we used the MacKay approximation for integrals of the above form in calculating the Gaussian approximate posterior, and we used global adaptive quadrature \citep{Shampine:2008:matlab_quadrature} for calculating predictions. \citet{Minka:2008:EP_cheatsheet} provides all the details of how to implement EP for Bayesian logistic regression.

Figure~\ref{fig:compact_predictive:logreg_results_ep} shows the approximate Gaussian posterior (left plot) and the predictive distribution (right plot) as computed by EP\@. Compared to the compact models, we can see that the EP predictive distribution manages to better capture the steepness of the MCMC prediction surface around the origin, but also appears to slightly underestimate the correct uncertainty away from the datapoints. In terms of compactness, EP only needs to store the mean $\vect{m}$ and the covariance $\mat{S}$ of the Gaussian approximate posterior, so it is competitive to distillation. However, distillation has a significant advantage over EP in terms of prediction speed. Indeed, for every location $\vect{x}$, EP needs to numerically calculate a one-dimensional integral, which is significantly more expensive compared to the compact model, whose predictive distribution is given in closed form.

\section{Related work}

In this section, we review work related to compact Bayesian inference, and compare it to our knowledge distillation framework presented in this chapter. We first review approaches that are based on knowledge distillation and are the most similar to our work. Then, we briefly review deterministic approximate inference, which provides an alternative to knowledge distillation for compact Bayesian inference.

\subsection{Distillation-based approaches}

\citet{Snelson:2005:compact_approximations} were the first who proposed training compact models to approximate Bayesian predictive distributions. In their framework, the true predictive distribution is approximated either by a large bag of MCMC samples or by deterministic methods such as EP\@. Training is done by minimizing the KL divergence from the approximate predictive distribution to the one given by the compact model.

Our knowledge distillation framework described in this chapter includes and extends that of \citet{Snelson:2005:compact_approximations}. Their methods have been incorporated into our framework under the name ``batch distillation with KL divergence''. Also, we followed their experiments in our case studies and confirmed their results. Moreover, our work extends theirs by (a) showing how distillation can be performed in an online fashion and (b) incorporating derivative information into the distillation procedure.

Recently, \citet{Korattikara:2015:bayesian_dark_knowledge} proposed what they call \emph{Bayesian dark knowledge}\index{Bayesian dark knowledge}, which is also a framework for distilling compact predictive distributions from MCMC samples. Their work focuses on Bayesian neural networks\index{Neural network!Bayesian neural network} \citep{Neal:1996:bayesian_nets}, which are neural networks that make predictions by taking into account the uncertainty in their parameters. Bayesian neural networks pose a hard inference problem; the typical MCMC-based solution is to sample a large collection of networks using MCMC, and then make predictions by averaging the predictions of all networks in the collection.

Bayesian dark knowledge aims at distilling the whole collection of MCMC-sampled networks into a single compact network, by minimizing the KL divergence from the true predictive distribution to the one given by the compact network. Distillation takes place on the fly as the networks are generated by the Markov chain, thanks to which there is no need for storing more than one MCMC-sampled network at any time. Every time a network is generated by the Markov chain, the parameters of the compact network are updated using a stochastic gradient step.

The distillation procedure used in Bayesian dark knowledge corresponds to ``online distillation with cross entropy'' in our framework. Note that Bayesian dark knowledge was developed at the same time as us but independently. It was published only after we had finished with our own work, so we were not aware of it when developing our framework.

\subsection{Deterministic approximate inference}
\index{Deterministic approximate inference}

Knowledge distillation attempts to construct a compact approximation directly to the predictive distribution. However, this is not the only approach to making Bayesian inference compact. A different approach is to construct a compact approximation not to the predictive distribution but to the posterior. Then, every time a prediction needs to be made, the compact approximate posterior can be used to calculate it.

A general framework for approximating complex posteriors with simpler distributions is \emph{variational inference}\index{Variational inference} \citep{Jordan:1999:variational_inference, Wainwright:2008:variational_inference}. In variational inference, the approximate posterior is chosen such that expectations over it are easy to compute. Nevertheless, there is nothing in the framework stopping us to also choose it to be compact.

Two general purpose implementations of variational inference that typically lead to compact approximate posteriors are \emph{variational message passing}\index{Variational message passing} \citep{Winn:2005:VMP} and \emph{Expectation Propagation}\index{EP} \citep{Minka:2001:EP}. Both of them work by locally passing messages in a graphical model, and their main difference lies in the variational objective they choose to minimize. The approximate posterior they compute is an exponential family distribution, hence it can be compactly represented by the vector of its natural parameters. EP was showcased in our experiments in section~\ref{sec:compact_predictive:bayesian_logreg}, where a Gaussian approximation to the true posterior was computed.

Having a posterior in a compact form does provide savings in storage, but it does not necessarily provide savings in computation at prediction time. This is because every time a prediction needs to be made, an expectation over the approximate posterior has to be computed. If this expectation can be computed analytically, then typically making predictions is fast. However, if numerical integration has to be used (such as in our EP example in section~\ref{sec:compact_predictive:bayesian_logreg}), then making predictions can be slow. In contrast, knowledge distillation provides a closed form approximation directly to the predictive distribution, which leads to fast predictions by design.

\section{Summary and conclusions}

In this chapter we presented a framework for constructing compact approximations to Bayesian predictive distributions. Our approach is based on distilling the knowledge contained in a Markov chain designed to generate samples from the posterior. We successfully applied our framework in the problems of Bayesian density estimation and Bayesian binary classification.

Our framework extends the state of the art in two different ways. Firstly, we showed how knowledge distillation can be performed on the fly as the Markov chain generates samples. This technique, which we called online distillation, can significantly reduce memory requirements. Secondly, in the case of Bayesian binary classification, we showed how to use derivatives to convey more information about the true predictive surface to the compact model, which can lead to more effective distillation.

Our framework relies on the Markov chain being capable of effectively exploring the space of plausible weights, and hence the compact predictive distribution it computes will be only as good as this Markov chain. For our experiments slice sampling was sufficiently good, however more sophisticated MCMC methods can be used to improve the performance in harder inference tasks. For instance, Hamiltonian dynamics \citep{Neal:2010:hmc} can be used to reduce dependence between samples, and stochastic gradient Langevin dynamics \citep{Welling:2011:SGLD} can be used to make MCMC scale to large datasets.

\chapter{Intractable Generative Models}
\label{chapter:generative_models}

A generative model\index{Generative model} is a probability distribution $\prob{\vect{x}}$ over some variable $\vect{x}$. In general, $\vect{x}$ can be any sort of data, such as text, sound, images, or other models. Being able to represent, store and compute with $\prob{\vect{x}}$ is perhaps the most general problem in machine learning; if we know the probability of everything, then we can condition on the observations, integrate out the unknowns, and thus infer any quantity of interest. 

Nevertheless, in practice all but the simplest generative models are intractable.
In this chapter, we focus our attention on intractable generative models of the form
\begin{equation}
\prob{\vect{x}} = \frac{1}{Z}\uprob{\vect{x}},
\end{equation}
where $\uprob{\vect{x}}$ is some tractable non-negative function (sometimes called a potential) and $Z$ is an intractable partition function\index{Partition function} (also called the normalizing constant) given by
\begin{equation}
Z = \int\uprob{\vect{x}}\,\mathrm{d}\vect{x}.
\end{equation}
We will assume that we can easily evaluate $\uprob{\vect{x}}$ but not $Z$, and therefore not $\prob{\vect{x}}$. 

Distributions of the above type are particularly common in Bayesian inference\index{Bayesian inference}. If $\vect{x}$ is some quantity we wish to infer given some data $D$, the posterior\index{Posterior distribution} over $\vect{x}$ is given by Bayes rule\index{Bayes rule}
\begin{equation}
\prob{\vect{x}\g D} = \frac{\prob{D\g \vect{x}}\,\prob{\vect{x}}}{\prob{D}}.
\end{equation}
Even if the likelihood\index{Likelihood} $\prob{D\g \vect{x}}$ and the prior\index{Prior distribution} $\prob{\vect{x}}$ are tractable, the marginal likelihood $\prob{D}$ and hence the posterior $\prob{\vect{x}\g D}$ can be intractable. Apart from the posterior, the marginal likelihood $\prob{D}$ is also a quantity of interest, since it measures how well the model fits the data and so it can be used for model comparison\index{Model comparison}.

Another example where distributions of the above type are commonplace is undirected graphical models\index{Graphical model}. An undirected graphical model \citep{Koller:2009:PGM} defines a distribution as
\begin{equation}
\prob{\vect{x}} = \frac{1}{Z}\exp\br{-U\br{\vect{x}}},
\end{equation}
where $U\br{\vect{x}}$ is some tractable energy function. Even if the energy is tractable, $Z$ and therefore $\prob{\vect{x}}$ can still be intractable. The partition function $Z$ is also a quantity of interest, since useful properties of the model can be obtained from the derivatives of $\log{Z}$ with respect to the model parameters---for  models in the exponential family, for instance, these derivatives correspond exactly to the expected sufficient statistics under the model.

In this chapter, we present a framework for distilling intractable generative models such as the above into tractable generative models of our choosing. After successful distillation, we can use the tractable model in place of the intractable one in order to perform computations that were previously infeasible. We showcase our framework by distilling an intractable RBM into a tractable NADE, and we demonstrate that the distilled NADE can be successfully used in combination with simple sampling schemes for robustly estimating the RBM's intractable partition function.

\section{Distilling intractable into tractable generative models}

Let $\prob{\vect{x}}$ be some generative model we are interested in but can only evaluate up to a multiplicative constant; that is, $\prob{\vect{x}}$ is given by
\begin{equation}
\prob{\vect{x}} = \frac{1}{Z}\uprob{\vect{x}},
\end{equation}
where $\uprob{\vect{x}}$ is easy to calculate but $Z$ is not. Assume we can construct a tractable generative model $\prob{\vect{x}\g \bm{\theta}}$, parameterized by a set of parameters $\bm{\theta}$, which is flexible enough to represent complex distributions. Our goal is to distil the knowledge contained in  $\prob{\vect{x}}$ into $\prob{\vect{x}\g \bm{\theta}}$; that is, we want to train $\prob{\vect{x}\g \bm{\theta}}$ to do the same job as $\prob{\vect{x}}$.

Our approach is based on minimizing an appropriate loss function $E\br{\bm{\theta}}$ that measures the discrepancy between $\prob{\vect{x}}$ and $\prob{\vect{x}\g \bm{\theta}}$ as a function of $\bm{\theta}$. The challenge is to specify a loss function that only involves tractable quantities and find a way to minimize it efficiently. In the following two sections, we describe and analyze our choice of appropriate loss functions, together with a stochastic way of minimizing them. As we shall see, our information about $\prob{\vect{x}}$ will come from MCMC samples and the values of $\uprob{\vect{x}}$ at sample locations.

\subsection{Loss functions}
\label{sec:generative_models:loss_functions}

In measuring the discrepancy between $\prob{\vect{x}}$ and $\prob{\vect{x}\g \bm{\theta}}$, the challenge is to come up with a loss function that avoids intractable quantities such as $\prob{\vect{x}}$ and $Z$, but at the same time captures as much information about $\prob{\vect{x}}$ as possible. In the following, we describe three loss functions that achieve this goal; \emph{KL divergence}, \emph{square error} and \emph{score matching}.

\subsubsection{KL divergence}
\index{KL divergence}

A natural measure of discrepancy is the KL divergence from $\prob{\vect{x}}$ to $\prob{\vect{x}\g \bm{\theta}}$, which is defined as
\begin{equation}
E_\mathrm{KL}\br{\bm{\theta}} = \kl{\prob{\vect{x}}}{\prob{\vect{x}\g \bm{\theta}}} = 
\avg{\log{\frac{\prob{\vect{x}}}{\prob{\vect{x}\g \bm{\theta}}}}}{\prob{\vect{x}}}.
\end{equation}
It is well known that the KL divergence is non-negative, and is equal to zero if and only if $\prob{\vect{x}} = \prob{\vect{x}\g \bm{\theta}}$ \citep[section~2.6]{MacKay:2002:IT}. Thus, if $\prob{\vect{x}\g \bm{\theta}}$ has sufficient capacity, globally minimizing $E_\mathrm{KL}\br{\bm{\theta}}$ will result in $\prob{\vect{x}\g \bm{\theta}}$ exactly matching $\prob{\vect{x}}$.

In order to efficiently minimize $E_\mathrm{KL}\br{\bm{\theta}}$ with respect to $\bm{\theta}$ while avoiding intractable quantities, note that
\begin{equation}
E_\mathrm{KL}\br{\bm{\theta}} = \avg{\log{\prob{\vect{x}}}}{\prob{\vect{x}}} - \avg{\log{\prob{\vect{x}\g \bm{\theta}}}}{\prob{\vect{x}}} = 
- \avg{\log{\prob{\vect{x}\g \bm{\theta}}}}{\prob{\vect{x}}} + \mathrm{const}.
\end{equation}
Therefore, minimizing $E_\mathrm{KL}\br{\bm{\theta}}$ is equivalent to maximizing $\avg{\log{\prob{\vect{x}\g \bm{\theta}}}}{\prob{\vect{x}}}$. This can be interpreted as fitting $\prob{\vect{x}\g \bm{\theta}}$ using maximum likelihood\index{Maximum likelihood} to an infinite amount of data generated from $\prob{\vect{x}}$. Later on, we will use MCMC to generate such data.

The KL divergence has a nice information-theoretic interpretation; it measures the extra number of nats (or bits, if the logs are base $2$) that are needed in order to encode samples $\vect{x}$ from $\prob{\vect{x}}$ using $\prob{\vect{x}\g \bm{\theta}}$ \citep[section~5.4]{MacKay:2002:IT}. In other words, it is the encoding cost we must pay for using the wrong distribution.

\subsubsection{Square error}
\index{Square error}

Another approach to measuring discrepancy, is to directly measure how much the values of $\prob{\vect{x}\g \bm{\theta}}$ differ from the values of $\prob{\vect{x}}$. A natural choice is the average square error of the logarithms, given by\footnote{Using a norm here is not necessary, since the quantities are scalar, but we find it aesthetically pleasing.}
\begin{equation}
E_\mathrm{SE}^0\br{\bm{\theta}} = \avg{\frac{1}{2}\norm{\log{\prob{\vect{x}\g \bm{\theta}}} - \log{\prob{\vect{x}}}}^2}{\prob{\vect{x}}}.
\end{equation}
Obviously, directly working with the above is not possible, since we have assumed that directly evaluating $\log{\prob{\vect{x}}}$ is intractable. However, it is possible to circumvent this problem by using the following tractable loss function instead
\begin{equation}
E_\mathrm{SE}\br{\bm{\theta}} = \avg{\frac{1}{2}\norm{\log{\prob{\vect{x}\g \bm{\theta}}} - \log{\uprob{\vect{x}}} + c}^2}{\prob{\vect{x}}},
\end{equation}
where $c$ is an appropriately chosen constant. The following proposition establishes the range of $c$ values for which minimizing $E_\mathrm{SE}\br{\bm{\theta}}$ correctly trains $\prob{\vect{x}\g \bm{\theta}}$ to match $\prob{\vect{x}}$.
\begin{proposition}
Assuming $\prob{\vect{x}\g \bm{\theta}}$ has sufficient capacity and $c\le \log{Z}$, $E_\mathrm{SE}\br{\bm{\theta}}$ is globally minimized if and only if $\prob{\vect{x}\g \bm{\theta}} = \prob{\vect{x}}$.
\end{proposition}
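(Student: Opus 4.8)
The plan is to reduce the problem to a constrained optimization over the single scalar function $u\br{\vect{x}} = \log\prob{\vect{x}\g\bm{\theta}} - \log\prob{\vect{x}}$, exploiting the fact that $\prob{\vect{x}\g\bm{\theta}}$ must itself be a normalized probability distribution. First I would rewrite the quantity inside the norm using $\log\uprob{\vect{x}} = \log Z + \log\prob{\vect{x}}$, so that
\begin{equation}
\log\prob{\vect{x}\g\bm{\theta}} - \log\uprob{\vect{x}} + c = u\br{\vect{x}} + \delta,
\end{equation}
where $\delta = c - \log Z \le 0$ by the hypothesis $c\le\log Z$. Since the norm is taken of a scalar, the loss becomes $E_\mathrm{SE}\br{\bm{\theta}} = \frac{1}{2}\avg{\br{u\br{\vect{x}} + \delta}^2}{\prob{\vect{x}}}$. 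Because $\prob{\vect{x}\g\bm{\theta}}$ has sufficient capacity, I would treat $u$ as an arbitrary function subject only to the constraint that $\prob{\vect{x}\g\bm{\theta}} = \prob{\vect{x}}e^{u\br{\vect{x}}}$ integrates to one, i.e.~$\avg{e^{u}}{\prob{\vect{x}}} = 1$.

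The ``if'' direction is immediate: if $\prob{\vect{x}\g\bm{\theta}} = \prob{\vect{x}}$ then $u \equiv 0$ and $E_\mathrm{SE}\br{\bm{\theta}} = \frac{1}{2}\delta^2$. The crux is to show this value is the global minimum, that is, $\avg{\br{u+\delta}^2}{\prob{\vect{x}}} \ge \delta^2$ for every admissible $u$. Writing $\mu = \avg{u}{\prob{\vect{x}}}$ and expanding, $\avg{\br{u+\delta}^2}{\prob{\vect{x}}} = \avg{u^2}{\prob{\vect{x}}} + 2\delta\mu + \delta^2$, so it suffices to prove $\avg{u^2}{\prob{\vect{x}}} + 2\delta\mu \ge 0$. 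Here the normalization constraint does the work: by Jensen's inequality applied to the exponential function, $1 = \avg{e^u}{\prob{\vect{x}}} \ge e^{\mu}$, hence $\mu \le 0$. Combined with $\delta\le 0$ this gives $2\delta\mu \ge 0$, and since trivially $\avg{u^2}{\prob{\vect{x}}} \ge 0$, the inequality follows.

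For the ``only if'' direction I would analyse the equality case: $E_\mathrm{SE}\br{\bm{\theta}} = \frac{1}{2}\delta^2$ forces both non-negative terms $\avg{u^2}{\prob{\vect{x}}}$ and $2\delta\mu$ to vanish. The first vanishing gives $u = 0$ almost everywhere on the support of $\prob{\vect{x}}$, i.e.~$\prob{\vect{x}\g\bm{\theta}} = \prob{\vect{x}}$ there, and normalization of both distributions then extends the equality to the whole space. I expect the main conceptual obstacle to be recognising that, unlike the other losses in this section, the global minimum value is $\frac{1}{2}\delta^2$ rather than zero, so the argument cannot proceed by forcing the integrand to vanish pointwise; instead it hinges on pairing the sign of $\delta$ with the Jensen bound $\mu\le 0$ coming from normalization. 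It is worth flagging that this is exactly where the assumption $c\le\log Z$ is used: were $\delta>0$, the cross term $2\delta\mu$ could be made negative and the minimizer would drift away from $\prob{\vect{x}}$.
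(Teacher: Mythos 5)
Your proof is correct and is essentially the paper's own argument in different notation: the expansion $\avg{\br{u+\delta}^2}{\prob{\vect{x}}} = \avg{u^2}{\prob{\vect{x}}} + 2\delta\mu + \delta^2$ is exactly the paper's decomposition $E_\mathrm{SE}\br{\bm{\theta}} = E_\mathrm{SE}^0\br{\bm{\theta}} + \br{\log{Z}-c}E_\mathrm{KL}\br{\bm{\theta}} + \frac{1}{2}\norm{\log{Z}-c}^2$ (since $\mu = -E_\mathrm{KL}\br{\bm{\theta}}$ and $\delta = -\br{\log{Z}-c}$), and your Jensen step establishing $\mu\le 0$ is just the standard proof that the KL divergence is non-negative, which the paper invokes directly. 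One nitpick: $\avg{e^u}{\prob{\vect{x}}}$ equals the mass of $\prob{\vect{x}\g\bm{\theta}}$ on the support of $\prob{\vect{x}}$, so in general it is $\le 1$ rather than $=1$, but Jensen still gives $e^\mu \le \avg{e^u}{\prob{\vect{x}}} \le 1$ and hence $\mu\le 0$, so the argument stands.
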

\begin{proof}
Using the fact that $\uprob{\vect{x}} = Z\prob{\vect{x}}$ and the definition of the KL divergence, $E_\mathrm{SE}\br{\bm{\theta}}$ can be decomposed as follows
\begin{equation}
E_\mathrm{SE}\br{\bm{\theta}} = E_\mathrm{SE}^0\br{\bm{\theta}}
+ \br{\log{Z}-c}E_\mathrm{KL}\br{\bm{\theta}}
+ \frac{1}{2}\norm{\log{Z}-c}^2,
\end{equation}
where
\begin{equation}
E_\mathrm{SE}^0\br{\bm{\theta}} = \avg{\frac{1}{2}\norm{\log{\prob{\vect{x}\g \bm{\theta}}} - \log{\prob{\vect{x}}}}^2}{\prob{\vect{x}}}
\end{equation}
and
\begin{equation}
E_\mathrm{KL}\br{\bm{\theta}} = \kl{\prob{\vect{x}}}{\prob{\vect{x}\g \bm{\theta}}}.
\end{equation}
Note that both $E_\mathrm{SE}^0\br{\bm{\theta}}$ and $E_\mathrm{KL}\br{\bm{\theta}}$ are non-negative. Therefore, assuming $c\le \log{Z}$, the global minimum of $E_\mathrm{SE}\br{\bm{\theta}}$ is achieved if and only if $E_\mathrm{SE}^0\br{\bm{\theta}} = E_\mathrm{KL}\br{\bm{\theta}} = 0$, which in turn is achieved if and only if $\prob{\vect{x}\g \bm{\theta}} = \prob{\vect{x}}$.
\end{proof}

The above proposition tells us that minimizing $E_\mathrm{SE}\br{\bm{\theta}}$ is equivalent to minimizing $E_\mathrm{SE}^0\br{\bm{\theta}}$ regularized by $E_\mathrm{KL}\br{\bm{\theta}}$, with regularizer $\br{\log{Z}-c}$. Therefore, as long as the regularizer is non-negative, $E_\mathrm{SE}\br{\bm{\theta}}$ is a sensible loss. 

In order to set $c$, we do not necessarily need to know $\log{Z}$, but we only need to lower bound it. If $\vect{x}$ is discrete, this becomes trivial since for any $\vect{x}$
\begin{equation}
\prob{\vect{x}} \le 1 \quad\Rightarrow\quad \log{\uprob{\vect{x}}} \le \log{Z}.
\end{equation}
Thus, setting $c$ less or equal to $\max_{\vect{x}}{\log{\uprob{\vect{x}}}}$ is guaranteed to work. More generally, lower bounding $\log{Z}$ can be done using the following inequality based on the variational free energy\index{Variational free energy}
\begin{equation}
\avg{\log{q\br{\vect{x}}}}{q\br{\vect{x}}} - \avg{\log{\uprob{\vect{x}\g \bm{\theta}}}}{q\br{\vect{x}}} \le \log{Z}.
\end{equation}
The above inequality holds for any distribution $q\br{\vect{x}}$, so $q\br{\vect{x}}$ can be freely chosen to be convenient. Finally, note that in the limit case where $c\rightarrow -\infty$, minimizing the square error loss becomes equivalent to minimizing the KL divergence loss.

We conclude our analysis with a final remark about the relationship between the square error loss and the KL divergence loss. Using Jensen's inequality and the convexity of norms, we can easily show that
\begin{equation}
E_\mathrm{SE}\br{\bm{\theta}} \ge \frac{1}{2}\norm{E_\mathrm{KL}\br{\bm{\theta}} + \br{\log{Z}-c}}^2.
\end{equation}
Hence, as long as $c\le \log{Z}$, minimizing the square error loss corresponds to minimizing an upper bound of the squared KL divergence loss plus a positive constant.

\subsubsection{Score matching}
\index{Score matching}

Score matching was first proposed by \citet{Hyvarinen:2005:score_matching}, where it was used to learn unnormalizable generative models. It is based on the clever trick that the derivative of the log probability does not depend on $Z$, that is
\begin{equation}
\pderivnull{\vect{x}}\log{\prob{\vect{x}}} = \pderivnull{\vect{x}}\log{\uprob{\vect{x}}}.
\end{equation}
Therefore, even though $\log{\prob{\vect{x}}}$ can be intractable, its derivative with respect to $\vect{x}$ is not. 

In the context of knowledge distillation, the intuition behind score matching is to train $\prob{\vect{x}\g \bm{\theta}}$ to mimic $\prob{\vect{x}}$ by matching their derivatives with respect to $\vect{x}$. Since they are both distributions, and therefore have to integrate to $1$, matching derivatives provides sufficient information for training. The score matching loss function is defined as
\begin{equation}
E_\mathrm{SM}\br{\bm{\theta}} = \avg{\frac{1}{2}\norm{\pderivnull{\vect{x}}\log{\prob{\vect{x}\g \bm{\theta}}} - \pderivnull{\vect{x}}\log{\prob{\vect{x}}}}^2}{\prob{\vect{x}}}.
\end{equation}
The following proposition, adapted from \citet[theorem 2]{Hyvarinen:2005:score_matching}, establishes the validity of the score matching loss for the continuous case.
\begin{proposition}
Assuming $\prob{\vect{x}\g \bm{\theta}}$ has sufficient capacity, $\vect{x}$ is continuous and $\prob{\vect{x}}>0$ everywhere, $E_\mathrm{SM}\br{\bm{\theta}}$ is globally minimized if and only if $\prob{\vect{x}\g \bm{\theta}} = \prob{\vect{x}}$.
\end{proposition}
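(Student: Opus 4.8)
The plan is to mirror the two earlier derivative-matching propositions, since the logic is structurally identical: matching the derivatives of two log-densities forces them to agree up to an additive constant, and normalization then pins that constant down. First I would observe that $E_\mathrm{SM}\br{\bm{\theta}}$ is the expectation of a squared norm against the strictly positive density $\prob{\vect{x}}$, and is therefore non-negative, $E_\mathrm{SM}\br{\bm{\theta}}\ge 0$. The forward direction is then immediate: if $\prob{\vect{x}\g \bm{\theta}} = \prob{\vect{x}}$ everywhere, the integrand vanishes identically and $E_\mathrm{SM}\br{\bm{\theta}} = 0$, which attains the global minimum.

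For the converse, suppose $E_\mathrm{SM}\br{\bm{\theta}} = 0$. Because the integrand is non-negative and the weighting density $\prob{\vect{x}}$ is strictly positive everywhere, the squared norm must be zero for almost every $\vect{x}$; invoking continuity of the two scores upgrades this to every $\vect{x}$. Thus
\begin{equation}
\pderivnull{\vect{x}}\log{\prob{\vect{x}\g \bm{\theta}}} = \pderivnull{\vect{x}}\log{\prob{\vect{x}}}
\end{equation}
holds everywhere, so the gradient of $\log{\prob{\vect{x}\g \bm{\theta}}} - \log{\prob{\vect{x}}}$ vanishes on the whole domain.

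The crux of the argument, and the step I expect to be the main obstacle, is passing from ``the gradient is zero everywhere'' to ``the difference is a single constant''. This inference relies on the domain being connected: a differentiable function with vanishing gradient on a connected open set is constant, but on a disconnected set it could take different constants on different components, which would only give $\prob{\vect{x}\g \bm{\theta}}$ proportional to $\prob{\vect{x}}$ with a \emph{different} constant per component, and normalization alone would not then force equality. Here the hypothesis $\prob{\vect{x}}>0$ everywhere does exactly this work, since it makes the support the entire continuous space, which is connected. Hence $\log{\prob{\vect{x}\g \bm{\theta}}} = \log{\prob{\vect{x}}} + \mathrm{const}$, i.e.\ $\prob{\vect{x}\g \bm{\theta}} = c\,\prob{\vect{x}}$ for a single positive constant $c$.

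Finally I would close the argument by normalization: integrating $\prob{\vect{x}\g \bm{\theta}} = c\,\prob{\vect{x}}$ over $\vect{x}$ and using that both sides integrate to $1$ forces $c = 1$, so $\prob{\vect{x}\g \bm{\theta}} = \prob{\vect{x}}$ everywhere. Together with the assumption that the model has sufficient capacity to realize this solution, this establishes the equivalence. Note that this is precisely the consistency half of Hyv\"arinen's result; the integration-by-parts trick that makes $E_\mathrm{SM}\br{\bm{\theta}}$ tractable by eliminating the unknown score $\pderivnull{\vect{x}}\log{\prob{\vect{x}}}$ is a separate matter and is not needed to prove this minimization characterization.
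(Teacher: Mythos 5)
Your proposal is correct and follows essentially the same route as the paper's own proof: non-negativity of the expected squared norm, the trivial forward direction, equality of scores everywhere in the converse, proportionality of the densities, and normalization to force $c=1$. The only difference is that you make explicit the connectedness argument behind the step from ``vanishing gradient everywhere'' to ``a single additive constant'' (and correctly identify $\prob{\vect{x}}>0$ everywhere as the hypothesis doing that work), which the paper leaves implicit.
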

\begin{proof}
First notice that $E_\mathrm{SM}\br{\bm{\theta}}\ge 0$ as the expectation of a norm. If if $\prob{\vect{x}\g \bm{\theta}} = \prob{\vect{x}}$ then trivially $E_\mathrm{SM}\br{\bm{\theta}}= 0$ and thus it is globally minimized. Conversely, if $E_\mathrm{SM}\br{\bm{\theta}}= 0$ then we have that for all $\vect{x}$
\begin{equation}
\pderivnull{\vect{x}}\log{\prob{\vect{x}\g \bm{\theta}}} = \pderivnull{\vect{x}}\log{\prob{\vect{x}}}\quad\Rightarrow\quad
\prob{\vect{x}\g \bm{\theta}} = c\,\prob{\vect{x}},
\end{equation}
for some constant $c>0$. But since both $\prob{\vect{x}\g \bm{\theta}}$ and $\prob{\vect{x}}$ must integrate to $1$, we have
\begin{equation}
c = \int c\,\prob{\vect{x}} \,\mathrm{d}\vect{x}= \int \prob{\vect{x}\g \bm{\theta}} \,\mathrm{d}\vect{x} =1,
\end{equation}
hence $\prob{\vect{x}\g \bm{\theta}} = \prob{\vect{x}}$.
\end{proof}

It is important to note that score matching works only in the continuous case and therefore it cannot be used for discrete distributions. Indeed, even though we can in principle calculate the derivatives of $\log{\prob{\vect{x}\g \bm{\theta}}}$ and $\log{\prob{\vect{x}}}$ at every location in the discrete domain of $\vect{x}$, making them equal does not necessarily make the distributions equal. To handle this, an extension to score matching termed \emph{ratio matching}\index{Ratio matching} was proposed by 
\citet{Hyvarinen:2007:score_matching_extensions} for learning unnormalizable discrete generative models. Ratio matching is based on matching conditional distributions, which do not depend on $Z$. However, it is not obvious how to adapt it within the context of knowledge distillation.

Another caveat of score matching is that it requires the support of $\prob{\vect{x}}$ to be the whole space. To see why, let the support of $\prob{\vect{x}}$ be $S_p$ and assume it is not the whole space. Then, a distribution satisfying $\prob{\vect{x}\g \bm{\theta}} = c\,\prob{\vect{x}}$ with $0<c<1$ for all $\vect{x}$ in $S_p$ will globally minimize $E_\mathrm{SM}\br{\bm{\theta}}$ without perfectly matching $\prob{\vect{x}}$. In other words, score matching does not constrain the behaviour of $\prob{\vect{x}\g \bm{\theta}}$ outside $S_p$ and it only guarantees proportionality within $S_p$, hence if $S_p$ is not the whole space, $\prob{\vect{x}\g \bm{\theta}}$ is allowed to ``leak'' mass outside $S_p$.

\subsection{Stochastic gradient training}
\label{sec:generative_models:stochastic_gradient_training}

In the previous section, we described and analyzed the following three loss functions for measuring the discrepancy between $\prob{\vect{x}\g \bm{\theta}}$ and $\prob{\vect{x}}$.
\begin{enumerate}[label=(\roman*)]
\item \textbf{KL divergence}
\begin{equation}
E_\mathrm{KL}\br{\bm{\theta}} =
- \avg{\log{\prob{\vect{x}\g \bm{\theta}}}}{\prob{\vect{x}}} + \mathrm{const}
\end{equation}
\item \textbf{Square error}
\begin{equation}
E_\mathrm{SE}\br{\bm{\theta}} = \avg{\frac{1}{2}\norm{\log{\prob{\vect{x}\g \bm{\theta}}} - \log{\uprob{\vect{x}}} + c}^2}{\prob{\vect{x}}}
\end{equation}
\item \textbf{Score matching}
\begin{equation}
E_\mathrm{SM}\br{\bm{\theta}} = \avg{\frac{1}{2}\norm{\pderivnull{\vect{x}}\log{\prob{\vect{x}\g \bm{\theta}}} - \pderivnull{\vect{x}}\log{\prob{\vect{x}}}}^2}{\prob{\vect{x}}}
\end{equation}
\end{enumerate}
All three loss functions exhibit a common pattern; they contain a tractable quantity within an intractable expectation over $\prob{\vect{x}}$. That is, they all have the following form
\begin{equation}
E\br{\bm{\theta}} = \avg{E\br{\vect{x},\bm{\theta}}}{\prob{\vect{x}}},
\end{equation}
for an appropriate instantiation of $E\br{\vect{x},\bm{\theta}}$.

We will describe a stochastic gradient\index{Stochastic gradient} learning procedure for optimizing loss functions of the above form. The gradient of $E\br{\bm{\theta}}$ with respect to $\bm{\theta}$ is
\begin{equation}
\vect{g}\br{\bm{\theta}} = \avg{\pderivnull{\bm{\theta}} E\br{\vect{x},\bm{\theta}}}{\prob{\vect{x}}}
\end{equation}
Even though $\vect{g}\br{\bm{\theta}}$ is intractable, we can stochastically approximate it by
\begin{equation}
\hat{\vect{g}}\br{\bm{\theta}} = \frac{1}{S}\sum_{s}{\pderivnull{\bm{\theta}} E\br{\vect{x}_s,\bm{\theta}}},
\end{equation}
where $\set{\vect{x}_s}$ are samples generated from $\prob{\vect{x}}$. Typically, we can generate $\set{\vect{x}_s}$ using MCMC\@. It is easy to see that, in expectation, the stochastic gradient is equal to the original gradient. Using the above stochastic gradient, we can minimize $E\br{\bm{\theta}}$ using the following procedure.
\begin{framed}
\begin{enumerate}[label=(\roman*)]
\item Generate a minibatch $\set{\vect{x}_s}$ of size $S$ from $\prob{\vect{x}}$ using MCMC\@.
\item Calculate the stochastic gradient $\hat{\vect{g}}\br{\bm{\theta}} = \frac{1}{S}\sum_{s}{\pderivnull{\bm{\theta}} E\br{\vect{x}_s,\bm{\theta}}}$.
\item Make an update on $\bm{\theta}$ using $\hat{\vect{g}}\br{\bm{\theta}}$.
\item Repeat until convergence.
\end{enumerate}
\end{framed}

Stochastic optimization algorithms of the above type are analyzed by \citet{Bottou:1999:online_learning}, where it is proved that---under certain regularity conditions---they converge almost surely to a stationary point of the objective function. However, their analysis assumes that samples generated from $\prob{\vect{x}}$ are exact and independent, therefore it does not apply when samples are generated by MCMC\@. Sufficient conditions for convergence even when samples are generated by MCMC are provided by \citet{Younes:1999:convergence_markov}.

To use the above algorithm in practice, the user has to specify (a) what MCMC algorithm to use to generate from $\prob{\vect{x}}$ and how to tune it, (b) what step size strategy to use in the updates on $\bm{\theta}$ and (c) how to determine convergence. We will address these questions in the specific context of our case study in section~\ref{sec:rbm_vs_nade}.

To complete our discussion, we will provide some details on how to calculate $\pderivnull{\bm{\theta}} E\br{\vect{x},\bm{\theta}}$ for each of the three loss functions under consideration.
\begin{enumerate}[label=(\roman*)]
\item \textbf{KL divergence}
\begin{equation}
\pderivnull{\bm{\theta}} E_\mathrm{KL}\br{\vect{x},\bm{\theta}} = -\pderivnull{\bm{\theta}}\log{\prob{\vect{x}\g \bm{\theta}}}
\end{equation}
\item \textbf{Square error}
\begin{equation}
\pderivnull{\bm{\theta}} E_\mathrm{SE}\br{\vect{x},\bm{\theta}} = \pderivnull{\bm{\theta}}\log{\prob{\vect{x}\g \bm{\theta}}}\,\br{\log{\prob{\vect{x}\g \bm{\theta}}} - \log{\uprob{\vect{x}}} + c}
\end{equation}
\item \textbf{Score matching}
\begin{equation}
\pderivnull{\bm{\theta}} E_\mathrm{SM}\br{\vect{x},\bm{\theta}} 
= \spderivb{}{\bm{\theta}}{\vect{x}}\log{\prob{\vect{x}\g \bm{\theta}}}\,\br{\pderivnull{\vect{x}}\log{\prob{\vect{x}\g \bm{\theta}}} - \pderivnull{\vect{x}}\log{\prob{\vect{x}}}} 
= \Rop{\pderivnull{\bm{\theta}}\log{\prob{\vect{x}\g \bm{\theta}}}}
\end{equation}
\end{enumerate}
As can be seen by the above equations, the only quantities we need to be in a position to calculate are
\begin{equation}
\pderivnull{\bm{\theta}}\log{\prob{\vect{x}\g \bm{\theta}}}
\quad\quad\text{and}\quad\quad
\Rop{\pderivnull{\bm{\theta}}\log{\prob{\vect{x}\g \bm{\theta}}}}.
\end{equation}
Note that in the case of score matching, $\pderivnull{\bm{\theta}} E_\mathrm{SM}\br{\vect{x},\bm{\theta}}$ takes the form of a Hessian-vector product, where the Hessian is that of $\log{\prob{\vect{x}\g \bm{\theta}}}$ and the vector is $\br{\pderivnull{\vect{x}}\log{\prob{\vect{x}\g \bm{\theta}}} - \pderivnull{\vect{x}}\log{\prob{\vect{x}}}}$. Hence, we can efficiently calculate it in linear time using the R technique\index{R technique} (as described in appendix~\ref{chapter:R_technique}).

\section{Case study: distilling RBM into NADE}
\label{sec:rbm_vs_nade}

In the previous section we described a general framework for distilling an intractable generative model $\prob{\vect{x}}$ into a tractable generative model $\prob{\vect{x}\g \bm{\theta}}$. In this section, we put the framework into practice in order to distil an intractable RBM into a tractable NADE\@.

\subsection{The Restricted Boltzmann Machine}
\index{RBM}

The Restricted Boltzmann Machine (RBM for short) is a widely-used generative model that is known to be a good distribution estimator \citep{Salakhutdinov:2008:quantitative_analysis_of_DBNs}. It was introduced by \citet{Smolensky:1986:RBM} and in the last decade it has enjoyed considerable success as a building block of deep generative models \citep{Hinton:2006:DBN, Bengio:2009:deep_AI}. 
It owes its success to its flexibility in capturing high order dependencies between variables and to the existence of effective algorithms for training it \citep{Hinton:2002:TPE}.

The RBM is a bipartite undirected graphical model (as shown in Figure~\ref{fig:rbm}), consisting of a layer of visible variables $\vect{x}$ and a layer of hidden variables $\vect{h}$. It is ``restricted'' in the sense that there are no connections within layers, only between them. In this work, we focus on the binary version of the RBM, where each variable can either be $0$ or $1$, however real-valued versions exist \citep{Nair:2010:RBM_relus}. From here on, whenever we say ``RBM'' we mean ``binary RBM''.

The probability distribution defined by the RBM over the visible variables $\vect{x}$ takes the following form
\begin{equation}
\prob{\vect{x}} = \frac{1}{Z}\sum_{\vect{h}}{\exp{\br{\vect{x}^T\mat{W}\vect{h} + \vect{a}^T\vect{x} + \vect{b}^T\vect{h}}}},
\end{equation}
where $\set{\mat{W},\vect{a},\vect{b}}$ are the parameters of the RBMs. Evaluating the unnormalized probability $\uprob{\vect{x}} = Z\prob{\vect{x}}$ seems to be intractable, since it is a sum of $2^J$ terms, where $J$ is the number of hidden variables. However, due to the special structure of the RBM, it is actually tractable. To see why, define the following quantity
\begin{equation}
\vect{z} = \mat{W}^T\vect{x} + \vect{b}
\end{equation}
and rewrite $\uprob{\vect{x}}$ as follows
\begin{align}
\uprob{\vect{x}} &= \exp\br{\vect{a}^T\vect{x}}\sum_{\vect{h}}{\exp{\br{\vect{z}^T\vect{h}}}}  \notag\\
&= \exp\br{\vect{a}^T\vect{x}}\sum_{\vect{h}}\prod_j{\exp{\br{z_jh_j}}}  \notag\\
&= \exp\br{\vect{a}^T\vect{x}}\prod_j\sum_{h_j}{\exp{\br{z_jh_j}}}  \notag\\
&= \exp\br{\vect{a}^T\vect{x}}\prod_j\br{1 + \exp{z_j}}.
\end{align}
Hence, $\uprob{\vect{x}}$ can be computed in time which is linear in $J$.

Even though $\uprob{\vect{x}}$ is tractable, several quantities of interest associated with the RBM are intractable. The normalizing constant $Z$ is intractable, as it is a sum of $2^I$ terms where $I$ is the number of visible variables. As a consequence, $\prob{\vect{x}}$ is intractable as well, and so are marginals $\prob{x_i}$ and conditionals $\prob{x_i\g \vect{x}_{<i}}$. Finally, drawing exact samples from the RBM is also hard.

Even though exact sampling is hard, obtaining approximate samples from the RBM can be done using MCMC\@. A practical way to do it is using block Gibbs sampling\index{Sampling!Gibbs sampling}. This technique is based on the fact that both $\prob{\vect{h}\g \vect{x}}$ and $\prob{\vect{x}\g \vect{h}}$ are easy to evaluate. Indeed, it is easy to show that
\begin{equation}
\prob{\vect{h}\g \vect{x}} = \sigm{\mat{W}^T\vect{x} + \vect{b}}
\end{equation}
and by symmetry
\begin{equation}
\prob{\vect{x}\g \vect{h}} = \sigm{\mat{W}\vect{h} + \vect{a}},
\end{equation}
where $\sigm{\cdot}$ is the logistic sigmoid function applied elementwise. Given a sample $\vect{x}_n$, block Gibbs sampling draws a sample $\vect{h}_n$ from $\prob{\vect{h}\g \vect{x}_n}$ and then draws a sample $\vect{x}_{n+1}$ from $\prob{\vect{x}\g \vect{h}_n}$. Starting from some seed value $\vect{x}_0$ and iterating the above procedure $N$ times, block Gibbs sampling generates a sequence $\vect{x}_1,\vect{x}_2,\ldots,\vect{x}_N$ of approximate correlated samples from $\prob{\vect{x}}$. Of course, as with any MCMC method, letting the chain burn in, thinning samples or using parallel chains can all be employed to improve the quality of samples.

\begin{figure}[tbp]
\centering
\begin{tikzpicture}

	\pgfmathsetmacro{\a}{-1.5}

    \tikzstyle{neuron} = [draw,semithick,circle, minimum size=20pt, inner sep=0pt]

    \foreach \y in {1,...,4}
        \node[neuron] (I-\y) at (-\a*\y,0) {$x_\y$};

    \foreach \y in {1,...,3}
        \path[xshift=-\a*0.5cm]
            node[neuron] (H-\y) at (-\a*\y,1.8) {$h_\y$};

    \foreach \source in {1,...,4}
        \foreach \dest in {1,...,3}
            \path[draw=black!75] (I-\source) edge (H-\dest);

\end{tikzpicture}
\caption{Graphical model of an RBM with $4$ visible and $3$ hidden variables. Its visible variables are $x_1$, $x_2$, $x_3$ and $x_4$ (bottom layer) and its hidden variables are $h_1$, $h_2$ and $h_3$ (top layer).}
\label{fig:rbm}
\end{figure}
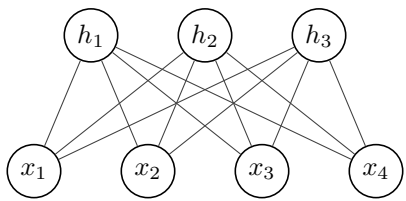

\subsection{The Neural Autoregressive Distribution Estimator}
\label{sec:generative_models:NADE}
\index{NADE}

The Neural Autoregressive Distribution Estimator (NADE for short) is a generative model that is fully tractable, but at the same time flexible enough to model complex distributions. It was introduced by \citet{Larochelle:2011:NADE}, where it was used for distribution estimation of binary data. 

NADE is a feedforward neural network with a special autoregressive structure. The basic version of NADE has a single hidden layer and models binary data, however there are extensions that can handle real-valued data \citep{Uria:2013:RNADE} and have multiple hidden layers \citep{Uria:2014:deep_NADE}. In this work we will make use of the binary, single hidden layer version, and from here on whenever we say ``NADE'' we shall refer to this.

NADE takes as input $I$ binary variables $\set{x_i}$ that can be either $0$ or $1$, and returns as output $I$ real-valued variables $\set{\hat{x}_i}$ that take on values between $0$ and $1$. The $\nth{i}$ output $\hat{x}_i$ is the conditional probability of the $\nth{i}$ variable being equal to $1$ given all variables thus far, that is
\begin{equation}
\hat{x}_i = \prob{x_i = 1\g \vect{x}_{<i},\bm{\theta}}.
\end{equation}
Hence, given an input vector $\vect{x}$ and using the chain rule, its probability according to NADE can be calculated as
\begin{equation}
\prob{\vect{x}\g \bm{\theta}} = \prod_i{\hat{x}_i^{x_i}\br{1-\hat{x}_i}^{1-x_i}}.
\end{equation}
The outputs of NADE are computed by forward-propagating $\vect{x}$ through the network, one variable at a time. The first output $\hat{x}_1$ is computed as follows
\begin{align}
\vect{a}_1 &= \vect{c} \\
\vect{h}_1 &= \sigm{\vect{a}_1} \\
\hat{x}_1 &= \sigm{\vect{u}_1^T\vect{h}_1 + b_1}.
\end{align}
where $\sigm{\cdot}$ is the logistic sigmoid function applied elementwise. 
From there on, each output $\hat{x}_i$ (for $i>1$) is computed in turn by forward-propagating the $\nth{\br{i-1}}$ input $x_{i-1}$ as follows
\begin{align}
\vect{a}_i &= \vect{a}_{i-1} + \vect{w}_{i-1}x_{i-1} \\
\vect{h}_i &= \sigm{\vect{a}_i} \\
\hat{x}_i &= \sigm{\vect{u}_i^T\vect{h}_i + b_i}.
\end{align}
In the above equations, $\set{\vect{u}_i}$, $\set{b_i}$, $\set{\vect{w}_i}$ and $\vect{c}$ are the adjustable parameters of NADE, which are collectively denoted as $\bm{\theta}$. Also, $\vect{a}_i$ and $\vect{h}_i$ are the hidden layer activations and hidden layer states in the $\nth{i}$ step.

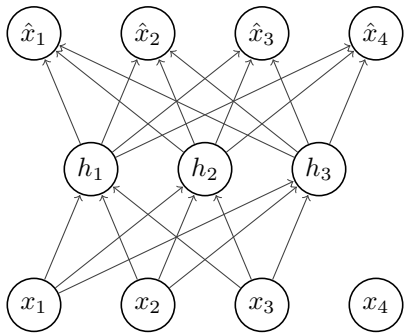
\begin{figure}[tbp]
\centering
\begin{tikzpicture}[shorten >=0.3pt, ->]

	\pgfmathsetmacro{\a}{-1.5}
	\pgfmathsetmacro{\layersep}{1.8}

    \tikzstyle{neuron}=[draw, semithick, circle, minimum size=20pt, inner sep=0pt]

    \foreach \y in {1,...,4}
        \node[neuron] (I-\y) at (-\a*\y,0) {$x_\y$};

    \foreach \y in {1,...,3}
        \path[xshift=-\a*0.5cm]
            node[neuron] (H-\y) at (-\a*\y,\layersep) {$h_\y$};

    \foreach \y in {1,...,4}
	    \node[neuron] (O-\y) at (-\a*\y,2*\layersep) {$\hat{x}_\y$};

    \foreach \source in {1,...,3}
        \foreach \dest in {1,...,3}
            \path[draw=black!75] (I-\source) edge (H-\dest);

    \foreach \source in {1,...,3}
        \foreach \dest in {1,...,4}
	        \path[draw=black!75] (H-\source) edge (O-\dest);

\end{tikzpicture}
\caption{Architecture of a NADE with $4$ input variables and $3$ hidden units. Its input variables are $x_1$, $x_2$, $x_3$ and $x_4$ (bottom layer), its hidden units are $h_1$, $h_2$ and $h_3$ (middle layer) and its outputs are $\hat{x}_1$, $\hat{x}_2$, $\hat{x}_3$ and $\hat{x}_4$ (top layer). Note that input $x_4$ is not actually forward-propagated, thus there are no connections from it to the hidden units.}
\label{fig:nade}
\end{figure}

Notice the special autoregressive structure of NADE; the activation $\vect{a}_i$ of the hidden units in the $\nth{i}$ step depends on the activation $\vect{a}_{i-1}$ in the step before. Also notice that the last input $x_I$ is never forward-propagated through NADE, but is only used in computing $\prob{\vect{x}\g \bm{\theta}}$. Thanks to this autoregressive structure, NADE can compute $\prob{\vect{x}\g \bm{\theta}}$ in time $\bigo{IJ}$; without it, this computation would require time $\bigo{IJ^2}$. The downside of it is that NADE is sensitive to the ordering of the input variables; that is, NADEs that are trained on the same dataset with different orderings will not necessarily be consistent with each other.

As can be seen from the above description, NADE is fully tractable. That is, $\prob{\vect{x}\g \bm{\theta}}$ can be calculated exactly, in time linear in the size of the input (compare this to the exponential time needed by an RBM). Also, conditioning in NADE becomes trivial, since it directly outputs conditionals.\footnote{Note however that if a different conditional from those output by NADE is needed, then the only obvious way to obtain it would be to retrain NADE from scratch with a convenient ordering.} Generating exact samples from NADE is also straightforward. This can be done using ancestral sampling\index{Sampling!Ancestral sampling}; generate $x_1$ from $\hat{x}_1$, then forward-propagate it to get $\hat{x}_2$, generate $x_2$ from $\hat{x}_2$, then forward-propagate it to get $\hat{x}_3$, and so on until the whole vector is generated. Last but not least, due to its neural network structure and the universal approximation theorem for neural networks, NADE can approximate with arbitrary accuracy any desired distribution, as long as it is given enough capacity.\footnote{Note however that this might mean that the number of hidden units has to become prohibitively large.}

Finally, as discussed in section~\ref{sec:generative_models:stochastic_gradient_training}, in order to train NADE to mimic other generative models, we need to be in a position to calculate
\begin{equation}
\pderivnull{\bm{\theta}}\log{\prob{\vect{x}\g \bm{\theta}}}
\quad\quad\text{and}\quad\quad
\Rop{\pderivnull{\bm{\theta}}\log{\prob{\vect{x}\g \bm{\theta}}}}.
\end{equation}
This can be done efficiently (in time linear in the size of the input) by backward propagation\index{Backward propagation} and R\{backprop\}\index{R\{backprop\}} respectively. More details on how to do this are given in appendix~\ref{chapter:derivatives_in_nade}.

\subsection{Distilling an RBM into NADE}
\label{sec:rbm_vs_nade:distillation}

In this section, we present the experimental results of distilling an intractable RBM into a tractable NADE\@. Our goal is to assess how successful the distillation is, and to what extent NADE manages to capture the knowledge of the RBM\@.

\subsubsection{The RBM}

\begin{figure}[tbp]
\def\imwidth{0.48\textwidth}
\centering
\subfloat[Samples from binarized MNIST]{
\includegraphics[width=\imwidth]{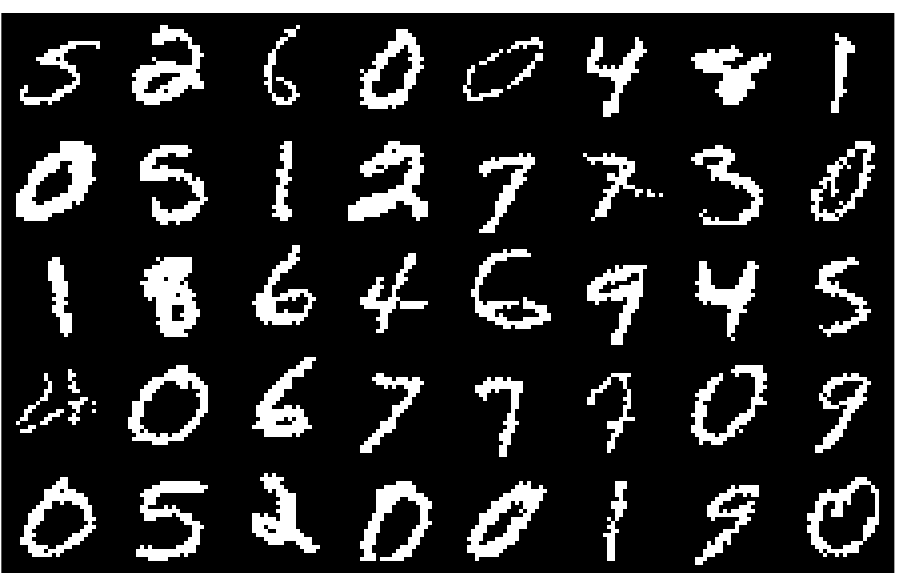}}
\hfill
\subfloat[Samples from RBM]{
\includegraphics[width=\imwidth]{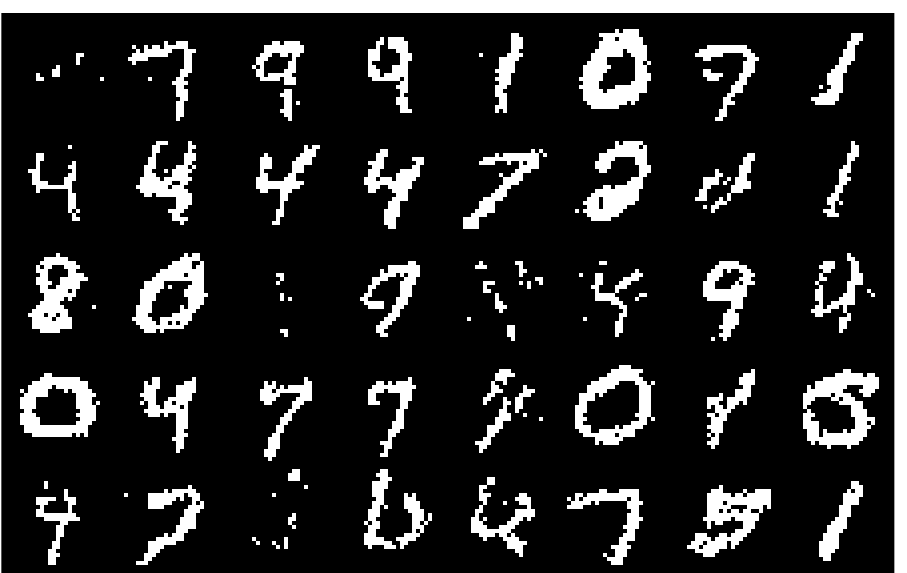}}
\caption{Random samples from (a) the binarized version of MNIST and (b) the RBM\@. The RBM samples were generated by Gibbs sampling; each sample was generated by a separate Markov chain, randomly initialized and burned in until convergence.}
\label{fig:generative_models:samples_from_mnist_and_rbm}
\end{figure}

In all our experiments, we used an RBM that was trained on the MNIST dataset of handwritten digits \citep{LeCun:MNIST_web}. The RBM was provided by \citet{Salakhutdinov:2008:quantitative_analysis_of_DBNs}, who referred to it as CD25(500).\footnote{It was downloaded as is from \url{http://www.utstat.toronto.edu/~rsalakhu/rbm_ais.html} (accessed on 20 July 2015).} This particular RBM has $500$ hidden variables and it was trained using Contrastive Divergence \citep{Hinton:2002:TPE} with $25$ Gibbs steps for each gradient evaluation.

The RBM was trained to be a generative model of MNIST digits. The MNIST dataset consists of greyscale images of handwritten digits ($0$ to $9$), of size $28\times 28$. The dataset is partitioned into a train set of $60{,}000$ images and a test set of $10{,}000$ images. The images were binarized before training; each pixel was independently set to $1$ with probability equal to the original pixel intensity (which varies from $0$ to $1$). For consistency, in all our experiments below, we use the same binarized version of MNIST\@.

In their paper, \citet{Salakhutdinov:2008:quantitative_analysis_of_DBNs} argue that this particular RBM is indeed a good distribution estimator for MNIST\@. They also use annealed importance sampling\index{Sampling!Annealed importance sampling} \citep{Neal:2001:AIS} to estimate the partition function of the RBM\@. Figure~\ref{fig:generative_models:samples_from_mnist_and_rbm} shows random samples from both the binarized MNIST dataset and the RBM; their similarity indicates that the RBM is indeed a good distribution estimator of MNIST\@. Figure~\ref{fig:generative_models:features_from_rbm} shows some of the features the RBM has learnt.

\begin{figure}[tbp]
\def\imwidth{0.48\textwidth}
\centering
\includegraphics[width=\imwidth]{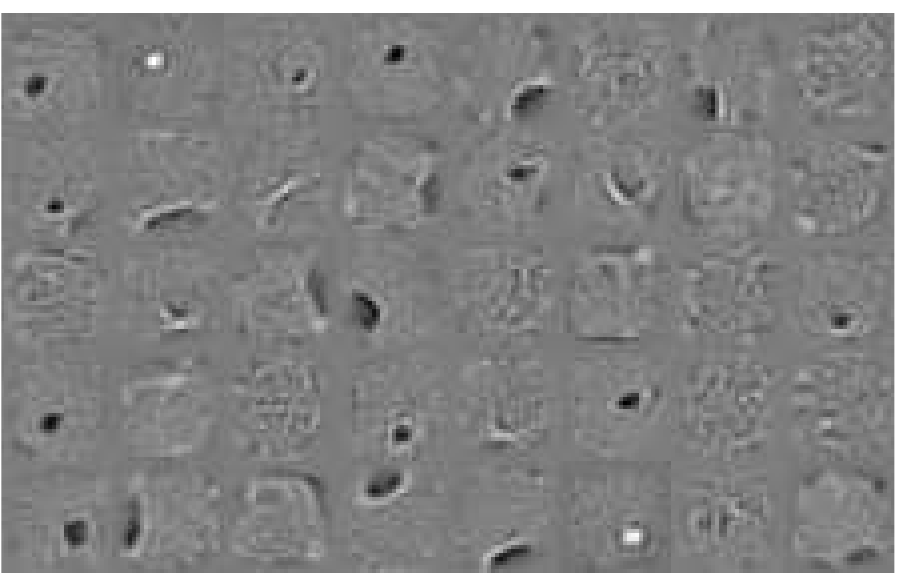}
\caption{Features learnt by the RBM trained on MNIST\@. The features are the columns of the $\mat{W}$ matrix of the RBM, reshaped as $28\times 28$ images.}
\label{fig:generative_models:features_from_rbm}
\end{figure}

\subsubsection{Distillation details}

In order to assess how much capacity NADE should have, we used NADEs with $250$, $500$, $750$ and $1000$ hidden units. All NADEs used a raster ordering of variables; that is, the visible variables (image pixels) were ordered columnwise. Empirically, we found that using a raster ordering as opposed to a random ordering is more suitable for MNIST\@.

The loss functions we used are \textit{KL divergence} and \textit{square error} (see section~\ref{sec:generative_models:loss_functions}). We did not use score matching, since---as we argued in the relevant section---score matching is only suitable for continuous distributions. The square error loss function has a free parameter $c$ that needs to be set. In our relevant analysis, we argued that any value of $c$ that is less or equal to $\max_{\vect{x}}{\log{\uprob{\vect{x}}}}$ should do. In the experiments below, we found a large value of $\log{\uprob{\vect{x}}}$ and set $c$ to it. We used random sampling followed by hill climbing to determine a large value for $\log{\uprob{\vect{x}}}$. The largest value we found was $436.49$ which, incidentally, corresponds to the all-black image.\footnote{Note that this is not guaranteed to be the maximum of $\log{\uprob{\vect{x}}}$, as the maximization method we used is only approximate. For a general RBM, finding the exact maximum is intractable. However, any value less than the maximum will do for our purposes.} In all our experiments, we used this value for $c$.

During stochastic gradient training, we used minibatches of $20$ samples from the RBM\@. The samples were generated using block Gibbs sampling with parallel chains. In particular, we maintained $2000$ parallel Markov chains and, in each iteration, we simulated each chain once and selected $20$ of them (in sequence) to form the minibatch. This way, all samples within a minibatch are independent (since they come from independently simulated chains) and each chain is thinned $2000/20 = 100$ times before contributing a sample. Empirically, we found that both having independent samples within the minibatch and thinning across minibatches improved learning. Note that this sampling scheme is also efficient to run, since all chains can be updated in parallel. Also, all chains were randomly initialized and burned in for $2000$ iterations before they actually started contributing samples.

In order to determine the learning rate in each stochastic update, we used ADADELTA\index{ADADELTA} \citep{Zeiler:2012:adadelta}, which is an adaptive learning rate method that uses a different learning rate for each parameter. ADADELTA is controlled by two hyperparameters, for which we used the default values suggested in the original paper. It comes with no convergence guarantees, but we have found that it works well in practice, while it requires minimal tuning. Finally, we chose to run stochastic gradient training for $30{,}000$ iterations, producing in total $30{,}000\times 20 = 600{,}000$ RBM samples.

\subsubsection{Results and discussion}

We trained $4$ NADEs with $250$, $500$, $750$ and $1000$ hidden units, using KL divergence and square error; this resulted in $8$ models in total. Figure~\ref{fig:generative_models:training_progress} shows the progress of each NADE during training. The plots track the average log probability each NADE assigns to the first $500$ MNIST test images, as this evolves during training. Table~\ref{table:generative_models:mnist_log_prob} shows the average log probability each NADE assigns to the full MNIST test set at the end of training. We can see that, in general, more hidden units lead to a higher log probability, with the difference being significant between the NADEs with $250$ and $500$ hidden units. This suggests that at least $500$ hidden units are needed for NADE to have enough flexibility to capture the knowledge in the RBM (it is no coincidence that the RBM has $500$ hidden variables as well).

\begin{figure}[tbp]
\def\imwidth{0.48\textwidth}
\centering
\subfloat[Trained with KL divergence]{
\includegraphics[width=\imwidth]{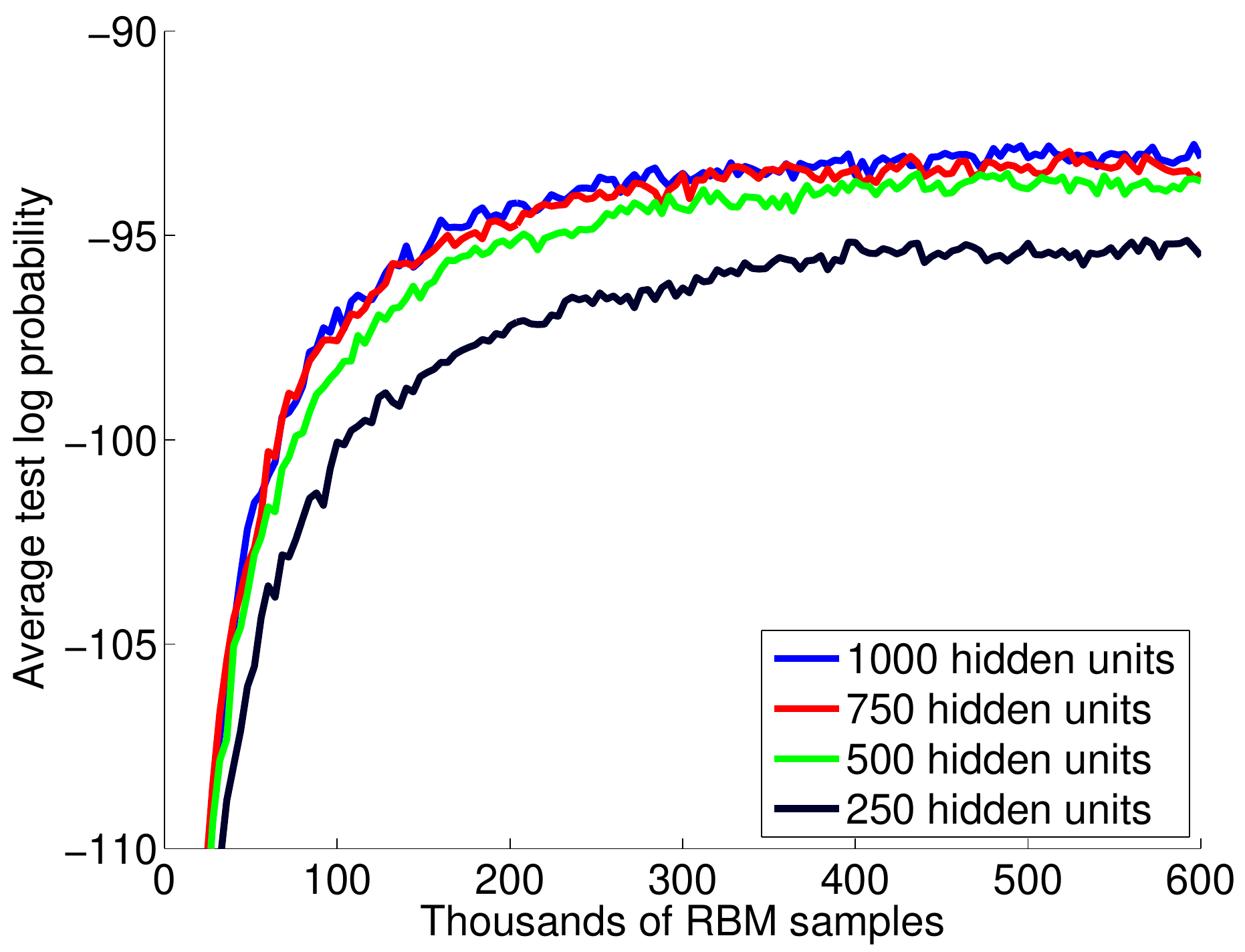}}
\hfill
\subfloat[Trained with square error]{
\includegraphics[width=\imwidth]{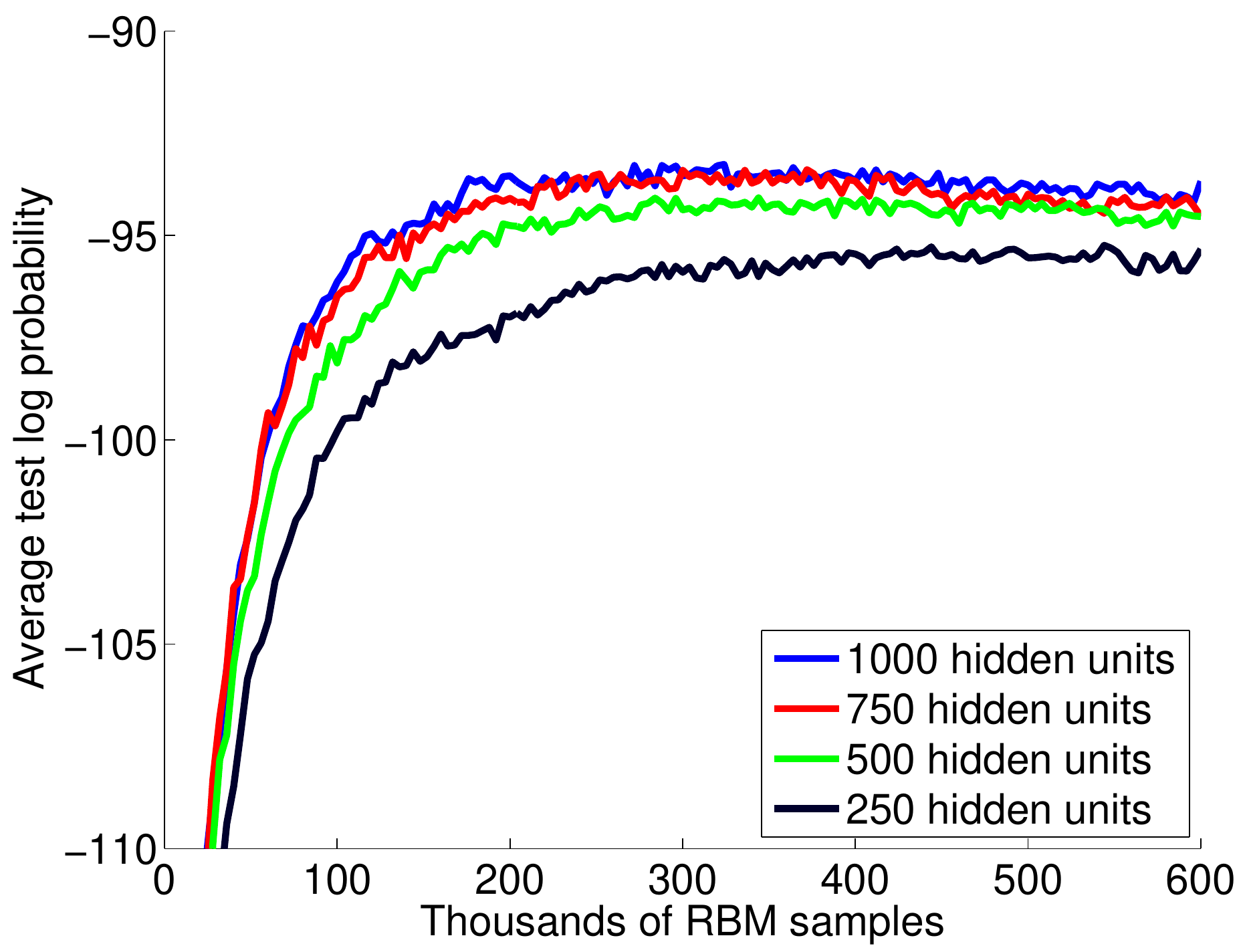}}
\caption{Training progress for each NADE, as measured by average log probability of the first $500$ images of the MNIST test set. The training progress was measured every $200$ iterations, that is, every $200\times 20 = 4000$ RBM samples.}
\label{fig:generative_models:training_progress}
\end{figure}

\begin{table}[tbp]
\renewcommand{\tabcolsep}{0.6cm}
\renewcommand{\arraystretch}{\arrstretchvalue}
\centering
\begin{tabular}{ccc}
\toprule
\textbf{Hidden units} & \textbf{KL divergence} & \textbf{Square error} \\
\midrule
$\bm{250}$  & $-94.75 \pm 0.78$ & $-94.62 \pm 0.78$ \\
$\bm{500}$  & $-93.19 \pm 0.76$ & $-93.83 \pm 0.77$ \\
$\bm{750}$  & $-92.93 \pm 0.76$ & $-93.61 \pm 0.77$ \\ 
$\bm{1000}$ & $-92.49 \pm 0.76$ & $-93.01 \pm 0.76$ \\
\bottomrule
\end{tabular}
\caption{Average log probability each NADE assigns to the full MNIST test set, at the end of training. Error bars correspond to $3$ standard deviations.}
\label{table:generative_models:mnist_log_prob}
\end{table}

\begin{figure}[p]
\def\imwidth{0.395\textwidth}
\centering
\subfloat[KL divergence, $250$ hidden units]{
\includegraphics[width=\imwidth]{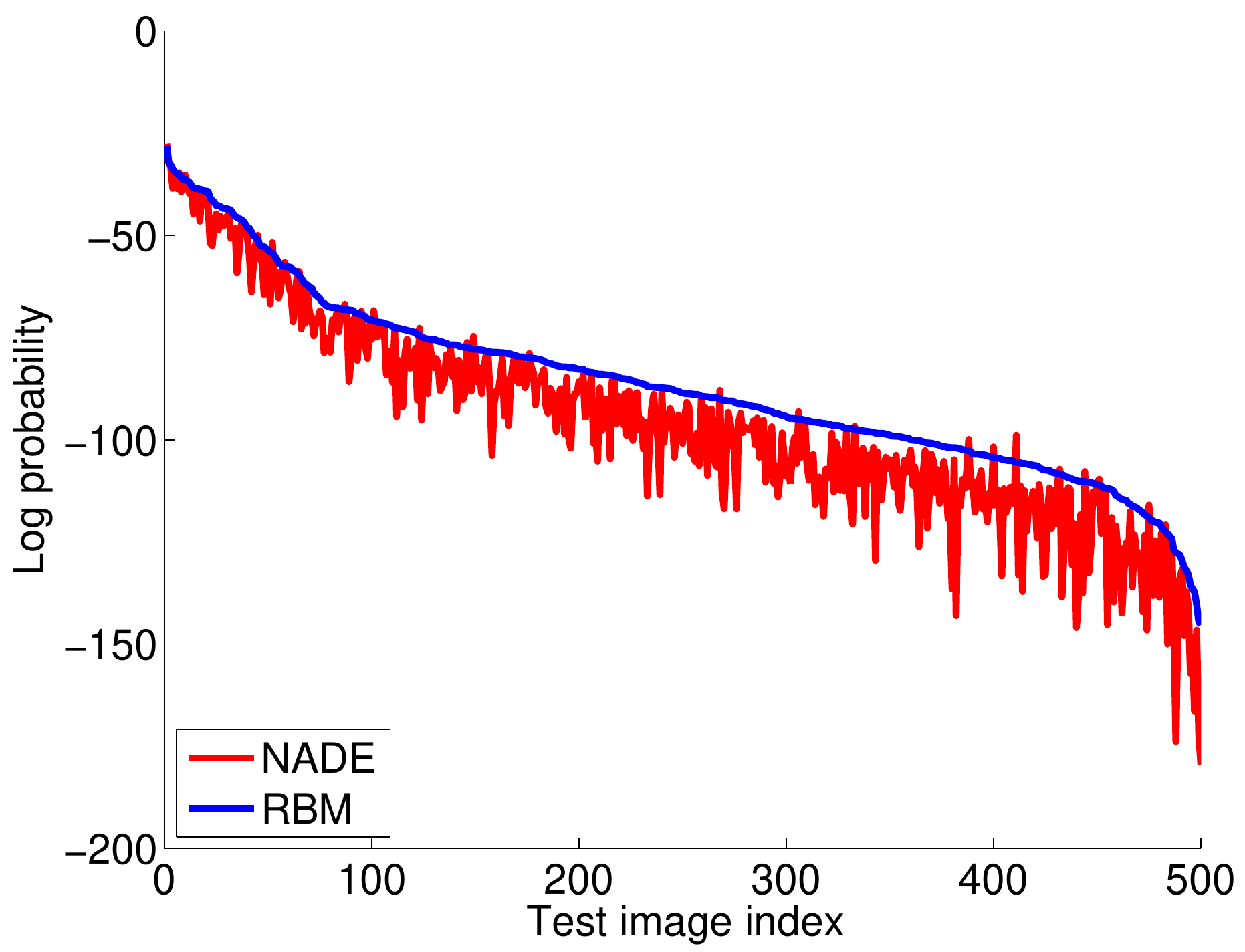}}
\hspace{1.5cm}
\subfloat[Square error, $250$ hidden units]{
\includegraphics[width=\imwidth]{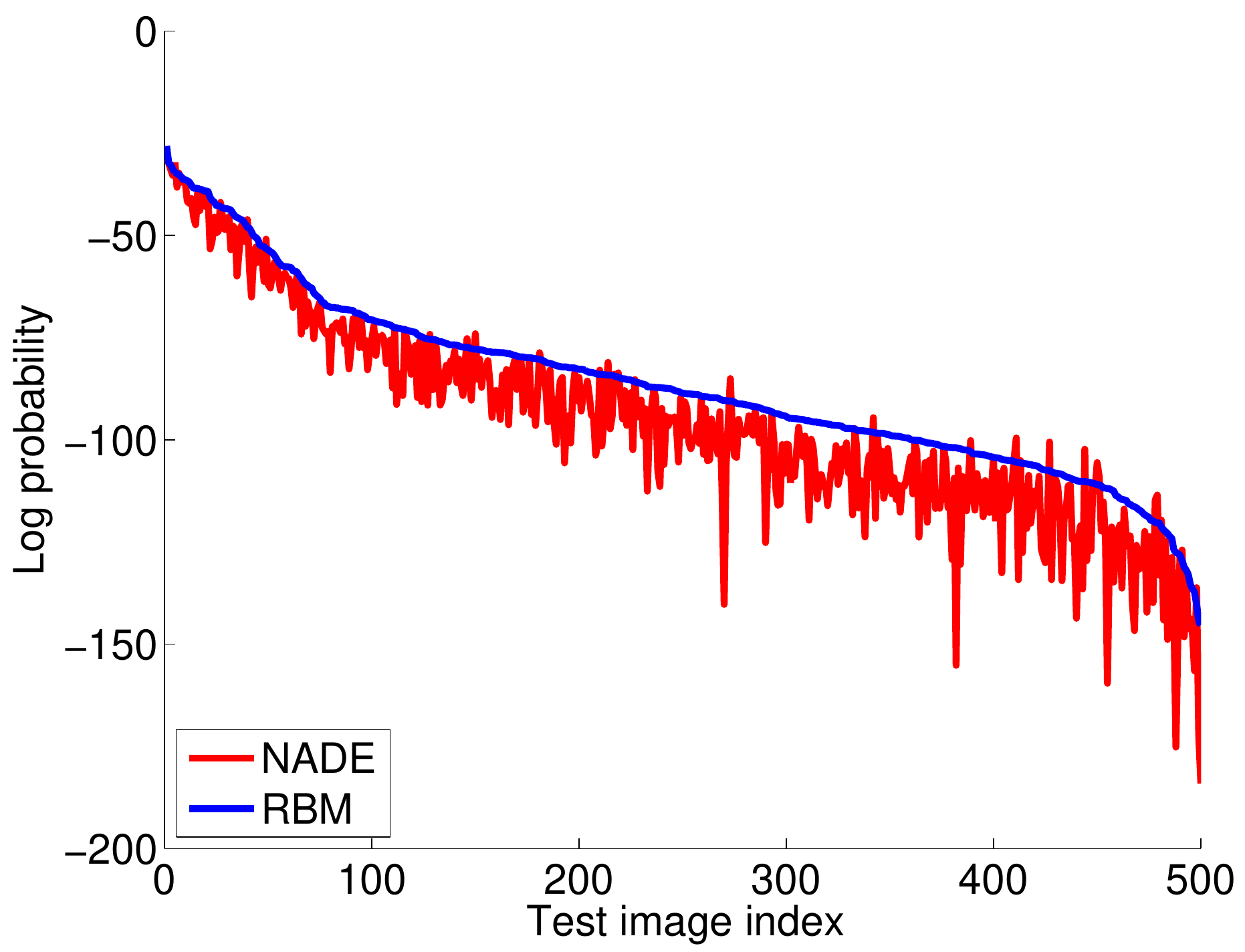}}\\
\subfloat[KL divergence, $500$ hidden units]{
\includegraphics[width=\imwidth]{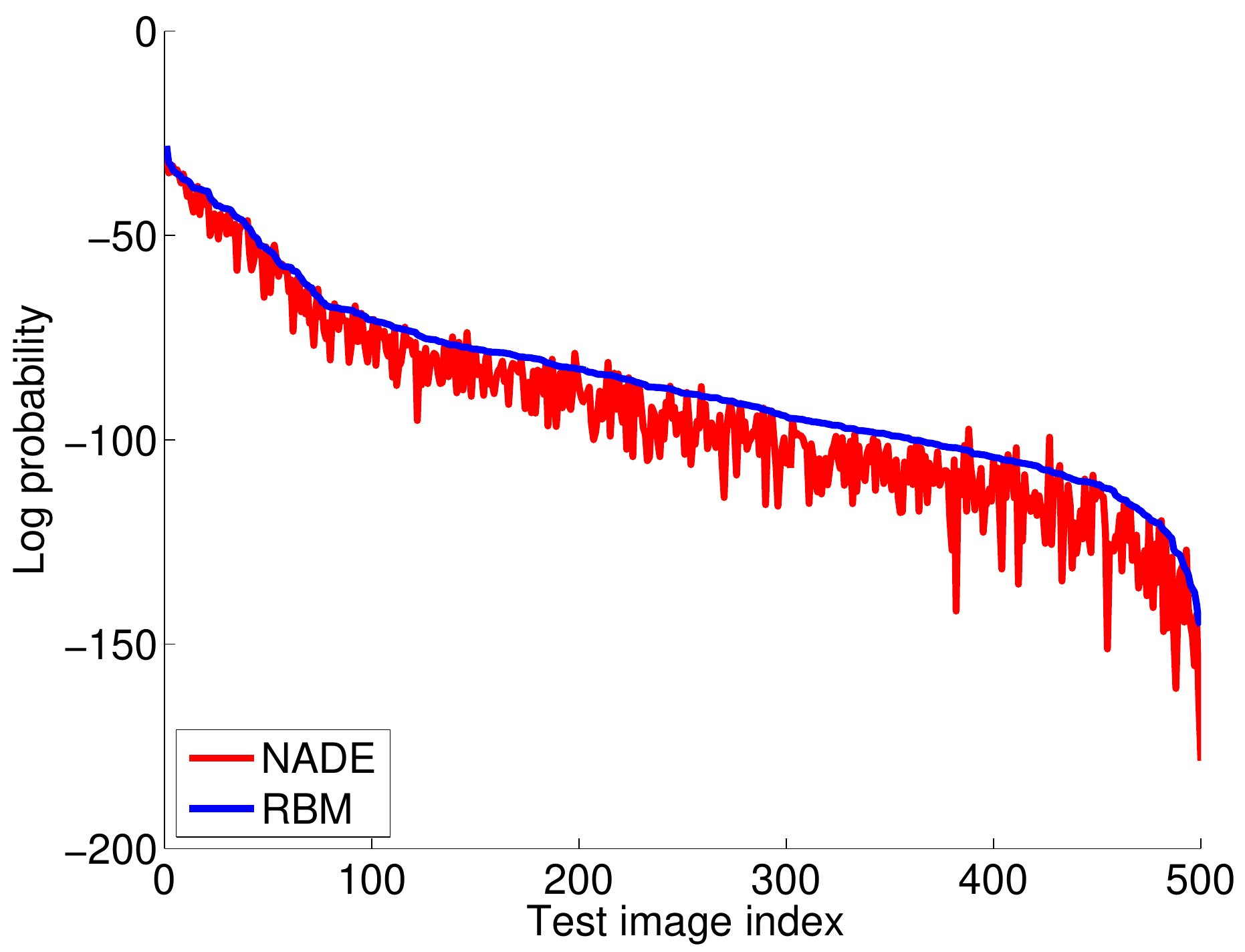}}
\hspace{1.5cm}
\subfloat[Square error, $500$ hidden units]{
\includegraphics[width=\imwidth]{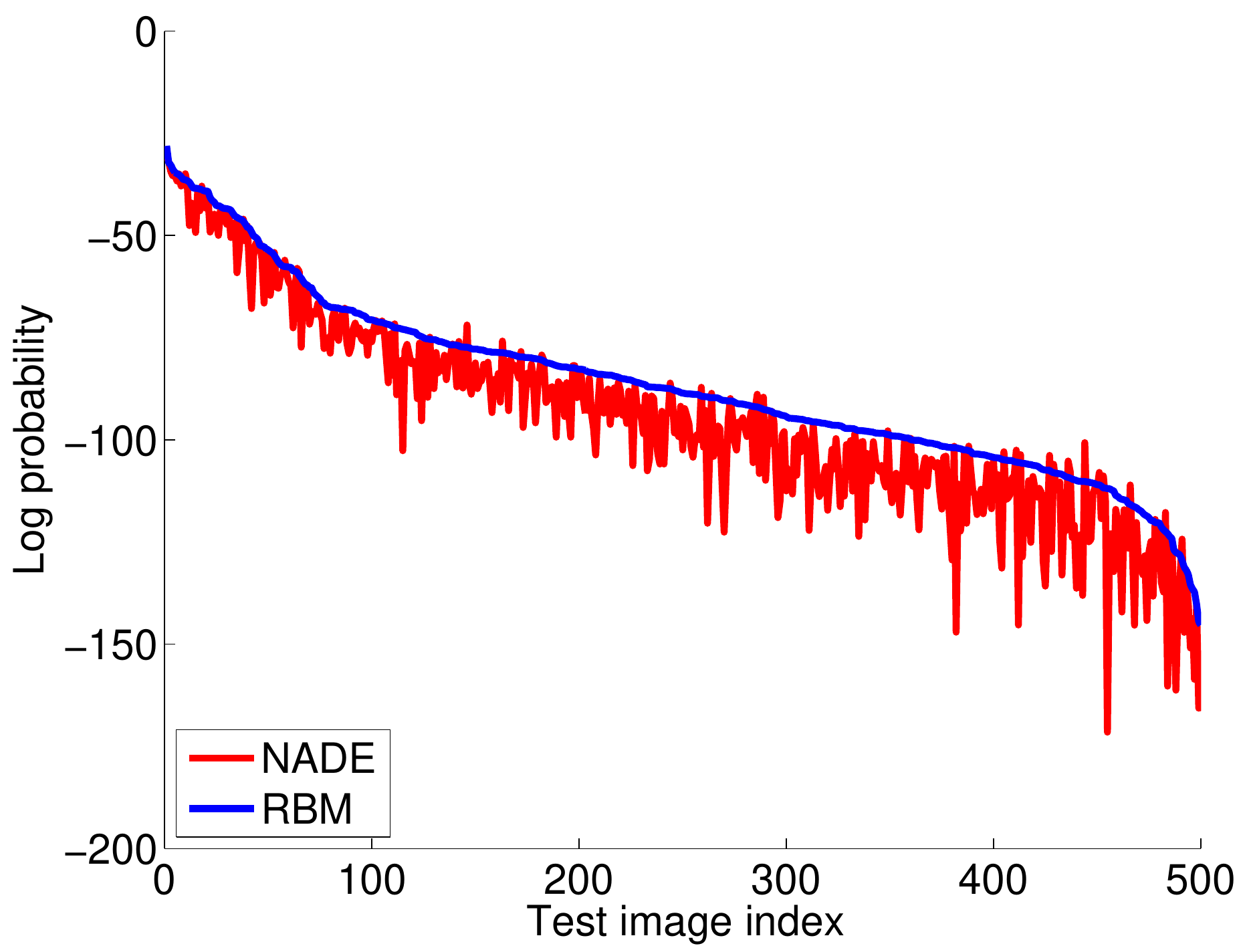}}\\
\subfloat[KL divergence, $750$ hidden units]{
\includegraphics[width=\imwidth]{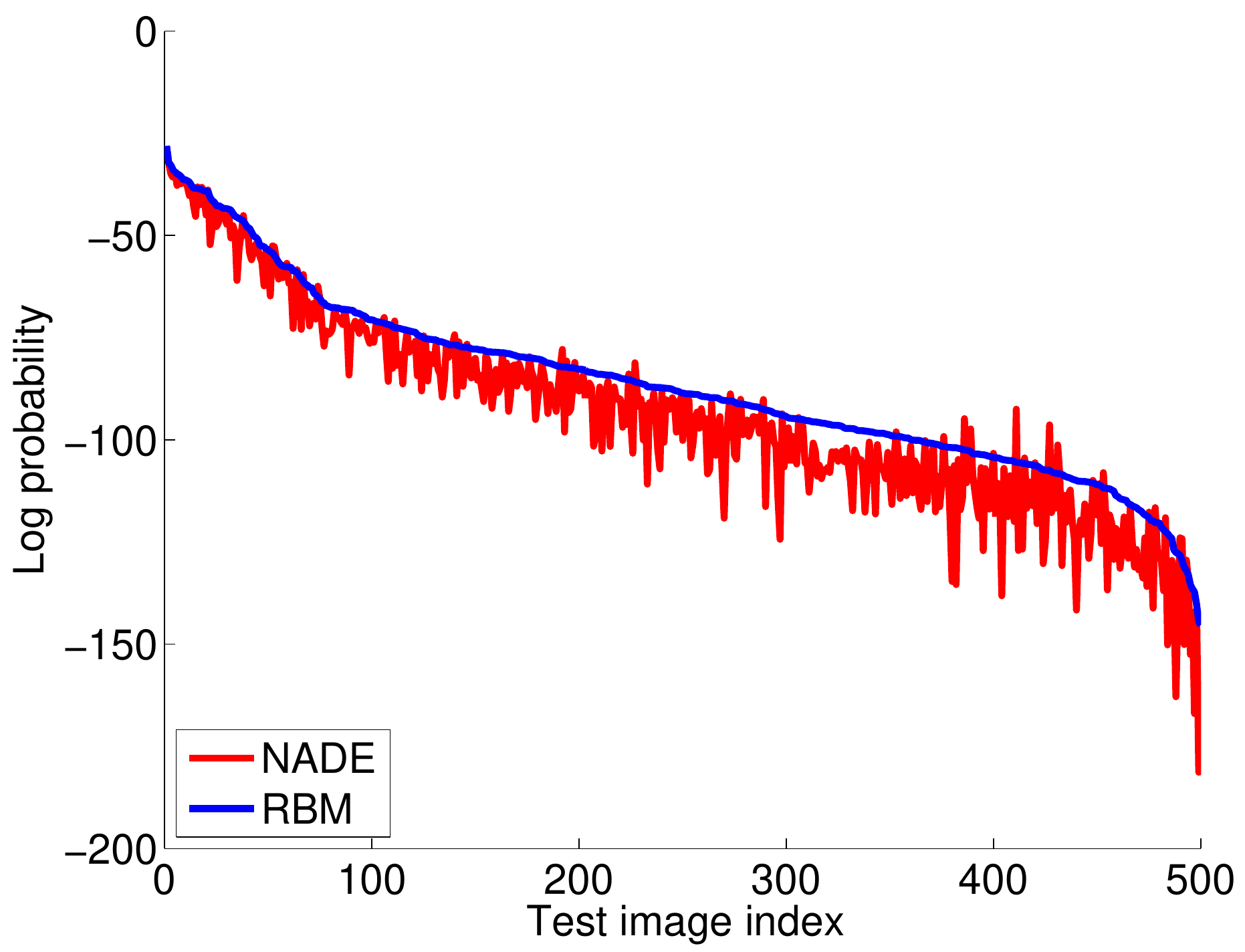}}
\hspace{1.5cm}
\subfloat[Square error, $750$ hidden units]{
\includegraphics[width=\imwidth]{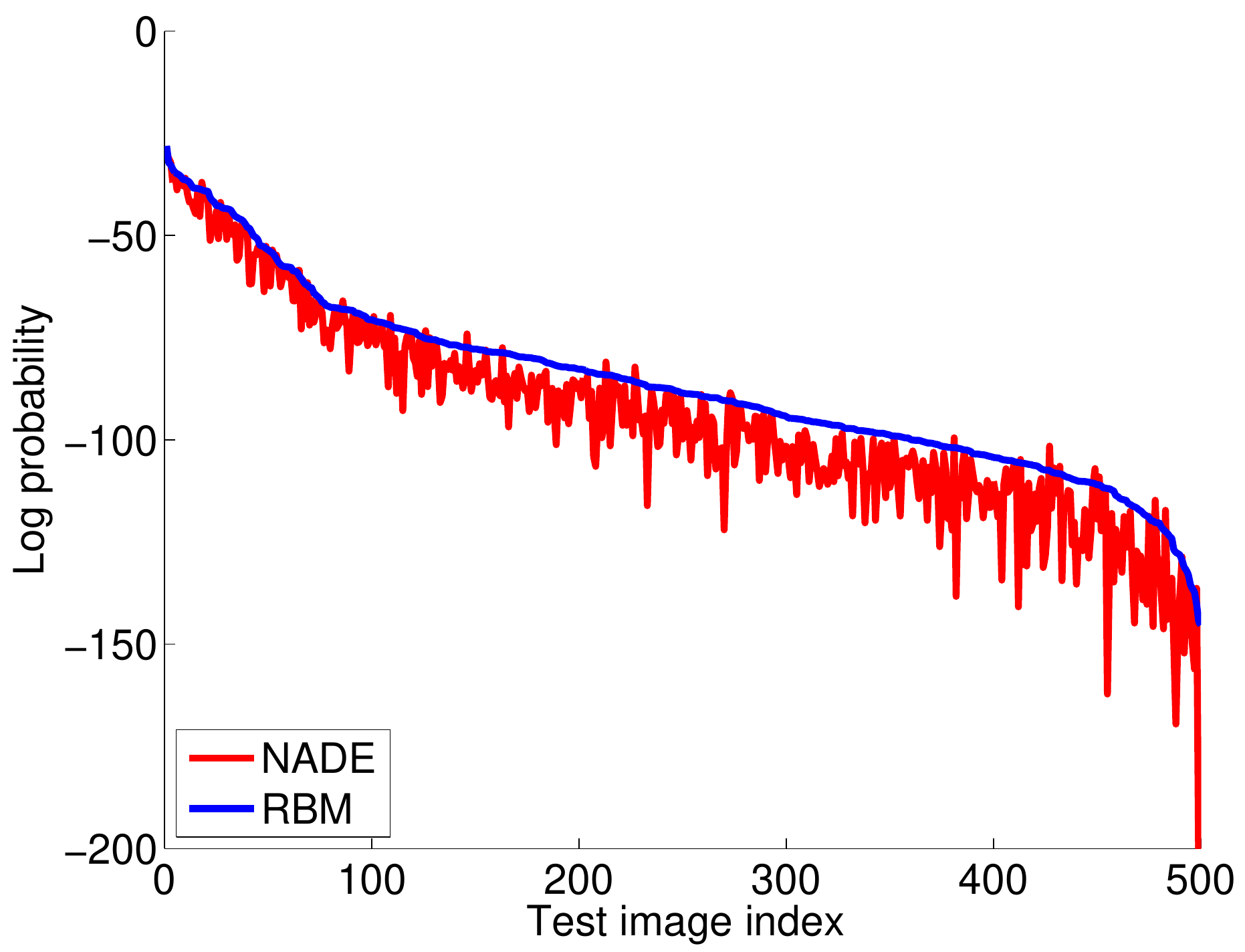}}\\
\subfloat[KL divergence, $1000$ hidden units]{
\includegraphics[width=\imwidth]{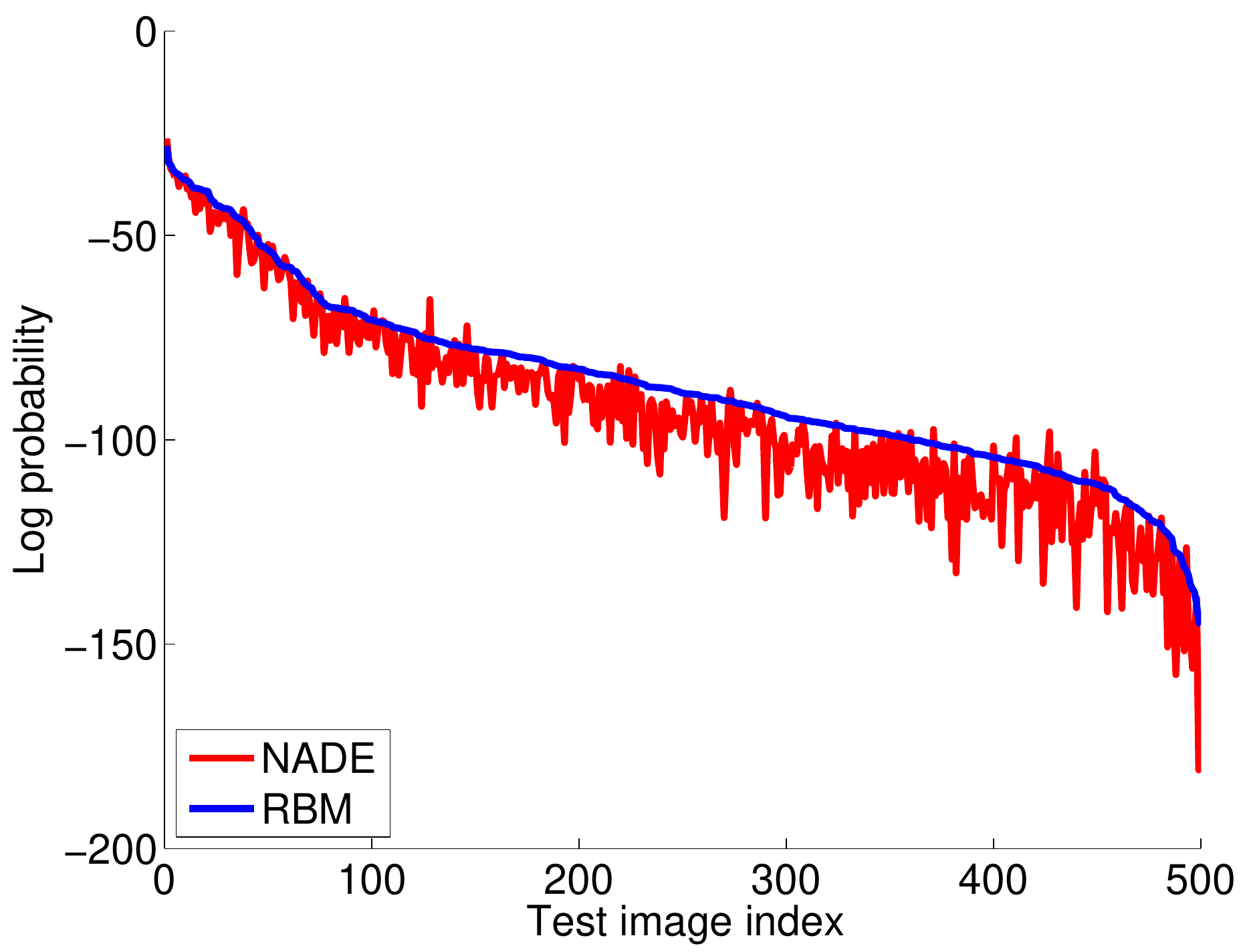}}
\hspace{1.5cm}
\subfloat[Square error, $1000$ hidden units]{
\includegraphics[width=\imwidth]{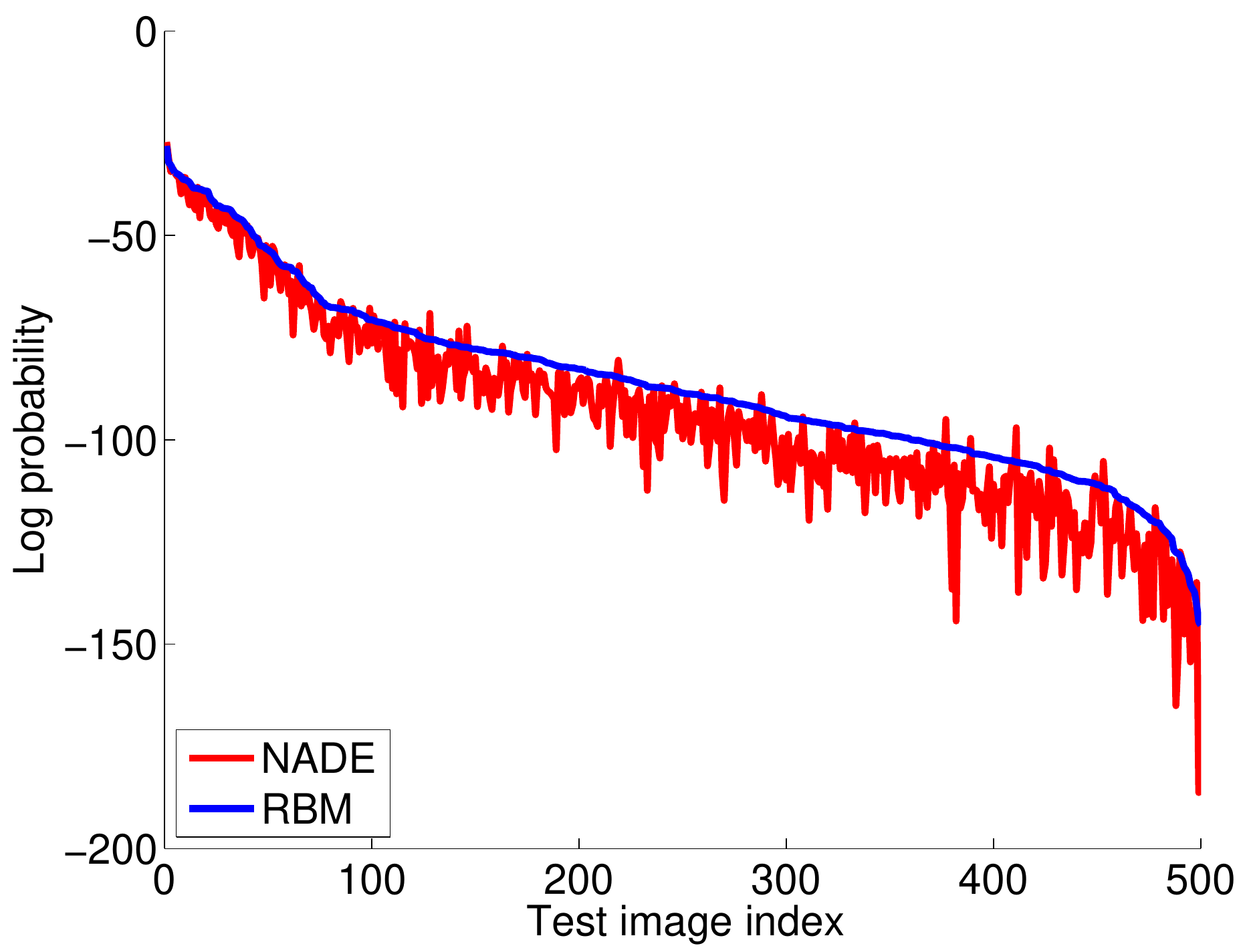}}
\caption{Log probability each trained NADE assigns to the first $500$ images of the MNIST test set, compared to the RBM\@. Note that the log probability of the RBM was calculated by setting $\log{Z} = 451.28$, as estimated by \citet{Salakhutdinov:2008:quantitative_analysis_of_DBNs}.}
\label{fig:generative_models:image_logprob}
\end{figure}

\begin{figure}[p]
\def\imwidth{0.48\textwidth}
\centering
\subfloat[KL divergence, $250$ hidden units]{
\includegraphics[width=\imwidth]{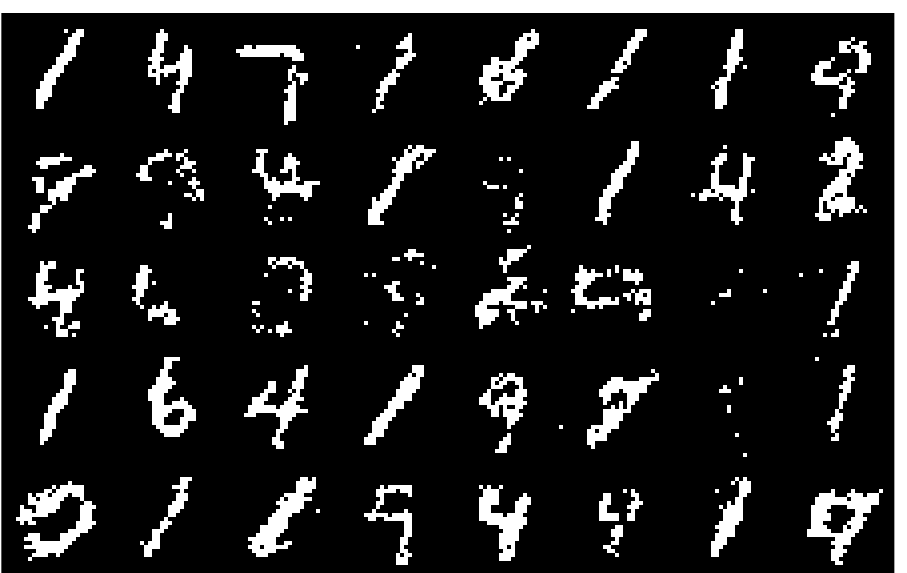}}
\hfill
\subfloat[Square error, $250$ hidden units]{
\includegraphics[width=\imwidth]{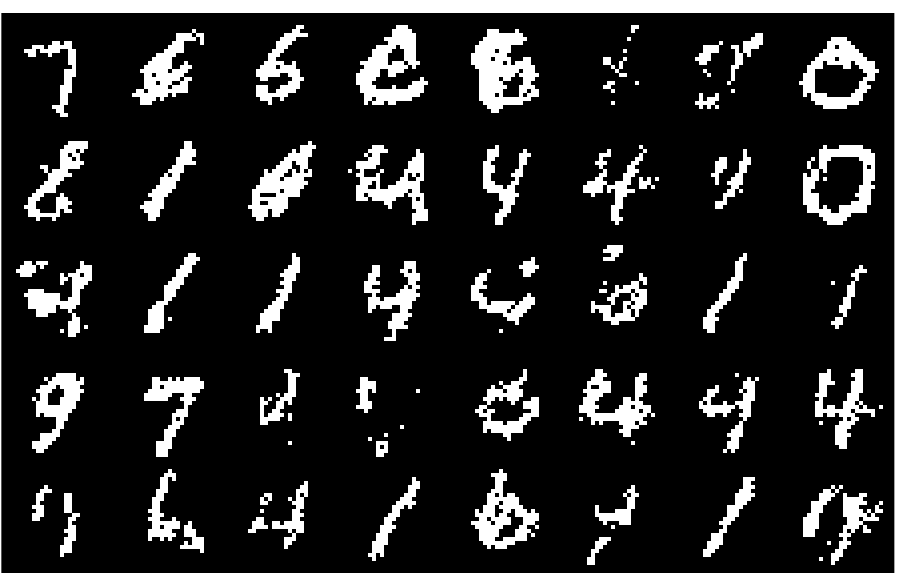}}\\
\subfloat[KL divergence, $500$ hidden units]{
\includegraphics[width=\imwidth]{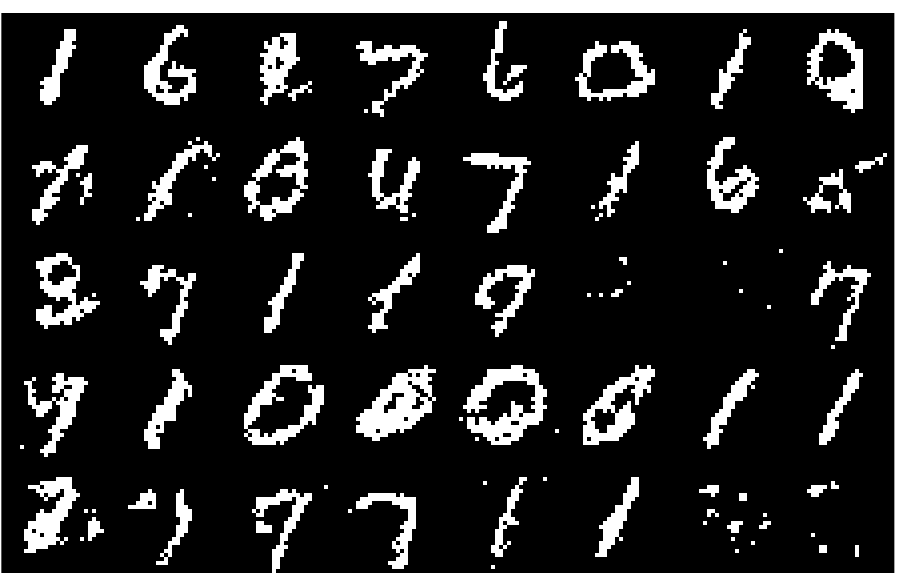}}
\hfill
\subfloat[Square error, $500$ hidden units]{
\includegraphics[width=\imwidth]{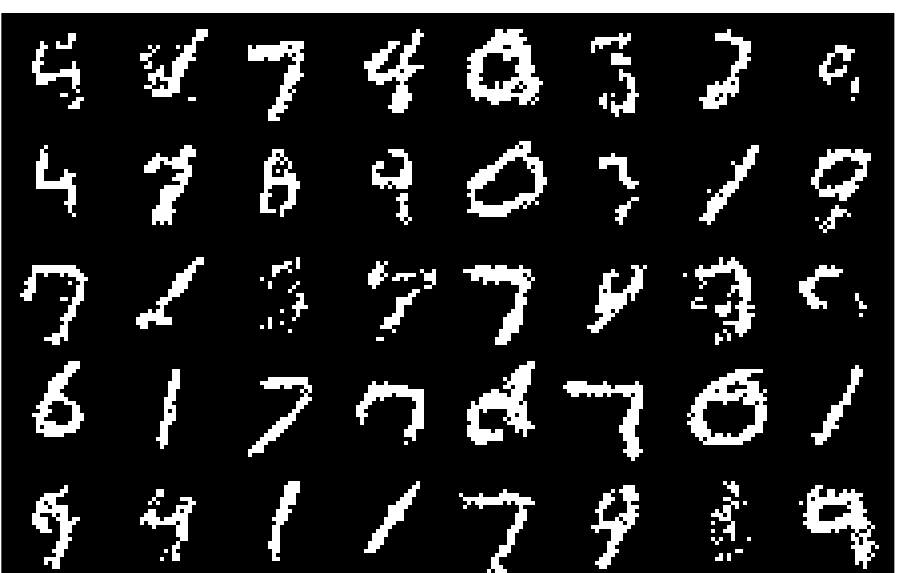}}\\
\subfloat[KL divergence, $750$ hidden units]{
\includegraphics[width=\imwidth]{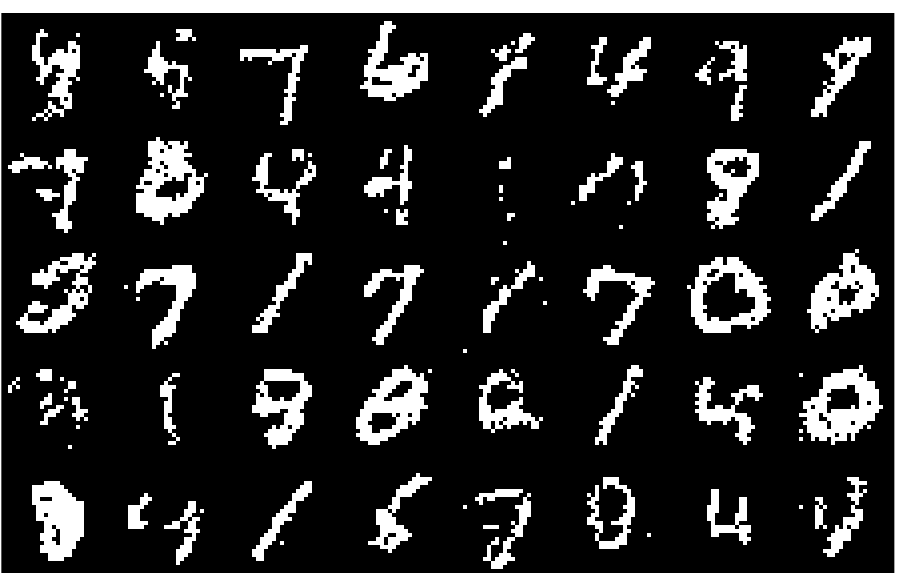}}
\hfill
\subfloat[Square error, $750$ hidden units]{
\includegraphics[width=\imwidth]{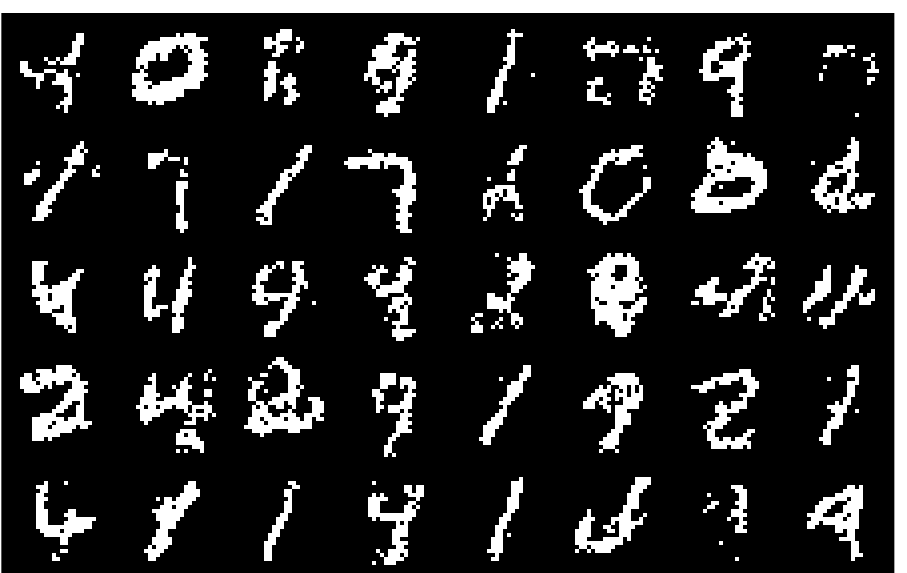}}\\
\subfloat[KL divergence, $1000$ hidden units]{
\includegraphics[width=\imwidth]{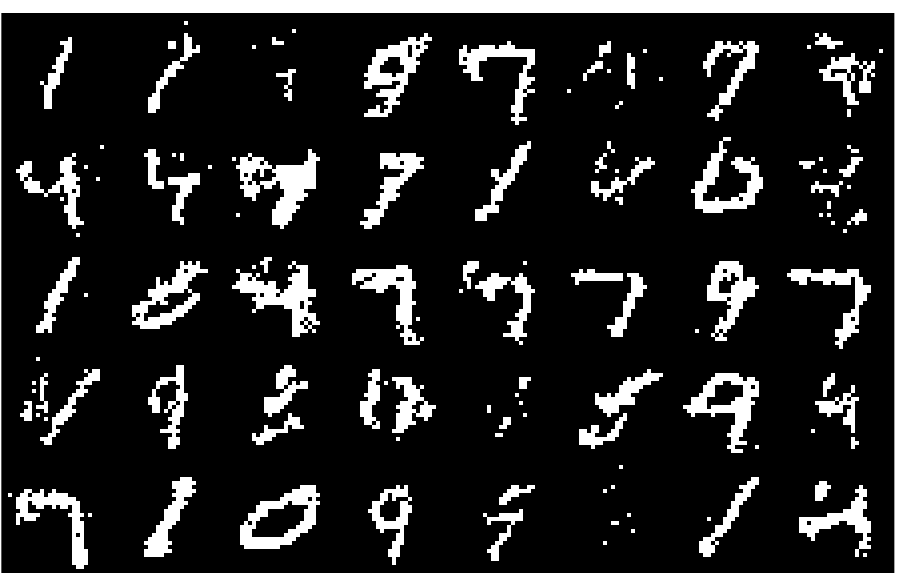}}
\hfill
\subfloat[Square error, $1000$ hidden units]{
\includegraphics[width=\imwidth]{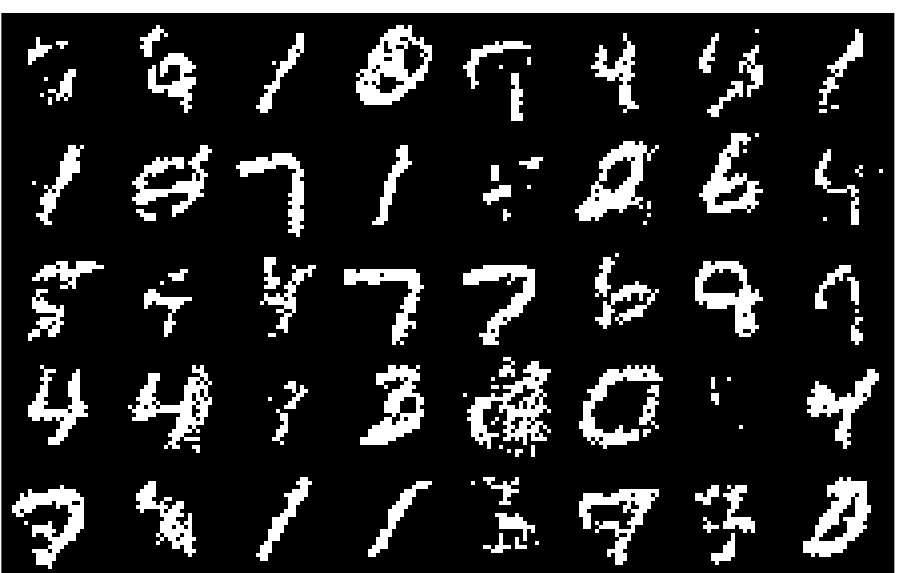}}
\caption{Random samples from each trained NADE\@. All samples are exact and were generated using ancestral sampling.}
\label{fig:generative_models:nade_samples}
\end{figure}

\begin{figure}[p]
\def\imwidth{0.48\textwidth}
\centering
\subfloat[KL divergence, $250$ hidden units]{
\includegraphics[width=\imwidth]{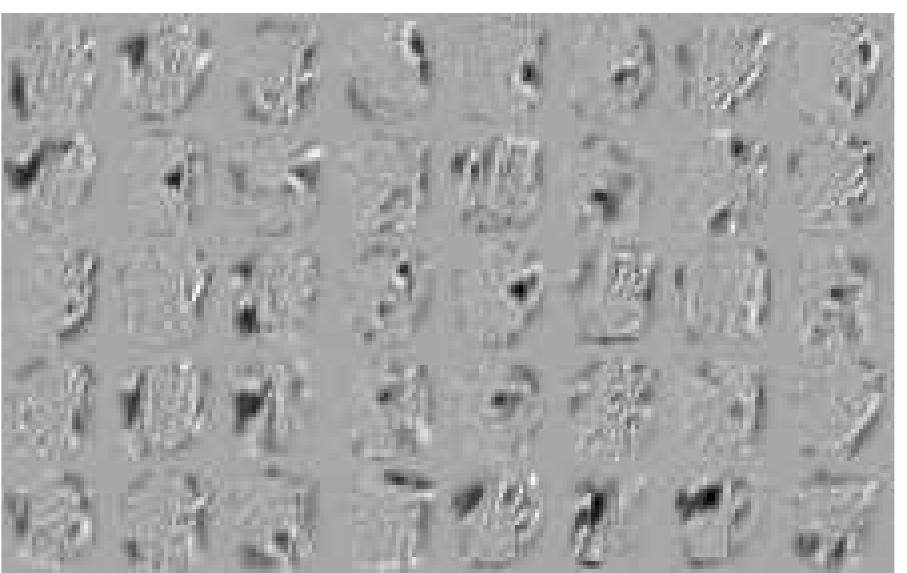}}
\hfill
\subfloat[Square error, $250$ hidden units]{
\includegraphics[width=\imwidth]{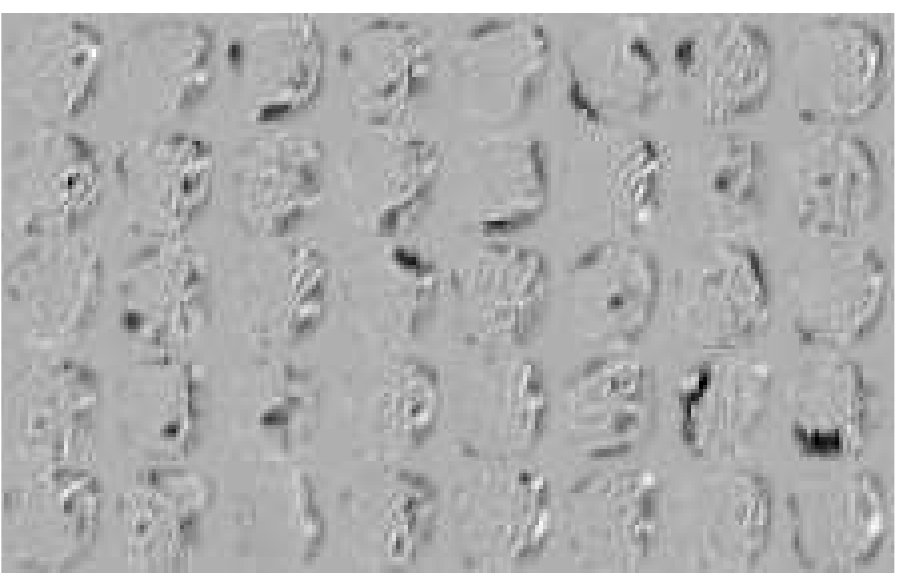}}\\
\subfloat[KL divergence, $500$ hidden units]{
\includegraphics[width=\imwidth]{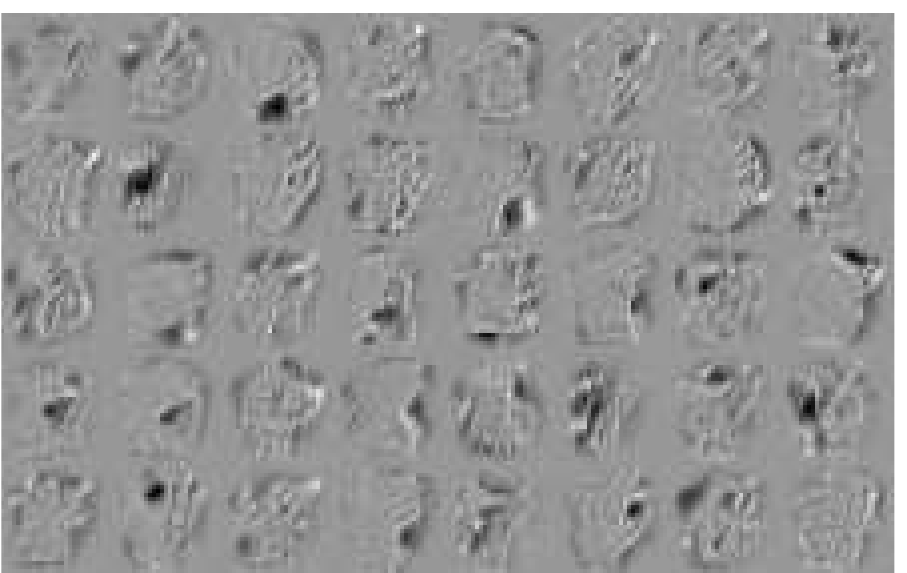}}
\hfill
\subfloat[Square error, $500$ hidden units]{
\includegraphics[width=\imwidth]{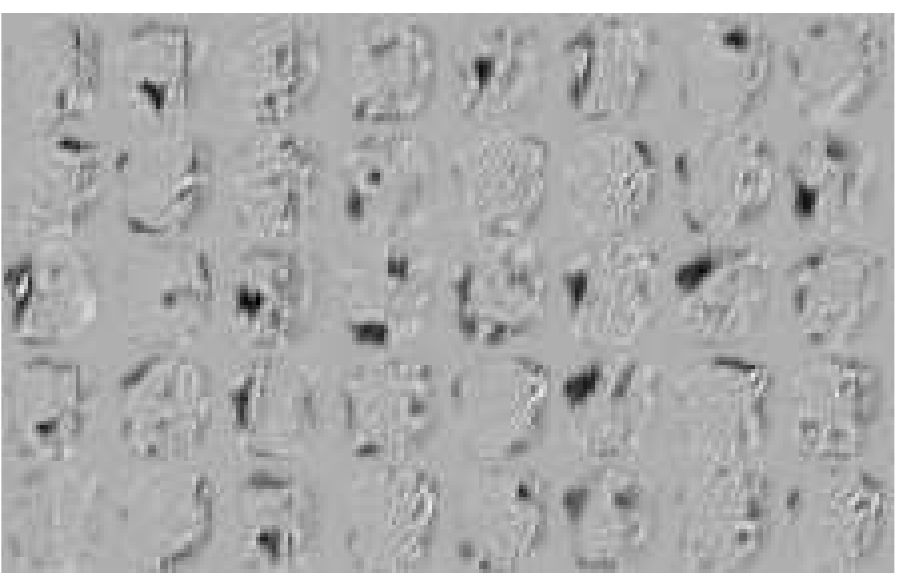}}\\
\subfloat[KL divergence, $750$ hidden units]{
\includegraphics[width=\imwidth]{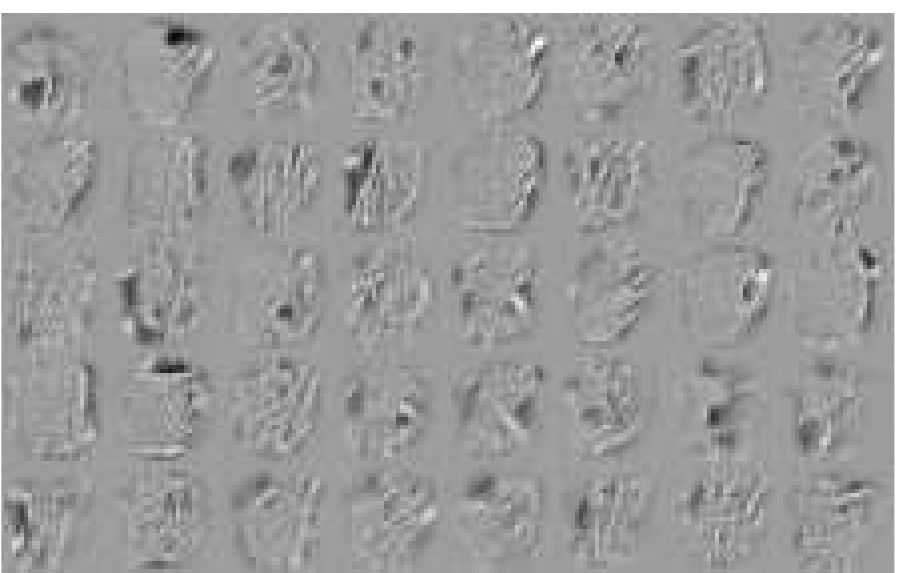}}
\hfill
\subfloat[Square error, $750$ hidden units]{
\includegraphics[width=\imwidth]{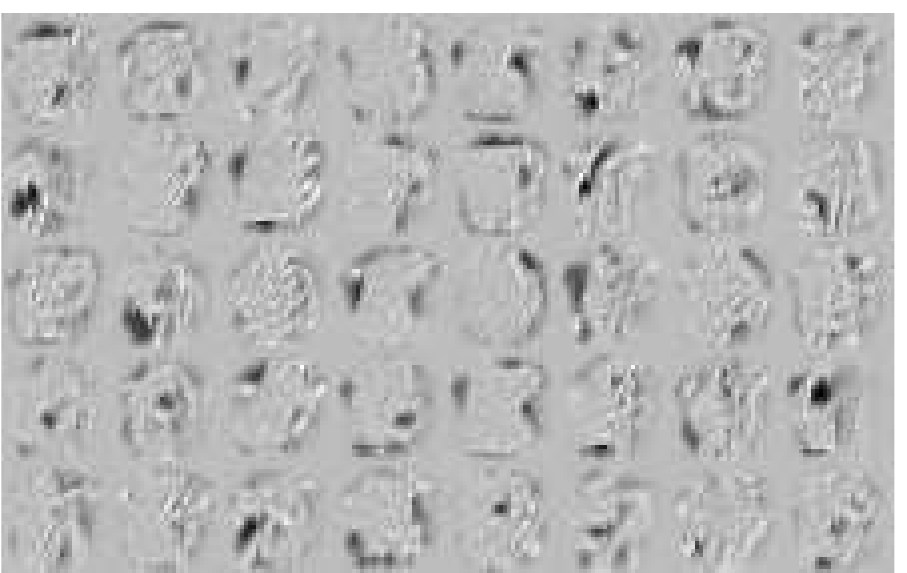}}\\
\subfloat[KL divergence, $1000$ hidden units]{
\includegraphics[width=\imwidth]{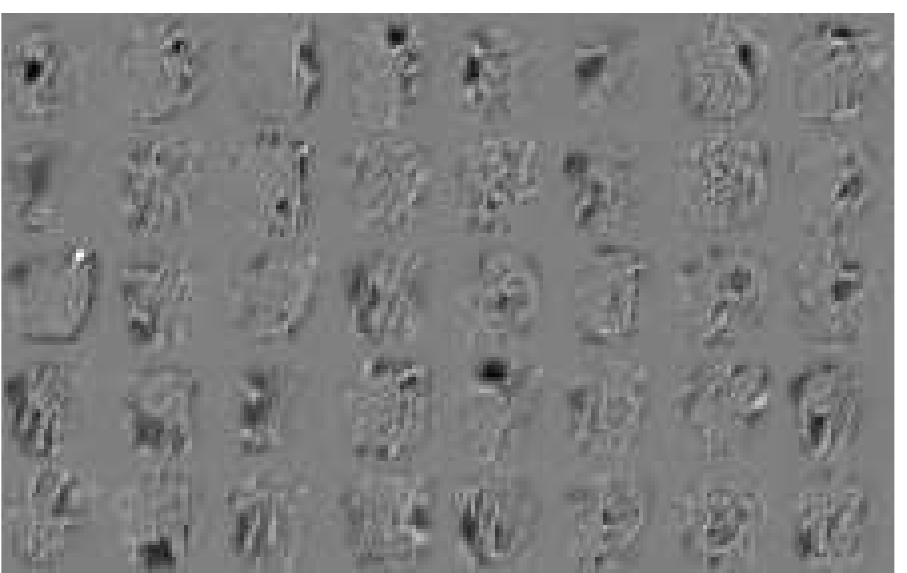}}
\hfill
\subfloat[Square error, $1000$ hidden units]{
\includegraphics[width=\imwidth]{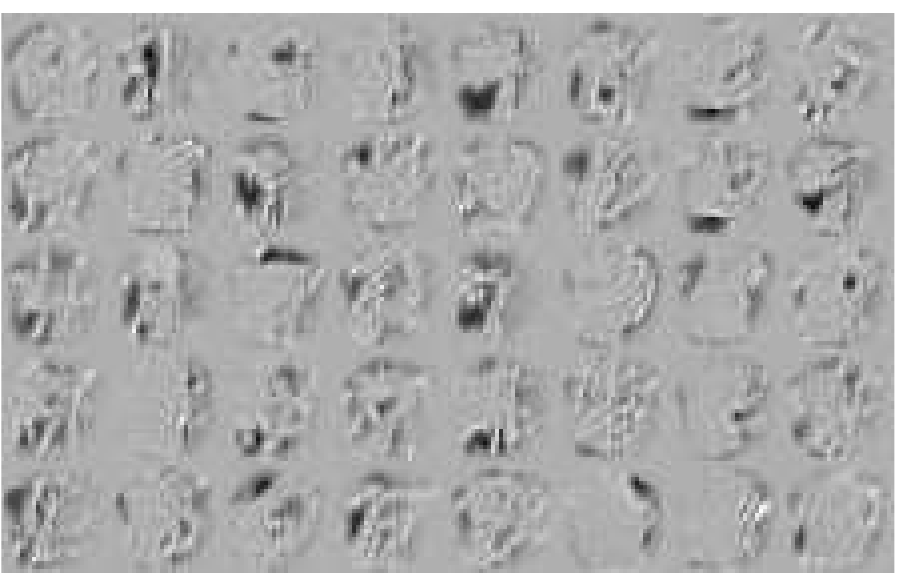}}
\caption{Features learnt by the trained NADEs. The features are the columns of the  matrix that has $\set{\vect{w}_i}$ as rows, reshaped as $28\times 28$ images.}
\label{fig:generative_models:nade_features}
\end{figure}

Figure~\ref{fig:generative_models:image_logprob} visualizes the agreement between each NADE and the RBM\@. To get the plots, the log probability of each of the first $500$ images of the MNIST test set was evaluated under each NADE and the RBM\@. Then, the images were sorted in descending order, according to the log probability the RBM assigns to them. The plots show the log probabilities of each NADE superimposed on the log probabilities of the RBM (the RBM curve is the same for all plots). Note that the true log probability of the RBM is actually intractable, as the partition function is needed to evaluate it. For the purposes of these plots, we set $\log{Z} = 451.28$, which is the value estimated by \citet{Salakhutdinov:2008:quantitative_analysis_of_DBNs}. We can see that in general the shape of the NADE curves match that of the RBM curve, which suggests that a fair amount of the RBM knowledge was distilled into the NADEs. However, the NADEs seem to underestimate the log probability assigned by the RBM\@. This is actually expected, since minimizing $\kl{\prob{\vect{x}}}{\prob{\vect{x}\g \bm{\theta}}}$ encourages $\prob{\vect{x}\g \bm{\theta}}$ to be broader that $\prob{\vect{x}}$, so as to prevent NADE from not having mass in regions where the RBM has mass. Also, for certain images, the log probabilities from the RBM and the NADEs differ by several nats, which shows that the distillation was not perfect. However, in interpreting these results, one should bear in mind that the log probability of the RBM in these plots is only an estimate (albeit a good one), so the conclusions drawn from them are only as good as this estimate.

Figure~\ref{fig:generative_models:nade_samples} shows samples generated from each trained NADE\@. These samples are exact, and they were generated using ancestral sampling\index{Sampling!Ancestral sampling}. It is evident that there is a strong resemblance between samples from NADE and actual images of digits, which shows that the NADEs have managed to learn a lot from the RBM\@. To see if the knowledge distillation was successful, this figure should be contrasted with Figure~\ref{fig:generative_models:samples_from_mnist_and_rbm}, which shows samples from the binarized MNIST dataset and the RBM\@. Obviously, it is easy to discriminate between real digits and samples from the RBM and the NADEs, which indicates that not even the RBM has learnt a perfect distribution estimator for MNIST\@. However, it is not so easy (at least visually) to discriminate between samples from the RBM and samples from the NADEs, suggesting that knowledge distillation has done a fairly good job. Finally, Figure~\ref{fig:generative_models:nade_features} shows some features learnt by each NADE\@.

From the above results, we can see that both KL divergence and square error were successful in distilling the RBM into NADE\@. However, it appears that KL divergence performs slightly better than square error. In Figure~\ref{fig:generative_models:training_progress}, we can see that the performance of KL divergence continues to increase with the number of samples, whereas square error seems to reach a plateau fairly quickly, after which the performance shows signs of declining. This is also reflected in the results of Table~\ref{table:generative_models:mnist_log_prob}, where KL divergence is shown to achieve a better performance at the end of training---although not significantly better. On the other hand, in the beginning of training, NADE appears to learn faster with square error. This is due to the fact that square error uses the unnormalized probabilities of the RBM in addition to sampling from it, therefore it transfers more information about the RBM to NADE\@. In contrast, KL divergence relies only on samples to transfer information from the RBM to NADE\@. This suggests that a potentially good strategy is to start training with square error to benefit from fast training, and after a few iterations switch to KL divergence to take advantage of the continuing improvement in performance.

Note that we also experimented with score matching. However, we found that score matching fails to train NADE satisfactorily. This to be expected, since score matching does not work with discrete probability distributions, and it is only meant to be used with continuous probability distributions, as we have discussed in section~\ref{sec:generative_models:loss_functions}.

\subsection{Estimating the partition function of the RBM}\index{Partition function}

Having access to a tractable generative model that closely mimics an intractable generative model provides a significant opportunity for estimating intractable quantities of the latter. To demonstrate this, in this section we will show how to use the trained NADEs of the previous section in order to obtain good estimates of the partition function of the RBM\@.

Estimating the partition function of an intractable RBM was also done by \citet{Salakhutdinov:2008:quantitative_analysis_of_DBNs}, who used annealed importance sampling\index{Sampling!Annealed importance sampling} for this purpose. Their method involved setting up a chain of $10{,}000$ intermediate distributions between the RBM and a tractable base distribution. Here, we will show that, having access to a mimicking NADE, we can achieve estimates which are as good as theirs with simpler sampling methods.

\subsubsection{Bounds based on KL divergence}
\index{KL divergence}

Training a NADE to mimic the RBM was done by minimizing either the KL divergence from RBM to NADE, or the square error between their log probabilities, which, as we argued, achieves a similar effect. Directly estimating the KL divergence between the trained NADE and the RBM provides a good upper bound to the log partition function. To see why, write the KL as follows
\begin{align}
\kl{\prob{\vect{x}}}{\prob{\vect{x}\g \bm{\theta}}} &= \avg{\log{\prob{\vect{x}}} - \log{\prob{\vect{x}\g \bm{\theta}}}}{\prob{\vect{x}}} \notag\\
&= \avg{\log{\uprob{\vect{x}}} - \log{\prob{\vect{x}\g \bm{\theta}}}}{\prob{\vect{x}}} - \log{Z}.
\end{align}
Due to the non-negativity of the KL divergence, we can upper bound $\log{Z}$ as follows
\begin{equation}
\log{Z} \le \avg{\log{\uprob{\vect{x}}} - \log{\prob{\vect{x}\g \bm{\theta}}}}{\prob{\vect{x}}}.
\end{equation}
The upper bound on the right hand side can be easily estimated by Monte Carlo as follows
\begin{equation}
\avg{\log{\uprob{\vect{x}}} - \log{\prob{\vect{x}\g \bm{\theta}}}}{\prob{\vect{x}}}\approx
\frac{1}{S}\sum_s\br{\log{\uprob{\vect{x}_s}} - \log{\prob{\vect{x}_s\g \bm{\theta}}}}
\quad\text{where}\quad
\vect{x}_s \sim \prob{\vect{x}}.
\end{equation}
Since the objective during training was to minimize $\kl{\prob{\vect{x}}}{\prob{\vect{x}\g \bm{\theta}}}$, this upper bound is optimized to be as tight as possible. Furthermore, its tightness indicates how faithfully NADE mimics the RBM\@.

\begin{table}[tbp]
\renewcommand{\tabcolsep}{0.6cm}
\renewcommand{\arraystretch}{\arrstretchvalue}
\centering
\begin{tabular}{ccc}
\toprule
\textbf{Hidden units} & \textbf{KL divergence} & \textbf{Square error} \\
\midrule
$\bm{250}$  & $459.01 \pm 0.20$ & $459.71 \pm 0.21$ \\
$\bm{500}$  & $458.20 \pm 0.18$ & $458.87 \pm 0.19$ \\
$\bm{750}$  & $457.66 \pm 0.17$ & $458.72 \pm 0.19$ \\
$\bm{1000}$ & $457.70 \pm 0.17$ & $458.52 \pm 0.19$ \\
\bottomrule
\end{tabular}
\caption{Approximate upper bounds of $\log{Z}$ given by estimating $\kl{\prob{\vect{x}}}{\prob{\vect{x}\g \bm{\theta}}}$ for each NADE\@. The lower this value is, the better NADE has fitted the RBM\@. Error bars correspond to $3$ standard deviations.}
\label{table:generative_models:logZ_kl_upper_bound}
\end{table}

\begin{table}[tbp]
\renewcommand{\tabcolsep}{0.6cm}
\renewcommand{\arraystretch}{\arrstretchvalue}
\centering
\begin{tabular}{ccc}
\toprule
\textbf{Hidden units} & \textbf{KL divergence} & \textbf{Square error} \\
\midrule
$\bm{250}$  & $442.80 \pm 0.26$ & $439.21 \pm 0.39$ \\ 
$\bm{500}$  & $442.98 \pm 0.27$ & $440.85 \pm 0.34$ \\
$\bm{750}$  & $443.41 \pm 0.25$ & $439.19 \pm 0.45$ \\
$\bm{1000}$ & $443.74 \pm 0.24$ & $438.11 \pm 0.51$ \\
\bottomrule
\end{tabular}
\caption{Approximate lower bounds of $\log{Z}$ given by estimating $\kl{\prob{\vect{x}\g \bm{\theta}}}{\prob{\vect{x}}}$ for each NADE\@. The higher this value is, the better NADE has fitted the RBM\@. Error bars correspond to $3$ standard deviations.}
\label{table:generative_models:logZ_kl_lower_bound}
\end{table}

We can use the same technique to calculate approximate lower bounds for $\log{Z}$ by estimating the KL divergence the other way round, from NADE to RBM\@. We rewrite this KL divergence as follows
\begin{align}
\kl{\prob{\vect{x}\g \bm{\theta}}}{\prob{\vect{x}}} &= \avg{\log{\prob{\vect{x}\g \bm{\theta}}} - \log{\prob{\vect{x}}}}{\prob{\vect{x}\g \bm{\theta}}} \notag\\
&= \avg{\log{\prob{\vect{x}\g \bm{\theta}}} - \log{\uprob{\vect{x}}}}{\prob{\vect{x}\g \bm{\theta}}} + \log{Z},
\end{align}
and by non-negativity we get the following lower bound
\begin{equation}
\avg{\log{\uprob{\vect{x}}} - \log{\prob{\vect{x}\g \bm{\theta}}}}{\prob{\vect{x}\g \bm{\theta}}} \le \log{Z},
\end{equation}
which can be easily estimated by Monte Carlo as follows
\begin{equation}
\avg{\log{\uprob{\vect{x}}} - \log{\prob{\vect{x}\g \bm{\theta}}}}{\prob{\vect{x}\g \bm{\theta}}}\approx
\frac{1}{S}\sum_s\br{\log{\uprob{\vect{x}_s}} - \log{\prob{\vect{x}_s\g \bm{\theta}}}}
\quad\text{where}\quad
\vect{x}_s \sim \prob{\vect{x}\g \bm{\theta}}.
\end{equation}
The higher this lower bound is, the better NADE has fitted the RBM\@. However, unlike the upper bound we calculated earlier, we expect this lower bound to be somewhat loose, since it was not optimized directly during training.

Tables~\ref{table:generative_models:logZ_kl_upper_bound} and \ref{table:generative_models:logZ_kl_lower_bound} show the estimated upper and lower bounds respectively given by each NADE\@. We used $10{,}000$ samples for each estimate. Samples from the RBM were obtained by running $10{,}000$ parallel Gibbs chains, randomly initialized and burned in for $2000$ iterations. Judging from the tightness of the bounds, we can see that NADE fits the RBM better with KL divergence, a result we also observed in the experiments of section~\ref{sec:rbm_vs_nade:distillation}. Also, the fit improves with the number of hidden units, which is to be expected since more hidden units add extra flexibility. We can thus conclude that the best models are the NADEs with $750$ and $1000$ hidden units, trained with KL divergence.

For comparison, $\log{Z}$ was estimated by \citet{Salakhutdinov:2008:quantitative_analysis_of_DBNs} to be $451.28$ nats, which, as expected, is closer to the estimated upper bounds than the estimated lower bounds. Note that we can always trivially lower bound $\log{Z}$, since
\begin{equation}
\max_{\vect{x}}{\log{\uprob{\vect{x}}}} \le \log{Z}.
\end{equation}
By approximately maximizing $\log{\uprob{\vect{x}}}$ using random sampling followed by hill climbing, we obtained a lower bound equal to $436.49$ nats. The lower bounds in Table~\ref{table:generative_models:logZ_kl_lower_bound} are all better than this.

\subsubsection{Importance sampling}
\index{Sampling!Importance sampling}

Importance sampling \citep[section 29.2]{MacKay:2002:IT} can be used in order to estimate the partition function. Given a tractable proposal distribution $q\br{\vect{x}}$ that is non-zero wherever $\prob{\vect{x}}$ is non-zero, importance sampling is based on rewriting the partition function as an expectation over $q\br{\vect{x}}$ as follows
\begin{equation}
Z = \sum_\vect{x}\uprob{\vect{x}} = \sum_\vect{x}\frac{\uprob{\vect{x}}}{q\br{\vect{x}}}q\br{\vect{x}} = \avg{\frac{\uprob{\vect{x}}}{q\br{\vect{x}}}}{q\br{\vect{x}}}.
\end{equation}
Hence, $Z$ can be estimated by Monte Carlo as follows
\begin{equation}
Z \approx \frac{1}{S}\sum_s \frac{\uprob{\vect{x}_s}}{q\br{\vect{x}_s}}
\quad\text{where}\quad
\vect{x}_s\sim q\br{\vect{x}_s}.
\end{equation}
The above estimate is only as good as the proposal distribution. To work well, $q\br{\vect{x}}$ must be as similar to $\prob{\vect{x}}$ as possible. If there is significant discrepancy between the two distributions, the above estimator has high variance.

In our framework, NADEs that were trained by minimizing $\kl{\prob{\vect{x}}}{\prob{\vect{x}\g \bm{\theta}}}$ will tend to mimic closely the RBM and at the same time avoid at all costs not putting mass where the RBM has mass. Thanks to these properties, such NADEs can serve as effective proposal distributions for importance sampling. In this section, we demonstrate that proposing from a trained NADE can make a method as simple as importance sampling work decently.

Table~\ref{table:generative_models:logZ_importance_sampling} shows the estimates of $\log{Z}$ obtained by using each trained NADE as proposal distribution. We used $10{,}000$ exact samples from NADE for each estimate. In general, the estimates agree across NADEs. \citet{Salakhutdinov:2008:quantitative_analysis_of_DBNs} report an estimate of $451.28$, with a $3$ standard deviation confidence interval of $\br{450.97, 451.52}$. Compared to this, it appears that importance sampling with NADE slightly underestimates $\log{Z}$. Still, it is interesting that a not too dissimilar result can be obtained just by using simple importance sampling.

\begin{table}[tbp]
\renewcommand{\tabcolsep}{0.6cm}
\renewcommand{\arraystretch}{\arrstretchvalue}
\centering
\begin{tabular}{ccc}
\toprule
\textbf{Hidden units} & \textbf{KL divergence} & \textbf{Square error} \\
\midrule
$\bm{250}$  & $450.86\;\br{450.06, 451.30}$ & $450.81\;\br{449.77, 451.31}$ \\ 
$\bm{500}$  & $450.96\;\br{449.94, 451.46}$ & $450.96\;\br{450.33, 451.34}$ \\
$\bm{750}$  & $450.79\;\br{450.47, 451.04}$ & $450.95\;\br{450.40, 451.30}$ \\
$\bm{1000}$ & $450.79\;\br{450.40, 451.07}$ & $451.51\;\br{450.09, 452.08}$ \\
\bottomrule
\end{tabular}
\caption{Estimates of RBM's log partition function based on importance sampling, using each trained NADE as proposal distribution. Brackets show confidence intervals that correspond to $3$ standard deviations.}
\label{table:generative_models:logZ_importance_sampling}
\end{table}

\subsubsection{Bridge sampling}
\index{Sampling!Bridge sampling}

Previously, we argued that the trained NADEs are excellent proposal distributions for estimating the partition function of the RBM\@. However, importance sampling is still a poor method. A better method, with the same proposal distribution, should give better results.

Importance sampling tends to fail when the proposal distribution has little mass in some regions where the target distribution has significant mass. Bridge sampling \citep[section 1]{Neal:2005:LIS} is an improvement over importance sampling that tries to alleviate this problem. Bridge sampling uses a ``bridge distribution'' $p_b\br{\vect{x}} = \frac{1}{Z_b}\bar{p}_b\br{\vect{x}}$, which is constructed to be the overlap between $\prob{\vect{x}}$ and $q\br{\vect{x}}$. Then, using standard importance sampling, the following two quantities are estimated
\begin{equation}
Z_b = \avg{\frac{\bar{p}_b\br{\vect{x}}}{q\br{\vect{x}}}}{q\br{\vect{x}}}
\quad\quad\text{and}\quad\quad
\frac{Z_b}{Z} = \avg{\frac{\bar{p}_b\br{\vect{x}}}{\uprob{\vect{x}}}}{\prob{\vect{x}}}.
\end{equation}
Notice that, by construction, $p_b\br{\vect{x}}$ does not have regions with significant mass that do not also have enough mass under $\prob{\vect{x}}$ and $q\br{\vect{x}}$, hence importance sampling works well in this case. Having estimated the above quantities, $Z$ can be trivially estimated by their ratio.

When the same number of importance samples has been used from both $\prob{\vect{x}}$ and $q\br{\vect{x}}$, the asymptotically optimal choice of bridge distribution \citep{Meng:1996:simulatingratios} is the following
\begin{equation}
\bar{p}_b\br{\vect{x}} = \frac{q\br{\vect{x}}\uprob{\vect{x}}}{Zq\br{\vect{x}} + \uprob{\vect{x}}}.
\end{equation}
In the above, the value of $Z$ is needed, which is what the method is trying to estimate. Thus, the method can be set up as a fixed point system; initialize $Z$ (e.g.~to $1$), calculate $\bar{p}_b\br{\vect{x}}$, use it to estimate $Z$, and repeat until convergence. In our experiments, we found that in practice only few iterations are necessary.

Following the above description of bridge sampling, we used it together with the trained NADE as proposal distribution. Our algorithm is described below.
\begin{framed}
\begin{enumerate}[label=(\roman*)]
\item\label{generative_models:bridge_sampling:step1} Generate $S$ exact samples from NADE\@.
\item\label{generative_models:bridge_sampling:step2} Run $S$ parallel Gibbs chains, initialized with the NADE samples, to generate $S$ independent RBM samples.
\item Initialize $Z = 1$.
\item\label{generative_models:bridge_sampling:step4} Iterate between calculating $\bar{p}_b\br{\vect{x}}$ and calculating $Z$. Reuse the samples collected in step \ref{generative_models:bridge_sampling:step1} and step \ref{generative_models:bridge_sampling:step2} throughout.
\end{enumerate}
\end{framed}
\noindent Note that in the above algorithm we take advantage of the fact that NADE samples are similar to RBM samples, and we initialize the RBM Gibbs chains with them. This way we ensure we obtain good quality samples from the RBM\@. In our experiments, we used $S=10{,}000$ samples and we burned in the RBM Gibbs chains for $2000$ iterations. Also, we iterated step \ref{generative_models:bridge_sampling:step4} $10$ times.

\begin{table}[tbp]
\renewcommand{\tabcolsep}{0.6cm}
\renewcommand{\arraystretch}{\arrstretchvalue}
\centering
\begin{tabular}{ccc}
\toprule
\textbf{Hidden units} & \textbf{KL divergence} & \textbf{Square error} \\
\midrule
$\bm{250}$  & $451.27\;\br{451.14, 451.39}$ & $451.24\;\br{451.10, 451.38}$ \\ 
$\bm{500}$  & $451.28\;\br{451.16, 451.39}$ & $451.30\;\br{451.17, 451.43}$ \\
$\bm{750}$  & $451.22\;\br{451.11, 451.33}$ & $451.21\;\br{451.08, 451.33}$ \\
$\bm{1000}$ & $451.24\;\br{451.13, 451.35}$ & $451.36\;\br{451.23, 451.49}$ \\
\bottomrule
\end{tabular}
\caption{Estimates of RBM's log partition function based on bridge sampling, using each trained NADE as proposal distribution. Brackets show confidence intervals that correspond to $3$ standard deviations.}
\label{table:generative_models:logZ_bridge_sampling}
\end{table}

Table~\ref{table:generative_models:logZ_bridge_sampling} shows the estimates of $\log{Z}$ using the above bridge sampling procedure, for each trained NADE\@. We can see that all estimates are consistent with each other and have small variance. This suggests that bridge sampling provides more accurate and more reliable estimates of $\log{Z}$ compared to importance sampling. For comparison, \citet{Salakhutdinov:2008:quantitative_analysis_of_DBNs}, using annealed importance sampling, report an estimate of $451.28$, with a $3$ standard deviation confidence interval of $\br{450.97, 451.52}$. The fact that our results are in close agreement with theirs, even though they were obtained by different means, is strong evidence for their correctness.\footnote{Note however that consistency does not necessarily imply correctness. It could still be the case that both our estimates and those of \citet{Salakhutdinov:2008:quantitative_analysis_of_DBNs} are not correct.}

\section{Related work and extensions}

In this section, we review two general techniques that are related to knowledge distillation in generative models. These are \emph{variational inference} and \emph{generative adversarial training}. Incorporating these techniques would be an interesting extension to the knowledge distillation framework presented in this chapter.

\subsection{Variational inference}
\index{Variational inference}

Variational inference originated as a method for approximate Bayesian inference and has its roots in statistical physics. In Bayesian inference, predictions are made by taking expectations over posterior distributions that are often intractable. The idea behind variational inference is to replace the intractable posterior distribution with a simpler one, over which expectations become tractable.

Let $\prob{\vect{x}} = \frac{1}{Z}\uprob{\vect{x}}$ be the intractable distribution of interest and  $\prob{\vect{x}\g \bm{\theta}}$ be a tractable distribution, parameterized by $\bm{\theta}$. Variational inference approximates $\prob{\vect{x}}$ with the particular $\prob{\vect{x}\g \bm{\theta}}$ that minimizes $\kl{\prob{\vect{x}\g \bm{\theta}}}{\prob{\vect{x}}}$. If $\prob{\vect{x}\g \bm{\theta}}$ is chosen such that expectations over it are analytically tractable, then minimizing the above KL divergence is particularly convenient. To see why, rewrite the KL divergence as follows
\begin{equation}
\kl{\prob{\vect{x}\g \bm{\theta}}}{\prob{\vect{x}}} = -F\br{\bm{\theta}} + \log{Z},
\end{equation}
where
\begin{equation}
F\br{\bm{\theta}} = \avg{\log{\uprob{\vect{x}}}}{\prob{\vect{x}\g \bm{\theta}}} - \avg{\log{\prob{\vect{x}\g \bm{\theta}}}}{\prob{\vect{x}\g \bm{\theta}}}.
\end{equation}
The above quantity $F\br{\bm{\theta}}$ is tractable, because it involves expectations of tractable quantities over $\prob{\vect{x}\g \bm{\theta}}$, and is known as \emph{variational free energy}\index{Variational free energy}. Minimizing the KL divergence is equivalent to maximizing the variational free energy. Due to the non-negativity of the KL divergence, it is easy to see that $F\br{\bm{\theta}}$ is a lower bound on $\log{Z}$.

Variational inference is closely related to knowledge distillation with KL divergence (as described in section~\ref{sec:generative_models:loss_functions}). Their difference is that the former minimizes $\kl{\prob{\vect{x}\g \bm{\theta}}}{\prob{\vect{x}}}$ whereas the later minimizes $\kl{\prob{\vect{x}}}{\prob{\vect{x}\g \bm{\theta}}}$. Nevertheless, despite their conceptual similarity, these two techniques are computationally different, and have different requirements regarding the choice of $\prob{\vect{x}\g \bm{\theta}}$. Variational inference typically requires the expectations over $\prob{\vect{x}\g \bm{\theta}}$ to be analytically tractable, which limits the choice of $\prob{\vect{x}\g \bm{\theta}}$ severely. For instance, expectations under NADE are not analytically tractable, and therefore the variational framework is not directly applicable to distilling RBMs into NADEs. On the other hand, knowledge distillation only requires $\prob{\vect{x}\g \bm{\theta}}$ to be differentiable, which is a much weaker requirement. Making variational inference work well with models like NADE is a future research direction.

For more information and details, variational inference and its applications are discussed in length by \citet[chapter 10]{Bishop:2006:PRML} and \citet[chapter 21]{Murphy:2012:MLPP}.

\subsection{Generative adversarial training}
\index{Generative adversarial training}

If two generative models are equivalent, then, given a sample from one of them, it would be hard to tell which model it came from. Generative adversarial training \citep{Goodfellow:2014:GAN} takes this idea and turns it into a training objective, in order to train a student generative model to mimic a teacher generative model.

Figure~\ref{fig:generative_models:generative_adversarial_training} provides a schematic representation of generative adversarial training. The setup includes a teacher generative model $\prob{\vect{x}}$, a student generative model $\prob{\vect{x}\g \bm{\theta}}$, parameterized by $\bm{\theta}$, and an adversary $f\br{\vect{x}\g \bm{\phi}}$, parameterized by $\bm{\phi}$. Given a sample $\vect{x}$, the job of the adversary is to output how likely it thinks it is that $\vect{x}$ came from the student or the teacher. During training, both the teacher and the student generate samples, and the adversary tries to predict their model of origin. The student and the adversary are trained simultaneously; the training objective for the student is to fool the adversary, and the training objective for the adversary is to improve so as not to be fooled. As this min-max game between the student and the adversary progresses, the student adapts in order to generate samples that look like those of the teacher, following the training signal from the adversary.

\begin{figure}[tbp]
\centering
\begin{tikzpicture}[>=stealth']

\tikzstyle{block} = [draw, rectangle, thick, inner sep=8pt, color=black]

\node[block] (loss) {\textbf{adversary}};
\node[block, below left=1.2cm  and 0.1cm of loss] (teacher) {\textbf{teacher}}; 
\node[block, below right=1.2cm and 0.1cm of loss] (student) {\textbf{student}}; 
	
\draw [->, semithick] (teacher) |- node [left,  pos=0.3] {\textit{sample}} (loss);
\draw [->, semithick] (student) |- node [right, pos=0.3] {\textit{sample}} (loss);
\draw [->, semithick] (loss)    |- node [right, pos=0.2] {\textit{train}} (student);

\end{tikzpicture}
\caption{Schematic representation of generative adversarial training. The student is trained to fool the adversary, whereas the adversary is trained to discriminate between the teacher and the student.}
\label{fig:generative_models:generative_adversarial_training}
\end{figure}
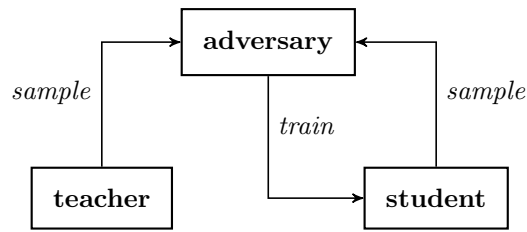

The main advantage of generative adversarial training is that it requires very little from $\prob{\vect{x}}$ and $\prob{\vect{x}\g \bm{\theta}}$ and thus exhibits broad applicability. There are only two main requirements; (a) it must be possible to sample from both $\prob{\vect{x}}$ and $\prob{\vect{x}\g \bm{\theta}}$, and (b) $\prob{\vect{x}\g \bm{\theta}}$ must be differentiable. The downside is that it can be tricky to get it to work well; there are critical decisions to be made regarding how much capacity to allow to the adversary and how long the adversary should be trained for before an update to the student is made. Also, in case we can evaluate $\prob{\vect{x}}$ and $\prob{\vect{x}\g \bm{\theta}}$, it is not clear whether the adversarial framework is any better than directly trying to minimize a discrepancy measure between $\prob{\vect{x}}$ and $\prob{\vect{x}\g \bm{\theta}}$. In any case, incorporating generative adversarial training is an interesting future direction, as it would add flexibility and generality to the knowledge distillation framework.

\section{Summary and conclusions}

In this chapter we presented a framework for distilling intractable generative models to tractable ones. We managed to achieve this by using loss functions that circumvent the intractable partition function, while at the same time capture as much information about the intractable model as possible.

We validated our framework by successfully distilling an intractable RBM to a tractable NADE\@. As an application of the distillation, we showed that the distilled NADE can be successfully used to estimate the intractable partition function of the RBM, using simple sampling methods that would not work well otherwise. This constitutes a novel method of estimating the partition function, and, to the best of our knowledge, this thesis is the first in the literature to describe it.

We concluded the chapter by examining two approaches relevant to knowledge distillation, namely variational inference and generative adversarial training. In future work, these two approaches can be incorporated into our framework, extending its flexibility and applicability.

Even though in principle the framework can be used as much for continuous as for discrete probability distributions, our evaluation focused on the latter kind. Due to this, score matching, even though it was included and implemented in our framework, was left out of the evaluation, since it only works with continuous probability distributions. Future work is needed in order to validate the framework and score matching in particular in the case of continuous distributions as well.

\chapter{Conclusions}
\label{chapter:conclusions}

Our goal in this thesis was to show how we could take a cumbersome but accurate model, and replace it with a convenient model that does the same job just as well. Imagining we could extract the knowledge hidden inside the cumbersome model, distil it, and inject it into the convenient model, we referred to the process of constructing the convenient model from the cumbersome one as knowledge distillation\index{Knowledge distillation}.

Our approach to knowledge distillation was to train the convenient model to mimic the cumbersome one, by making it observe the behaviour of the latter and penalizing it whenever it failed to reproduce it accurately. We applied this approach to three distinct and fairly general fields of machine learning: discriminative models, Bayesian inference and generative models. For each field, we presented a knowledge distillation framework, and successfully applied it to an example problem within the field.

Our work brought together ideas often referred to in the literature as knowledge distillation, model compression, compact approximations, mimicking models and teacher-student models. Our view in this thesis was that all these different terms refer to similar concepts and ideas. By viewing them as different manifestations of knowledge distillation, we were able to treat them in a unifying manner.

We contributed to the literature of knowledge distillation with three main novel ideas. Firstly, we introduced the concept of online distillation in the context of compact predictive distributions (chapter~\ref{chapter:compact_predictive_distributions}). With online distillation, we were able to significantly reduce the memory requirements of existing methods. Secondly, we proposed matching derivatives as part of the distillation process. We showed that matching derivatives can be done for a constant extra cost, and provided an efficient implementation of it. We argued that derivatives can convey more information about the model we wish to distil, justifying the extra cost when data is scarce. Thirdly, in the context of generative models (chapter~\ref{chapter:generative_models}), we demonstrated that the distilled model can be used for estimating intractable quantities of the original model, and we successfully applied this idea in estimating the partition function of an intractable RBM\@.

\section{Knowledge distillation: is it worth it?}

Knowledge distillation relies on the premise that the convenient model is capable of doing the job of the cumbersome model just as well. However, in the introduction (chapter~\ref{chapter:introduction}) we argued that the cumbersome model often owes its success to those characteristics that make it cumbersome, such as its size and its complexity. Then, how can we expect a simple model to be able to perform just as well? On the other hand, if a simple model does indeed perform just as well, then why take the trouble to train the cumbersome model at all? Why not simply train and use the convenient model in the first place?

It is true indeed that a simple model will typically have less capacity than a larger, more complicated model. For instance, in our case study in chapter~\ref{chapter:model_compression}, where we distilled an ensemble of neural networks into a single small neural network, we observed that the single network performed less well than the ensemble. Our goal there was to perform the distillation in such a way as to make this decrease in performance as small as possible. Moreover, in our case study in chapter~\ref{chapter:generative_models}, where an RBM was distilled into a NADE, we found that a NADE with at least the same capacity as the RBM was needed for accurately mimicking it. In cases such as the above, distillation makes sense as long as the decrease in performance is counterbalanced by the benefits offered by the convenient model.

On the other hand, the fact that a larger model typically has more capacity does not necessarily mean that the particular function it represents could not in principle  be learnt by a simpler model. It could be the case that the larger model wastes capacity. For instance, in our two case studies in chapter~\ref{chapter:compact_predictive_distributions}, we observed that the prediction surfaces represented by large bags of MCMC samples were actually rather simple and smooth, and therefore could be easily represented by more compact models. After all, this is exactly what knowledge distillation as a metaphor is all about: the fact that the knowledge of the large model can be contained in a condensed, distilled form in the compact model, not wasting capacity.

But if the knowledge can fit in a compact model, why then not just train and use the compact model in the first place? Certainly, in several cases this is the right thing to do. However, there are situations where training a larger model and then distilling it into a smaller one might actually be preferable. In the following paragraphs, we describe four such situations.

Firstly, it might not be at all obvious how the compact model could be trained directly. In chapter~\ref{chapter:compact_predictive_distributions}, where a compact model was trained to approximate the true predictive distribution in the context of Bayesian inference, we needed to explore the posterior with MCMC in order to extract information from it. It is not obvious how we could have fitted a compact model to the predictive distribution directly, since the true predictive distribution is typically intractable.\footnote{\citet{Snelson:2005:vb_pred} tried fitting a compact model directly to the predictive distribution using variational inference, but with little success.}

Secondly, the fact that a compact model can in principle represent the knowledge of a large model does not necessarily mean that it can also extract it directly from data. Perhaps our learning algorithms are simply not good enough yet. Existing learning algorithms might favour larger models, and more capacity might be needed for the successful extraction of knowledge from data. Hence, it might be more effective to first extract the knowledge from the data by a large capacity-wasting model, and then successfully distil it to a more compact model.

Thirdly, we might not have enough labelled data for training the compact model to begin with. A large model, together with proper regularization, might be better at learning from a small dataset. On the other hand, as we have seen, knowledge distillation does not suffer from shortage of labelled data, as it does not need labels at all. For instance, in the context of model compression (chapter~\ref{chapter:model_compression}) all labelling is done by the teacher model. As long as we have access to a suitable data generator (and in this work we demonstrated ways for building one) then we can have as much input data as we need.

Fourthly, it could be the case that we actually want the convenient model to be consistent with the cumbersome model. For example, in our case study in chapter~\ref{chapter:generative_models}, we saw that we could obtain better estimates of the partition function of the RBM by having a NADE that matched it as closely as possible. Moreover, it could be the case that the cumbersome model we try to emulate is actually the one and only true model. For instance, in chapter~\ref{chapter:compact_predictive_distributions}, the predictive distribution the compact model was trying to match was the one and only true distribution (according the rules of probability). The same would be true if we were trying to distil an intractable true posterior to a tractable model, as discussed in chapter~\ref{chapter:generative_models}.

\section{What is there next?}

Model compression, Bayesian inference and generative models do not exhaust the applications of knowledge distillation. Another potential application is in learning efficient models of otherwise expensive to evaluate functions. We can imagine having a function we could evaluate, but doing so routinely would be too expensive. Such a function, for instance, could be the result of a long physics simulation. Using knowledge distillation techniques, we could construct an efficient model, such as a neural network, that would compute the same function in only a fraction of the time. Matching derivatives could prove useful in a situation like this, as it could reduce the number of times the original function would have to be called.

Distilling an expensive to evaluate function into a neural network was also done by \citet{Heess:2013:learn_ep_msg}. Their work focuses on speeding up the calculation of messages that need to be passed within a graphical model, when using Expectation Propagation for doing inference. In EP, outgoing messages are deterministic functions of incoming messages that are often expensive to evaluate. By distilling the functions that compute messages into neural networks, they were able to make EP run in only a fraction of the time.

Another potential extension of knowledge distillation could be in the framework itself. As we have seen in previous chapters, knowledge distillation often requires the solution of an optimization problem of the form
\begin{equation}
\min_{\bm{\theta}}{\avg{E\br{\bm{\vect{x},\theta}}}{\prob{\vect{x}}}},
\end{equation}
where $E\br{\bm{\vect{x},\theta}}$ is some loss and $\prob{\vect{x}}$ is some input distribution. In our framework, we proposed optimizing objectives of the above form using a stochastic approximation, by first generating samples from $\prob{\vect{x}}$ and then minimizing the empirical average over the samples. This approach prioritizes the minimization of $E\br{\bm{\vect{x},\theta}}$ at locations $\vect{x}$ with high probability. However, we could imagine prioritizing locations $\vect{x}$ for which the loss is high as well. In other words, the minimization procedure could actively seek input locations for which minimizing the loss would have the strongest impact. Such a scheme may need fewer loss evaluations, and would be particularly beneficial when such evaluations are expensive.

In this thesis we validated our knowledge distillation framework on example problems and standard machine learning benchmarks, such as the MNIST dataset of handwritten digits. It remains to be seen how useful knowledge distillation will prove to be in large-scale industrial applications. Our hope is that the techniques and ideas presented in this thesis will contribute to extending the applicability of machine learning to yet more interesting and useful practical problems.

\appendix

\chapter{The R Technique}
\label{chapter:R_technique}

Let $f$ be a scalar-valued function of a multidimensional vector $\vect{x}$. Assume we are interested in calculating $\mat{H}\vect{v}$, where $\mat{H}$ is the Hessian of $f$, i.e.~the matrix of all second derivatives of $f$, and $\vect{v}$ is an arbitrary vector of the same size as $\vect{x}$. Calculating and storing $\mat{H}$ takes quadratic time and space in the size of $\vect{x}$, making it inefficient for a large number of inputs. Nevertheless, storing $\mat{H}\vect{v}$ takes linear space in the size of $\vect{x}$ and, as we will describe, computing it can be done in linear time as well.

Calculating $\mat{H}\vect{v}$  in linear time is made possible by the \emph{R technique}\index{R technique}, first described by \citet{Pearlmutter:1994:fast_mult_hess}. Given some vector-valued function $\vect{g}$ of $\vect{x}$ and a vector $\vect{v}$ of the same size as $\vect{x}$, we define the operator $\Rop{\cdot}$ as
\begin{equation}
\Rop{\vect{g}} = \br{\pderiv{\vect{g}}{\vect{x}}}^T\vect{v}.
\end{equation}
Since $\Rop{\cdot}$ is a differential operator, its properties are similar to the familiar properties of derivatives. The following properties can be easily proved using the definition.
\begin{enumerate}[label=(\roman*)]
\item $\Rop{\vect{c}} = \vect{0}$ for any constant $\vect{c}$
\item $\Rop{\vect{x}} = \vect{v}$
\item $\Rop{\vect{g}_1+\vect{g}_2} = \Rop{\vect{g}_1} + \Rop{\vect{g}_2}$
\item $\Rop{\vect{g}_1\vect{g}_2} = \Rop{\vect{g}_1}\vect{g}_2 + \vect{g}_1\Rop{\vect{g}_2}$
\item $\Rop{\vect{g}_1\br{\vect{g}_2}} = \br{\pderiv{\vect{g}_1}{\vect{g}_2}}^T\Rop{\vect{g}_2}$
\end{enumerate}
Using the $\Rop{\cdot}$ operator, it is easy to see that $\mat{H}\vect{v}$ can be written as $\Rop{\pderiv{f}{\vect{x}}}$. The latter can then be calculated in linear time by the same equations that are used to calculate $\pderiv{f}{\vect{x}}$, after transforming them by $\Rop{\cdot}$ making use of the above list of properties.

\chapter{Derivatives in Neural Networks}
\label{chapter:derivatives_in_neural_networks}
\index{Neural network}

A feedforward neural network is a function $f$, parameterized by a set of parameters $\bm{\theta}$, that maps some input $\vect{x}$ to some output $\vect{y}$. Mathematically, we write
\begin{equation}
\vect{y} = f\br{\vect{x}\g\bm{\theta}}.
\end{equation}
In this appendix, we discuss forward and backward propagation in neural networks, as well as common nonlinearities and output functions, with a focus on how to efficiently calculate first and second derivatives.

\section{Computing the output: forward propagation}
\index{Forward propagation}

The network $f$ is made of $L$ cascaded layers, each consisting of a linear and a nonlinear phase, together with an input layer. Let $\vect{x}^{\br{\ell}}$ be the output of the $\nth{\ell}$ layer, with $\ell=0$ corresponding to the input layer. Then, $f$ is defined by the following recursive procedure, known as \emph{forward propagation}.
\begin{align}
\vect{x}^{\br{0}} &= \vect{x} \\
\vect{z}^{\br{\ell}} &= \mat{W}^{\br{\ell}}\vect{x}^{\br{\ell-1}} + \vect{b}^{\br{\ell}} \\
\vect{x}^{\br{\ell}} &= \bm{\phi}^{\br{\ell}}\br{\vect{z}^{\br{\ell}}} \\
\vect{y} &= \vect{x}^{\br{L}}
\end{align}
The network's adjustable parameters $\bm{\theta}$ are the collection of parameters from all linear phases, that is, $\bm{\theta} = \set{\mat{W}^{\br{\ell}}, \vect{b}^{\br{\ell}}}$. The nonlinear phase of the $\nth{\ell}$ layer is specified by $\bm{\phi}^{\br{\ell}}$, which in practice can be any convenient nonlinear function. The choice of $\bm{\phi}^{\br{\ell}}$ and the number of layers $L$ are part of the network's design  and do not belong to the set of adjustable parameters.

In order to reduce unnecessary clutter and make the calculations more explicit, we will drop the dependence on $\ell$ and expand the matrix equations, and thus we will express forward propagation as
\begin{align}
z_i &= \sum_j{w_{ij}x_j} + b_i \\
x_i &= \phi_i\br{z_1,z_2,\ldots,z_I}.
\end{align}
In a slight abuse of notation, our convention is that variables indexed by $i$ belong to layer $\ell$ and variables indexed by $j$ belong to layer $\ell-1$.

\section{Computing first derivatives: backward propagation}
\label{sec:derivatives_in_neural_nets:backprop}
\index{Backward propagation}

Assume that the network's output $\vect{y}$ is mapped to a real value $E\br{\vect{y}}$ by some differentiable function $E$. We would like to compute the derivatives of $E$ with respect to the network's parameters $\bm{\theta}$ and inputs $\vect{x}$, that is $\pderiv{E}{\bm{\theta}}$ and $\pderiv{E}{\vect{x}}$. This is easily done by applying the chain rule from calculus to the equations describing forward propagation, yielding the following recursive  procedure known as \emph{backward propagation} or \emph{backprop} for short.
\begin{align}
\pderiv{E}{b_i} &= \sum_{i'}{\pderiv{E}{x_{i'}}\pderiv{\phi_{i'}}{z_i}} \label{eq:backprop:dEdb}\\
\pderiv{E}{w_{ij}}&= \pderiv{E}{b_i} x_j 
\label{eq:backprop:dEdw}\\
\pderiv{E}{x_j} &= \sum_{i'}{\pderiv{E}{b_{i'}}w_{i'j}} 
\end{align}
The recursion starts by setting $\pderiv{E}{\vect{x}^{\br{L}}} = \pderiv{E}{\vect{y}}$, which we assume we can calculate, and works from the output layer all the way down to the input layer. In each iteration, it yields $\pderiv{E}{\mat{W}^{\br{\ell}}}$ and $\pderiv{E}{\vect{b}^{\br{\ell}}}$, and at the end of the recursion, it outputs $\pderiv{E}{\vect{x}} = \pderiv{E}{\vect{x}^{\br{0}}}$. Forward propagation is assumed to have been run in advance, so that  quantities $\vect{x}^{\br{\ell}}$ and $\pderiv{\bm{\phi}^{\br{\ell}}}{\vect{z}^{\br{\ell}}}$ are available. Note that if only $\pderiv{E}{\vect{x}}$ is needed, then Equation~\eqref{eq:backprop:dEdw}---which in practice is the most time-consuming---can be skipped.

In the common case where the nonlinearities $\phi_i$ act elementwise, we have $\pderiv{\phi_{i'}}{z_i} = 0$ when $i'\neq i$, so Equation~\eqref{eq:backprop:dEdb} can be simplified to 
\begin{equation}
\pderiv{E}{b_i} = \pderiv{E}{x_{i}}\pderiv{\phi_{i}}{z_i}.
\end{equation}

\section{Computing second derivatives: R\{backprop\}}
\label{sec:derivatives_in_neural_nets:Rbackprop}
\index{R\{backprop\}}

Computing the matrix of second derivatives (i.e.~the Hessian) of $E$ with respect to the network's parameters $\bm{\theta}$ and inputs $\vect{x}$ is in principle possible, by applying the chain rule to the procedure jointly defined by forward and backward propagation. However, calculating or even storing the Hessian takes quadratic time and space in the size of $\bm{\theta}$ and $\vect{x}$, making it inefficient for large networks. Nevertheless, in practice what is usually needed is not the full Hessian $\mat{H}$ but its product with some vector $\vect{v}$, i.e.~$\mat{H}\vect{v}$. Storing $\mat{H}\vect{v}$ takes linear space in the size of $\bm{\theta}$ and $\vect{x}$ and it can be computed in linear time using the R technique\index{R technique} (described in appendix~\ref{chapter:R_technique}).

It is easy to see that $\mat{H}\vect{v}$ can be written in terms of the  $\Rop{\cdot}$ operator as $\Rop{\pderiv{E}{\pair{\bm{\theta}}{\vect{x}}}}$. The latter can be calculated in linear time by a procedure similar to joint forward and backward propagation, where all the equations have been transformed by $\Rop{\cdot}$. From here on, we will call this procedure \emph{R\{backprop\}}. Using the properties of $\Rop{\cdot}$ and applying it to both sides of the equations defining forward propagation, we get the forward phase of R\{backprop\}, which is as follows.
\begin{align}
\Rop{z_i} &= \sum_j{\br{\Rop{w_{ij}}x_j + w_{ij}\Rop{x_j}}} + \Rop{b_i} \label{eq:Rbackprop:Rz}\\
\Rop{x_i} &= \sum_{i'}{\pderiv{\phi_i}{z_{i'}}\Rop{z_{i'}}}\label{eq:Rbackprop:Rx}.
\end{align}
In the above, $\Rop{\mat{W}^{\br{\ell}}}$ and $\Rop{\vect{b}^{\br{\ell}}}$ are simply the elements of $\vect{v}$ that correspond to $\mat{W}^{\br{\ell}}$ and $\vect{b}^{\br{\ell}}$ respectively. Similarly, $\Rop{\vect{x}^{\br{0}}}$ is initialized with the elements of $\vect{v}$ that correspond to $\vect{x}$. Forward propagation is assumed to have been run in advance, so that  quantities $\vect{x}^{\br{\ell}}$ and $\pderiv{\bm{\phi}^{\br{\ell}}}{\vect{z}^{\br{\ell}}}$ are available. Note also that in the common case where the nonlinearities $\phi_i$ act elementwise, Equation~\eqref{eq:Rbackprop:Rx} simplifies to
\begin{equation}
\Rop{x_i} = \pderiv{\phi_i}{z_{i}}\Rop{z_{i}}.
\end{equation}
Next, we apply $\Rop{\cdot}$ to both sides of the backprop equations to get the backward phase of R\{backprop\}, which is as follows.
 \begin{align}
\Rop{\pderiv{E}{b_i}} &= \sum_{i'}{\br{\Rop{\pderiv{E}{x_{i'}}}\pderiv{\phi_{i'}}{z_i} + \pderiv{E}{x_{i'}}\Rop{\pderiv{\phi_{i'}}{z_i}}}} \label{eq:Rbackprop:RdEdb}\\
\Rop{\pderiv{E}{w_{ij}}} &=\Rop{ \pderiv{E}{b_i}} x_j + \pderiv{E}{b_i} \Rop{x_j}\\
\Rop{\pderiv{E}{x_j}} &= \sum_{i'}{\br{\Rop{\pderiv{E}{b_{i'}}}w_{i'j} + \pderiv{E}{b_{i'}}\Rop{w_{i'j}}} }\label{eq:Rbackprop:RdEdx}
\end{align}
In the above, we assume that $\vect{x}^{\br{\ell}}$, $\pderiv{\bm{\phi}^{\br{\ell}}}{\vect{z}^{\br{\ell}}}$ and $\Rop{\pderiv{\bm{\phi}^{\br{\ell}}}{\vect{z}^{\br{\ell}}}}$ are available from a previous run of forward propagation,  $\pderiv{E}{\vect{x}^{\br{\ell}}}$ and $\pderiv{E}{\vect{b}^{\br{\ell}}}$ are available from a previous run of backprop, and $\Rop{\vect{x}^{\br{\ell}}}$ is available from the forward phase of R\{backprop\}. The procedure needs to be initialized with $\Rop{\pderiv{E}{\vect{x}^{\br{L}}}} = \Rop{\pderiv{E}{\vect{y}}}$, whose calculation depends on the choice of function $E$. Finally, in each iteration the procedure yields $\Rop{\pderiv{E}{\mat{W}^{\br{\ell}}}}$ and $\Rop{\pderiv{E}{\vect{b}^{\br{\ell}}}}$, and at the end it outputs $\Rop{\pderiv{E}{\vect{x}}} = \Rop{\pderiv{E}{\vect{x}^{\br{0}}}}$, which collectively form $\mat{H}\vect{v}$. Note that in the common case where the nonlinearities $\phi_i$ act elementwise, Equation~\eqref{eq:Rbackprop:RdEdb} simplifies to
\begin{equation}
\Rop{\pderiv{E}{b_i}} = \Rop{\pderiv{E}{x_{i}}}\pderiv{\phi_{i}}{z_i} + \pderiv{E}{x_{i}}\Rop{\pderiv{\phi_{i}}{z_i}}.
\end{equation}

A couple of simplifications in the R\{backprop\} procedure are possible if only part of the Hessian needs to be considered. Let $\vect{v} = \begin{bmatrix}
\vect{v}_{\bm{\theta}} & \vect{v}_{\vect{x}}
\end{bmatrix}^T$, where $\vect{v}_{\bm{\theta}}$ is the part of $\vect{v}$ corresponding to $\bm{\theta}$ and $\vect{v}_{\vect{x}}$ is the part of $\vect{v}$ corresponding to $\vect{x}$. Then, the Hessian-vector product can be written as
\begin{equation}
\mat{H}\vect{v} = \begin{bmatrix}
\spderiva{E}{\bm{\theta}}\vect{v}_{\bm{\theta}} + \spderivb{E}{\bm{\theta}}{\vect{x}}\vect{v}_{\vect{x}}\\[5pt]
\spderivb{E}{\vect{x}}{\bm{\theta}}\vect{v}_{\bm{\theta}} + \spderiva{E}{\vect{x}}\vect{v}_{\vect{x}}
\end{bmatrix}.
\end{equation}
If only $\spderivb{E}{\bm{\theta}}{\vect{x}}\vect{v}_{\vect{x}}$ and $\spderiva{E}{\vect{x}}\vect{v}_{\vect{x}}$ are needed, we can equivalently set $\vect{v}_{\bm{\theta}} = \vect{0}$. In this case, Equation~\eqref{eq:Rbackprop:Rz} simplifies to 
\begin{equation}
\Rop{z_i} = \sum_j{w_{ij}\Rop{x_j}}
\end{equation}
and Equation~\eqref{eq:Rbackprop:RdEdx} simplifies to
\begin{equation}
\Rop{\pderiv{E}{x_j}} = \sum_{i'}{\Rop{\pderiv{E}{b_{i'}}}w_{i'j}}.
\end{equation}

\section{Common nonlinearities}

In forward propagation, the output  $x_i$ of a layer is given by applying a nonlinearity $\phi_i$, that is
\begin{equation}
x_i = \phi_i\br{z_1,z_2,\ldots,z_I}.
\end{equation}
As part of the network's design, for each layer a nonlinearity $\phi_i$ has to be specified, together with its derivatives $\pderiv{\phi_i}{z_j}$ and $\Rop{\pderiv{\phi_i}{z_j}}$, which are needed for backprop and R\{backprop\} respectively. In this section, we describe some common choices for $\phi_i$, namely linear, rectified linear, logistic, probit and softmax.

\subsection{Linear}
\index{Layer!Linear}

The simplest choice for $\phi_i$ is the identity function, that is
\begin{equation}
\phi_i\br{z_i} = z_i.
\end{equation}
This function acts elementwise and its derivatives are
\begin{equation}
\pderiv{\phi_i}{z_i} = 1\quad\quad\text{and}\quad\quad\Rop{\pderiv{\phi_i}{z_i}} = 0.
\end{equation}
It is easy to see that the expressions in backprop and R\{backprop\} that involve $\pderiv{\phi_i}{z_i}$ or $\Rop{\pderiv{\phi_i}{z_i}}$ can be significantly simplified in this case. 

A linear layer is a good choice for the final layer of a network used for regression.
Note however that a network having only linear layers throughout can only represent linear functions. 

\subsection{Rectified linear}
\label{sec:derivs_neural_nets:relu}
\index{Layer!Rectified linear}

The rectified linear layer (or rectified linear unit---ReLU for short) acts elementwise on its inputs and is defined as 
\begin{equation}
\phi_i\br{z_i} = \max\br{0,z_i}
\end{equation}
that is, it behaves as linear for $z_i>0$ and as constant (zero) otherwise. Its derivatives are
\begin{equation}
\pderiv{\phi_i}{z_i} = \begin{cases}0 & z_i <0 \\ 1 & z_i > 0 \end{cases}
\quad\quad\text{and}\quad\quad\Rop{\pderiv{\phi_i}{z_i}} = 0.
\end{equation}
Notice that $\phi_i$ is not differentiable at one point, namely at  $z_i = 0$. However, since it is differentiable arbitrarily close to it, in practice we can take the derivative at $z_i = 0$ to be either $0$ or $1$. Of course, expressions in R\{backprop\} that involve $\Rop{\pderiv{\phi_i}{z_i}}$ can be simplified due to it being always equal to $0$.

\subsection{Logistic}
\index{Layer!Logistic}

The logistic nonlinearity acts elementwise and is defined as
\begin{equation}
\phi_i\br{z_i} = \frac{1}{1 + \exp\br{-z_i}}.
\end{equation}
Given that $x_i = \phi_i\br{z_i}$, it can easily be shown that its first derivative can be written as
\begin{equation}
\pderiv{\phi_i}{z_i} = x_i\br{1-x_i},
\end{equation}
and therefore
\begin{equation}
\Rop{\pderiv{\phi_i}{z_i}} = \Rop{x_i}\br{1-2x_i}.
\end{equation}
Mathematically it is always the case that $0<\phi_i\br{z_i}<1$. However, due to numerical reasons, it is easy for $\phi_i\br{z_i}$ to underflow to $0$ for large negative $z_i$ or to $1$ for large positive $z_i$, which is undesirable. Therefore care needs to be taken; one possible solution is to limit $z_i$ to be within a safe range.

\subsection{Probit}
\index{Layer!Probit}

The probit function is similar in shape to the logistic and also acts elementwise. It is defined as the cumulative density of the standard normal, that is
\begin{equation}
\phi_i\br{z_i} = \int_{-\infty}^{z_i}{\frac{1}{\sqrt{2\pi}}\exp\br{-\frac{t^2}{2}}\,\mathrm{d}t}.
\end{equation}
Its derivatives can be easily calculated,
\begin{equation}
\pderiv{\phi_i}{z_i} = \frac{1}{\sqrt{2\pi}}\exp\br{-\frac{z_i^2}{2}},
\end{equation}
and, given that $\Rop{x_i} = \pderiv{\phi_i}{z_i}\Rop{z_i}$,
\begin{equation}
\Rop{\pderiv{\phi_i}{z_i}} = -z_i\Rop{x_i}.
\end{equation}
Similarly to the logistic function, it has to be the case that $0<\phi_i\br{z_i}<1$, therefore care has to be taken to avoid underflow to $0$ or $1$ for large negative or positive values of $z_i$ respectively.

\subsection{Softmax and log-softmax}
\label{sec:derivs_neural_nets:softmax}
\index{Layer!Softmax}

Unlike the nonlinearities discussed so far, the softmax function does not act elementwise and it is defined as
\begin{equation}
\phi_i\br{z_1,z_2,\ldots,z_I} = \frac{\exp{z_i}}{\sum_{i'}{\exp{z_{i'}}}}.
\end{equation}
Notice that the outputs $x_i$ of a softmax layer form a categorical probability distribution, since $0<x_i<1$ and $\sum_{i}{x_i}=1$, therefore the softmax function is commonly used in the final layer of networks that perform multi-label classification. Given that  $x_i = \phi_i\br{z_1,z_2,\ldots,z_I}$, its derivatives can be easily calculated to be
\begin{equation}
\pderiv{\phi_i}{z_j} = x_i\br{\delta_{ij} - x_j},
\end{equation}
where $\delta_{ij}$ is the Kronecker delta, and
\begin{equation}
\Rop{\pderiv{\phi_i}{z_j}} = \Rop{x_i}\br{\delta_{ij} - x_j} - x_i\Rop{x_j}.
\end{equation}

When using the softmax, care has to be taken so as to prevent the exponential from overflowing or underflowing. One approach is based on the fact that for any scalar $c$ we have
\begin{equation}
\frac{\exp{\br{z_i - c}}}{\sum_{i'}{\exp{\br{z_{i'} - c}}}} = \frac{\exp{z_i}}{\sum_{i'}{\exp{z_{i'}}}}.
\end{equation}
Taking $c = \max\br{z_1,z_2,\ldots,z_I}$ prevents the exponential from overflowing. However, underflowing is still possible. A better approach is to work entirely with the logarithm of the softmax (or log-softmax for short). In this case, we define
\begin{equation}
\phi_i\br{z_1,z_2,\ldots,z_I} = z_i - \log\sum_{i'}{\exp{z_{i'}}}
\end{equation}
whose derivatives are 
\begin{equation}
\pderiv{\phi_i}{z_j} = \delta_{ij} -\exp{x_j}\quad\quad\text{and}\quad\quad\Rop{\pderiv{\phi_i}{z_j}} = -\br{\exp{x_j}}\Rop{x_j}.
\end{equation}
Note that for any scalars $c$ we also have
\begin{equation}
\br{z_i - c} + \log\sum_{i'}{\exp{\br{z_{i'} - c}}} = z_i + \log\sum_{i'}{\exp{z_{i'}}}
\end{equation}
therefore we can prevent both overflow and underflow by taking $c = \max\br{z_1,z_2,\ldots,z_I}$. 

\section{Useful output functions}

In backprop and R\{backprop\}, we assumed that the network's output $\vect{y}$ is mapped to $E\br{\vect{y}}$ by some differentiable function $E$. Typically, $E$ measures how much the network fits or deviates from a target, and the objective is to maximize or minimize it accordingly with respect to the network's adjustable parameters $\bm{\theta}$. As we have seen, backprop and R\{backprop\} need to be initialized with $\pderiv{E}{\vect{y}}$ and $\Rop{\pderiv{E}{\vect{y}}}$ respectively. Below, we discuss some common choices for $E$ and provide the corresponding derivatives.

\subsection{Square error}
\index{Square error}

The square error is typically used for regression problems, and it is defined as
\begin{equation}
E\br{\vect{y}} = \frac{1}{2}\norm{\vect{y} - \vect{t}}^2,
\end{equation}
for some given target vector $\vect{t}$. Its derivatives are
\begin{equation}
\pderiv{E}{\vect{y}} = \vect{y} - \vect{t}
\quad\quad\text{and}\quad\quad
\Rop{\pderiv{E}{\vect{y}}} = \Rop{\vect{y}}.
\end{equation}

\subsection{Cross entropy}
\index{Cross entropy}

Cross entropy is typically used for classification problems. Given a target vector $\vect{t}$, it is defined as
\begin{equation}
E\br{\vect{y}} = \sum_i{t_i\log{y_i}},
\end{equation}
with derivatives
\begin{equation}
\pderiv{E}{y_i} = \frac{t_i}{y_i}
\quad\quad\text{and}\quad\quad
\Rop{\pderiv{E}{y_i}} = -\frac{t_i}{y_i^2}\Rop{y_i}.
\end{equation}
Cross entropy is suitable when both $\vect{t}$ and $\vect{y}$ are discrete probability distributions (e.g.~$\vect{y}$ could be the output of a softmax layer), in which case maximizing $E$ is the same as minimizing $\kl{\vect{t}}{\vect{y}}$.

For binary classification problems, $t$ and $y$ are typically scalars, and represent the probability of one of the two classes. In this case, the following version of cross entropy can be used
\begin{equation}
E\br{y} = t\log{y} + \br{1-t}\log{\br{1-y}}
\end{equation}
whose derivatives are
\begin{equation}
\pderiv{E}{y} = \frac{t}{y}-\frac{1-t}{1-y}
\quad\quad\text{and}\quad\quad
\Rop{\pderiv{E}{y}} = -\br{\frac{t}{y^2}+\frac{1-t}{\br{1-y}^2}}\Rop{y}.
\end{equation}

\subsection{Dot product}
\index{Dot product}

Given a target vector $\vect{t}$, the dot product is simply
\begin{equation}
E\br{\vect{y}} = \vect{t}^T\vect{y}
\end{equation}
with derivatives
\begin{equation}
\pderiv{E}{\vect{y}} = \vect{t}
\quad\quad\text{and}\quad\quad
\Rop{\pderiv{E}{\vect{y}}} = \vect{0}.
\end{equation}
Note that using the softmax layer in combination with cross entropy is equivalent to using the log-softmax layer in combination with the dot product.

\chapter{Derivatives in NADE}
\label{chapter:derivatives_in_nade}
\index{NADE}

NADE is a generative model, parameterized by a set of parameters $\bm{\theta}$, which takes a binary input vector $\vect{x}$ and outputs its log probability
\begin{equation}
L = \log{\prob{\vect{x}\g\bm{\theta}}}.
\end{equation}
An overview of NADE has been given in section~\ref{sec:generative_models:NADE}. In this appendix, we discuss in more detail forward and backward propagation in NADE, with a focus on how to efficiently calculate the first and second derivatives of $L$.

\section{Computing the output: forward propagation}

NADE is essentially a feedforward neural network with a single hidden layer and a special autoregressive structure. As such, its output $L$ is given by a procedure similar to forward propagation in neural networks. Assuming NADE has $I$ input units, $I$ output units and $J$ hidden units, forward propagation in NADE is described below.
\begin{align}
a_{1j} &= c_j \\
a_{ij} &= a_{i-1,j} + w_{i-1,j}x_{i-1}\quad\text{(for $i>1$)} \\
h_{ij} &= \sigm{a_{ij}} \\
z_i &= \sum_j{u_{ij}h_{ij}} + b_i \\
\hat{x}_i &= \sigm{z_i} \\
L &= \sum_i\br{x_i\log{\hat{x_i}} + \br{1-x_i}\log{\br{1-\hat{x_i}}}}
\end{align}
In the above equations, $u_{ij}$, $b_i$, $w_{ij}$ and $c_j$ are the adjustable parameters of NADE, which are collectively denoted as $\bm{\theta}$. Also, $a_{ij}$ and $h_{ij}$ are the hidden layer activations and hidden layer states respectively, and $z_i$ are the output layer activations. Our notation is that $i$ indexes input and output units, and $j$ indexes hidden units. Note that the number of input/output units $I$ and the number of hidden units $J$ do not belong to the set of adjustable parameters, but are part of NADE's architecture.

\section{Computing first derivatives: backward propagation}
\index{Backward propagation}

Computing the derivatives of NADE with respect to the parameters $\pderiv{L}{\bm{\theta}}$ and with respect to the input $\pderiv{L}{\vect{x}}$ can easily be done by applying the chain rule of calculus to the equations describing forward propagation. This yields a procedure similar to backward propagation in neural networks, which for NADE is as follows.
\begin{align}
\pderiv{L}{b_i} &= x_i - \hat{x}_i \\
\pderiv{L}{u_{ij}} &= \pderiv{L}{b_i} h_{ij} \\
\pderiv{L}{a_{ij}} &= \pderiv{L}{u_{ij}}u_{ij}\br{1-h_{ij}} \\
\pderiv{L}{c_j} &= \sum_i{\pderiv{L}{a_{ij}}} \label{eq:deriv_nade:dLdc}\\
\pderiv{L}{w_{ij}} &= \br{\sum_{i'>i}{\pderiv{L}{a_{i'j}}}}x_i \label{eq:deriv_nade:dLdw}\\
\pderiv{L}{x_i} &= z_i + \sum_j{\br{\sum_{i'>i}{\pderiv{L}{a_{i'j}}}}w_{ij}} \quad\text{(for $i<I$)} \\
\pderiv{L}{x_I} &= z_I
\end{align}
Before running backward propagation, we assume that forward propagation has already been performed, so that quantities $h_{ij}$, $z_i$ and $\hat{x}_i$ are available. Notice that the cumulative sum $\sum_{i'>i}{\pderiv{L}{a_{i'j}}}$ can be computed efficiently, by using the recursion
\begin{equation}
\sum_{i'>i}{\pderiv{L}{a_{i'j}}} = \pderiv{L}{a_{i+1,j}} + \sum_{i'>i+1}{\pderiv{L}{a_{i'j}}}.
\end{equation}
Note also that, despite $\vect{x}$ being actually binary, there is nothing in the maths stopping us from pretending it is continuous and computing derivatives with respect to it. Finally, if only the derivatives with respect to $\vect{x}$ are needed, then Equations~\eqref{eq:deriv_nade:dLdc} and \eqref{eq:deriv_nade:dLdw} can be skipped.

\section{Computing second derivatives: R\{backprop\}}
\index{R\{backprop\}}

Let $\mat{H}$ be the Hessian of $L$ with respect to the parameters and the inputs and $\vect{v}$  be some vector of appropriate size. Even though computing $\mat{H}$ explicitly is in principle possible by applying the chain rule of calculus to the procedure jointly defined by forward and backward propagation, in practice what is usually needed is only the Hessian-vector product $\mat{H}\vect{v}$. The latter can be computed in linear time using the R technique (see appendix~\ref{chapter:R_technique}). Compare this to calculating $\mat{H}$ explicitly, which takes quadratic time and space.

The R technique is based on applying the $\Rop{\cdot}$ operator to the equations describing forward and backward propagation, yielding a procedure that we shall call R\{backprop\}. The forward part of R\{backprop\} is as follows.
\begin{align}
\Rop{a_{1j}} &= \Rop{c_j} \\
\Rop{a_{ij}} &= \Rop{a_{i-1,j}} + \Rop{w_{i-1,j}}x_{i-1}+ w_{i-1,j}\Rop{x_{i-1}}\quad\text{(for $i>1$)} \\
\Rop{h_{ij}} &= h_{ij}\br{1-h_{ij}}\Rop{a_{ij}} \\
\Rop{z_i} &= \sum_j\br{\Rop{u_{ij}}h_{ij} + u_{ij}\Rop{h_{ij}}} + \Rop{b_i} \\
\Rop{\hat{x}_i} &= \hat{x}_i\br{1-\hat{x}_i}\Rop{z_i}
\end{align}
In the above, $\Rop{u_{ij}}$, $\Rop{b_i}$, $\Rop{w_{ij}}$, $\Rop{c_j}$ and  $\Rop{x_i}$ are simply the corresponding elements of $\vect{v}$. We assume that forward propagation has been performed in advance, so that quantities $h_{ij}$ and $\hat{x}_i$ are available.

Applying the $\Rop{\cdot}$ operator to the equations describing backward propagation yields the backward phase of R\{backprop\}, which is as follows.
\begin{align}
\Rop{\pderiv{L}{b_i}} &= \Rop{x_i} - \Rop{\hat{x}_i} \\
\Rop{\pderiv{L}{u_{ij}}} &= \Rop{\pderiv{L}{b_i}} h_{ij} + \pderiv{L}{b_i} \Rop{h_{ij}}\\
\Rop{\pderiv{L}{a_{ij}}} &= \Rop{\pderiv{L}{u_{ij}}}u_{ij}\br{1-h_{ij}} + \pderiv{L}{u_{ij}}\Rop{u_{ij}}\br{1-h_{ij}} - \pderiv{L}{u_{ij}}u_{ij}\Rop{h_{ij}}\\
\Rop{\pderiv{L}{c_j}} &= \sum_i{\Rop{\pderiv{L}{a_{ij}}}} \\
\Rop{\pderiv{L}{w_{ij}}} &= \br{\sum_{i'>i}{\Rop{\pderiv{L}{a_{i'j}}}}}x_i + \br{\sum_{i'>i}{\pderiv{L}{a_{i'j}}}}\Rop{x_i}\\
\Rop{\pderiv{L}{x_i}} &= \Rop{z_i} + \sum_j\br{\sum_{i'>i}{\Rop{\pderiv{L}{a_{i'j}}}}}w_{ij} + \sum_j\br{\sum_{i'>i}{\pderiv{L}{a_{i'j}}}}\Rop{w_{ij}} \quad\text{(for $i<I$)} \\
\Rop{\pderiv{L}{x_I}} &= \Rop{z_I}
\end{align}
In the above, $h_{ij}$ is available from a previous run of forward propagation, $\pderiv{L}{b_i}$, $\pderiv{L}{u_{ij}}$ and $\pderiv{L}{a_{ij}}$ are available from a previous run of backward propagation, and $\Rop{\hat{x}_i}$, $\Rop{h_{ij}}$ and $\Rop{z_i}$ are available from the forward phase of R\{backprop\}.

R\{backprop\} can be significantly simplified if only part of the Hessian needs to be calculated. To see that, let $\vect{v} = \begin{bmatrix}
\vect{v}_{\bm{\theta}} & \vect{v}_{\vect{x}}
\end{bmatrix}^T$, where $\vect{v}_{\bm{\theta}}$ is the part of $\vect{v}$ corresponding to $\bm{\theta}$ and $\vect{v}_{\vect{x}}$ is the part of $\vect{v}$ corresponding to $\vect{x}$. The Hessian-vector product can then be written as
\begin{equation}
\mat{H}\vect{v} = \begin{bmatrix}
\spderiva{L}{\bm{\theta}}\vect{v}_{\bm{\theta}} + \spderivb{L}{\bm{\theta}}{\vect{x}}\vect{v}_{\vect{x}}\\[5pt]
\spderivb{L}{\vect{x}}{\bm{\theta}}\vect{v}_{\bm{\theta}} + \spderiva{L}{\vect{x}}\vect{v}_{\vect{x}}
\end{bmatrix}.
\end{equation}
If, for instance, only $\spderivb{L}{\bm{\theta}}{\vect{x}}\vect{v}_{\vect{x}}$ needs to be calculated, we can simply set $\vect{v}_{\bm{\theta}} = \vect{0}$.  Hence, all $\Rop{u_{ij}}$, $\Rop{b_i}$, $\Rop{w_{ij}}$ and $\Rop{c_j}$ will become $0$, which simplifies the calculations significantly.

\chapter{Batch and Online EM for MoG}
\label{chapter:em}
\index{EM}

Suppose we have a dataset $\set{\vect{x}_n}$, to which we want to fit a MoG using maximum likelihood. The MoG\index{MoG} is given by
\begin{equation}
\prob{\vect{x}\g \bm{\theta}} = \sum_i\pi_i\,\gaussian{\vect{x}}{\vect{m}_i}{\mat{S}_i}.
\end{equation}
The MoG's adjustable parameters are $\bm{\theta} = \set{\pi_i, \vect{m}_i, \mat{S}_i}$. Fitting $\prob{\vect{x}\g \bm{\theta}}$ to the dataset using maximum likelihood amounts to solving the following optimization problem
\begin{equation}
\min_{\bm{\theta}}{\frac{1}{N}\sum_n{\log{\prob{\vect{x}_n\g \bm{\theta}}}}}.
\end{equation}
One way to solve the above problem is via the \emph{EM algorithm}. In this appendix, we describe two versions of the EM algorithm for MoG, the classic batch version, and an online version.

\section{Batch EM}
\label{sec:em:batch}
\index{EM!Batch EM}

Let $z_{in}$ be a binary random variable with the meaning ``datapoint $\vect{x}_n$ was generated by the $\nth{i}$ component''. The batch EM algorithm is an iterative procedure which alternates between two phases: the \emph{expectation} phase, where it computes the expectation of each $z_{in}$ using the current value of $\bm{\theta}$, and the \emph{maximization} phase, where it updates $\bm{\theta}$ using the previously computed expectations of $z_{in}$. 

More precisely, in the expectation phase, the algorithm computes $\avgnull{z_{in}}$ for all $i$ and $n$ as follows
\begin{equation}
\avgnull{z_{in}} = \frac{\pi_i\,\gaussian{\vect{x}_n}{\vect{m}_i}{\mat{S}_i}}{\sum_{i'}\pi_{i'}\,\gaussian{\vect{x}_n}{\vect{m}_{i'}}{\mat{S}_{i'}}}.
\end{equation}
Note that the above expectation is taken with respect to the posterior of $z_{in}$ given $\vect{x}_n$ and $\bm{\theta}$ (not indicated to simplify notation).
In the maximization phase, using the previously computed expectations of $z_{in}$, the algorithm computes for each $i$ the following three statistics
\begin{align}
\Phi_{i1} &= \sum_n{\avgnull{z_{in}}} \\
\Phi_{i2} &= \sum_n{\avgnull{z_{in}}\vect{x}_n} \\
\Phi_{i3} &= \sum_n{\avgnull{z_{in}}\vect{x}_n\vect{x}_n^T}
\end{align}
and then it updates $\pi_i$, $\vect{m}_i$ and $\mat{S}_i$ as follows
\begin{align}
\pi_i &= \frac{1}{N}\Phi_{i1} \label{eq:em:batch:pi} \\
\vect{m}_i &= \frac{\Phi_{i2}}{\Phi_{i1}} \label{eq:em:batch:m} \\
\mat{S}_i &= \frac{\Phi_{i3}}{\Phi_{i1}} -\vect{m}_i\vect{m}_i^T. \label{eq:em:batch:S}
\end{align}
It can be shown \citep[section 9.4]{Bishop:2006:PRML} that each iteration of the above procedure can only increase $\frac{1}{N}\sum_n{\log{\prob{\vect{x}_n\g \bm{\theta}}}}$ or leave it the same. Hence, by repeating the above until convergence, a local maximum of the average log likelihood can be found.

\section{Online EM}
\label{sec:em:online}
\index{EM!Online EM}

The downside of batch EM is that it needs to scan the whole dataset in every iteration. For large or possibly infinite datasets, this becomes problematic. Online EM, on the other hand, uses only a minibatch of points from the dataset in each iteration, which can be made as small as a single point. There are more than one flavours of online EM \citep{Liang:2009:online_em}. In this thesis, we make use of the version by \citet{Cappe:2009:online_em}, which is exemplified below for the MoG case.

Suppose that $\set{\vect{x}_m}$ is a minibatch of $M$ points from the dataset selected at random. Note that $M$ can be as small as $1$, however larger values are typically used for better stability. For each $m$ and $i$, online EM computes the expectation of the associated variable $z_{im}$ as follows
\begin{equation}
\avgnull{z_{im}} = \frac{\pi_i\,\gaussian{\vect{x}_m}{\vect{m}_i}{\mat{S}_i}}{\sum_{i'}\pi_{i'}\,\gaussian{\vect{x}_m}{\vect{m}_{i'}}{\mat{S}_{i'}}}
\end{equation}
and then it proceeds to calculate the values of the sufficient statistics as follows
\begin{align}
\Phi_{i1}' &= \sum_m{\avgnull{z_{im}}} \\
\Phi_{i2}' &= \sum_m{\avgnull{z_{im}}\vect{x}_m} \\
\Phi_{i3}' &= \sum_m{\avgnull{z_{im}}\vect{x}_m\vect{x}_m^T}.
\end{align}
The above two steps are identical to the corresponding steps in batch EM, except that instead of using the whole dataset, only a minibatch is used. 

The main difference between batch an online EM is in the update of the parameters. The previously calculated minibatch statistics cannot be used like this, since they are only a stochastic approximation of the true statistics. Instead, a running average of statistics is maintained, which is updated with the minibatch statistics as follows
\begin{equation}
\begin{pmatrix}
\Phi_{i1} \\
\Phi_{i2} \\
\Phi_{i3} 
\end{pmatrix}\leftarrow\br{1-\alpha}\begin{pmatrix}
\Phi_{i1} \\
\Phi_{i2} \\
\Phi_{i3} 
\end{pmatrix} +  \alpha\begin{pmatrix}
\Phi_{i1}' \\
\Phi_{i2}' \\
\Phi_{i3}' 
\end{pmatrix}.
\end{equation}
Here, $0<\alpha<1$ is the learning rate. The lower the value of $\alpha$ is, the more biased towards the previous value of the average statistics the algorithm is. Using these average statistics, the parameters $\pi_i$, $\vect{m}_i$ and $\mat{S}_i$ can be updated as in Equations~\eqref{eq:em:batch:pi}--\eqref{eq:em:batch:S}.

The update schedule of the learning rate is highly important for the stability and convergence of the algorithm. In general, the learning rate should be high in the beginning, and it should decay towards zero during training. Let $\alpha_t$ be the learning rate in the $\nth{t}$ iteration. Following standard results in the stochastic approximation literature \citep{Robbins:1951:stochastic_approx}, a learning rate schedule satisfying
\begin{equation}
\sum_t{\alpha_t} = \infty
\quad\text{and}\quad
\sum_t{\alpha_t^2} < \infty
\end{equation}
is sufficient to guarantee convergence. For instance, a schedule that satisfies the above is the following
\begin{equation}
\alpha_t = \frac{1}{\br{t_0 + t}^\beta}
\quad\text{where}\quad
t_0 \ge 0
\quad\text{and}\quad
\frac{1}{2} < \beta \le 1.
\end{equation}
In practice however, other schedules are sometimes used, such as linear or logarithmic decay, that do not necessarily guarantee convergence but work well in practice.

\chapter{Notations and Symbols}
\label{chapter:notations}

\renewcommand{\tabcolsep}{0.8cm}
\renewcommand{\arraystretch}{\arrstretchvalue}
\begin{tabular}{@{}ll}
$x,y,z,X,Y,Z$ & Scalars (lowercase or uppercase letters) \\
$\vect{x},\vect{y},\vect{z}$ & Vectors (lowercase bold letters) \\
$\mat{X},\mat{Y},\mat{Z}$ & Matrices (uppercase bold letters) \\
$\vect{x}^T,\mat{X}^T$ & Transpose of vector $\vect{x}$ and matrix $\mat{X}$ \\
$\mat{I}$ & The identity matrix of appropriate size \\
$\set{\ldots}$ & A set, depending on context \\
$\log{\br{\cdot}}$ & The natural logarithm (base $e$) \\
$\norm{\cdot}$ & $\ell_2$ vector norm \\
$\pderiv{f}{\vect{x}}$ & Derivative of function $f$ with respect to $\vect{x}$ \\
$\sigm{\cdot}$ & Logistic sigmoid function \\
$\deltaf{\cdot}$ & Delta function \\
$\Rop{\cdot}$ & R operator for Hessian-vector products \\
$\avg{\cdot}{\prob{\vect{x}}}$ & Expectation with respect to distribution $\prob{\vect{x}}$ \\
$\gaussian{\vect{x}}{\vect{m}}{\mat{S}}$ & Gaussian distribution over $\vect{x}$ with mean $\vect{m}$ and covariance $\mat{S}$ \\
$\kl{p}{q}$ & KL divergence from distribution $p$ to distribution $q$ \\
$\bigo{\cdot}$ & Asymptotic big O notation \\
\end{tabular}

\chapter{List of Abbreviations}
\label{chapter:abbreviations}

\renewcommand{\tabcolsep}{0.8cm}
\renewcommand{\arraystretch}{\arrstretchvalue}
\begin{tabular}{@{}ll}
EM & Expectation-Maximization \\
EP & Expectation Propagation \\
GPU & Graphics Processing Unit \\
KL & Kullback-Leibler \\
MCMC & Markov Chain Monte Carlo \\
MNIST & Mixed National Institute of Standards and Technology \\
MoG & Mixture of Gaussians \\
NADE & Neural Autoregressive Distribution Estimator \\
pdf & probability density function \\
RBM & Restricted Boltzmann Machine \\
ReLU & Rectified Linear Unit \\
\end{tabular}

\bibliography{final_report}
\clearpage
\phantomsection
\printindex

\end{document}